\def\eqref#1{equation~\ref{#1}}
\def\1{\bm{1}}
\DeclareMathAlphabet{\mathsfit}{\encodingdefault}{\sfdefault}{m}{sl}
\SetMathAlphabet{\mathsfit}{bold}{\encodingdefault}{\sfdefault}{bx}{n}
\let\AND\relax
\newtheorem{definition}{Definition}
\newtheorem{remark}{Remark}
\newtheorem{example}{Example}
\newtheorem{proof}{Proof}
\newtheorem{proposition}{Proposition}
\newcommand{\add}[1]{#1}
\newcommand{\del}[1]{}
\title{\add{Post-Training Neural Network~Pruning using Graph \\Curvature}}
\author{\name Shuhang Tan \email tans5@rpi.edu \\
      \addr Rensselaer Polytechnic Institute\\
      \AND
      \name Jayson Sia \email jsia@usc.edu \\
      \addr University of Southern California\\
      \AND
      \name Paul Bogdan \email pbogdan@usc.edu\\
      \addr University of Southern California\\
      \AND
      \name Radoslav Ivanov \email ivanor@rpi.edu \\
      \addr Rensselaer Polytechnic Institute
      }
\begin{document}

\maketitle

\begin{abstract}
This paper provides a fresh view of the neural network (NN) \add{pruning} problem, through the lens of graph theory. \add{To achieve effective pruning, we aim to identify the main NN data flows and the corresponding NN connections that are most (and least) important for the performance of the full model.}
%
Unlike the standard approach to NN data flow analysis, which is based on information theory, we employ the notion of graph curvature, specifically Ollivier-Ricci curvature (ORC). The ORC has been successfully used to identify important graph edges in various domains such as road traffic analysis, biological and social networks. In particular, edges with negative ORC are considered bottlenecks and as such are critical to the graph's overall connectivity, whereas positive-ORC edges are not essential. We use this intuition for the case of NNs to: 1)~construct a graph induced by the NN structure and introduce the notion of neural curvature (NC) based on the ORC; 2)~calculate curvatures based on activation patterns for a set of input examples; 3)~demonstrate that NC can indeed be used to rank edges according to their importance for the overall NN functionality.
We evaluate our method through pruning experiments on a variety of small-to-medium-size models trained on three image datasets, namely MNIST, CIFAR-10 and CIFAR-100.
The results indicate that our method can identify a larger number of unimportant edges as compared to existing pruning methods. (Code: \url{https://github.com/SH-Tan/Post-Training-NN-Pruning-using-Graph-Curvature})

\end{abstract}

\section{Introduction}
\label{sec:introduction}
Neural networks (NN) have proven to be effective in a variety of areas, such as computer vision~\citep{dosovitskiy2020image}, natural language processing~\citep{achiam2023gpt}, and autonomous systems~\citep{hwang2024emma}. 
\add{As the popularity of NNs grows, however, so does their size -- modern models used in all of the above-mentioned domains often have millions or even billions of parameters~\cite{sun2023simple}. In turn, such models not only contribute to rising energy costs, but they are also more difficult to analyze and repair due to their overwhelming complexity. Thus, there is a growing need for NN pruning approaches that are not only focused on reducing a model's size but also on identifying the main data flows in a pre-trained model.}

%

This paper provides a fresh view of NN \add{pruning} by making use of graph theory and network science, which would provide a well-understood tool for compositional analysis of the NN graph structure and corresponding data patterns.
In particular, we view a NN as a weighted directed graph, where the neurons are the graph nodes, and the connections are the graph edges. The features that transfer from layer to layer carry the information flow in this graph. 
Analyzing such an information graph would allow us to identify the main data flow through the NN, which \add{can then be used to prune redundant NN parameters. In future work, such information graphs may be used for a variety of other applications, including NN architecture search or model repair.}


\add{The standard approach to NN pruning is through an iterative prune-and-retrain procedure, where unnecessary parameters are pruned, whereas the remaining ones are retrained~\cite{han2015learning}.}
Classic pruning methods are based on weight magnitude~\citep{han2015learning}, weight sensitivity~\citep{lee2018snip,santacroce2023matters}, and loss change~\citep{ma2023llm,van2023llm}. 
\add{These techniques' main aim is to prune a NN architecture to a smaller one with similar performance to that of the original. There also exists more recent work on pruning pre-trained large language models (LLMs), by exploiting the parameters' Hessian matrix properties~\cite{frantar2023sparsegpt} or pruning based on weights and activations~\cite{sun2023simple}. In contrast, our paper aims to achieve more effective pruning through analyzing the NN data flows using concepts from graph theory and differential geometry.}

In the related literature, one approach to analyze the data flow in a NN is through information theory~\citep{khadivi2016flow,shwartz2017opening,nasiri2020modeling}, such as the seminal work on the information bottleneck principle~\citep{tishby2015deep}. These works use tools such as entropy and mutual information to analyze the information transport between NN layers and the information change during training. While such approaches provide an interpretable way to understand the learning process and its limitations, they cannot be directly used to analyze a pre-trained model and the corresponding data paths.


To identify NN data flows, we propose to employ the notion of graph curvature, in particular Ollivier-Ricci Curvature (ORC), as introduced by~\cite{ollivier2009ricci}, which provides an effective way to quantify the importance of an edge in the graph.
The ORC of a given edge is an indication of the local graph connectivity -- edges with positive ORC are typically parts of a strongly connected component, whereas edges with negative ORC normally serve as bridges between connected components. 
The ORC has been effectively applied to a number of domains that can be modeled as graphs, e.g.,~road traffic analysis~\citep{wang22}, internet routing~\citep{ni2015ricci}, biological networks~\citep{znaidi2023unified,sia2022inferring,sia2019ollivier}, and deep learning~\citep{znaidi2023unified}, including graph NNs~\citep{liu23}. 
We aim to demonstrate that a similar approach can be used for feedforward NNs used for classification purposes.

Inspired by a recent paper, which demonstrates the utility of graph curvature in analyzing NN robustness~\citep{tan24}, in this paper, we aim to develop a general graph-curvature-based method to identify the main NN data flows \add{and prune the irrelevant ones}. We propose a novel neural curvature (NC) metric that can be used to rank NN connections according to their importance, and thus, identify the main data flows. In particular, given a trained NN and a calibration set, we 1)~construct a neural graph induced by the NN structure; 2)~propose a NC metric to quantify the importance of each NN connection for a given example; 3)~rank NN connections according to their NC as observed in the calibration set.
\add{Once the ranking is obtained, we perform pruning through removing the least important NN parameters first.}
 
We evaluate the proposed method on small-to-medium size models trained on three image classification datasets, namely MNIST, CIFAR-10 and CIFAR-100.
%
We compare our method with a range of pruning approaches, including magnitude-based pruning~\citep{han2015learning}, gradient-based pruning (SNIP~\citep{lee2018snip} and SynFlow~\citep{tanaka2020pruning}), \add{and Hessian matrix-based pruning (SparseGPT~\citep{frantar2023sparsegpt})}, to demonstrate the effectiveness of NC. The magnitude-based method is very effective for small models and heavily regularized models
but it often suffers from layer collapse, i.e.,~removing an entire layer, on larger models where weight magnitudes may differ significantly across layers. While SynFlow and SNIP are able to avoid layer collapse, ultimately they are still constrained by their limited ability to capture edge importance across layers. \add{SparseGPT achieves strong performance on some models, however, it is very susceptible to training hyper-parameters. }
In contrast, our method provides a continuous measure of edge importance, and is, thus, able to rank edges across the entire model in a robust manner that avoids layer collapse and is better able to identify the main NN data flows. Finally, as the proposed method is computationally expensive, we demonstrate that a hybrid approach, e.g.,~using magnitude pruning to pre-prune small and unimportant weights, can bring substantial improvements in scalability without sacrificing our method's power in identifying the main data flows.



In summary, this paper makes three contributions: 1)~we introduce the notion of neural curvature for \add{NN pruning through analyzing NN data flows}; 2)~we provide a ranking algorithm for the importance of NN connections, based on neural curvature; 3)~we evaluate our method in three image classification datasets so as to demonstrate the effectiveness of the proposed method.

\section{Related Work}
\label{sec:relatedwork}

\noindent\textbf{NN Pruning Methods.}
\add{Some studies propose that useless weights can be detected at the initialization stage, in which case the model is pruned before training. \cite{lee2018snip,tanaka2020pruning,wang2020picking} are some of the seminal works on pruning at initialization, which are weight-sensitivity-based and provide a weight sensitivity score based on the gradient of the loss and weight. 
Although these methods are a good first step, locating the important part of the full NN at initialization is still a challenging problem, as also demonstrated in the theoretical analysis, via a mutual information view, by
\cite{kumar2024no}.
Another big group of pruning methods is pruning after training.
Weight magnitude pruning was first proposed by~\cite{han2015learning}. They show that weights with large magnitudes are more important and can contain more useful information than smaller weights. 
The magnitude-based approach has been applied effectively to either structured or unstructured pruning, and also works for LLMs~\citep{frankle2018lottery,sun2023simple,dery2024everybody}. 
There are also studies based on the gradient. \cite{zhao2019variational} is weight-sensitivity-based and designed for convolutional NNs (CNNs). \cite{santacroce2023matters} proposes a sensitivity and uniqueness-based method for decode-only language models. \cite{frantar2023sparsegpt} utilizes the Hessian matrix score to prune the unimportant weights for the post-training LLMs.
However, many existing post-training pruning methods are developed for fine-tuning and are often architecture-specific. In this paper, we aim to identify the critical information paths in a pre-trained model, which enables a unified and architecture-agnostic post-training pruning approach.} 


\noindent\textbf{NN Information Flow Analysis.}
Information bottleneck is a famous tool~\citep{tishby2000information,tishby2015deep,shwartz2017opening} in NN information flow analysis. It treats a NN as a Markov chain and uses mutual information to analyze the change in information across layers.
Based on the information bottleneck principle,~\cite{khadivi2016flow} study how the entropy of information changes between consecutive layers and develop an optimization problem that can be used for a feedforward NN.~\cite{chaddad2017modeling} and~\cite{nasiri2020modeling} utilize conditional entropy to analyze the information flow in a trained CNN, which can help improve the original CNN's accuracy.
~\cite{achille2019information} analyzes the information encoded in a trained NN's weights and how it affects its future performance on the test set based on Shannon and Fisher information theory.
\cite{wang2018interpret,wang2020interpret} identify the critical subnet of a NN by directly finding nodes that are important for NN performance, and understand how the NN makes the decision.
These methods give an explicit interpretation for NNs' decision behavior; however, they cannot be directly used to identify the main information flows in a trained model. In this paper, we use graph curvature as a local analysis tool that can quantify the importance of each edge in terms of propagating information to the next layer.

\noindent\textbf{Applications of Graph Curvature Methods.}
The ORC~\citep{ollivier2007ricci,ollivier2009ricci} is the discrete-space extension of the standard Ricci curvature~\citep{bochner46}, which is defined over continuous metric spaces. 
The ORC is a useful tool to analyze local graph connectivity and identify communities~\citep{sia2019ollivier} in network science. 
The ORC has also been widely used in different graph-based domains as follows.~\cite{gao2019measuring} shows that ORC can help locate the vulnerable parts in the road network systems.~\cite{wang22} utilizes ORC to improve transit network design by analyzing the mismatch of travel demand and supply.~\cite{ni2015ricci} and~\cite{salamatian2022curvature} apply the ORC for internet topologies and network path connectivity analysis.
Ricci curvature has also shown effectiveness in biological networks robustness analysis~\citep{simhal2025orco}.
Recently, Ricci curvature has been applied to deep learning as well, including supervising the phase transitions within the dynamics of a time-varying complex network~\citep{znaidi2023unified} as well as integrating graph curvature into graph NN design~\citep{wang2021mixed,sun2022self,liu23}.

\section{Problem Statement}
\label{sec:problem}

This section formalizes the problem considered in this paper. We focus on \emph{trained} NN classifiers of the kind ${f_\theta: \mathcal{X} \to \mathcal{Y}}$, where $\theta$ are the NN parameters, $\mathcal{X} \subseteq \mathbb{R}^{n_x}$ is the input space, and $\mathcal{Y} \subseteq \mathbb{R}^{n_y}$ is the output space. 
In particular, we consider feedforward NNs, i.e.,~fully-connected NNs and CNNs, parameterized by ${\theta = \{(W_1, b_1), \dots, (W_L, b_L)\}}$, where $L$ is the total number of layers, and $W_i$ and $b_i$ are the weights and biases of layer $i$. We assume $f_\theta$ has either ReLU or Tanh activations after each hidden layer.

Given a trained NN $f_\theta$ and a calibration set $\mathcal{D}$, the problem considered in this paper is to \add{prune the parameters $\theta$ while minimizing the drop in classification accuracy relative to $f_\theta$. To achieve this goal, we aim to design a graph-curvature-based method to quantify each parameter's importance for the overall NN accuracy and prune parameters according to their importance, starting with the least important.}


\paragraph{Notation.}\label{para:graphnotion}  A weighted directed graph $G(V,E,w)$ is defined as a set of vertices $V = \{v_1, \dots, v_N\}$ and a set of directed edges ${E \subseteq \{(u,v) \mid u,v \in V\}}$, where each edge has a positive weight given by $w: E \to \mathbb{R}_+$. For a directed edge $(u,v)$, $u$ and $v$ denote the source
and target vertices, respectively. We denote by $\Gamma(v)$ the neighborhood of a vertex $v \in V$; $\Gamma^{in}(v)$ is the incoming neighborhood and $\Gamma^{out}(v)$ the outgoing neighborhood.
For a NN $f_\theta$ and an input example $x$, we use $l_{i,j}(x)$ and $n_{i,j}(x)$ to denote the logit and activation of neuron $i$ in layer $j$, i.e., $n_{i,j}(x) = \sigma(l_{i,j}(x))$ for $\sigma \in \{Tanh, ReLU\}$.\footnote{To obtain a neuron ordering in a convolutional layer, we unroll convolution and treat convolutional layers as sparse fully-connected layers.}
We use $K_i$ to denote the number of neurons in layer $i$, whereas $[W_{k+1}]_{ij}$ denotes the weight of the NN edge connecting $n_{i,k}$ to $n_{j,k+1}$. We use $[n]$ to denote the list of integers from 1 to $n$.
\section{Background}
\label{sec:background}
This section presents the definition of ORC and its modified version, $\alpha$-Ricci Curvature, as well as their applications. The concept of Ricci curvature~\citep{bochner46} was first introduced in Riemannian geometry to capture the degree to which a given space deviates from being flat (with Euclidean space being the standard example of a flat space). Positive curvature indicates an elevated, spherical shape, whereas negative curvature corresponds to hyperbolic shapes (with a ``valley'' in between two ``hills''). Thus, negative curvature between two points indicates a ``bottleneck'' such that all shortest paths within a small enough radius include that bottleneck. ORC~\citep{ollivier2009ricci} is defined on discrete metric spaces equipped with a Markov chain and can thus be naturally extended to graphs. It aims to capture the notion of a bottleneck edge, as defined below. Further extensions of ORC have been developed, e.g.,~$\alpha$-Ricci Curvature~\citep{lin2011ricci}, that aim to provide a better graph approximation of the original (continuous) version of Ricci curvature.

\begin{remark}
ORC is most commonly applied to undirected graphs in network science, but several works have extended the definition to directed and weighted graphs~\citep{yamada2016ricci,samal2018comparative,eidi2020ollivier}.
In the rest of this paper, we will focus on directed and weighted graphs. We follow a similar mechanism when applying ORC to a directed graph as~\cite{eidi2020ollivier}.
\end{remark}
We begin with a graph-based definition of Wasserstein distance, as the building block of ORC.

\begin{definition}[Wasserstein Distance (Optimal Transport Distance)]\label{def:opt}
Given a directed and weighted graph $G(V,E,w)$, let $u$ and $v$ be the source and target nodes of edge $(u,v)$.
Suppose $m_u$ and $m_v$ are two probability distributions supported on the incoming-edge neighborhoods of $u$ and outgoing-edge neighborhoods of $v$, respectively. The Wasserstein distance\footnote{Typically, the Wasserstein distance with $1$-finite moments is written $W_1$ -- we drop the subscript to disambiguate from the NN weights notation.} $W(m_u, m_v)$ between distributions $m_u$ and $m_v$ is given by
\begin{equation}\label{eq:WD}
    W(m_u, m_v) = \inf_{M_{u,v} \in \Pi_{u,v}} \sum_{(u', v') \in \Gamma^{in}(u)\times \Gamma^{out}(v)} d(u',v')M_{u,v}(u',v'),
\end{equation}
where $\Pi_{u,v}$ is the set of probability measures $M_{u,v}$ that capture all possible ways of transporting mass from $m_u$ to $m_v$, i.e.,
\begin{align*}
    \sum_{v'\in \Gamma^{out}(v)} M_{u,v}(u',v') = m_u(u'), \quad \sum_{u'\in \Gamma^{in}(u)} M_{u,v}(u',v') = m_v(v').
\end{align*}
In Equation~\ref{eq:WD}, $d(u',v')$ denotes the cost from $u'$ to $v'$, e.g., $d(u,v)= w(u,v)$.
\end{definition}

\add{To better distinguish differences among vertices while ensuring a valid probability measure on each neighborhood, we adopt an exponential normalization, which is one of the standard normalizations in the related work~\citep{ni2019community}. This transformation amplifies the contrast between vertices with different measures and guarantees positivity and proper normalization, which are required for the subsequent optimal transport computation.}
\add{\begin{definition}(Exponential Neighbor Distribution~\citep{ni2019community})\label{def:exp_norm}
Given a directed graph $G(V, E, w)$, the probability distribution $m_x$ over the neighborhood of a node $x$ is defined via exponential normalization:
\begin{equation}
    m_x(u) = \frac{\exp\big(-[h_x(u)]^2\big)}{\sum\limits_{u_{i} \in \bar{\Gamma}(x)} \exp\big(-[h_x(u_i)]^2\big)}.
\end{equation}
If $x$ is a source node, ${\bar{\Gamma}}(x) := \Gamma^{in}(x)$; if $x$ is a target node, ${\bar{\Gamma}}(x) := \Gamma^{out}(x)$. Here, for an incoming neighbor $u$, $h_x(u) = w(u,x)$ denotes the value function of node $u$ (in the case of outgoing neighbor, $h_x(u) = w(x,u)$).
\end{definition}}

\begin{definition}[ORC~\citep{ollivier2009ricci}]
\label{def:orc}
Given a directed and weighted graph $G(V,E,w)$, the ORC along a directed edge $(u,v)$ is defined as
\begin{equation}\label{eq:orc}
    \kappa_{ORC}(u,v) = 1 - \frac{W(m_u, m_v)}{d(u,v)}.
\end{equation}
\end{definition}
\begin{figure}[!t]%
\centering
\includegraphics[width=0.4\linewidth]{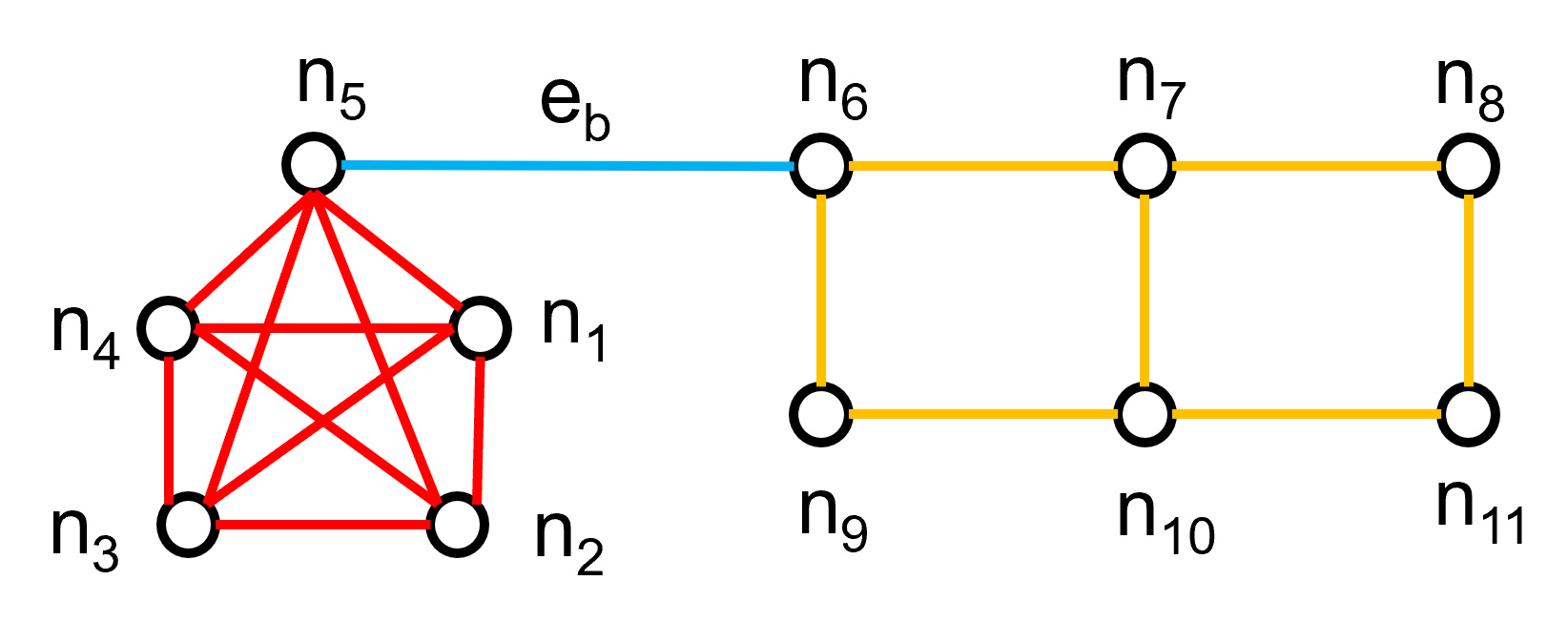}
\caption{\add{Illustration of Ollivier–Ricci curvature (ORC) on an undirected graph with unit edge weights. Red edges (left) have positive curvature, the blue bridge edge has negative curvature, and yellow edges (right) have zero curvature. ORC is positive for edges within a strongly connected component, negative for edges that serve as bridges between connected components due to higher transport cost, and ORC is zero for edges in plane-like components.}}\label{fig:orc_example}
\end{figure}


\begin{example}
\label{ex_orc}
Consider the example undirected graph in Figure~\ref{fig:orc_example} where all edges have weight 1, \add{therefore each neighbor is assigned the same probability}. We will illustrate how to calculate the ORC of the bridge edge, $e_b$, connecting nodes $n_5$ and $n_6$. To calculate the Wasserstein distance, we first need the distributions $m_{n_5}$ and $m_{n_6}$, which assign probabilities to neighbors of $n_5$ and $n_6$, respectively, depending on their weights. Since all edges have the same weight, we have ${m_{n_5} = [0.2\ 0.2\ 0.2\ 0.2\ 0\ 0.2\ 0\ 0\ 0\ 0\ 0]}$ and ${m_{n_6} = [0\ 0\ 0\ 0\ 1/3\ 0\ 1/3\ 0\ 1/3\ 0\ 0]}$. The optimal transport cost from $m_{n_5}$ to $m_{n_6}$ is ${0.2 + 3 * (2/3 - 0.2) + 1 * (1/3)}$, i.e., $W(m_{n_5},m_{n_6}) \approx 1.93$, hence $\kappa_{ORC}(e_b) \approx -0.93$.
\end{example}

The standard ORC formulation assumes that the probability measure $m_u$ around a node $u$ is induced by a simple random walk and that the underlying metric assigns uniform distances between adjacent nodes.
These assumptions are often not true in weighted graphs, where edge weights may vary a lot, leading to a more discrete and non-uniform neighborhood geometry and potentially large variations in local distances. As a result, a direct application of the standard ORC definition may fail to accurately capture the geometric structure of weighted graphs.
To address this limitation, \cite{lin2011ricci} modified Definition~\ref{def:orc} and proposed $\alpha$-Ricci Curvature, denoted by $\kappa_{\alpha}(u, v)$, where $\alpha$ is a parameter that allows additional probability mass to remain at the source and target nodes, thereby better approximating a continuous neighborhood structure on weighted graphs.
\begin{definition}[$\alpha$-ORC~\citep{lin2011ricci}]
\label{def:alpha_ricci}
Given a weighted and directed graph $G(V,E, w)$, for any $\alpha \in [0,1]$, the probability measure $m_x$ over the neighborhood of a node $x$ is modified to $m_x^{\alpha}$, which assigns an $\alpha$-fraction of the probability mass to $x$ itself and distributes the remaining mass among its neighbors. The definition is given by:
\begin{align}\label{eq:mu}
m_x^{\alpha}(n) =
\begin{cases}
    \alpha, & \text{if } n = x, \\[6pt]
    (1-\alpha)m_x(n), & \textit{if } n \in \bar{\Gamma}(x),\\[6pt]
    0, & \text{otherwise},
\end{cases}
\end{align}
 \add{where $m_x(n)$ and $\bar{\Gamma}(x)$ are defined in Definition~\ref{def:exp_norm}.}

Then the $\alpha$-ORC curvature, $\kappa_{\alpha}(u, v)$, between two vertices $u$ and $v$ is given by
\begin{equation}
\label{eq:orc_alpha}
    \kappa_{\alpha}(u,v) = 1 - \frac{W(m^{\alpha}_u, m^{\alpha}_v)}{d(u,v)}.
\end{equation}
\end{definition}

Note that $\kappa_{ORC}(u,v) = \kappa_{0}(u,v)$ and $\kappa_1(u,v) = 0$, since $W(m^{1}_u, m^{1}_v) = d(u,v)$. Thus, it is not clear what value of $\alpha$ is most suitable in Definition~\ref{def:alpha_ricci}. To address this issue, \cite{lin2011ricci} further define $h(\alpha) = \kappa_{\alpha}(u,v)/(1-\alpha)$ and prove that $h(\alpha)$ is an increasing function and remains bounded as $\alpha \rightarrow 1$. This implies that the limit $$lim_{\alpha \rightarrow 1} \frac{\kappa_{\alpha}(u,v)}{1-\alpha}$$ exists. This limit is denoted by $\kappa(u,v)$, the Ricci curvature on graphs.
\begin{definition}[Ricci Curvature on Graphs~\citep{lin2011ricci}]
\label{def:graph_ricci}
    Given a directed graph $G(V, E, w)$, the graph Ricci curvature of a directed edge $(u,v)$ is defined as:
\begin{align}\label{eq:rc}
    \kappa(u,v) = lim_{\alpha \rightarrow 1} \frac{\kappa_{\alpha}(u,v)}{1-\alpha}.
\end{align}

\begin{remark}
    In this paper, we build on Definition~\ref{def:graph_ricci} for the proposed concept of neural Ricci curvature.
\end{remark}
\end{definition}
%
\section{Approach}\label{sec:approach}
This section describes the proposed approach of using graph curvature to rank NN edges according to their importance.
As discussed in the introduction, graph curvature has been used to analyze a number of domains that can be modeled as graphs. For example, in road and transit traffic analysis~\citep{gao19,wang22}, researchers used graph curvature to identify bottleneck segments which could be upgraded or which require better alternatives. Analyzing NNs has clear analogies and differences with road analysis. In particular, NN edges can be considered as roads that transport data, with NN weights determining the capacity of each road. Thus, it is natural to focus on bottleneck NN edges as potentially important for the overall NN functionality, similar to the traffic analysis case. The main challenge is how to encode the NN as a weighted graph, specifically how to handle 1)~activation functions and 2)~changing data patterns depending on the input example.

\subsection{Overview}
A high-level overview of the proposed approach is shown in Figure~\ref{fig:overview} and summarized in Algorithm~\ref{alg:nrccalculation}, which is provided in Appendix~\ref{appdex:algorithm}. Given a trained NN $f_\theta$ and a calibration set $\mathcal{D}$, for each NN connection $e$ we aim to provide a curvature value that quantifies this edge's importance. Similar to the standard ORC setting, edges with lower curvature are considered more important for the overall NN functionality. 
We quantify this process through the following three steps: 1)~create a neural graph $G_{f_\theta}$, as introduced in prior work~\citep{tan24,xiao24}; 2)~for each example $x$ and each NN connection $e$, calculate the neural curvature of $e$, as described in this section; 3)~for each NN connection $e$, store the minimum curvature value of $e$ over all the examples. Since more negative curvature is associated with a stronger contribution to information flow and model performance, the minimum curvature captures the most influential role that the connection may play across the dataset.
The rest of this section formalizes the notions of a neural graph as well as the novel definition of neural curvature.

\begin{figure*}[!t]%
\centering
\includegraphics[width=0.83\linewidth]{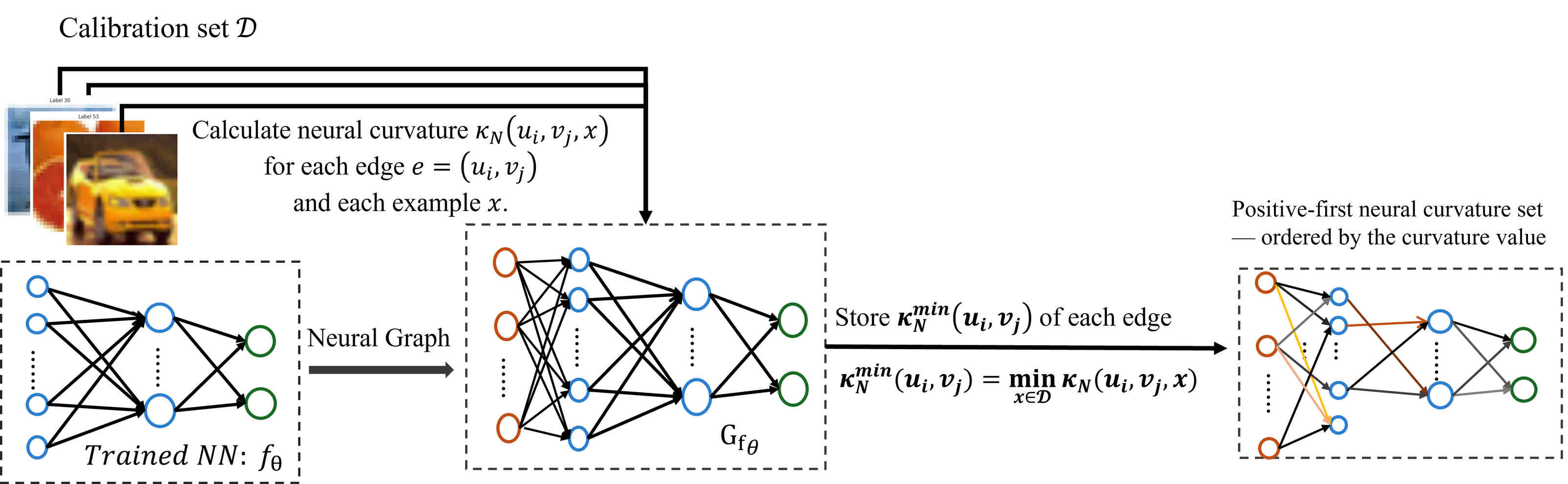}
\caption{Approach overview. Given a trained NN $f_\theta$ and a calibration set $\mathcal{D}$, we aim to identify the importance of NN connections, respectively. Specifically, we 1)~construct a neural graph $G_{f_\theta}$ induced by the NN architecture (including an input layer); 2)~for each edge $(u,v)$ in the neural graph and each example $x \in \mathcal{D}$, we calculate the neural curvature $\kappa_N(u,v,x)$, as defined in Definition~\ref{def:nrc}; 3)~we order the whole edge set by positive-first-curvature edges.}
\label{fig:overview}
\end{figure*}

\subsection{Neural Graph and Neural Curvature}

This subsection introduces the concept of neural curvature. We begin with a definition of a neural graph, i.e., a weighted graph induced by the NN architecture, as introduced by prior work~\citep{tan24,xiao2025neuronbased}.

\begin{definition}[Neural Graph~\citep{tan24,xiao2025neuronbased}]
\label{def:neural_graph}
Consider a NN $f_\theta$ with $L$ layers, where ${V_l = \{v_{1,l}, \dots, v_{K_l,l}\}}$ denotes the neuron set of layer $l$ and ${E_{l,l+1} = \{(v_{1,l}, v_{1,l+1}), \dots, (v_{K_l,l}, v_{K_{l+1}, l+1})\}}$ denotes the NN connections from layer $l$ to $l+1$. A neural graph ${G_{f_\theta} = (V_{f_\theta}, E_{f_\theta}, w_{f_\theta})}$ is defined as follows:
\begin{align*}
&V_{f_\theta} = V_0 \cup V_1 \cup V_2 \cup \cdots \cup V_L\\
&E_{f_\theta} = E_{0,1} \cup E_{1,2} \cup \cdots \cup E_{L-1,L}\\
&w_{f_\theta}(v_{i,l}, v_{j,l+1}) = \frac{1}{|[W_{l+1}]_{ij}|},
\end{align*}
where $V_0 = \{v_{1,0}, \dots, v_{n,0}\}$ adds one vertex per input dimension (i.e.,~the NN input layer). Thus, $G_{f_\theta}$ has one vertex per input dimension, one vertex per neuron in the NN and one edge per NN connection. The weight of each edge in the graph is the inverse of the magnitude of the corresponding NN weight.\footnote{Although prior work~\cite{tan24} considers an additional normalization of the graph weights depending on their sign, we found it to be unnecessary.}
\end{definition}

Definition~\ref{def:neural_graph} describes the graph induced by the NN structure. 
Note that setting the graph weights to be the inverse of the NN weights signifies that the ``cost'' of sending data through an edge is inversely proportional to the NN weight; intuitively, the larger the weight, the more data is sent. Other weightings will be considered in future work as well.



Note that Definition~\ref{def:neural_graph} does not depend on the input example and as such cannot be easily used to capture the NN information flow. We incorporate the input example in the curvature calculation as follows. First note that the standard $\alpha$-Ricci curvature in Definition~\ref{def:alpha_ricci} derives the neighbor distributions, $m_u$ and $m_v$, directly from the graph weights. Unlike the standard definition, which is static, we incorporate the input examples into the curvature calculation. Specifically, we modify the neighbor distribution depending on the magnitude of each neuron activation for a given input example. By incorporating these node values, the curvature captures how the input propagates through the NN, making it data-dependent and providing a more meaningful measure of the NN’s behavior for a specific example.

\begin{definition}[Neural Neighbor Distribution]
\label{def:neural_neighbor}
    Consider a neural graph $G_{f_\theta}$ and a corresponding vertex $v_{i,l}$ for $l \ge 1$, and a given input example $x$.
    The incoming neural neighbor distribution of $v_{i,l}$ is defined as follows:
    \add{\begin{align*}
    \mu_{v_{i,l}}(v_{j,l-1}, x)
    &=
    \frac{
    \exp\!\left(-[h_n(v_{j,l-1},x)]^2\right)
    }{
    \sum\limits_{v_{k,l-1} \in \Gamma^{in}(v_{i,l})}
    \exp\!\left(-[h_n (v_{k,l-1}, x)]^2\right)
    },\\
    h_n(v_{j,l-1},x) &= \frac{1}{n_{j,l-1}^{norm}(x)}
    \end{align*}}
    \add{where $\mu_{v_{i,l}}(v_{j,l-1}, x)$ is defined analogously to Definition~\ref{def:exp_norm}. Here, $h_n(v_{j,l-1},x)$ is the value function of node $v_{j,l-1}$, which depends on the neural activation values with input $x$, and $n_{j,l-1}^{norm}(x)$ is a normalization function applied in two steps as follows:}
    \begin{enumerate}
        \item Let $n_{min} = \min\{|n_{1,l-1}(x))|, \dots, |n_{K_{l-1},l-1}(x)|\}$ and $n_{max} = \max\{|n_{1,l-1}(x))|, \dots, |n_{K_{l-1},l-1}(x)|\}$.
        
        \item \textbf{Per-layer neuron normalization:} $n^{norm}_{j,l-1}(x) = \frac{|n_{j,l-1}(x)| - n_{min}}{n_{max} - n_{min}}$;
        
    \end{enumerate}
    The outgoing neighbor distribution is defined similarly, but using the values in layer $l+1$. If $v_{i,l}$ is in the input layer, then the input neighbor distribution is just a mass of 1 on $v_{i,l}$ itself. Finally, if $v_{i,l}$ is in the output layer, then the outgoing neighbor distribution is just a mass of 1 on $v_{i,l}$.
\end{definition}
Intuitively, Definition~\ref{def:neural_neighbor} captures the fact that the neighbor distribution is determined by the value of each neuron. The inversion in $V_n(v_{j,l-1},x)$ ensures that larger neuron values get a weight that is closer to 0. In turn, this ensures large neurons get a larger density after being sent through a standard normal distribution in $\mu_{v_{i,l}}(v_{j,l-1}, x)$, as proposed in prior work on normalizing neighbor distributions~\citep{ni2019community}. Finally, the normalization in Step 2 just ensures that all neuron values are normalized within the range $(0,1]$\footnote{Note that a value of zero occurs only when $|n_{j,l-1}(x)| = n_{min}$; the lower bound of the interval is effectively open to prevent division by zero -- we achieve this in the implementation by adding a small positive constant.}, to prevent numeric issues \add{caused by ReLU, which can have large positive values}.
We emphasize again that, unlike the original neighbor distribution in Definition~\ref{def:exp_norm}, which is calculated based on the weights connecting those neighbors, Definition~\ref{def:neural_neighbor} weights neighbors according to their corresponding neuron (activation) values.

To further capture the effect of activation functions on an edge cost, we note that some NN connections' importance may change depending on the activations of the neurons they go in/out of. In particular, connections that go into a saturated neuron (in the case of Tanh) or in/out of a non-activated neuron (in the case of ReLU) have reduced importance for the NN's output on the current example. \add{In such cases, the effective edge cost increases -- this is captured through rescaling the edge cost by an additional term that quantifies the effect of the non-linearity on the given edge.} Thus, we define an edge's neural cost as follows.

\begin{definition}[Neural Edge Cost.]
\label{def:neural_cost}
    Consider a directed edge $(v_{i,l}, v_{j,l+1})$ between layers $l$ and $l+1$ in a neural graph. Let $\sigma \in \{Tanh, ReLU\}$ be the activation function in the corresponding NN. The \emph{neural cost} of $(v_{i,l}, v_{j,l+1})$ is defined as follows:
    \begin{align*}
    d_{ReLU}(v_{i,l}, v_{j,l+1}, x) &= \frac{w_{f_\theta}(v_{i,l}, v_{j,l+1})}{ \min\{\beta(v_{i,l}(x)), \beta(v_{j,l+1}(x))\}}\\
    d_{Tanh}(v_{i,l}, v_{j,l+1}, x) &= \frac{w_{f_\theta}(v_{i,l}, v_{j,l+1})}{ \beta(v_{j,l+1}(x))}.
    \end{align*}
    where $\beta(v) = \sigma(v)/v$ is the fraction of information that passes through the activation function. If $\beta(v) = 0$, then the cost is set to $\infty$.
\end{definition}

\begin{remark}
    In the case of ReLU, $\beta(v) \in \{0, 1\}$. In the case of Tanh, $\beta(v) \in (0,1]$, where we define $\beta(0) = 1$ in the case of $Tanh$ since $\lim_{v\to 0} Tanh(v)/v = 1$.
\end{remark}

Intuitively, Definition~\ref{def:neural_cost} increases an edge's cost based on the fraction of data that is removed by the activation function, which we use as a proxy for the reduced importance of the corresponding NN connection. Note that in the case of ReLU, we consider both the source and target neurons -- in either case, we set the edge's cost to $\infty$ if the ReLU is not activated. In the case of Tanh, we only consider the target neuron since even if the source neuron is saturated, that connection may still carry information.

\begin{remark}
    Note that the neural edge cost does not update the weights in the neural graph $G_{f_\theta}$, i.e., the Wasserstein distance is based purely on the neural neighbor distribution in Definition~\ref{def:neural_neighbor} and the weight function in Definition~\ref{def:neural_graph}. The neural edge cost is used only in the neural curvature calculation in the following definition.
\end{remark}

We are now ready to state the definition of neural curvature, which is the main technical contribution of this paper.

\begin{definition}[Neural Curvature]\label{def:nrc}
Consider a trained NN $f_\theta$, an input example $x$, and the corresponding neural graph $G_{f_\theta}(V_{f_\theta}, E_{f_\theta}, w_{f_\theta})$.  
The \emph{neural curvature} of an edge $(v_{i,l}, v_{j,l+1}) \in E_{f_\theta}$ is defined as
\begin{equation}
\kappa_N(v_{i,l}, v_{j,l+1},x) = \lim_{\alpha \to 1} \frac{\kappa_{\alpha,N}(v_{i,l}, v_{j,l+1},x)}{1-\alpha},
\label{eq:nc_limit}
\end{equation}
where the $\alpha$-neural-curvature is defined as:
\begin{equation}
\kappa_{\alpha,N}(v_{i,l}, v_{j,l+1},x)
= 1 - \frac{W\!\left(\mu_{v_{i,l}}^{\alpha}(\cdot, x),\, \mu_{v_{j,l+1}}^{\alpha}(\cdot, x)\right)}{d_{\sigma}(v_{i,l}, v_{j,l+1},x)}.
\label{eq:alphaNC}
\end{equation}
In~\eqref{eq:alphaNC}, $\sigma \in \{Tanh, ReLU\}$; note that $\mu_{v_{i,l}}^{\alpha}$ and $\mu_{v_{j,l+1}}^{\alpha}$ are calculated according to Definition~\ref{def:neural_neighbor} and~\eqref{eq:mu}. Finally, if $d_{\sigma}(v_{i,l}, v_{j,l+1},x) = \infty$, then $\kappa_{\alpha, N}(v_{i,l}, v_{j,l+1},x) = 1$ if $l = 0$ or $l = L-1$, and $\kappa_{\alpha, N}(v_{i,l}, v_{j,l+1},x) = 2$, otherwise, as shown in Proposition~\ref{prop:relu_curvature} next.
\end{definition}

Note that $\beta(v)$ is often 0 in the case of ReLU activations, which in turn causes $d_\sigma$ to be set to $\infty$ in Definition~\ref{def:nrc}. It turns out that, as the cost gets large, the neural curvature converges to either 1 or 2, depending on the edge's position in the graph, as shown next (\add{the proof is provided in Appendix~\ref{proof:1}}). 

\begin{proposition}[Neural Curvature with Infinite Costs]\label{prop:relu_curvature}
Consider a neural graph $G_{f_\theta}$ and consider the neural curvature $\kappa_{N}(v_{i,l}, v_{j,l+1},x)$ for a directed edge $(v_{i,l}, v_{j,l+1})$.
When the edge cost gets large, i.e.,~$d_{\sigma}(v_{i,l}, v_{j,l+1},x) \to \infty$, the neural curvature converges as follows:
\begin{equation*}
    \lim_{d_{\sigma}(v_{i,l}, v_{j,l+1},x) \to \infty} \kappa_{N}(v_{i,l}, v_{j,l+1},x) = 
    \begin{cases} 
    1, & \text{if $l=0$ or $l=L-1$, i.e.,~$(v_{i,l},v_{j,l+1})$ connects an input/output layer}\\[2mm]
    2, & \text{otherwise}.
    \end{cases}
\end{equation*}
\end{proposition}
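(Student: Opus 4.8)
The plan is to analyze the optimal transport underlying $W\!\left(m^{\alpha}_{v_{i,l}}(x),\, m^{\alpha}_{v_{j,l+1}}(x)\right)$ in the regime where the edge cost $d_{\sigma}$ is large, and to track how much probability mass is forced to travel through the expensive edge itself. Write $u = v_{i,l}$ and $v = v_{j,l+1}$, and abbreviate the Wasserstein distance as $W(\alpha)$. The central observation is that in a feedforward neural graph the only directed path from $u$ (in layer $l$) to $v$ (in layer $l+1$) is the direct edge, so any mass shipped from $u$ to $v$ must pay the per-unit cost $d_{\sigma}$; by contrast, every other pairing of a node in the support of $m^{\alpha}_u$ with a node in the support of $m^{\alpha}_v$ can be routed through alternative, finite-weight edges that avoid $(u,v)$. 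Consequently, as $d_{\sigma}\to\infty$ the optimal plan reroutes all the mass it can, and the leading term of $W(\alpha)$ is $t^{\star}(\alpha)\,d_{\sigma}$, where $t^{\star}(\alpha)$ is the minimum amount of mass forced across the edge.

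Next I would determine $t^{\star}(\alpha)$ from the mass-balance constraints of the transport problem. Recall that $m^{\alpha}_u(x)$ assigns $\alpha$ to $u$ and spreads the remaining $1-\alpha$ over the layer-$(l-1)$ neighbors, while $m^{\alpha}_v(x)$ assigns $\alpha$ to $v$ and spreads $1-\alpha$ over the layer-$(l+2)$ neighbors. The demand at $v$ can be met either directly from $u$ (crossing the edge) or from $u$'s rerouted neighbor mass, and symmetrically the supply at $u$ can be discharged either onto $v$ (crossing) or onto $v$'s rerouted neighbors. In the interior case $1 \le l \le L-2$ both neighbor sets are present, each holding $1-\alpha$ of reroutable mass, and solving the resulting small linear program gives $t^{\star} = 2\alpha - 1$. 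In either boundary case, $l=0$ or $l=L-1$, one of the two distributions collapses to a point mass, since an input or output node carries no neighbor spread (Definition~\ref{def:neural_neighbor}); this removes one rerouting avenue and forces $t^{\star} = \alpha$.

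With $t^{\star}$ in hand I would write $W(\alpha) = t^{\star}(\alpha)\,d_{\sigma} + C(\alpha)$, where $C(\alpha)$ is the cost of transporting the rerouted mass. Since that mass is $O(1-\alpha)$ and travels along fixed finite-weight detours, $C(\alpha)$ is $O(1-\alpha)$ and, crucially, bounded uniformly in $d_{\sigma}$. Substituting into $\kappa_{\alpha,N} = 1 - W(\alpha)/d_{\sigma}$ and using $1 - t^{\star} = 2(1-\alpha)$ in the interior case and $1 - t^{\star} = 1-\alpha$ in the boundary case yields
\[
\frac{\kappa_{\alpha,N}}{1-\alpha} = 2 - \frac{C(\alpha)}{(1-\alpha)\,d_{\sigma}} \quad\text{and}\quad \frac{\kappa_{\alpha,N}}{1-\alpha} = 1 - \frac{C(\alpha)}{(1-\alpha)\,d_{\sigma}},
\]
respectively. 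Letting $\alpha \to 1$ gives $\kappa_{N} = 2 - O(1/d_{\sigma})$ (interior) and $\kappa_{N} = 1 - O(1/d_{\sigma})$ (boundary), and then letting $d_{\sigma}\to\infty$ produces the claimed limits $2$ and $1$.

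I expect the optimal-transport step to be the main obstacle. Rigor requires (i) confirming that finite-weight detours around $(u,v)$ actually exist, which is immediate for fully-connected layers by the complete bipartite connectivity of consecutive layers but needs an explicit connectivity hypothesis for sparse or convolutional layers; and (ii) a matching lower bound, for instance via an explicit dual potential, certifying that the crossing mass cannot be pushed below $2\alpha-1$ (interior) or $\alpha$ (boundary) and that the residual cost $C(\alpha)$ stays bounded independently of $d_{\sigma}$. Pinning down this combinatorial minimum-crossing value, together with its dependence on whether an endpoint is a boundary point-mass node, is the heart of the argument; the remaining algebra and the interchange of the $\alpha\to 1$ and $d_{\sigma}\to\infty$ limits are routine once the leading-order form of $W(\alpha)$ is established.
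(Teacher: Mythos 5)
Your proposal is correct and takes essentially the same route as the paper's proof: both arguments identify the minimum mass forced across the expensive edge ($\alpha$ when one endpoint carries a point mass because it lies in the input/output layer, $2\alpha-1$ in the interior case), write $W = t^{\star}(\alpha)\,d_{\sigma} + O(1-\alpha)$ with the remainder coming from finite-weight detours that avoid $(u,v)$, and then compute the two limits, obtaining $1 - r/d_{\sigma}$ and $2 - (r_1+r_2)/d_{\sigma}$ respectively. The only difference is one of rigor rather than substance: the paper simply asserts this form of the optimal transport plan, whereas you explicitly flag the detour-connectivity assumption, the matching lower bound on the crossing mass, and the uniform-in-$d_{\sigma}$ bound on the rerouting cost needed to justify the limit interchange.
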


\subsection{Treatment of Convolutional Parameters}
\label{sec:conv_curvature}
Note that each NN convolutional parameter results in multiple edges in the corresponding neural graph.
While our neural curvature framework can, in principle, compute a separate curvature value for each edge in a convolutional layer (i.e.,~each connection between input and output feature maps at specific spatial locations), we adopt a simplified approach so as to rank NN parameters as opposed to resulting edges in the neural graph.

Specifically, for each convolutional parameter, we take the minimum curvature across all edges corresponding to a given parameter, capturing the strongest contribution that the parameter can make to information propagation. This approach allows us to assign a single curvature value to each NN parameter such that we can compare convolutional and fully-connected parameters.

\subsection{NN Connection Ranking}\label{sec:nn_ranking}
Given Definition~\ref{def:nrc}, we rank NN connections according to the minimum neural curvature value that each edge attains across all examples in the dataset. \add{We take the minimum over samples because more negative curvature corresponds to stronger information flow and greater functional importance of an edge.}
Specifically, we combine all the edges and construct the curvature set as follows:
\begin{align*}
\mathcal{C}_{\mathrm{full}} = \Bigl\{ \min_{x \in \mathcal{D}} \min_{(u,v)\in\mathcal{E}(p)}\kappa_N(u,v,x) \;\Big|\; p \in \theta \Bigr\},
\end{align*}
where $\mathcal{E}(p)$ denotes all edges in the neural graph that correspond to NN parameter $p$.
We interpret the curvature values such that edges with the most negative curvature are considered the most important for information propagation, while edges with the most positive curvature are considered the least important. Accordingly, $\mathcal{C}_{full}$ can be ranked from highest to lowest curvature to prioritize the least critical connections, as summarized in Algorithm~\ref{alg:nrccalculation}.

In Section~\ref{sec:experiment}, we evaluate the importance of \add{the proposed ranking algorithm} and provide strong evidence that negative-curvature edges carry the essential NN data flow, whereas the majority of positive edges have no impact on NN performance \add{and can be pruned at comparatively lower cost in accuracy}.

\add{\subsection{Neural Curvature Implementation}\label{sec:nn_curv_alog}}
The overall implementation of neural curvature is based on the \texttt{GraphRicciCurvature} library~\citep{ni2019community}, which efficiently handles graph construction and curvature computations on the CPU using Python. 
To better integrate with PyTorch and leverage GPU acceleration, which is widely used in NN training, we reimplemented parts of the library in PyTorch, enabling curvature computations directly on the GPU. Specifically, while the Wasserstein distance computation still needs to be performed on the CPU (effectively solving a linear program for each edge), the shortest-path calculation can be performed on the GPU, through a dynamic programming algorithm as follows.

Since the neural graph $G_{f_\theta}$ is directed and has a layer-by-layer structure, we can apply dynamic programming to compute the shortest-path distances required for the Wasserstein distance in the curvature calculation. 
For a neural graph $G_{f_\theta}$, we define an adjacency cost matrix of layer $l$ as a matrix
$C^{(l)}$, where each entry 
$C^{(l)}_{ij} = w_{f_\theta}\!\left(v_{i,l-1}, v_{j,l}\right)$
is the cost associated with the edge from neuron $v_{i,l-1}$ to neuron $v_{j,l}$.

Let $D^{(k,l)}$ denote the distance matrix of minimum accumulated path cost between neurons in
layer $k$ and neurons in layer $l$, for any $k < l$. $D^{(k,l)}$ is computed via dynamic programming recursion:
\begin{align*}
D^{(k,k+1)} &= C^{(k+1)}, \quad l = k+1, \\[2pt]
D^{(k,l)} &=
C^{(k+1)} \oplus D^{(k+1,l)}, \quad l > k+1. \\[4pt]
\end{align*}
where $\oplus$ is defined as $(A \oplus B)_{ij} = \min_{m}\left(A_{im} + B_{mj}\right)$.
This procedure can be efficiently implemented on a GPU, thereby allowing us to scale our approach to large CNNs.

Finally, to approximate the limit of the neural curvature as Definition~\ref{def:graph_ricci}, we set $\alpha = 0.9$. 
Based on our empirical observations, the $\alpha$-curvature does not change significantly beyond that point, so it is a good compromise between approximation accuracy and numerical stability.


\section{Experiments}\label{sec:experiment}
This section provides an experimental evaluation of the proposed ranking algorithm for NN connections.
We aim to show that our approach can effectively identify both the least and the most important connections in a given NN.
We evaluate the proposed approach on three image classification tasks, namely MNIST~\citep{lecun98}, CIFAR-10, and CIFAR-100~\citep{krizhevsky2009learning} datasets. 
All experiments were run on a 96-core machine with an NVIDIA A40 unit. 
\add{The following section only shows selected model results; the remaining results are discussed in Appendix~\ref{sec:appendix}.}

\subsection{Experiment Design}
Given a trained model $f_\theta$ and a calibration dataset $\mathcal{D}$, the evaluation has the following structure:
\begin{itemize}
    \item Generate a sorted edge set following Algorithm~\ref{alg:nrccalculation}.
    
    \item For each model, compute curvature edge sets using a validation subset: 10 examples (one per label) for CIFAR-10, 100 examples (one per label) for CIFAR-100, and 100 examples (10 per label) for MNIST.
    
    \item Evaluate the impact of the ordered edge set via a pruning experiment: starting from the full network, progressively remove edges according to the sorted set and measure the corresponding effect on test accuracy.

    \item For evaluation, we compare our method with standard pruning techniques, including magnitude-based pruning~\citep{han2015learning}, gradient-based methods (SNIP~\citep{lee2018snip}, SynFlow~\citep{tanaka2020pruning}), \add{and the Hessian-based SparseGPT~\citep{frantar2023sparsegpt} on pre-trained models without retraining. Although SNIP and SynFlow were originally proposed for pruning at initialization, they can also be applied to trained models. SparseGPT, designed for post-training LLM pruning, supports CNN and linear layers, while magnitude pruning applies to all settings.}
    \footnote{We perform 100 iterations for SynFlow. We adopt the implementation in \url{https://github.com/ganguli-lab/Synaptic-Flow}~\citep{tanaka2020pruning} for Magnitude, SNIP, and SynFLow; \url{https://github.com/IST-DASLab/sparsegpt}~\citep{frantar2023sparsegpt} for SparseGPT.}

    \item Explore scalability improvements, e.g., through a hybrid approach where i) magnitude-based pruning is used to pre-prune small and unimportant weights and ii) our method is applied on the remaining parameters.

\end{itemize}


\paragraph{Model description.} To demonstrate the effectiveness of our method, we train a variety of NN architectures with three different losses and two activation functions. On MNIST, we use a CNN architecture, based on the LeNet5 architecture~\citep{lecun98}; on CIFAR-10 and CIFAR-100, we trained a CNN that adopts a VGG‑style convolutional architecture~\citep{simonyan2015}, with the main differences being the removal of padding in convolutional layers, pooling after blocks, and a reduction in fully connected layer size. We refer to this variant as VGG9-lite.
In each task, the training set is separated into a 50k training set and a 10k validation set \add{(note that a subset of the validation set is actually used, as per the numbers provided at the beginning of this subsection)}. 
Three different training schemes are applied: 1)~cross-entropy (CE) loss; 2)~cross-entropy loss with weight decay regularization (WD); 3)~adversarial training~(AT)~\citep{madry17}, as well as two activation functions: 1)~ReLU and 2)~Tanh. In total, we have trained six models on MNIST, six models on CIFAR-10, as well as \add{three models} on CIFAR-100 (ReLU). All model hyperparameters are shown in the Appendix~\ref{sec:modeltraining}. We emphasize that our results are equally effective across all considered combinations.

\begin{figure}[htbp]
    \centering
    \begin{subfigure}{0.3\textwidth}
        \centering
        \includegraphics[width=\linewidth]{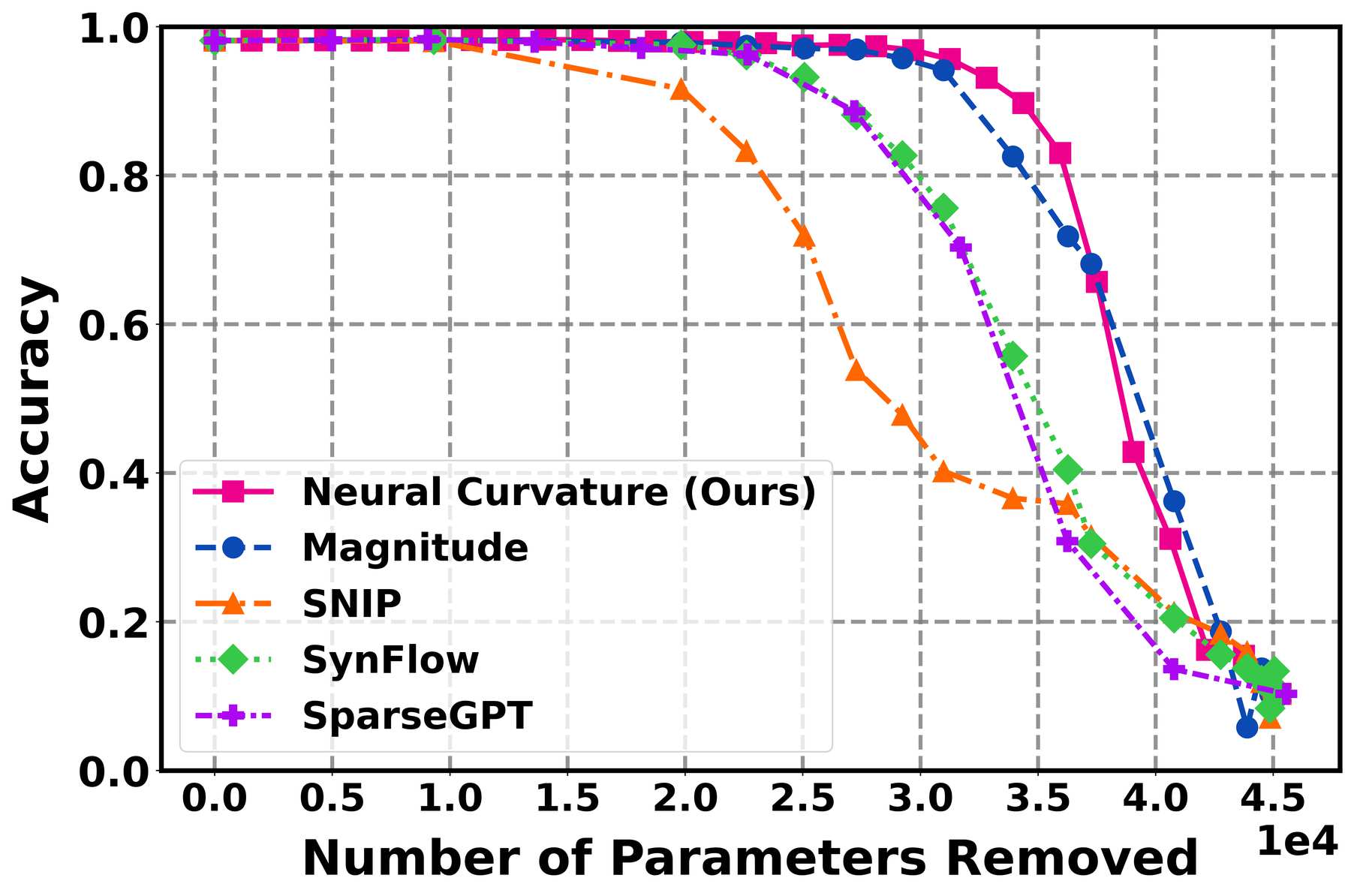}
        \caption{MNIST, CE, ReLU}
    \end{subfigure}
    \hfill
    \begin{subfigure}{0.3\textwidth}
        \centering
        \includegraphics[width=\linewidth]{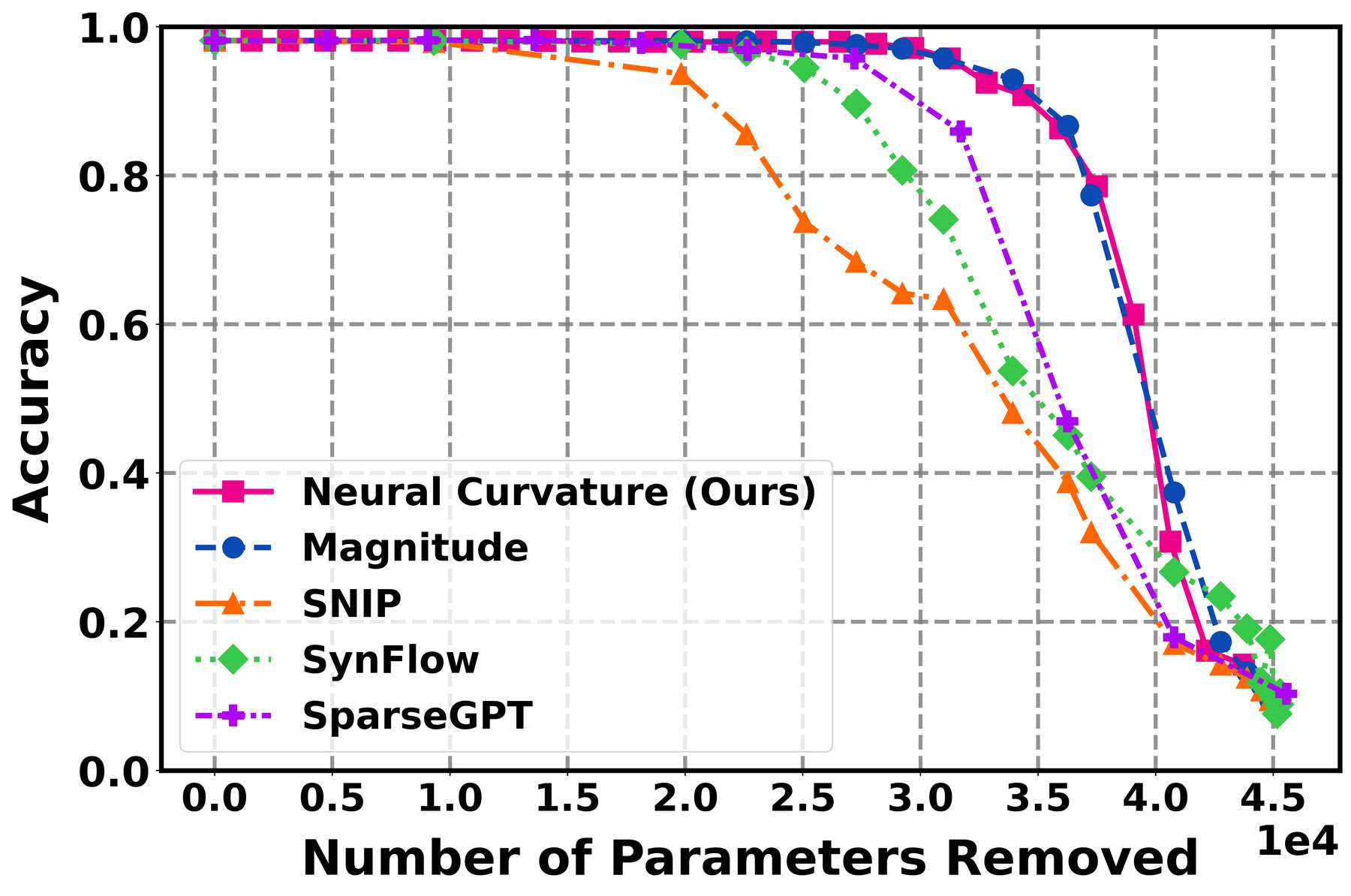}
        \caption{MNIST, WD, ReLU}
    \end{subfigure}
    \hfill
    \begin{subfigure}{0.3\textwidth}
        \centering
        \includegraphics[width=\linewidth]{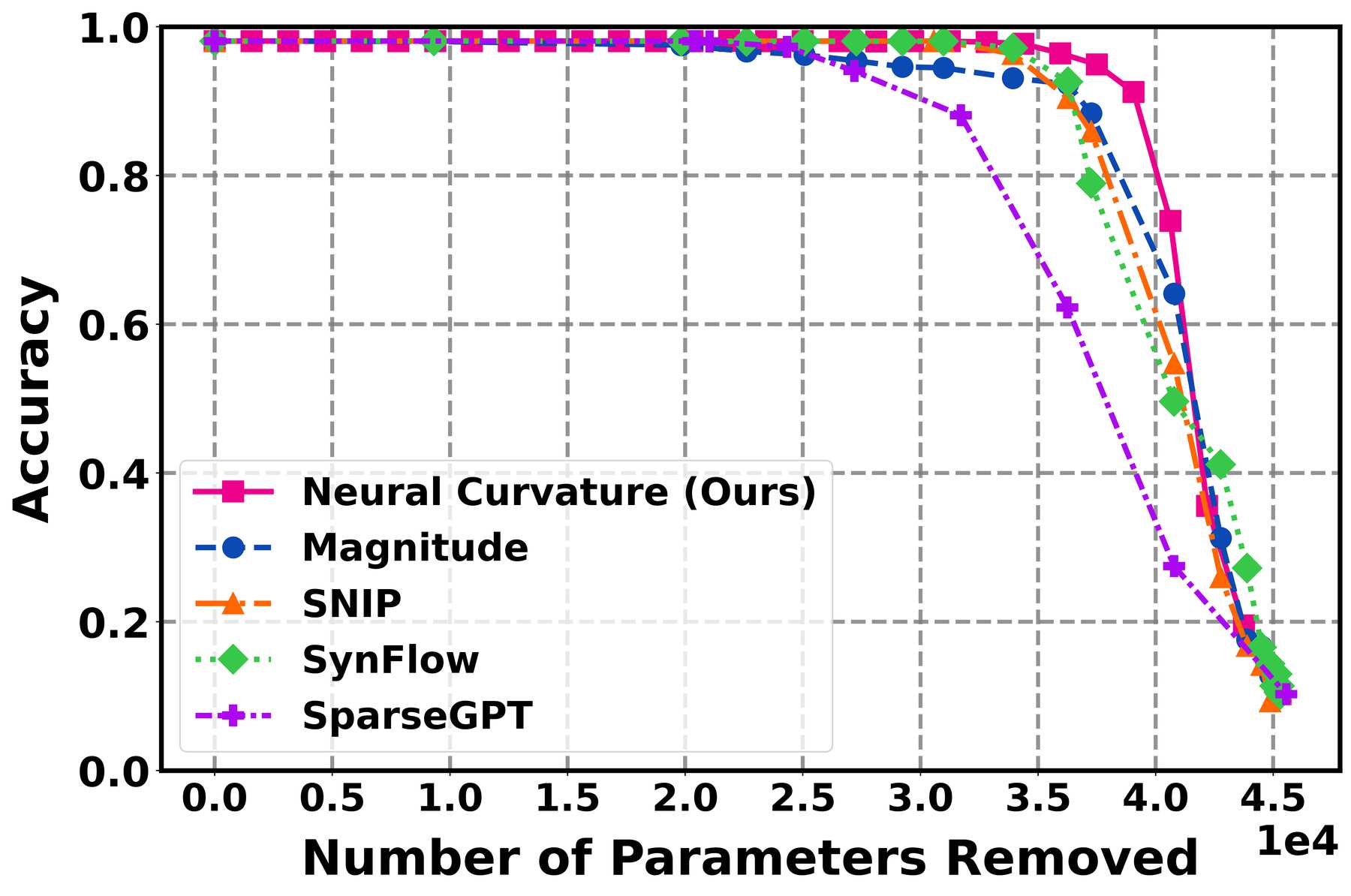}
        \caption{MNIST, AT, ReLU}
    \end{subfigure}

    \vspace{0.5em}

    \begin{subfigure}{0.3\textwidth}
        \centering
        \includegraphics[width=\linewidth]{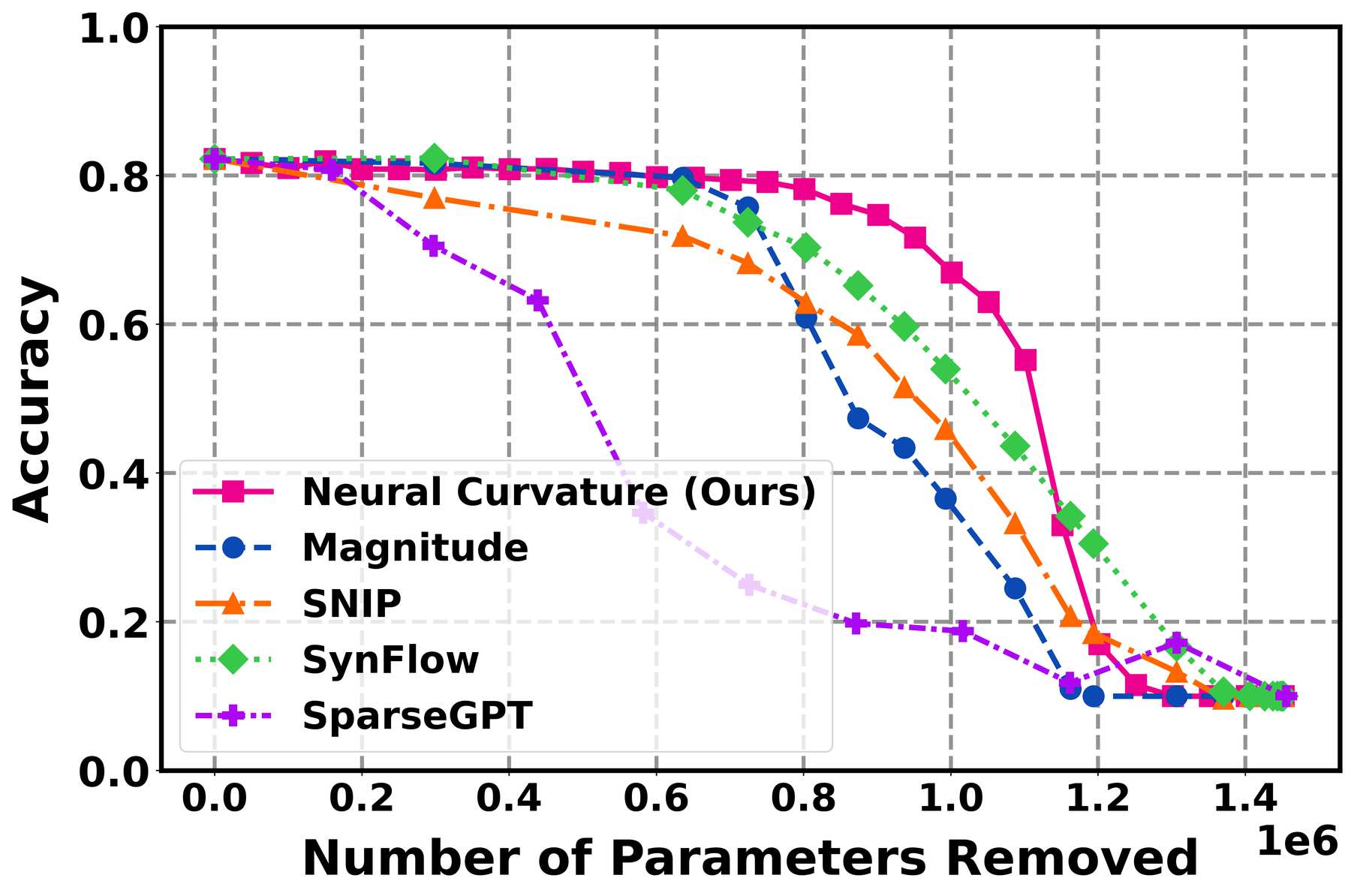}
        \caption{CIFAR-10, CE, ReLU}
        \label{fig:vggori_relu}
    \end{subfigure}
    \hfill
    \begin{subfigure}{0.3\textwidth}
        \centering
        \includegraphics[width=\linewidth]{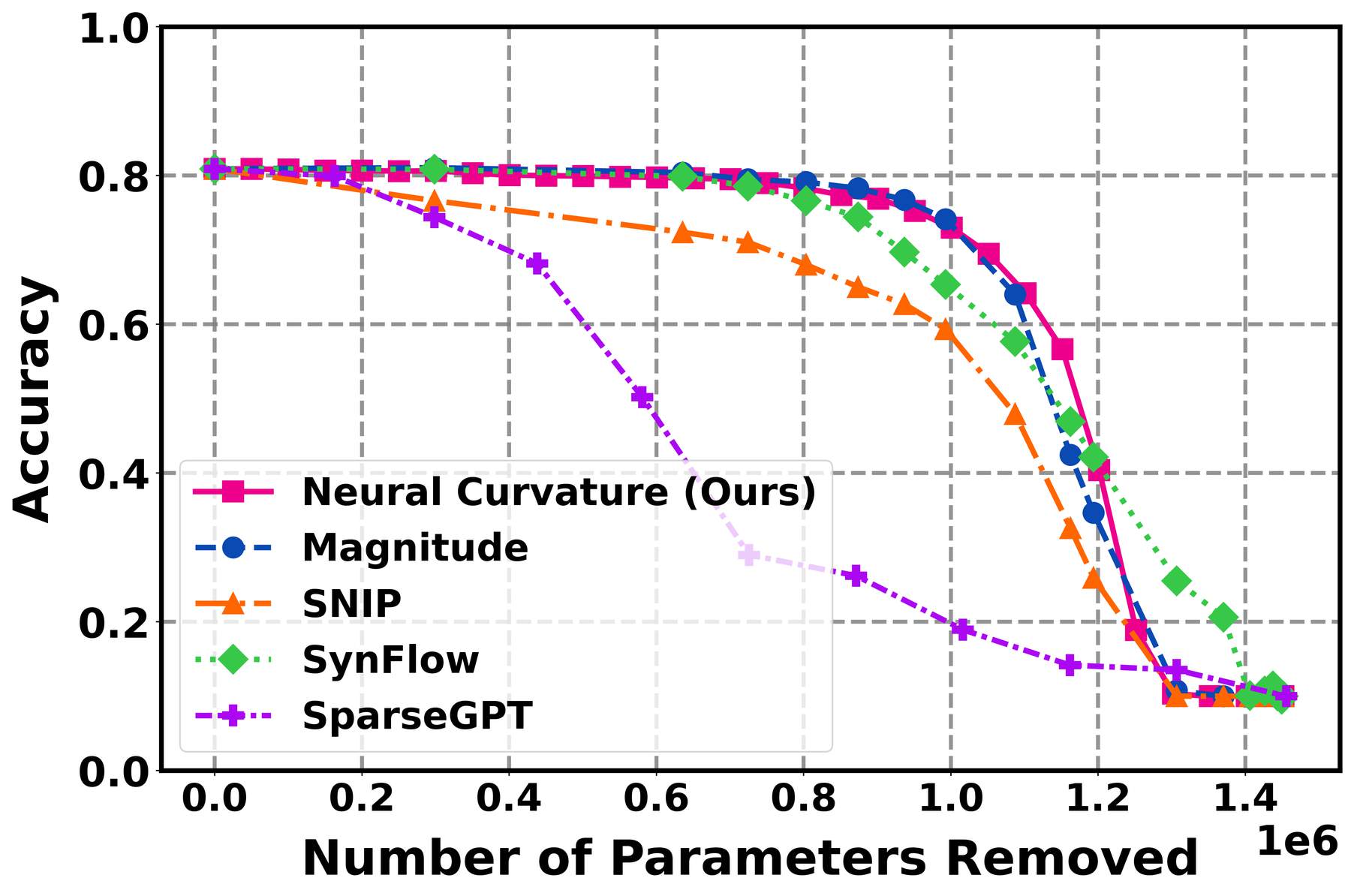}
        \caption{CIFAR-10, WD, ReLU}
    \end{subfigure}
    \hfill
    \begin{subfigure}{0.3\textwidth}
        \centering
        \includegraphics[width=\linewidth]{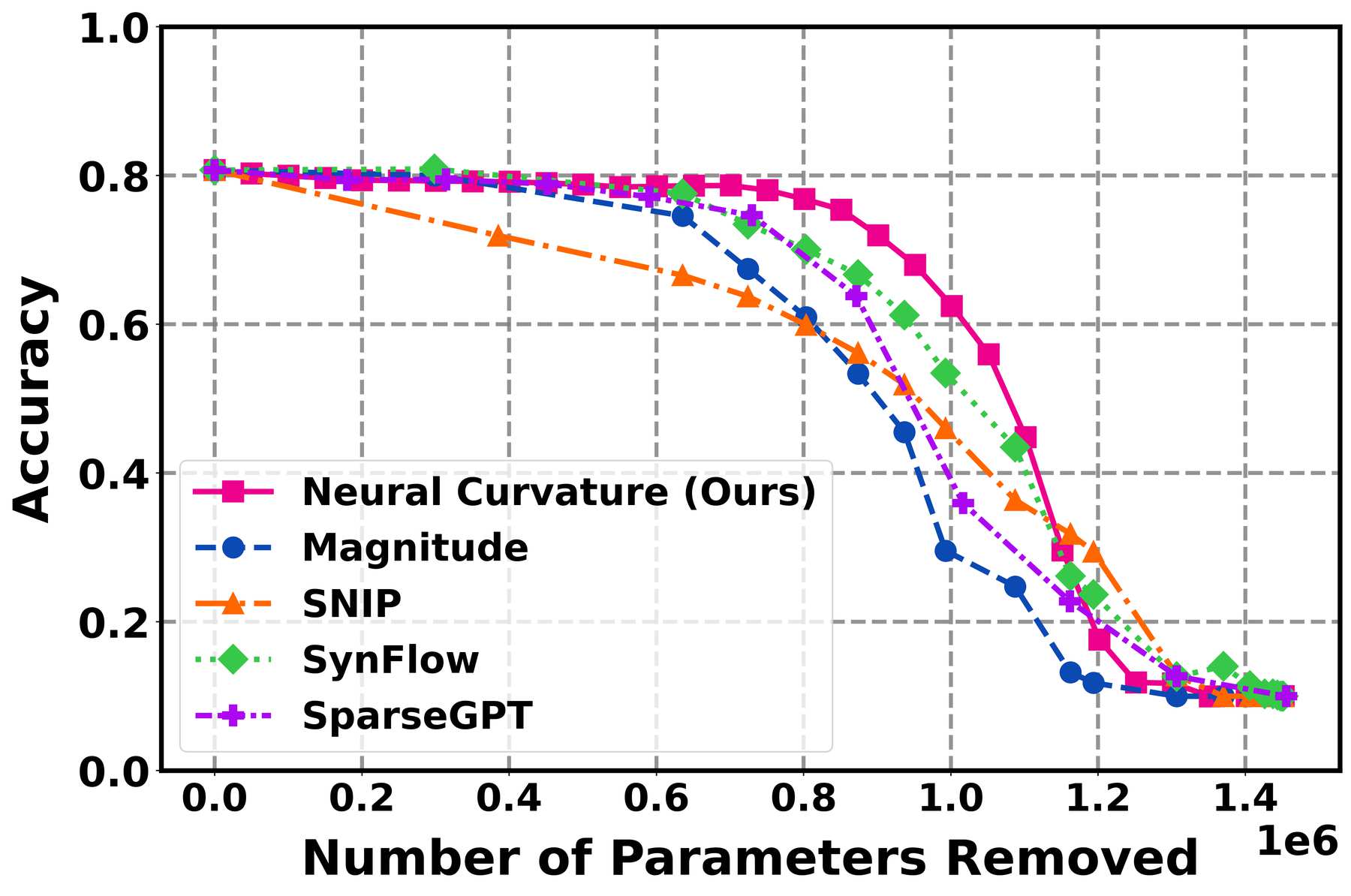}
        \caption{CIFAR-10, AT, ReLU}
    \end{subfigure}

    \vspace{0.5em}

   \begin{subfigure}{0.3\textwidth}
        \centering
        \includegraphics[width=\linewidth]{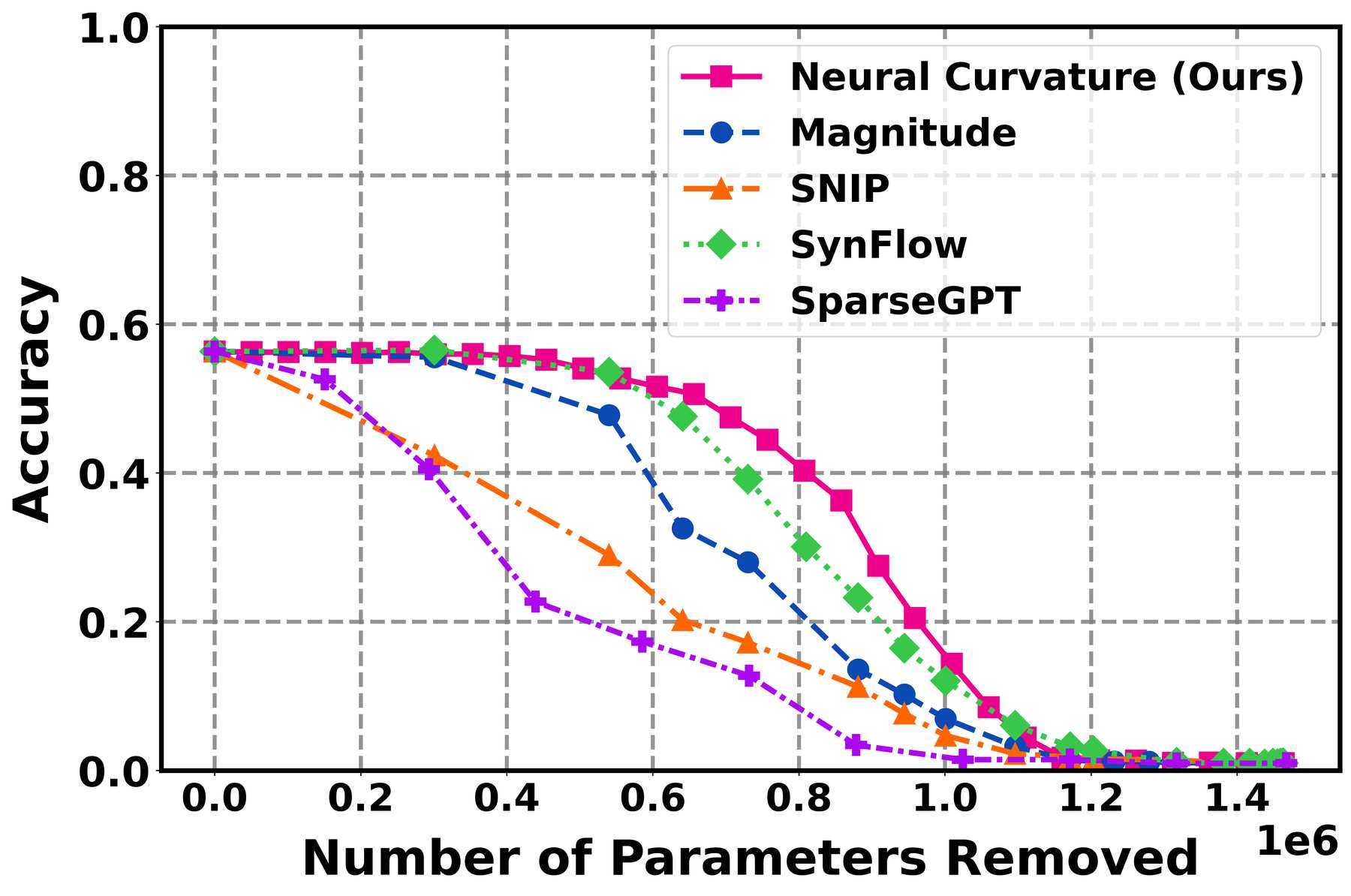}
        \caption{\add{CIFAR-100, CE (lr=0.001), ReLU}}
    \end{subfigure}
    \hfill
    \begin{subfigure}{0.3\textwidth}
        \centering
        \includegraphics[width=\linewidth]{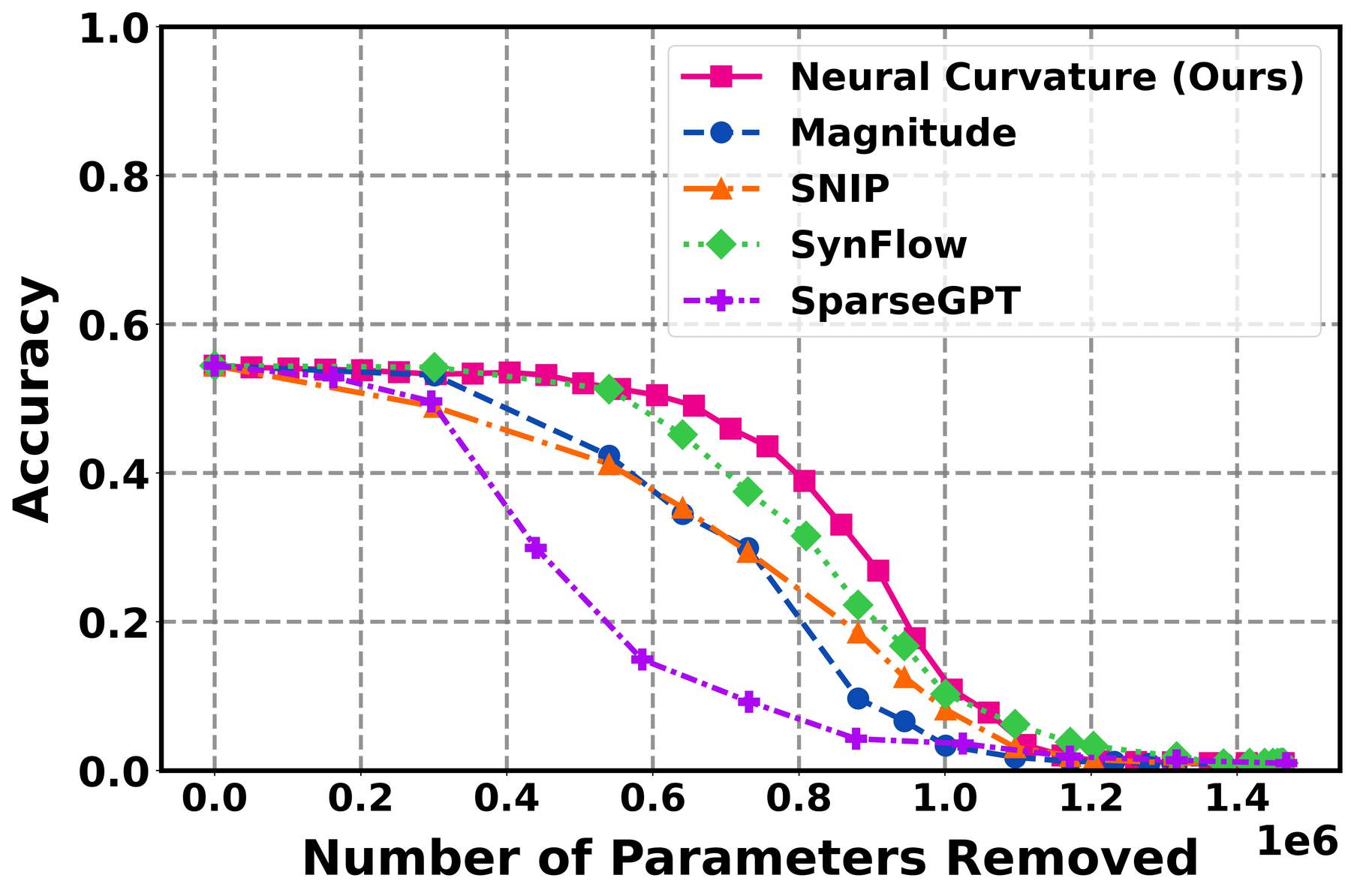}
        \caption{\add{CIFAR-100, CE (lr=0.002), ReLU}}
    \end{subfigure}
    \hfill
    \begin{subfigure}{0.3\textwidth}
        \centering
        \includegraphics[width=\linewidth]{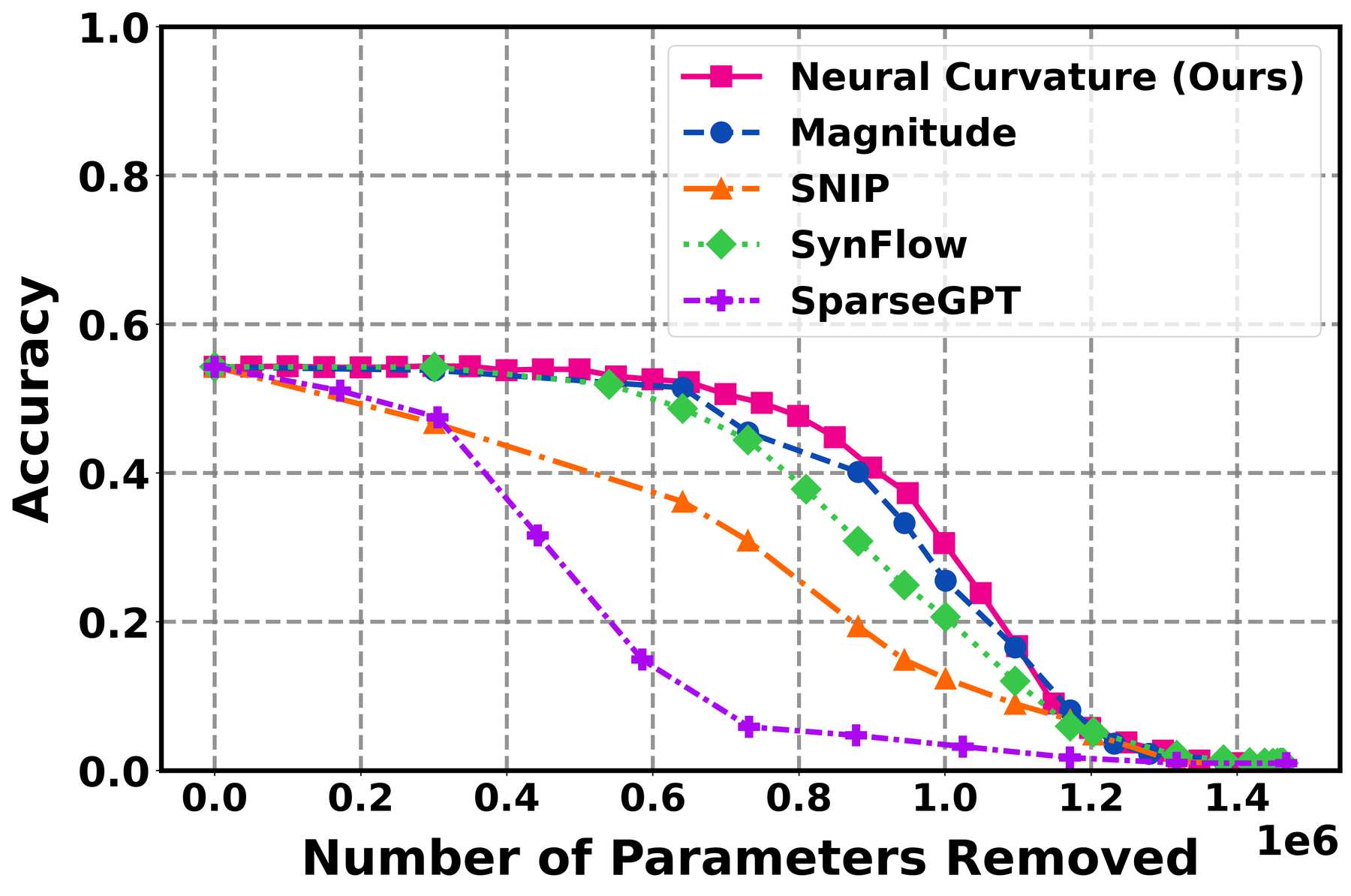}
        \caption{\add{CIFAR-100, WD, ReLU}}
    \end{subfigure}

    \caption{Edge removal evaluation on MNIST, CIFAR-10, and CIFAR-100 (lr means learning rate). Each subfigure shows a comparison of our neural curvature algorithm with existing pruning baselines.}
  \label{fig:classification_orc_comparsion}
\end{figure}
\subsection{Pruning-Based Comparison}

\add{The pruning results for all ReLU models are presented in Figure~\ref{fig:classification_orc_comparsion}, and we present the results for Tanh models in Appendix~\ref{appidx:prune_comp_tanh}.} On MNIST, our method performs comparably to magnitude pruning. In this setting, the network size is modest and less over-parameterized, leaving little redundancy to exploit. 
By comparison, SNIP, SynFlow, \add{and SparseGPT} consistently underperform across all model architectures. 

On CIFAR-10, although pruning-based methods generally provide a meaningful separation between more and less important connections, the results indicate that our method is substantially more effective across most model configurations, particularly for CE- and AT-trained networks. Magnitude pruning performs well for WD-regularized models, likely because weight decay naturally pushes many parameters toward zero, increasing the ease of identifying redundant weights; nevertheless, our method achieves performance that is superior to magnitude pruning across almost all experimental settings, and a deeper discussion about the effect of the WD hyper-parameter is provided in an ablation study in Section~\ref{sec:wd_ablation}. SNIP behaves similarly to a magnitude-times-gradient (a first-order Taylor approximation) criterion when used on a pre-trained network, tending to eliminate parameters that are effectively inactive but struggling to distinguish among weights that already contribute substantially to predictive performance. SynFlow, by contrast, is data-free and therefore produces pruning dynamics similar to magnitude pruning, yet it consistently initiates pruning in the largest layers. This behavior helps prevent catastrophic layer collapse but may not be effective when layer size is not perfectly correlated with the number of redundant parameters. \add{We also provide more results for SynFlow with different numbers of iterations in Appendix~\ref{appedix:synflow} for better evaluation. SparseGPT was originally evaluated primarily on fully-connected and attention layers in \cite{frantar2023sparsegpt}.
Convolutional layers present significantly more challenges due to spatial coupling and the fact that each parameter is multiplied by several input dimensions, which makes it difficult to quantify the parameter's importance. In particular, SparseGPT achieves performance comparable to SynFlow on the AT model, but underperforms on the CE and WD models, indicating that its effectiveness depends on the training objective and is not consistent across different training settings.}
Overall, our method achieves the best or on-par performance with state-of-the-art approaches across all training configurations and activation function regimes, highlighting its effectiveness and broad applicability.

We additionally report results on CIFAR-100,\add{\footnote{We do not include an AT-trained model for CIFAR-100 since those models incur a significant drop in accuracy.}} where our method outperforms all other baselines by a substantial margin, similar to the CIFAR-10 CE model. Although CIFAR-100 is similar to CIFAR-10 in structure, its input examples and associated data pathways are more complex due to the increased number of classes and visual variability. Consequently, despite the training protocol including weight decay, our method remains more effective than magnitude pruning, and continues to consistently outperform SNIP, SynFlow, \add{and SparseGPT}. This observation provides further evidence that the algorithm proposed in this work is able to identify the primary data flow paths within NNs. 

\subsection{Ablation Study}\label{sec:ablation}
For a deeper analysis of the proposed method, we also provide an ablation study where we investigate the effect of the proposed neural modules for neural curvature calculation, the data requirements of our method, the effect of the WD hyperparameter, \add{as well as randomization robustness in the training/validation set. We also provide more detailed exploration, such as the results of per-layer removal, WD hyper-parameter of Tanh models, and the effect of the number of SynFlow iterations, in Appendix~\ref{sec:appendix}.}

\subsubsection{Effect of Negative-Curvature Edge Removal}
While in the previous section we demonstrated that our method is very effective at identifying the least important NN connections as the ones with the highest neural curvature (positive-first pruning), we now also show that negative-curvature edges tend to be the most important.
We show the positive-first and negative-first edge removal results for one MNIST, CE model (ReLU), and two CIFAR-10 CE models (ReLU and Tanh).
As shown in Figure~\ref{fig:classification_orc_ours}, the results show a very convincing tendency that the negative edges constitute the primary data flow, whereas the majority of the positive edges have no impact on the NN accuracy.
In particular, the accuracy degrades sharply when negative edges are removed first, while it remains almost the same when the majority of positive edges are removed first. We emphasize that the results show the same trends across all models and training algorithms -- please consult the Appendix~\ref{sec:edgeremoval} for the full results. 

This demonstrates both the effectiveness of our approach as well as the fact that standard models do not utilize the vast majority of the available weights. In addition, the curvature annotations above the pruning curves reveal a consistent trend: model accuracy decreases as the removed edges include those with curvature values closer to zero. This provides further evidence that highly negative-curvature edges are structurally important for maintaining predictive performance. For example, in the CNN model shown in Figure~\ref{fig:cnnori_relu_ours}, accuracy remains stable while pruning edges with positive curvature, and begins to noticeably drop only once the removed edges include those with curvature values near or below zero. A similar pattern appears in the VGG9-lite experiments (Figures~\ref{fig:vggori_relu_ours} and~\ref{fig:vggori_tanh_ours}), where the curvature value at which accuracy first declines is substantially smaller than the value associated with the initial flat region of the curve. Moreover, as illustrated by the blue curve, pruning edges with strongly negative curvature can abruptly collapse accuracy, effectively breaking the network. These observations indicate that curvature is not merely correlated with edge importance, but serves as a reliable and fine-grained signal for determining which edges should be preserved during pruning.
\begin{figure*}[tb]
  \centering
  \subfloat[CNN, CE, ReLU (MNIST).]{
    \begin{minipage}[t]{0.3\linewidth}
      \centering
      \includegraphics[width=\linewidth]{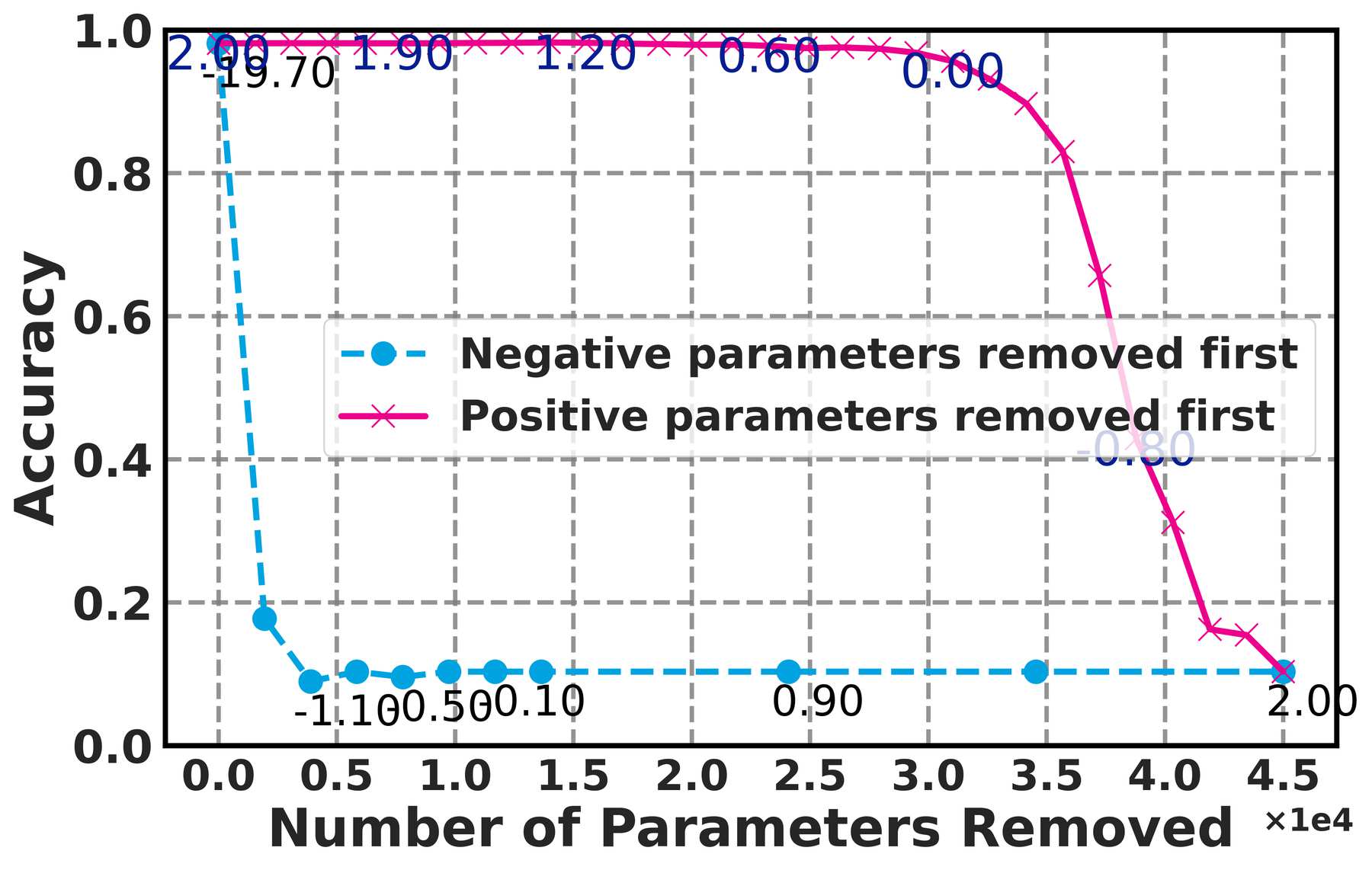}
    \end{minipage}
    \label{fig:cnnori_relu_ours}
  }
  \subfloat[VGG9-lite, CE, ReLU (CIFAR-10).]{
    \begin{minipage}[t]{0.3\linewidth}
      \centering
      \includegraphics[width=\linewidth]{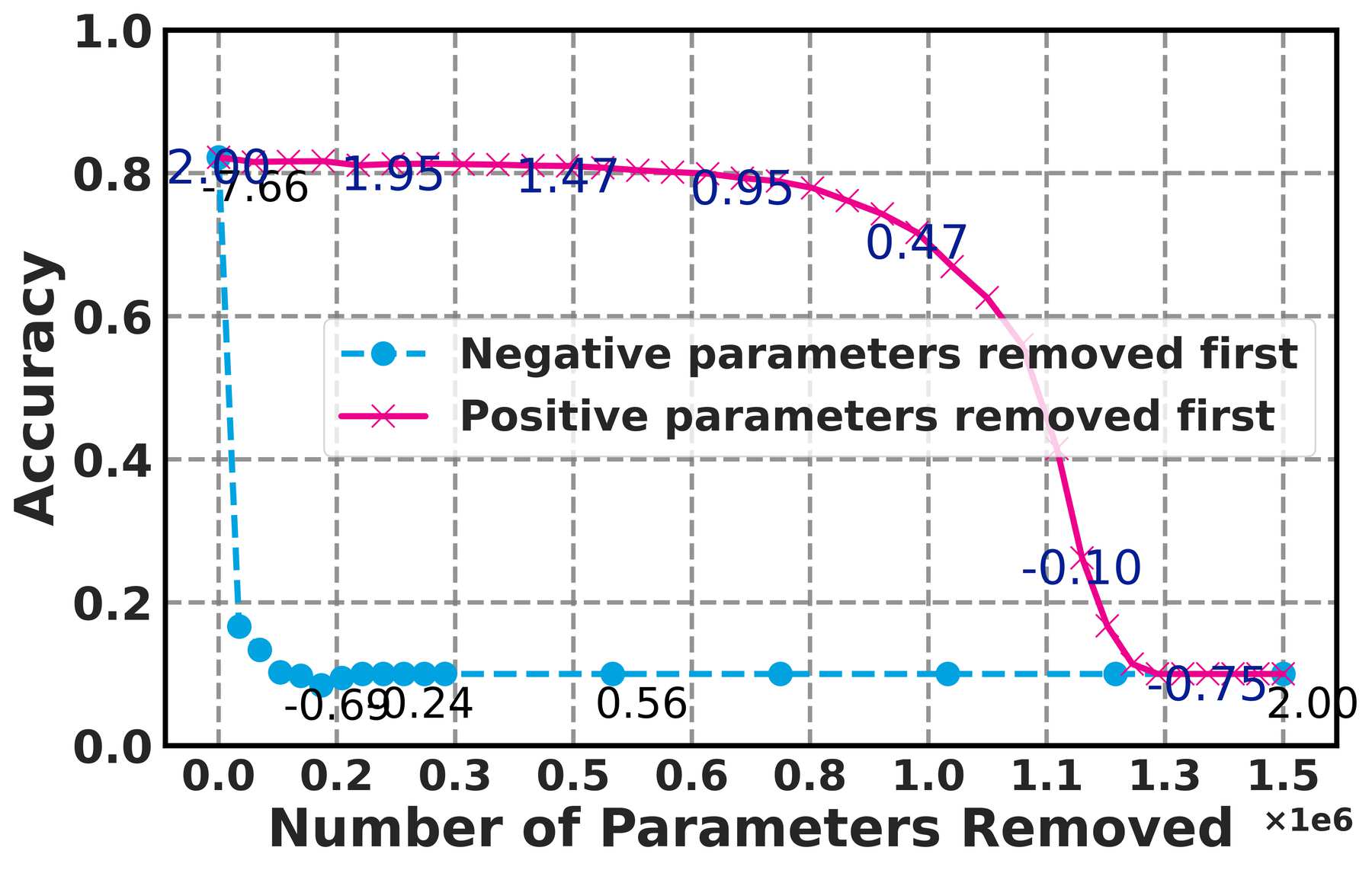}
    \end{minipage}
    \label{fig:vggori_relu_ours}
  }
  \subfloat[VGG9-lite, CE, Tanh (CIFAR-10).]{
    \begin{minipage}[t]{0.3\linewidth}
      \centering
      \includegraphics[width=\linewidth]{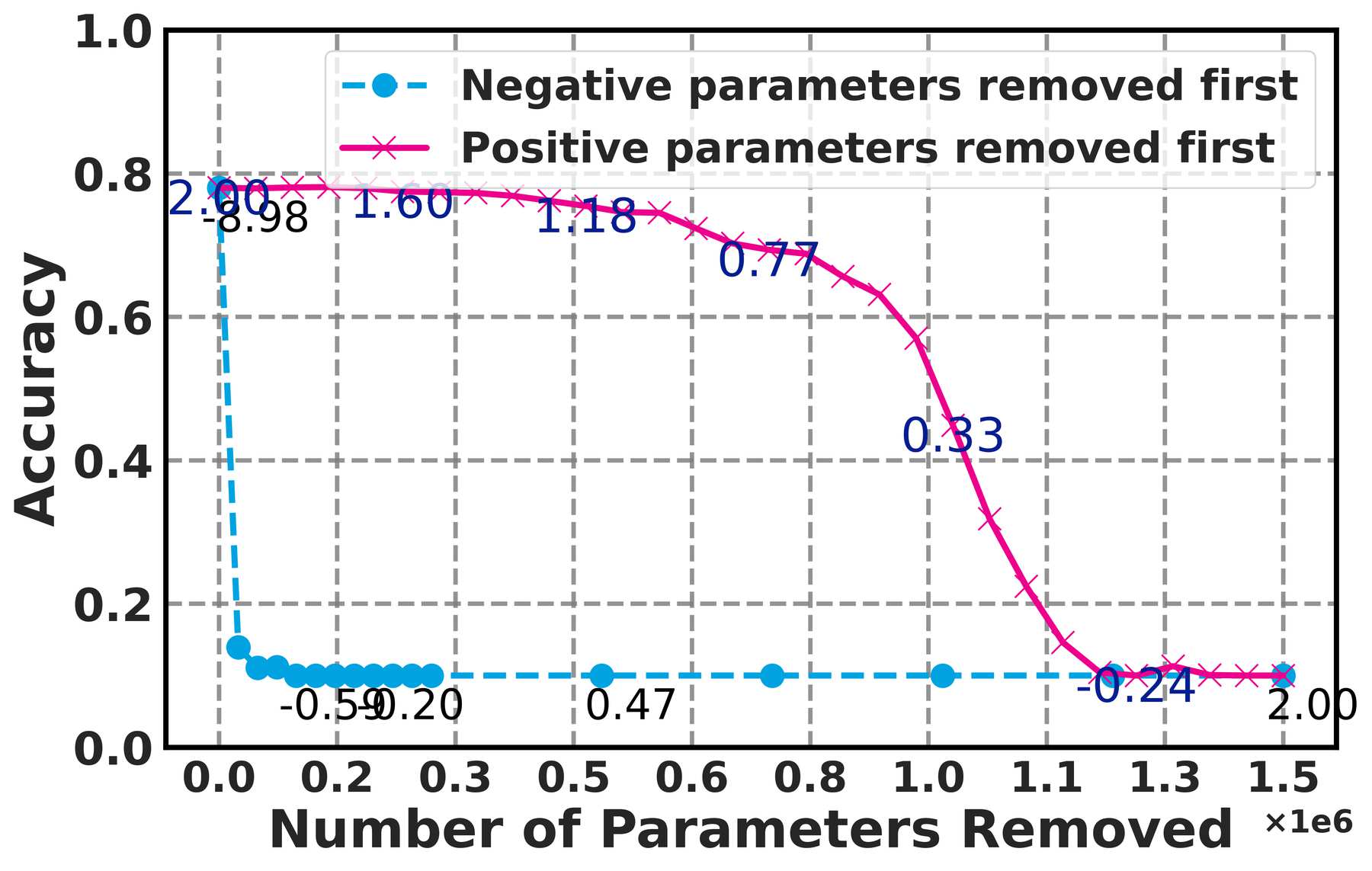}
    \end{minipage}
    \label{fig:vggori_tanh_ours}
  }
  \caption{Edge removal evaluation on MNIST and CIFAR-10 using our
neural curvature algorithm. Each subfigure shows a comparison of the impact of removing edges in order of positive-curvature-first versus negative-curvature-first. Numbers on the curves indicate the minimum curvature per removed edge over the validation set.}
  \label{fig:classification_orc_ours}
\end{figure*}
\subsubsection{Effect of Neural Modules \add{and Edge Ranking Strategies} on Curvature Calculation}
Our algorithm for neural curvature computation is largely aligned with the standard formulation~\citealp{lin2011ricci}, with two modifications. First, we use the neural neighbor distribution as defined in Definition~\ref{def:neural_neighbor}. Second, we apply a neural edge cost to factor in the activation function effect as defined in Definition~\ref{def:neural_cost}. In particular, for ReLU-based models, we assign an infinite cost to a neural edge when either the source or target node has $\beta = 0$. To assess the importance of these two components, we perform an ablation study (Figure~\ref{fig:NN_modules}). As shown in the results, the baseline~\citealp{lin2011ricci}, which is static and only uses the weight magnitudes of the NN, is significantly outperformed by either modification based on incorporating the input examples in addition to the NN structure. 
\begin{figure}[tb]
    \centering
    \begin{subfigure}{0.48\linewidth}
        \includegraphics[width=\linewidth]{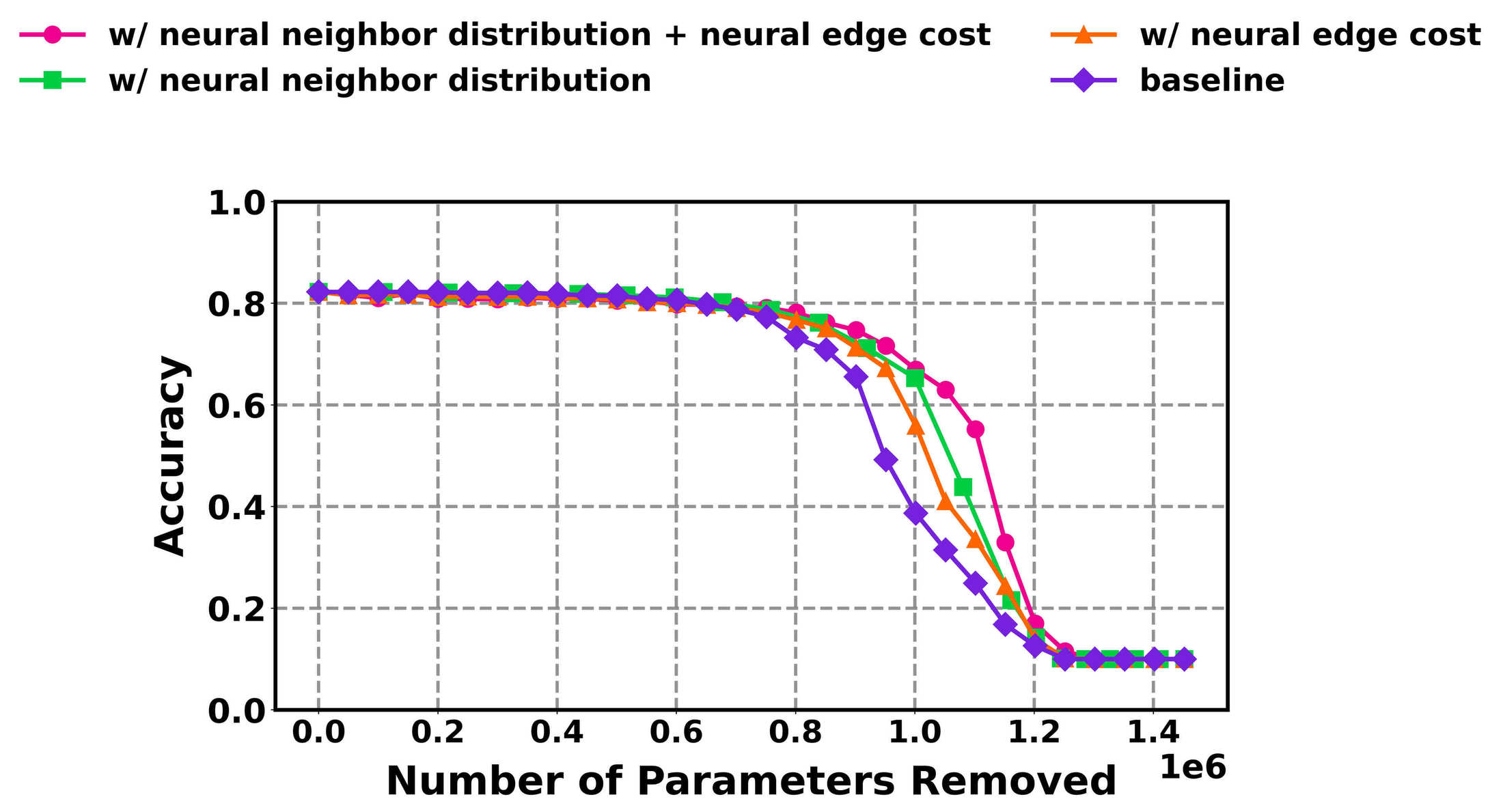}
        \caption{Effect of the proposed neural modules in Definitions~\ref{def:neural_neighbor} and~\ref{def:neural_cost}, respectively, on pruning performance. Baseline is neural graph + graph-based Ricci curvature~\citep{lin2011ricci}.}
        \label{fig:NN_modules}
    \end{subfigure}
    \qquad
    \begin{subfigure}{0.32\linewidth}
        \includegraphics[width=\linewidth]{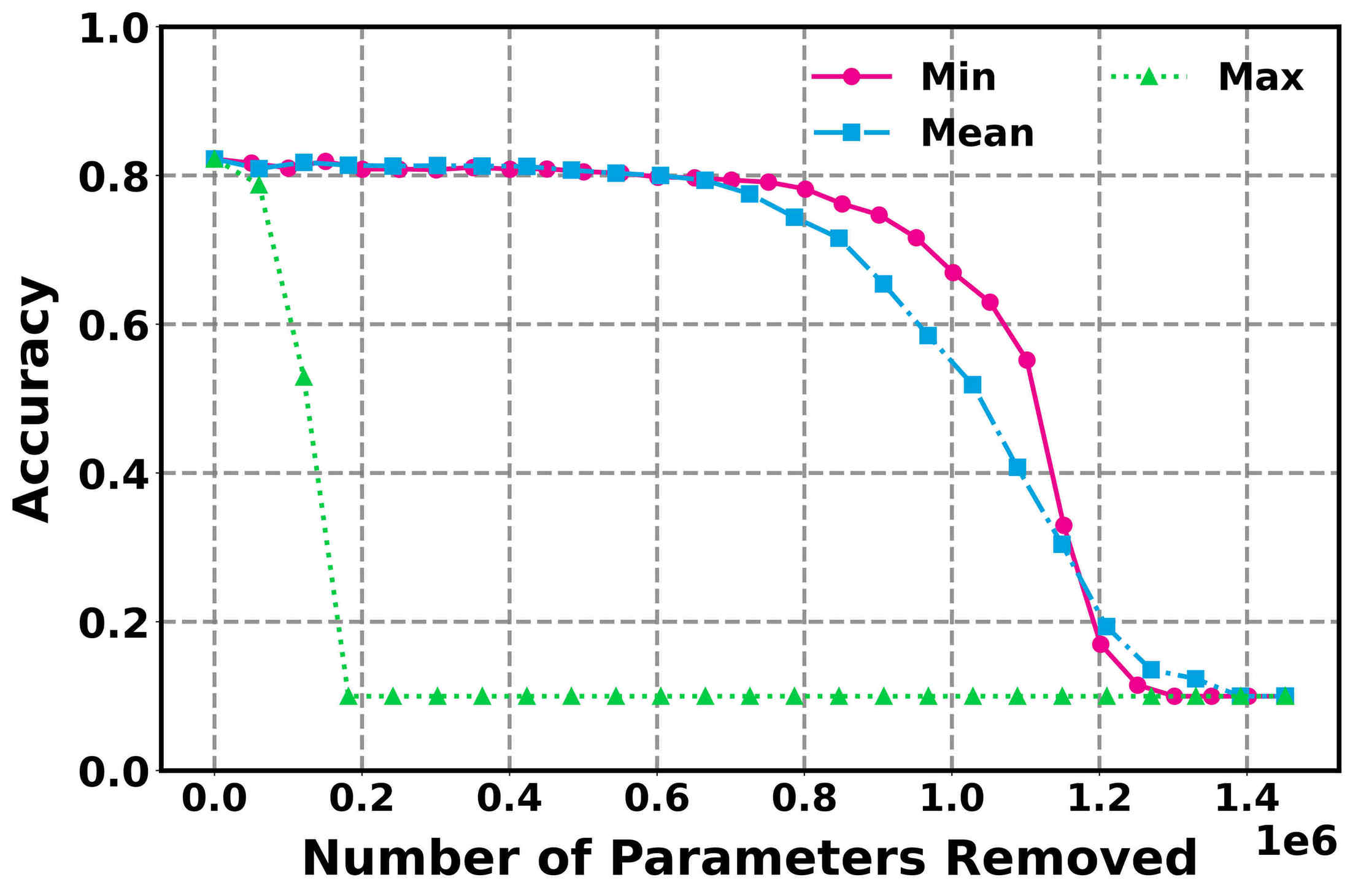}
        \caption{\add{Curvature-based pruning using the minimum/mean/maximum curvature value per edge per example, respectively.}}
        \label{fig:minmaxavg}
    \end{subfigure}
    \caption{
    Ablation experiment of different neural curvature calculations (VGG9-lite, CE ReLU, on CIFAR-10).
    }
    \label{fig:vggori_relu_ablation_curv}
\end{figure}

\add{We further present an ablation study to compare different strategies to compute edge rankings over all the examples, including using minimum, mean, and maximum value, shown in Figure~\ref{fig:minmaxavg}. As discussed in Section~\ref{sec:nn_ranking}, compared to using mean or maximum value, aggregating with the minimum value best captures the critical importance of each connection, leading to the strongest overall performance.}

\subsubsection{Data Requirements}
While the baselines we compare against (e.g., SynFlow) are data-free, our curvature computation requires data to estimate neuron activations. It is therefore important to assess how much data is needed for our method to produce reliable pruning decisions.
\begin{wrapfigure}[12]{r}{0.33\textwidth}
\vspace{-10pt}
\centering
\includegraphics[width=\linewidth]{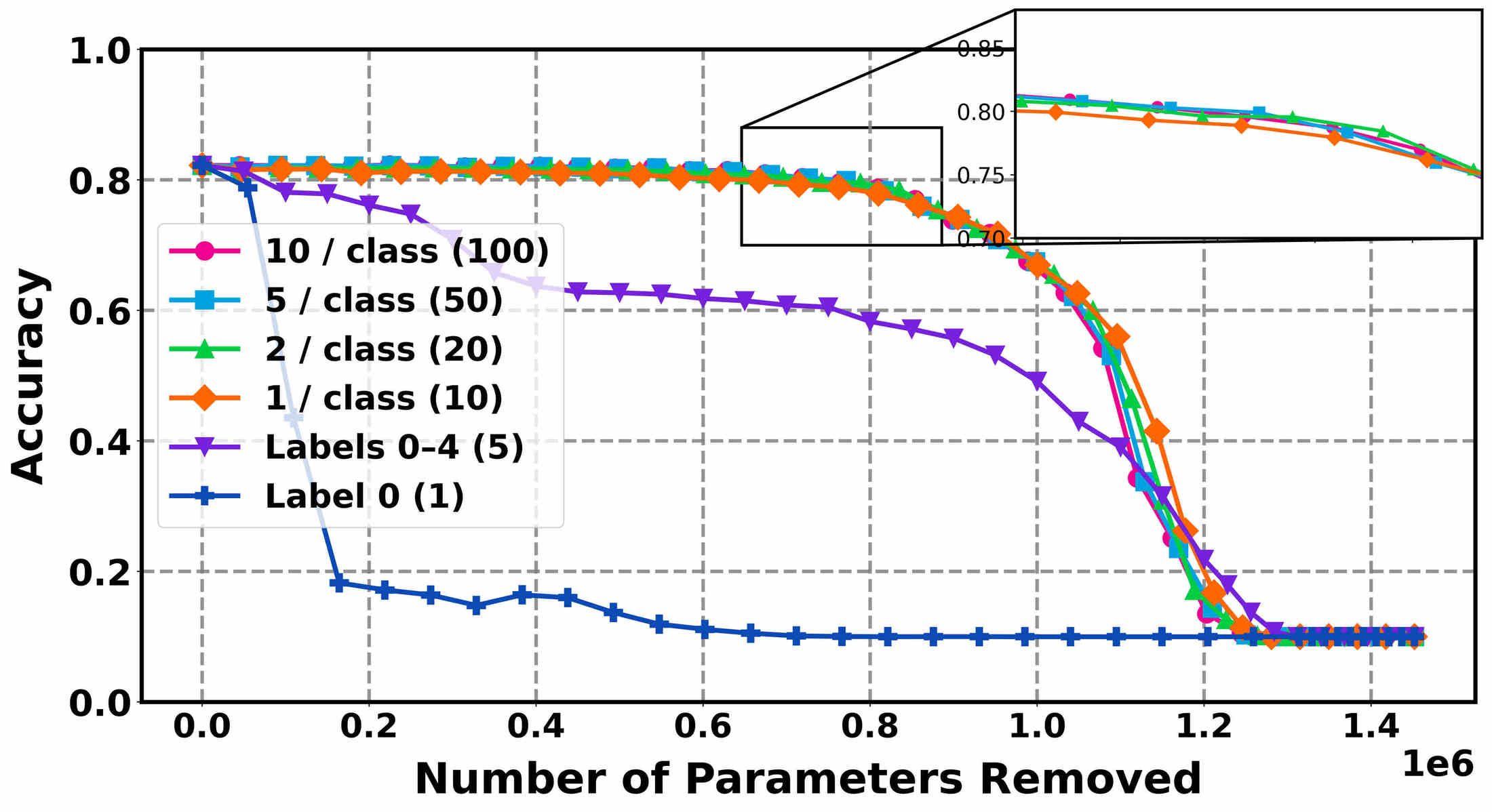}
\vspace{-10pt}
\caption{Ablation experiment for edge removal using different numbers of calibration examples (VGG9-lite, CE ReLU, on CIFAR-10).}
\label{fig:vggori_relu_ablation_num}
\vspace{-10pt}
\end{wrapfigure}
Figure~\ref{fig:vggori_relu_ablation_num} provides an illustration of the data requirements of our method, for the same model as in Figure~\ref{fig:vggori_relu_ablation_layer}. 
We compare the results with using only one example of label $0$, five examples of labels $0-4$, 10 examples (one example per label), 20 examples (two examples per label), 50 examples (five examples per label), and 100 examples (10 examples per label).
As shown in the figure, one example per label is sufficient to recover most of the pruning behavior observed with the full dataset. While adding more examples per label enables a larger number of parameters to be pruned, it leads to a faster decline near the end. Notably, our method is highly data-efficient and can effectively separate the two sets even with only 10 calibration examples. Increasing the number of examples does not substantially change the results and, in rare cases, may slightly alter the separation, likely due to noise in the activation statistics that perturbs the curvature distribution.
%
\subsubsection{Effect of Weight Decay on Neural Curvature and Magnitude}\label{sec:wd_ablation}
WD is known to influence the distribution of weights in NNs, which can in turn affect pruning behavior. The results in the previous sections showed that when WD training is used, magnitude-based pruning improves and can approach or even exceed the performance of our method. In this section, we systematically investigate models trained with different WD hyperparameters to examine how WD affects pruning outcomes and to further illustrate its impact on different pruning methods.
\begin{figure}[tb]
    \centering
    \includegraphics[width=0.8\linewidth]{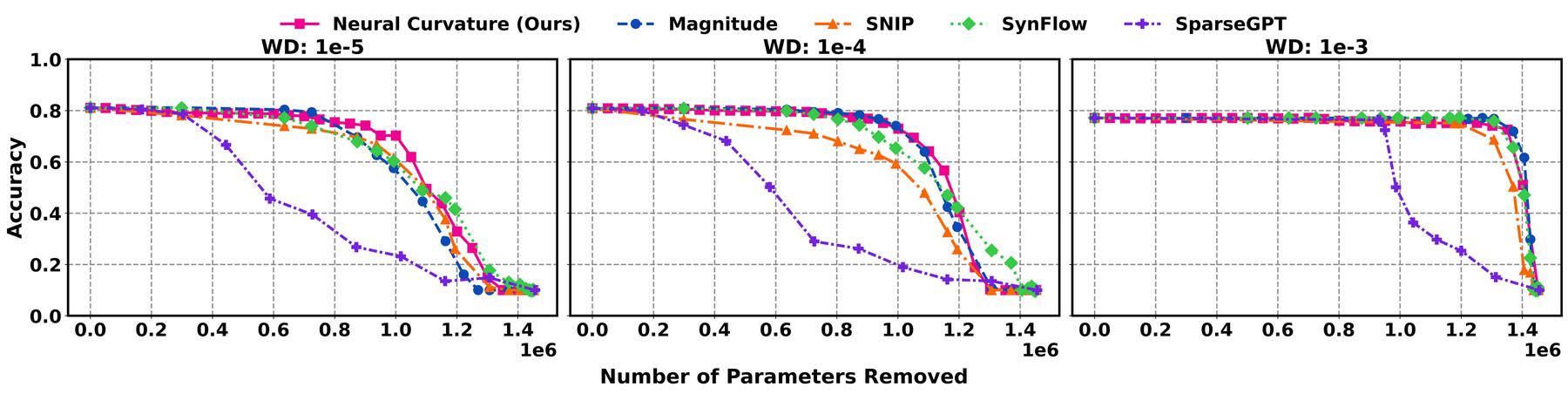}
    \caption{
    Full model edge removal comparison between magnitude-based and ours (VGG9-lite, WD ReLU, on CIFAR-10). Models are trained with different WD parameters.
    }
    \label{fig:vggori_relu_wdablation}
\end{figure}
We compare full-model edge removal across models trained with different weight decay (WD) hyperparameters on CIFAR-10. Figure~\ref{fig:vggori_relu_wdablation} shows results for VGG9-lite ReLU trained with WD parameters $1\mathrm{e}{-5}$, $1\mathrm{e}{-4}$, and $1\mathrm{e}{-3}$, \add{and the results for Tanh models are presented in Appendix~\ref{appedix:wdhp}}. The results indicate that larger WD values substantially improve the effectiveness of magnitude pruning. Effectively, WD helps eliminate unimportant weights by pushing them toward very small values, thus compressing the valid data paths within the network. As the WD parameter increases, magnitude pruning becomes more efficient; however, our method still achieves comparable performance on ReLU models. Finally, it is important to note that increasing the WD hyperparameter typically results in lower accuracy and requires significantly more training effort. Thus, although it is possible to obtain models which are effectively pruned through magnitude-based pruning, one may need to trade off other model properties, such as accuracy or computation.
\begin{figure}[tb]
    \centering
    \begin{subfigure}{0.32\linewidth}
        \includegraphics[width=\linewidth]{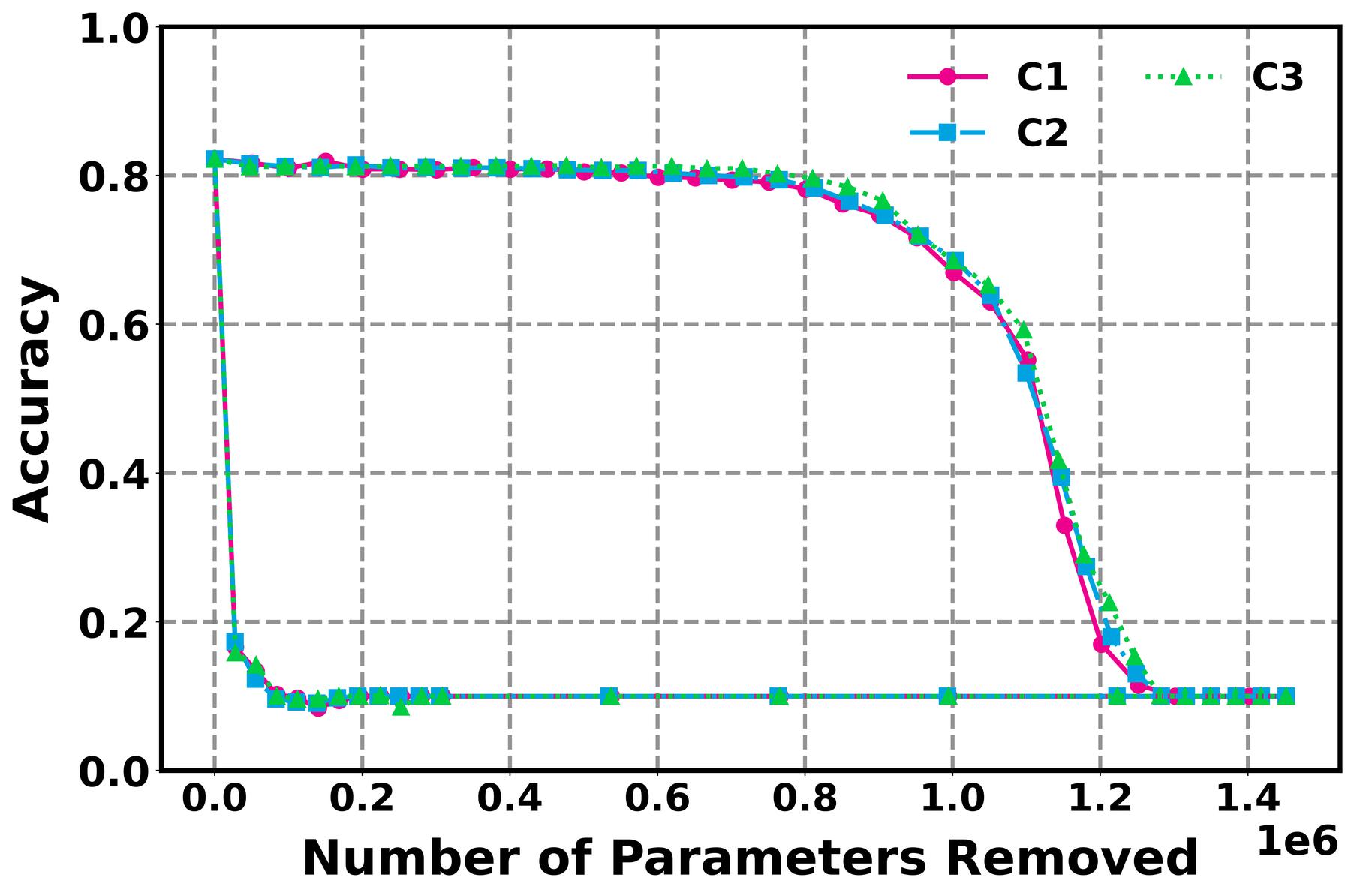}
        \caption{Curvature-based pruning over three calibration sets.}
        \label{fig:calib_s}
    \end{subfigure}
    \qquad
    \begin{subfigure}{0.32\linewidth}
        \includegraphics[width=\linewidth]{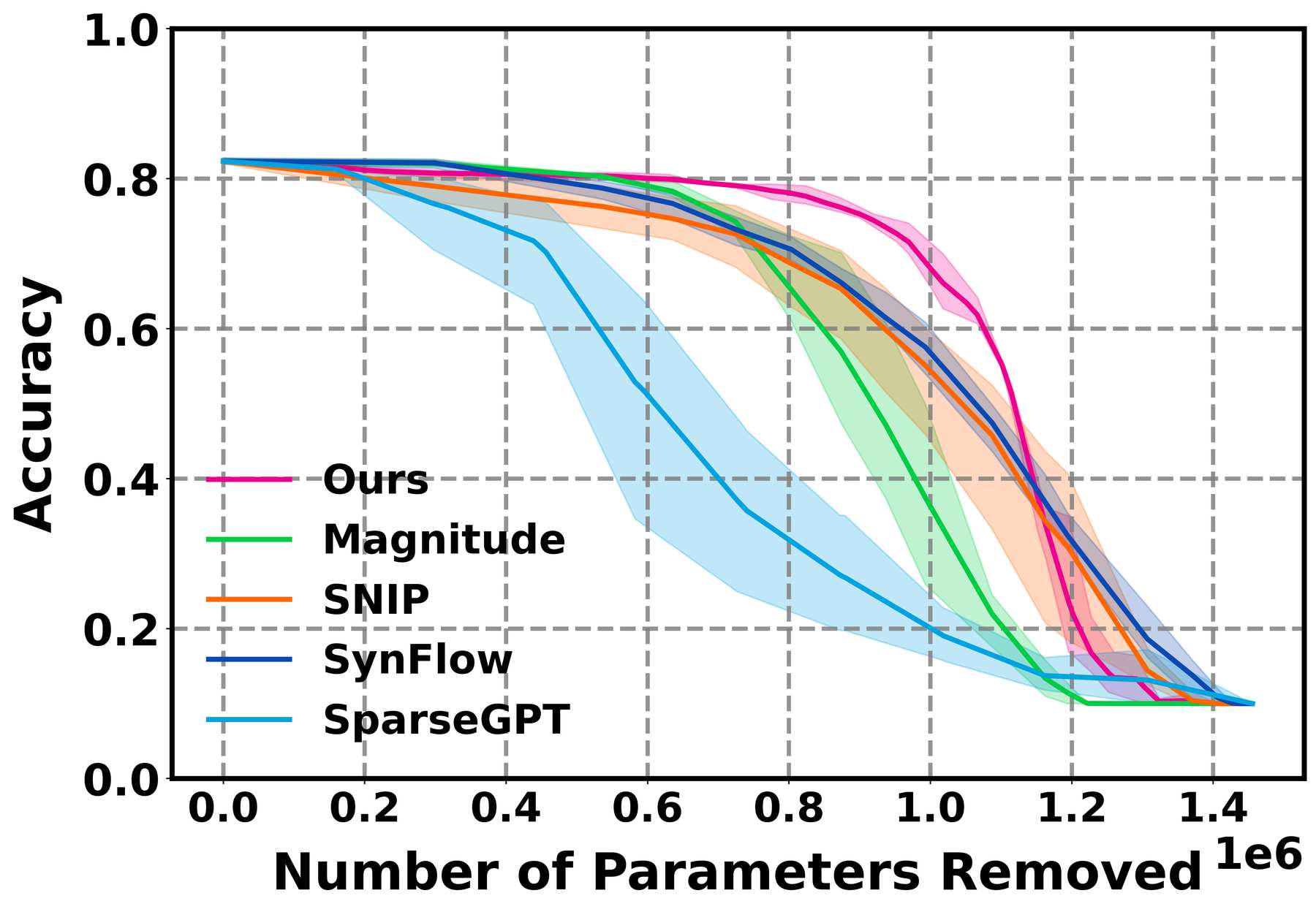}
        \caption{Pruning results for all methods for three different CE models.}
        \label{fig:init_s}
    \end{subfigure}
    \caption{\add{Ablation study (VGG9-lite, CE ReLU, CIFAR-10) with different random seeds.}}
    \label{fig:seed_robust}
\end{figure}
\add{\subsubsection{Statistical Robustness Analysis}\label{sec:seed_robust}
To assess statistical robustness, we conducted additional experiments using different random seeds for: 
1) calibration samples selection; 2) random model initialization of trainable model, CIFAR-10 models (CE, ReLU). 
The calibration set used to calculate the curvature is randomly selected from the validation set, and different random seeds are used to guarantee our algorithm is evaluated on different data. Also, we use different seeds to get different model initializations to show our method is robust for different model states.}
\add{Figure~\ref{fig:calib_s} shows the results using different calibration sets; as can be seen in the figure, our method is very robust to sampling variance in the calibration set. Figure~\ref{fig:init_s} gives the results for different models, CIFAR-10 models (CE, ReLU), trained with different initialization states. We report the mean performance curve for each method and use the minimum and maximum values across three different models to define the error bounds. Although the curves vary with model changes, our method continues to outperform all other methods and achieves the smallest variability.
All of the results remain consistent across runs, indicating that the proposed method is stable with respect to both the choice of calibration samples and model initialization.}

\section{Discussion and Conclusion}
This paper proposed a differential-geometry-based approach to NN pruning, through identifying the main NN data flows. In particular, we introduced the notion of neural curvature that can be used to rank NN connections and separate them according to their importance. We provided an extensive evaluation over three image classification datasets.

\paragraph{\add{Computation requirements.}}\add{Since the proposed method requires additional computation as compared to existing approaches, we provide the runtime and GPU memory statistics for all considered methods in Table~\ref{tab:runtime_memory}. For all of the other methods, since they need to specify the sparsity of the model pruning, we only record the runtime and GPU memory for one specific sparsity. For ours, we record the average runtime and GPU memory per example for VGG9-lite, CE ReLU, CIFAR-10.}
\begin{table}[t]
\centering
\caption{Runtime and memory comparison of pruning methods (VGG9-lite, CE ReLU, CIFAR-10). For all other methods, we record the statistics for one specific pruning sparsity. For ours, we record the average statistics over all ten examples.}
\label{tab:runtime_memory}
\begin{tabular}{lccc}
\toprule
\textbf{Method} & \textbf{Runtime (s)} & \textbf{Peak Memory (MB)} & \textbf{Reserved Memory (MB)}\\
\midrule
Magnitude  & 0.010 & 32.1 & 130.0\\
SNIP       & 0.266 & 106.9 &  136.0\\
SynFlow (100 iterations)    & 20.580 &  59.1 &  140.0\\
SparseGPT  & 66.235 & 198.4 & 146.0 \\
Ours (1 example)      &  1060.574&  13493.0 & 368.0  \\
Magnitude pre-pruning + Ours (1 example)      &  675.012&  17201.2 & 368.0  \\
\bottomrule
\end{tabular}
\end{table}

\paragraph{Scalability improvements.} In terms of scalability, the main challenge is calculating the Wasserstein distance, which requires solving a linear program. While this means that the Wasserstein distance cannot be computed in parallel on a GPU, we will explore three approaches for mitigating this challenge in future work. 1)~\add{We will consider a hybrid approach where magnitude-based pruning is used to quickly prune irrelevant weights (as assessed over a validation set) and curvature-based pruning is used for the challenging-to-separate, intermediate-value weights. An example of this procedure is shown in Figure~\ref{fig:mag_ours}. We begin by performing magnitude pruning until the accuracy drops by more than a user-defined threshold, $\tau$, relative to the original accuracy; we set $\tau = 0.025$, which leads to $43.6\%$ of the model's parameters to be pre-pruned. 
\begin{wrapfigure}[13]{r}{0.32\textwidth}
\vspace{-10pt}
\centering
\includegraphics[width=\linewidth]{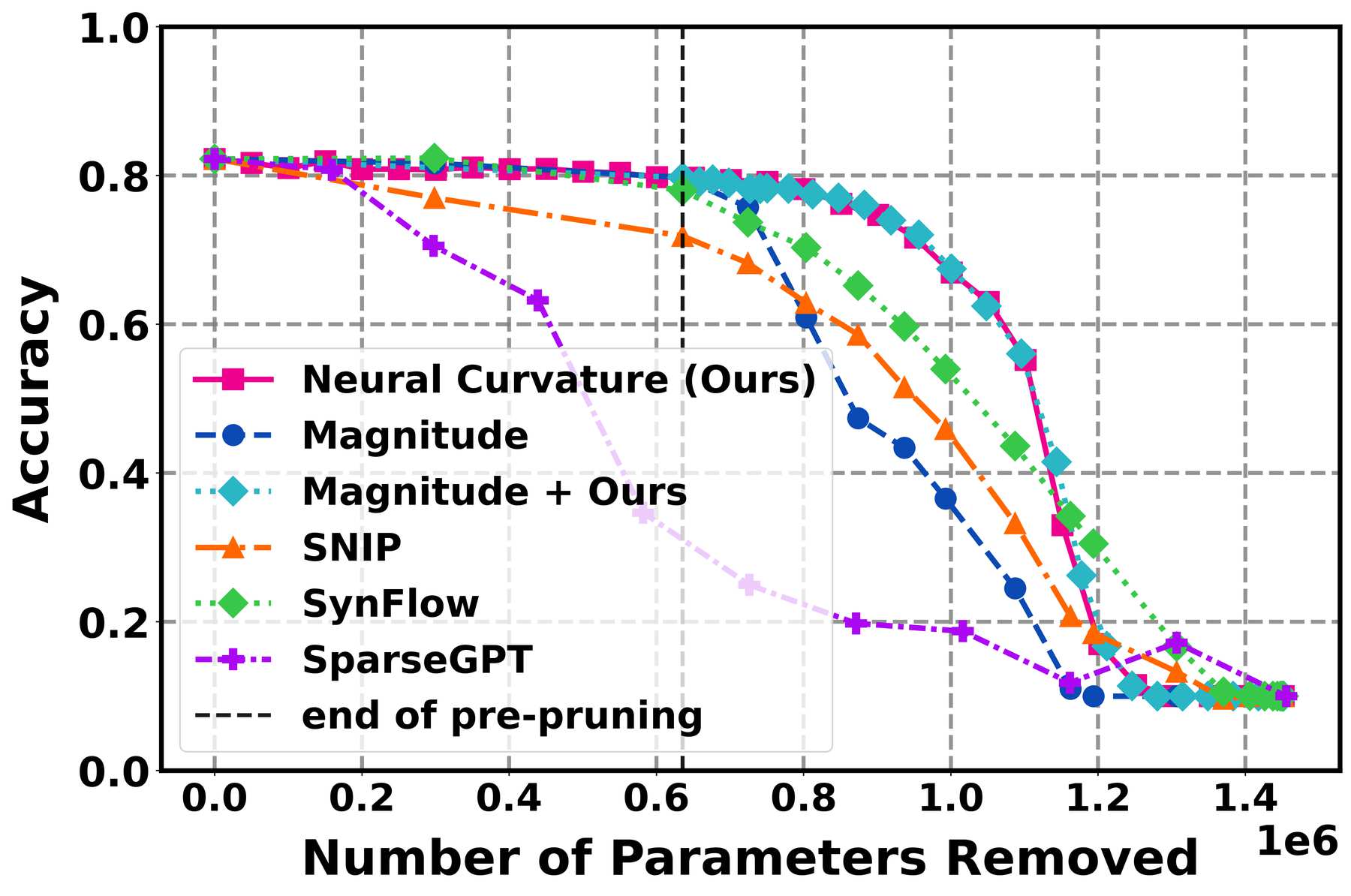}
\vspace{-10pt}
\caption{Scalability study (VGG9-lite, CE ReLU, CIFAR-10), adding Magnitude pre-pruning + Ours.}
\label{fig:mag_ours}
\vspace{-10pt}
\end{wrapfigure}
As illustrated in Figure~\ref{fig:mag_ours}, the black vertical dashed line indicates the switching point. Beyond this point, the performance of pure magnitude pruning degrades rapidly, whereas our method maintains accuracy and achieves performance comparable to the case without pre-pruning. We also show the runtime with pre-pruning in Table~\ref{tab:runtime_memory}, which has a significant speedup. In future work, we will explore this approach in greater depth.} 2)~We will investigate approximations to the Wasserstein distance, which can be computed on the GPU, e.g., through Sinkhorn approximation~\citep{luise2018differential}. 3)~We will improve the implementation by noting that the Wasserstein distance can be greatly simplified in a variety of cases,~e.g., in input and output layers where it reduces to an inner product.

\paragraph{Future work.} The symbolic nature of our approach provides a well-understood tool that can be used in a variety of applications such as robustness analysis, model repair and alignment.
The method is general and can be used in a number of other domains, including natural language processing, reinforcement learning and magnetic resonance image analysis. 
As immediate future work, we intend to apply this technique to the adversarial training problem, e.g.,~by focusing on the edges that are currently not used by the NN. A similar approach can be used for model repair and alignment, e.g.,~by only re-training connections that are not currently used.



\subsubsection*{Acknowledgments}
This work was supported by the National Science Foundation (NSF) Grant CCF-2403615, the NSF Award
2243104 under the Center for Complex Particle Systems (COMPASS), the NSF Mid-Career Advancement Award BCS-2527046, the U.S. Army Research Office (ARO) under Grant No. W911NF-23-1-0111, the National Institute of Health
(NIH) grants R01 AG 079957
and RF1 AG 082201, the Defense Advanced Research Projects Agency (DARPA) Young Faculty Award and DARPA Director Fellowship Award. Any opinions, findings,  conclusions, or recommendations expressed in this material are those of the authors and do not necessarily reflect the views of the NSF or the US Government. This material is based upon work supported by the Air Force Office of Scientific Research under award number FA9550-25-1-0164.

\bibliography{main}
\bibliographystyle{tmlr}

\clearpage
\appendix
\section{Appendix}\label{sec:appendix}
\subsection{Model Training}\label{sec:modeltraining}
This section presents the model training details, including the training hyperparameters and model architectures.

\subsection{Training hyper-parameters}
\begin{enumerate}
    \item Training hyperparameters of CNN on MNIST and VGG9-lite on CIFAR-10:
    \begin{itemize}
        \item learning rate: 2e-3 for all CE, WD, and AT models
        \item weight decay parameter for WD models: 0.0001 (unless mentioned otherwise)
        \item MNIST PGD parameter: eps = 0.1, $\alpha = 2/255$, iterations = 40
        \item CIFAR-10 PGD parameter: eps = 2/255, $\alpha = 2/255$, iterations = 20
        \item optimizer: Adam
    \end{itemize}

    \item Training hyperparameters of VGG9-lite on CIFAR-100
    \begin{itemize}
       \item \add{learning rate: 1e-3, 2e-3 for two CE models, 2e-3 for one WD model}
        \item weight decay parameter: 0.0001
        \item \add{optimizer: Adam}
    \end{itemize}
\end{enumerate}

\subsection{CNN Architecture}

The \texttt{CNN} model is a modified convolutional neural network inspired by the original LeNet-5 architecture. The network is designed for classification tasks and consists of two convolutional layers followed by three fully connected layers. The activation function (ReLU or Tanh) is applied after each convolution and linear transformation.

\begin{itemize}
    \item \textbf{Input:} A grayscale image of size $28 \times 28$ (1 channel).
    
    \item \textbf{Layer 1: Convolution}
    \begin{itemize}
        \item Convolutional layer with 6 output channels
        \item Kernel size: $6 \times 6$
        \item Stride: 2
        \item Output size: $[6 \times 12 \times 12]$
        \item Followed by ReLU/Tanh activation
    \end{itemize}
    
    \item \textbf{Layer 2: Convolution}
    \begin{itemize}
        \item Convolutional layer with 16 output channels
        \item Kernel size: $6 \times 6$
        \item Stride: 2
        \item Output size: $[16 \times 4 \times 4]$
        \item Followed by ReLU/Tanh activation
    \end{itemize}
    
    \item \textbf{Flattening:} The $16 \times 4 \times 4$ output is flattened into a vector of size $256$.
    
    \item \textbf{Layer 3: Fully Connected}
    \begin{itemize}
        \item Linear layer from 256 to 120 units
        \item Followed by ReLU/Tanh activation
    \end{itemize}
    
    \item \textbf{Layer 4: Fully Connected}
    \begin{itemize}
        \item Linear layer from 120 to 84 units
        \item Followed by ReLU/Tanh activation
    \end{itemize}
    
    \item \textbf{Layer 5: Output Layer}
    \begin{itemize}
        \item Linear layer from 84 to 10 units (representing 10 classes)
    \end{itemize}
    
    \item \textbf{Output:} Raw logits for each class.
\end{itemize}

Note: The implementation uses custom \texttt{CNN} and \texttt{linear} methods for the forward computation, possibly to track graph-based properties such as edge connectivity and activation flows. Max-pooling operations are commented out in the current version.

\subsection{VGG9-lite Architecture}

The \texttt{VGG9-lite} model is a customized version of the VGG architecture, adapted for the CIFAR-10/CIFAR-100 dataset. It contains 6 convolutional layers, followed by 3 fully connected layers. Batch normalization and ReLU/Tanh activations are used throughout. All the convolution and linear layers are computed using custom \texttt{CNN} and \texttt{linear} functions for graph-based analysis.

\begin{itemize}
    \item \textbf{Input:} RGB image of size $32 \times 32$ (3 channels), normalized with CIFAR-10/CIFAR-100 dataset statistics.

    \item \textbf{Conv Block 1:}
    \begin{itemize}
        \item One convolutional layers: $3 \rightarrow 64$ (kernel $3\times3$, padding 0, stride 2)
        \item BatchNorm + ReLU/Tanh after each
        \item Output size: $64 \times 16 \times 16$
    \end{itemize}

    \item \textbf{Conv Block 2:}
    \begin{itemize}
        \item One convolutional layers: $64 \rightarrow 128$ (kernel $3\times3$, padding 0, stride 1)
        \item BatchNorm + ReLU/Tanh after each
        \item Output size: $128 \times 14 \times 14$
    \end{itemize}

    \item \textbf{Conv Block 3:}
    \begin{itemize}
        \item Two convolutional layers: $128 \rightarrow 128$ (kernel $3\times3$, padding 0)
        \item First convolutional layers with a stride 1; second convolutional layers with a stride 2
        \item BatchNorm + ReLU/Tanh after each
        \item Output size: $128 \times 5 \times 5$
    \end{itemize}

    \item \textbf{Conv Block 4:}
    \begin{itemize}
        \item Two convolutional layers: $128 \rightarrow 256$ (kernel $3\times3$, padding 0, stride 1)
        \item BatchNorm + ReLU/Tanh after each
        \item Output size: $256 \times 1 \times 1$
    \end{itemize}

    \item \textbf{Flatten:} The $256 \times 1 \times 1$ tensor is flattened to a vector of length 256.

    \item \textbf{Fully Connected Layers:}
    \begin{itemize}
        \item \texttt{fc1}: Linear layer from 256 to 512 (with ReLU/Tanh)
        \item \texttt{fc2}: Linear layer from 512 to 128 (with ReLU/Tanh)
        \item \texttt{fc3}: Linear layer from 128 to 10 (raw logits for 10 classes for CIFAR-10 (100 classes for CIFAR-100))
    \end{itemize}

\end{itemize}

The use of custom \texttt{CNN} and \texttt{linear} functions enables explicit tracking of information flow and layer-wise graph structures, which is beneficial for graph-theoretic or curvature-based network analysis.

\clearpage
\add{\subsection{Overall Algorithm}}\label{appdex:algorithm}
Algorithm~\ref{alg:nrccalculation} provides the pseudo-code of the overall algorithm of our method.
\begin{algorithm}[phtb]
\caption{Rank NN connections using neural curvature}
\label{alg:nrccalculation}
\begin{algorithmic}[1]
\renewcommand{\algorithmicrequire}{\textbf{Input:}}
\renewcommand{\algorithmicensure}{\textbf{Output:}}

\REQUIRE Trained neural network $f_\theta$, calibration set $\mathcal{D}$
\ENSURE Curvature-ranked parameter set $edge\_set$

\STATE $(V, E, w) \gets \text{neural\_graph}(f_\theta)$ 
      \hfill // Def.~\ref{def:neural_graph}

\STATE \textbf{Initialize} $edge\_cur[e] \gets +\infty \quad \forall e \in E$
\STATE \textbf{Initialize} $para\_set \gets \emptyset$

\FOR{$e \in E$}
    \FOR{$x \in \mathcal{D}$}
        \STATE Compute $\kappa_N(e, x)$ \hfill // Def.~\ref{def:nrc}
        \STATE $edge\_cur[e] \gets \min\big(edge\_cur[e], \kappa_N(e, x)\big)$
    \ENDFOR
\ENDFOR

\STATE // Fully-connected parameters correspond to single edges
\FOR{fully-connected parameter $p \in \theta$}
    \STATE //Let $e_p \in E$ be the edge corresponding to $p$
    \STATE $\kappa(p) \gets edge\_cur[e_p]$
    \STATE $para\_set[p] \gets \kappa(p)$
\ENDFOR

\STATE // For convolutional parameters, assign the minimum curvature among all incoming edges
\FOR{convolutional parameter $p \in \theta$}
    \STATE $\mathcal{E}_p \gets \{ e \in E \mid e \text{ is induced by } p \}$
    \STATE $\kappa(p) \gets \min_{e \in \mathcal{E}_p} edge\_cur[e]$
    \STATE $para\_set[p] \gets \kappa(p)$
\ENDFOR

\STATE // Rank parameters by curvature
\STATE $edge\_set \gets \text{sort}(para\_set, \text{descending})$
      \hfill // highest curvature first
\end{algorithmic}
\end{algorithm}

\clearpage
\add{\subsection{Proof of Proposition~\ref{prop:relu_curvature}}}\label{proof:1}
\begin{proof}
To simplify notation, in what follows we define $u:=v_{i,l}$ and $v:=v_{j,l+1}$.


\emph{Part 1: Edges in the input or output layers.}
Without loss of generality, assume $u$ is in the input layer.
This means the probability mass assigned to $u$ equals $1$. \add{Consider the set of nodes involved in the Wasserstein distance computation (ordered for simplicity): $\{u, v, \Gamma^{out}(v)\}$. In this case, $\mu_u^{\alpha}(\cdot,x) = [1, 0, \dots, 0]$ and $\mu_v^{\alpha}(\cdot,x) = [0, \alpha, e_{v_1}, \dots, e_{v_N}]$, where $\sum e_{v_i} = 1-\alpha$.} Thus, the Wasserstein distance between $\mu_u^{\alpha}(\cdot,x)$ and $\mu_v^{\alpha}(\cdot,x)$ is equal to the cost of transferring a mass of $\alpha$ from $u$ to $v$ and a mass of $(1-\alpha)$ from $u$ to $\Gamma^{out}(v)$, i.e., the neighbors of $v$ in layer 2.
When $d_{\sigma}$ is sufficiently large, all shortest paths from $u$ to $\Gamma^{out}(v)$ are cheaper than $d_\sigma$, hence:
%
\begin{equation*}
    W\!\left(\mu_u^{\alpha}(\cdot,x), \mu_v^{\alpha}(\cdot,x)\right) = d_{\sigma}(u,v,x)\,\alpha + r\,(1-\alpha),
\end{equation*}
where $r = W_{-(u,v)}(\mu_u^1(\cdot,x), \mu_v^0(\cdot,x))$, i.e.,~the transport cost from $u$ to $\Gamma^{out}(v)$, without using the edge $(u,v)$.
%
%
Substituting into the definition of neural curvature gives
\begin{align*}
    \kappa_{N}(u,v,x)
    &= \lim_{\alpha \to 1} \frac{1 -\frac{W(\mu_u^{\alpha}(\cdot,x), \mu_v^{\alpha}(\cdot,x))}{d_{\sigma}(u,v,x)}}{1-\alpha} \\
    &= \lim_{\alpha \to 1} \frac{1 - \alpha - \frac{r(1-\alpha)}{d_{\sigma}(u,v,x)}}{1-\alpha} \\
    &= \lim_{\alpha \to 1} \left(1 - \frac{r(1-\alpha)}{d_{\sigma}(u,v,x)(1-\alpha)} \right)\\
    &= 1 - \frac{r}{d_{\sigma}(u,v,x)}.
\end{align*}
Taking the limit yields
\begin{equation*}
    \lim_{d_{\sigma}(u,v,x)\to\infty} \kappa_N(u,v,x) = 1.
\end{equation*}

\emph{Part 2: Edges in hidden layers.}
\add{Consider the set of nodes involved in the Wasserstein distance computation (ordered for simplicity): $\{\Gamma^{in}(u), u, v, \Gamma^{out}(v)\}$. In this case, $\mu_u^{\alpha}(\cdot,x) = [e_{u_1}, \dots, e_{u_M}, \alpha, 0, \dots, 0]$ and $\mu_v^{\alpha}(\cdot,x) = [0, \dots, 0, \alpha, e_{v_1}, \dots, e_{v_N}]$, where $\sum e_{u_i} = 1-\alpha$ and $\sum e_{v_i} = 1-\alpha$.} In this case, as $d_\sigma$ gets large, the Wasserstein distance simplifies to three terms: i) the cost of transporting a mass of $(1-\alpha)$ from $\Gamma^{in}(u)$ to $v$, without using edge $(u,v)$; ii) the cost of transporting a mass of $\alpha - (1-\alpha)$ from $u$ to $v$; (iii) and the cost of transporting a mass of $(1-\alpha)$ from $u$ to $\Gamma^{out}(v)$, without using edge $(u,v)$.
%
%
Thus, the Wasserstein distance can be expressed as
\begin{equation*}
    W\!\left(\mu_u^{\alpha}(\cdot,x), \mu_v^{\alpha}(\cdot,x)\right)
    = r_1(1-\alpha)
      + d_{\sigma}(u,v,x)\bigl(\alpha - (1-\alpha)\bigr)
      + r_2(1-\alpha),
\end{equation*}
where $r_1 = W_{-(u,v)}(\mu_u^0(\cdot,x), \mu_v^1(\cdot,x))$  and $r_2 = W_{-(u,v)}(\mu_u^1(\cdot,x), \mu_v^0(\cdot,x))$, with $W_{-(u,v)}$ interpreted as in Part 1. 
Substituting into the neural curvature definition yields
\begin{align*}
    \kappa_N(u,v,x)
    &=\lim_{\alpha \to 1} \frac{1 - \frac{W(\mu_u^{\alpha}(\cdot,x), \mu_v^{\alpha}(\cdot,x))}{d_{\sigma}(u,v,x)}}{1-\alpha} \\
    &=\lim_{\alpha \to 1} \frac{1 + 1 - 2\alpha - \frac{(r_1 + r_2)(1-\alpha)}{d_{\sigma}(u,v,x))}}{1-\alpha} \\
    &=\lim_{\alpha \to 1} \left(\frac{2(1-\alpha)}{1-\alpha} -\frac{r_1 + r_2}{d_{\sigma}(u,v,x))}\right)\\
    &= 2-\frac{r_1 + r_2}{d_{\sigma}(u,v,x)}.
\end{align*}
Taking the limit gives
\begin{equation*}
    \lim_{d_{\sigma}(u,v,x)\to\infty} \kappa_N(u,v,x) = 2.
\end{equation*}
\end{proof}

\clearpage
\add{\subsection{Pruning-Based Comparison (Tanh)}}\label{appidx:prune_comp_tanh}
We present the results for Tanh models in this section on the MNIST and CIFAR-10 datasets. We remove the SynFlow curve in Figure~\ref{fig:classification_orc_comparsion_tanh} since it is designed for Tanh activations.
\begin{figure*}[tbhp]
    \centering
    \begin{subfigure}{0.32\textwidth}
        \centering
        \includegraphics[width=\linewidth]{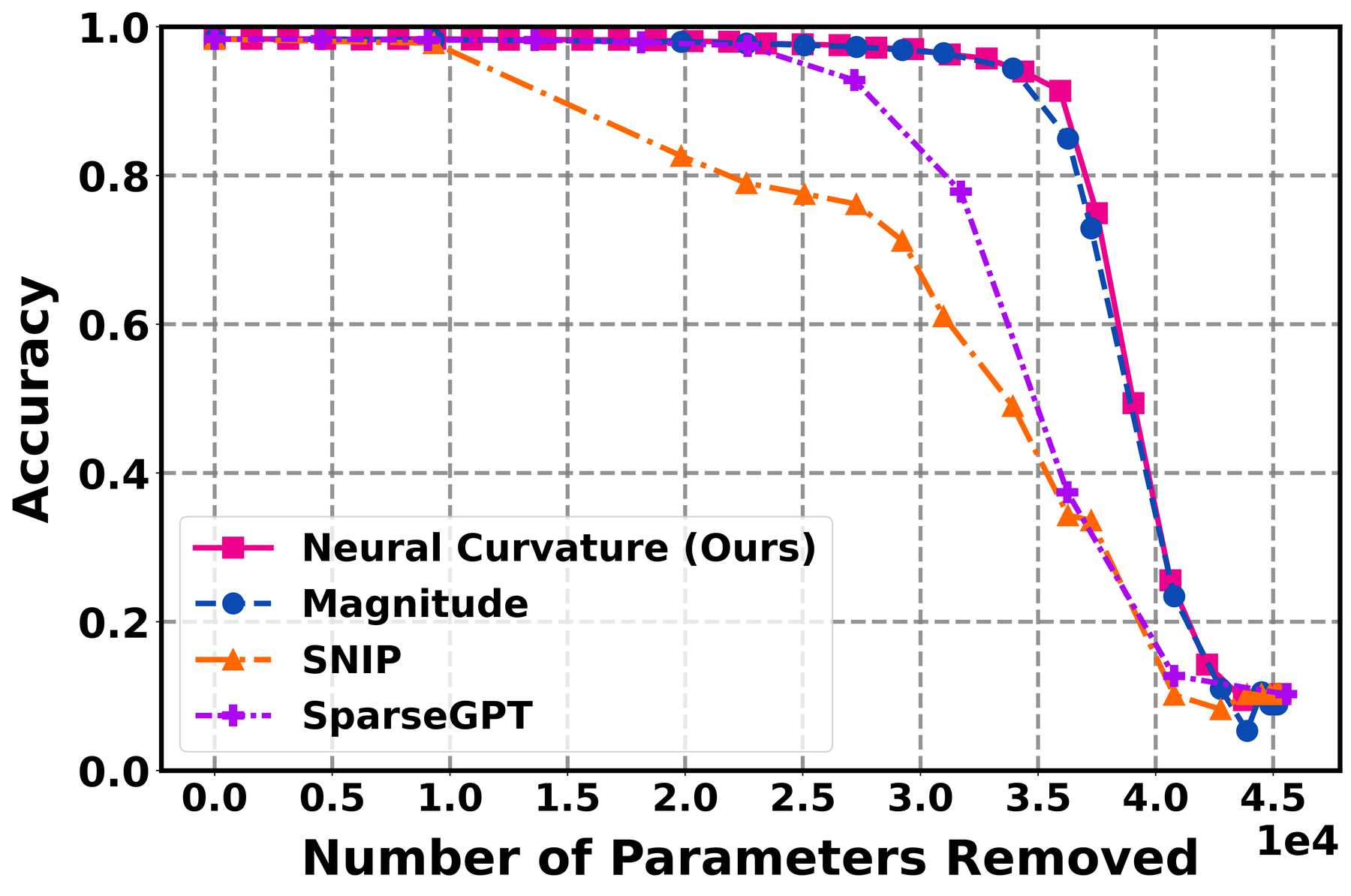}
        \caption{MNIST, CE, Tanh}
    \end{subfigure}
    \hfill
    \begin{subfigure}{0.32\textwidth}
        \centering
        \includegraphics[width=\linewidth]{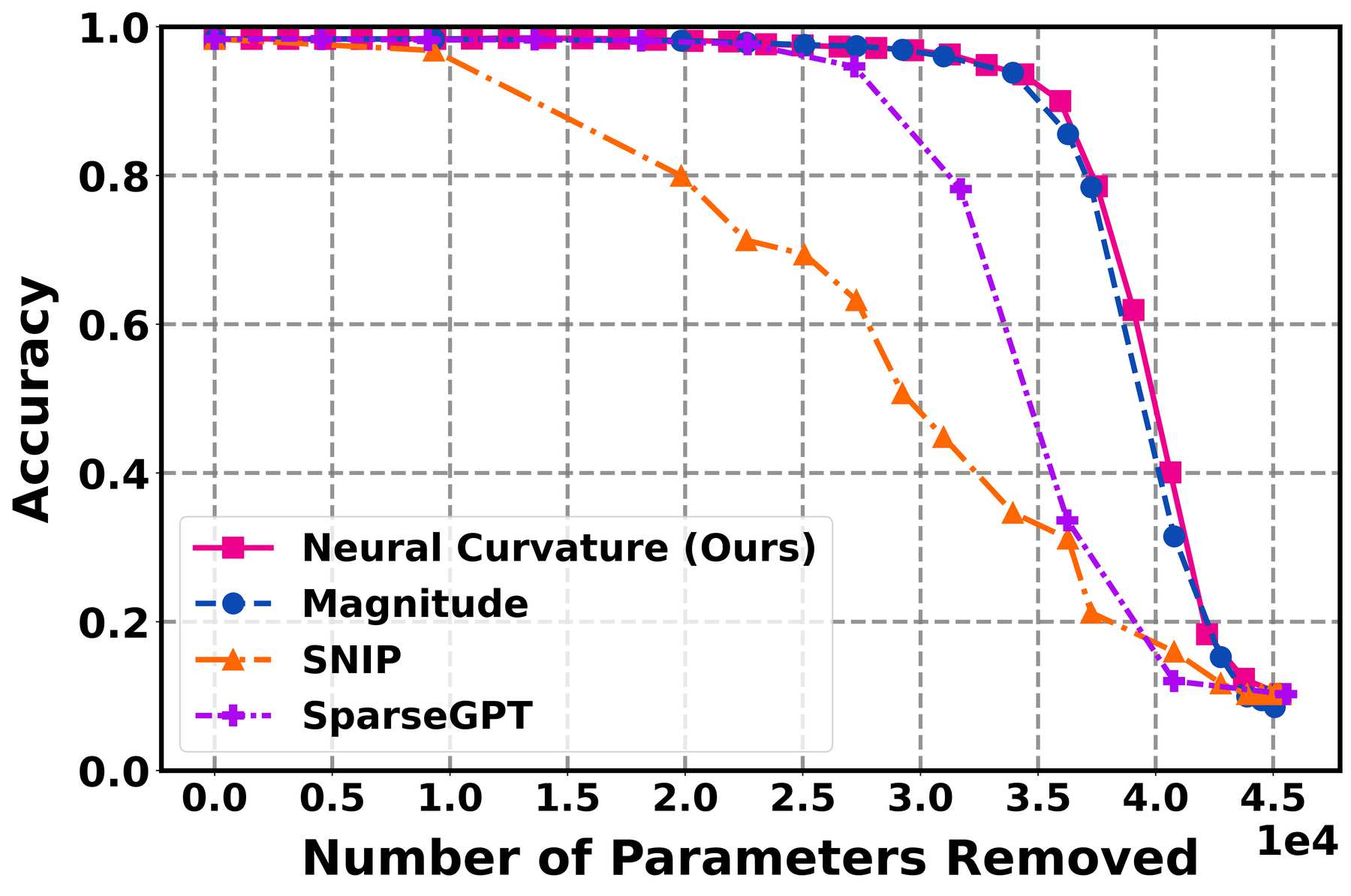}
        \caption{MNIST, WD, Tanh}
    \end{subfigure}
    \hfill
    \begin{subfigure}{0.32\textwidth}
        \centering
        \includegraphics[width=\linewidth]{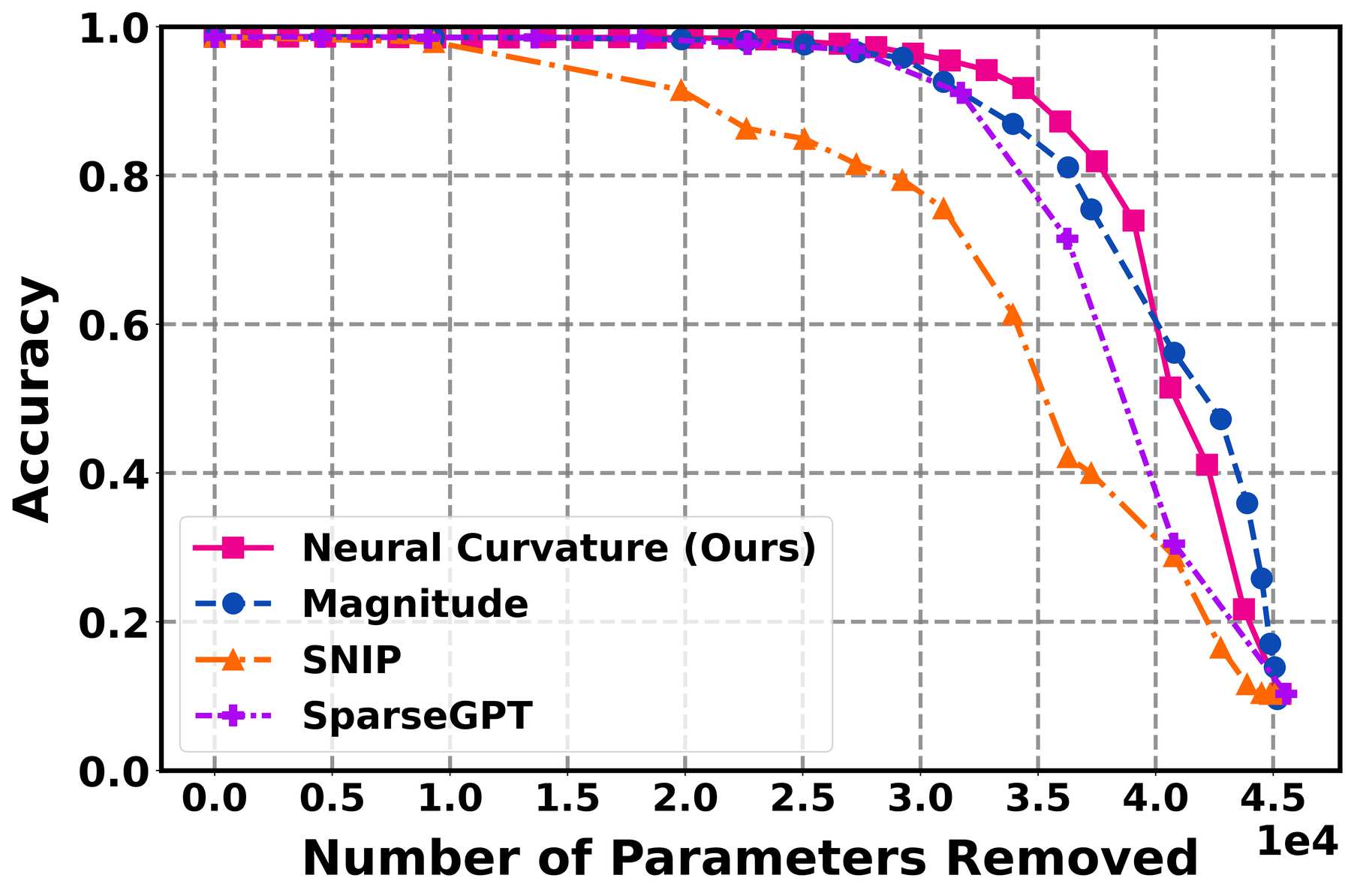}
        \caption{MNIST, AT, Tanh}
    \end{subfigure}

    \vspace{0.5em}

    \begin{subfigure}{0.32\textwidth}
        \centering
        \includegraphics[width=\linewidth]{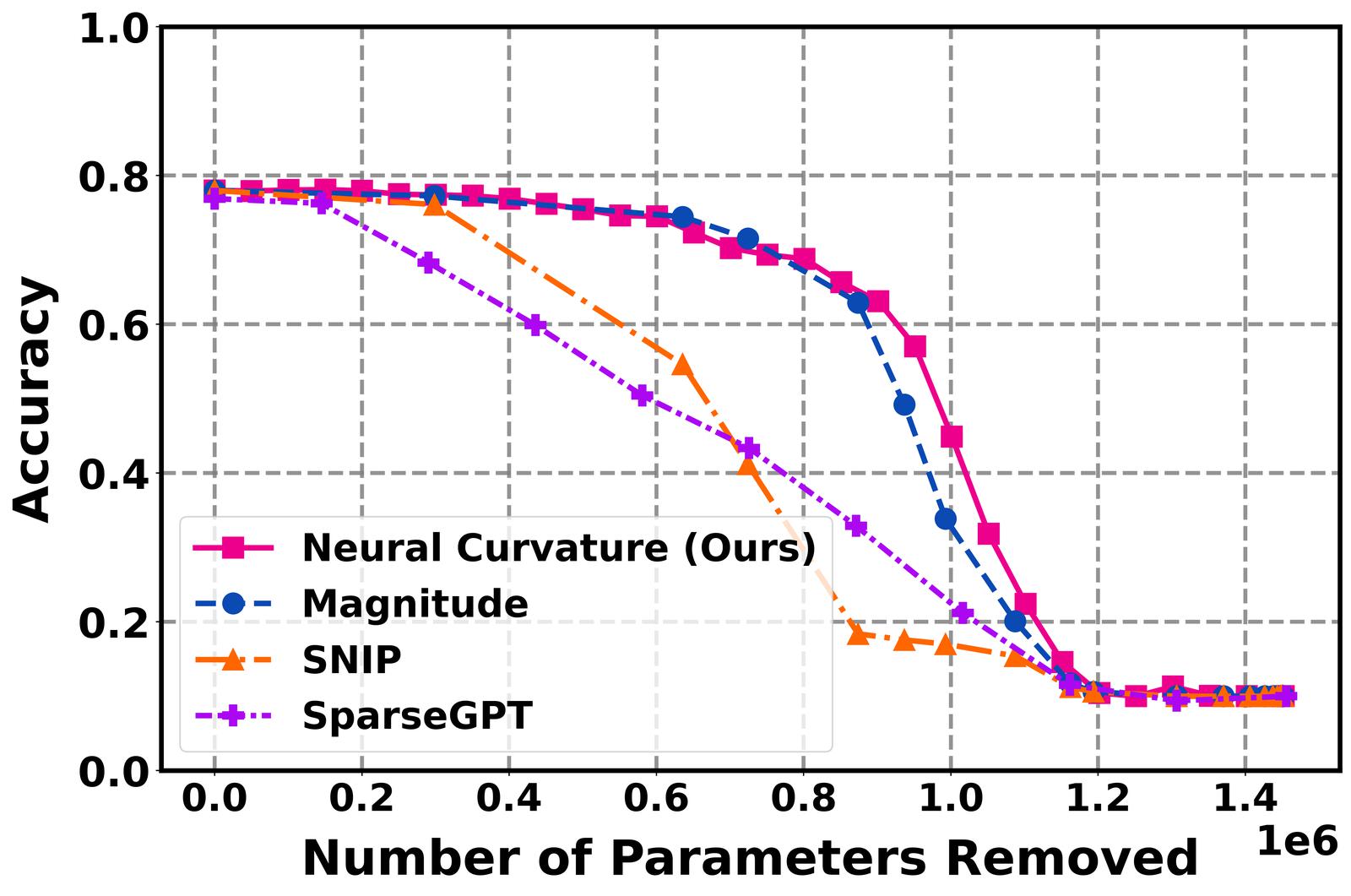}
        \caption{CIFAR-10, CE, Tanh}
    \end{subfigure}
    \hfill
    \begin{subfigure}{0.32\textwidth}
        \centering
        \includegraphics[width=\linewidth]{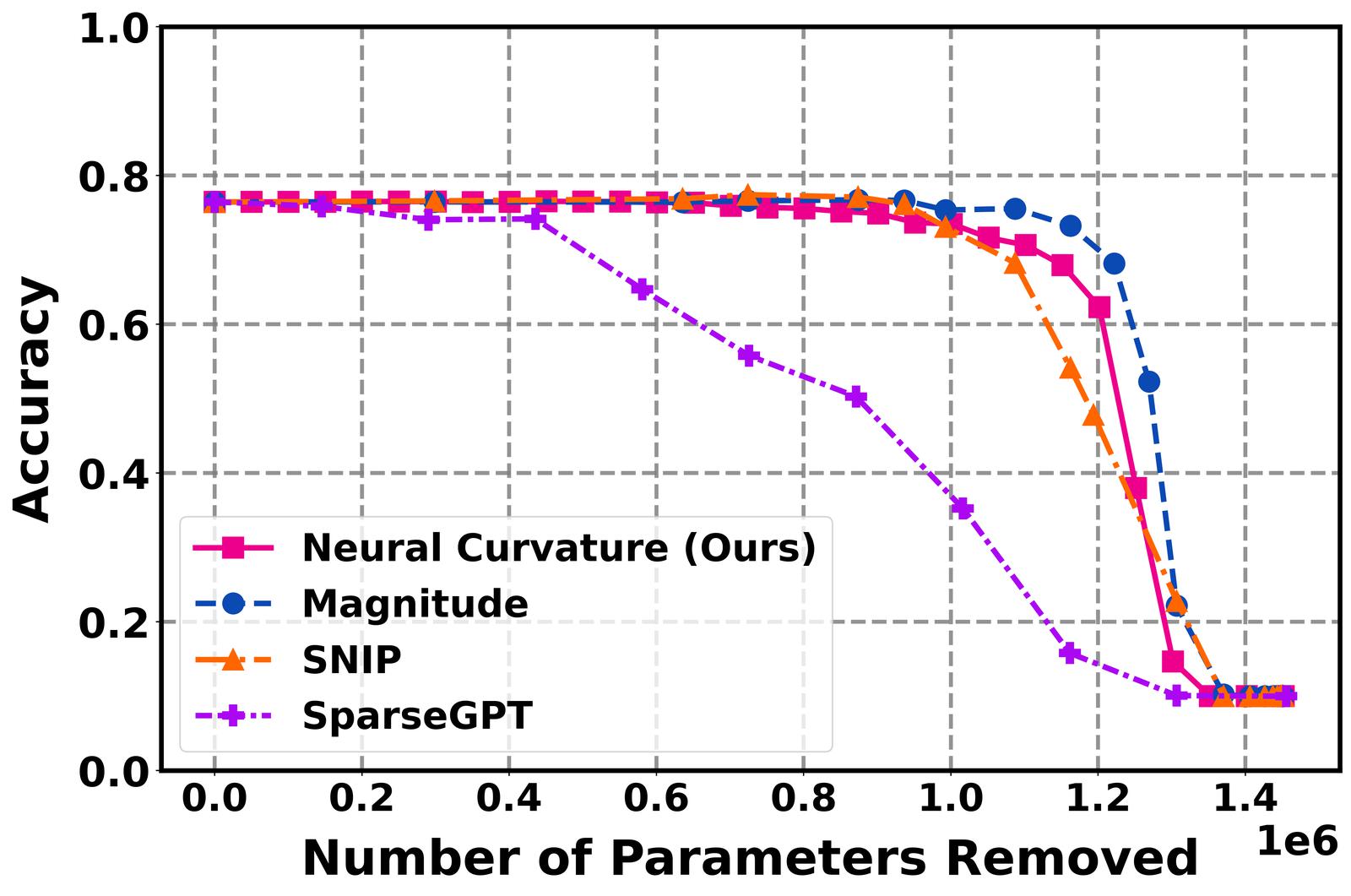}
        \caption{CIFAR-10, WD, Tanh}
    \end{subfigure}
    \hfill
    \begin{subfigure}{0.32\textwidth}
        \centering
        \includegraphics[width=\linewidth]{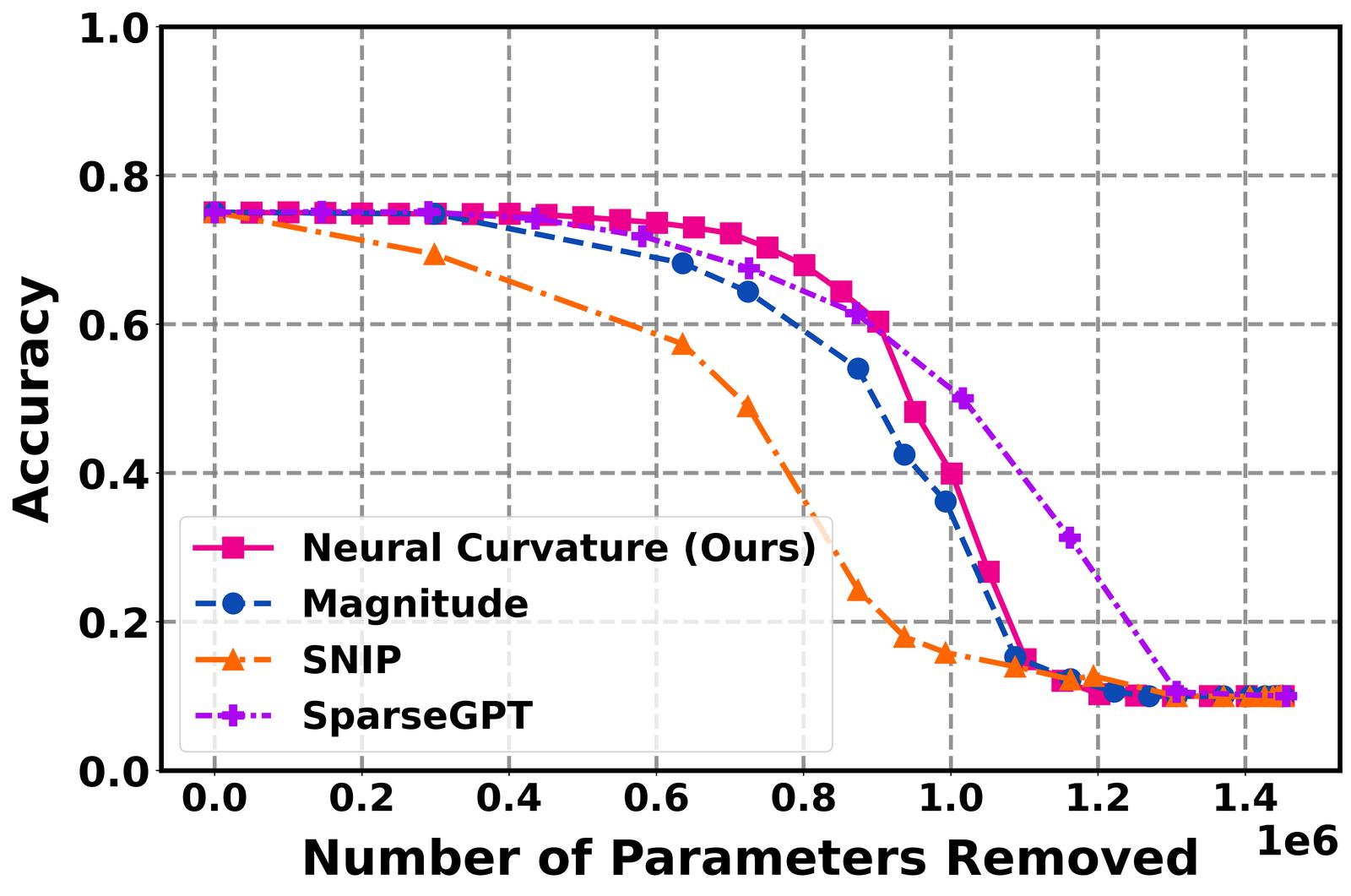}
        \caption{CIFAR-10, AT, Tanh}
    \end{subfigure}

    \caption{Edge removal evaluation on MNIST, CIFAR-10, and CIFAR-100. Each subfigure shows a comparison of our neural curvature algorithm with magnitude and SNIP pruning methods.}
  \label{fig:classification_orc_comparsion_tanh}
\end{figure*}

\clearpage
\subsection{Edge Removal Evaluation}\label{sec:edgeremoval}
In this section, we present the results for all the models, including CNN on MNIST, VGG9-lite on CIFAR-10 and CIFAR-100 (Figure~\ref{fig:fullmodeledge_removal}. Each subfigure contains the full model edge removal results based on neural curvature values using positive-first and negative-first edge sets. The number on the curve shows the curvature value at that data point.

\begin{figure*}[hptb]
    \centering
    \begin{subfigure}{0.3\textwidth}
        \centering
        \includegraphics[width=\linewidth]{img/res/relu/cnnori/cnnori_relu.jpg}
        \caption{MNIST, CE, ReLU}
    \end{subfigure}
    \hfill
    \begin{subfigure}{0.3\textwidth}
        \centering
        \includegraphics[width=\linewidth]{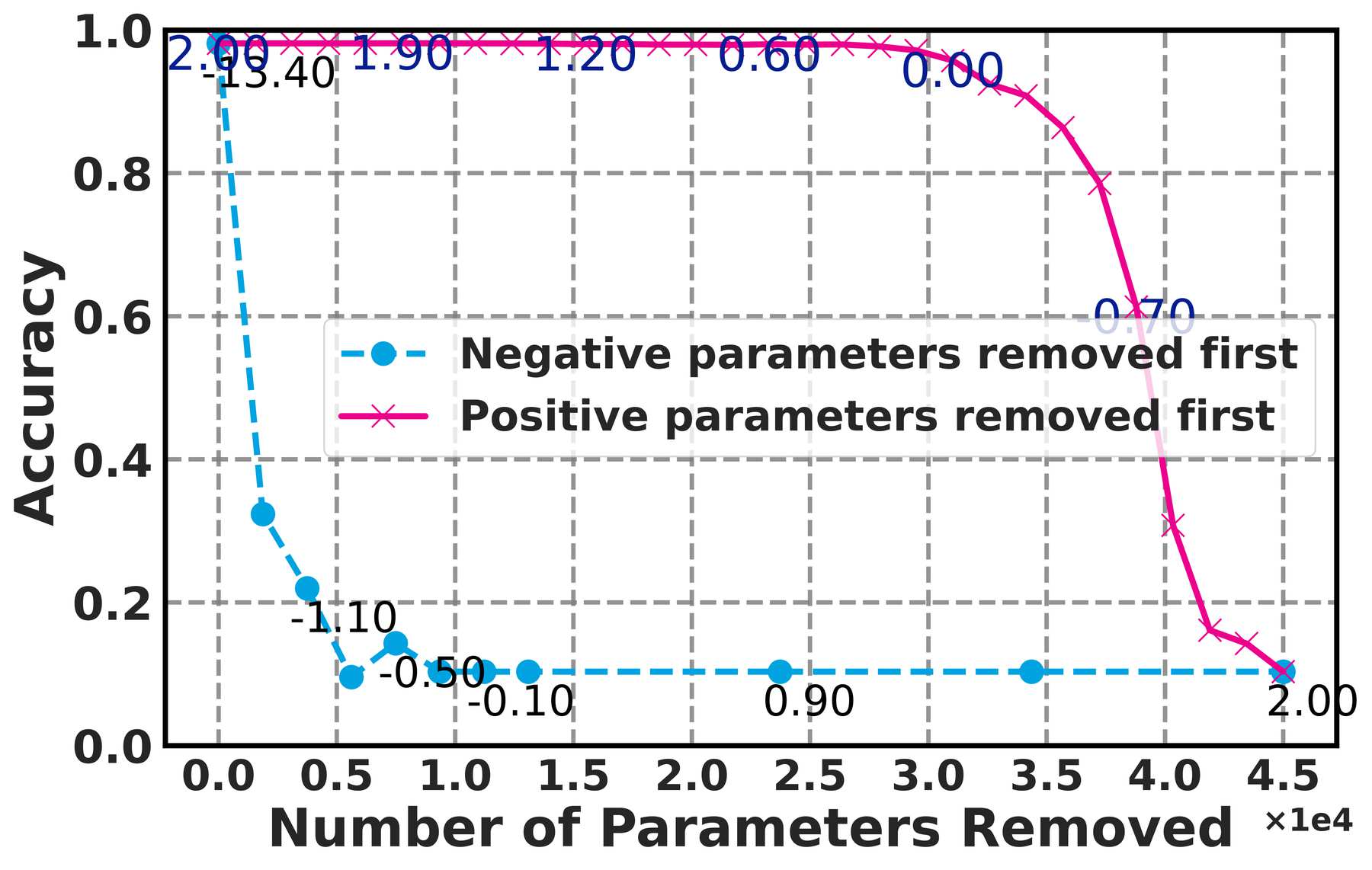}
        \caption{MNIST, WD, ReLU}
    \end{subfigure}
    \hfill
    \begin{subfigure}{0.3\textwidth}
        \centering
        \includegraphics[width=\linewidth]{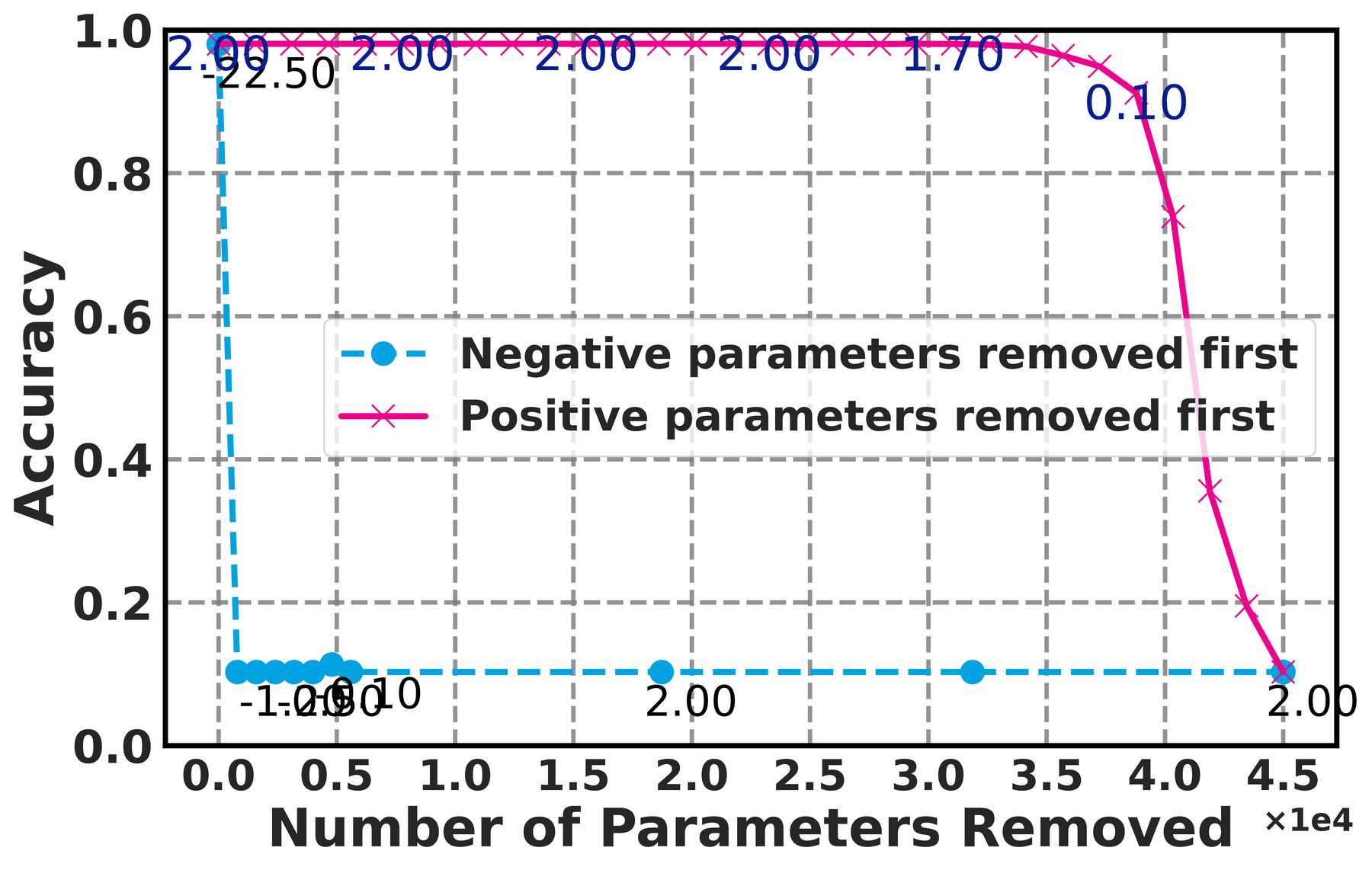}
        \caption{MNIST, AT, ReLU}
    \end{subfigure}

    \vspace{0.2em}

    \begin{subfigure}{0.3\textwidth}
        \centering
        \includegraphics[width=\linewidth]{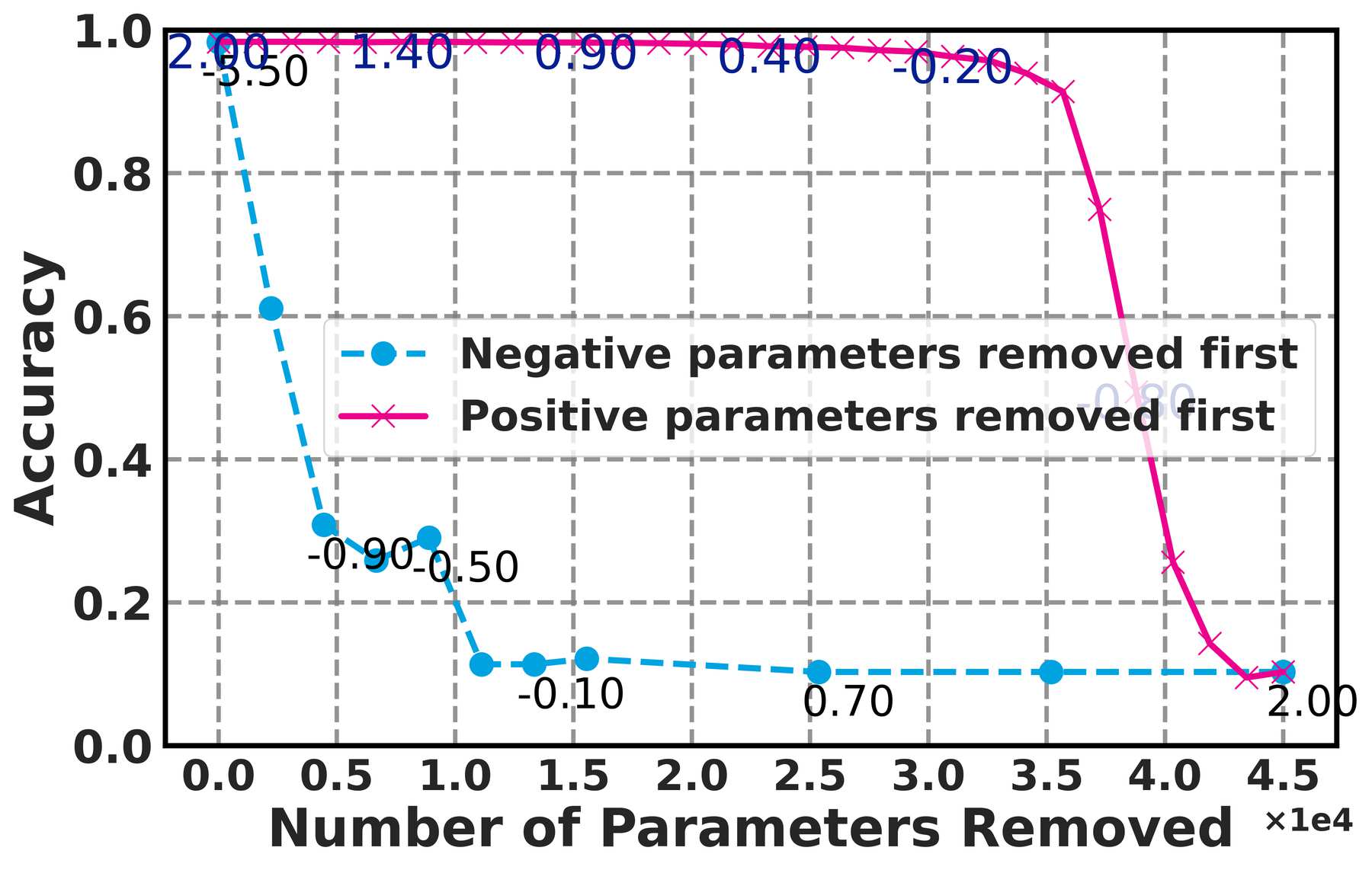}
        \caption{MNIST, CE, Tanh}
    \end{subfigure}
    \hfill
    \begin{subfigure}{0.3\textwidth}
        \centering
        \includegraphics[width=\linewidth]{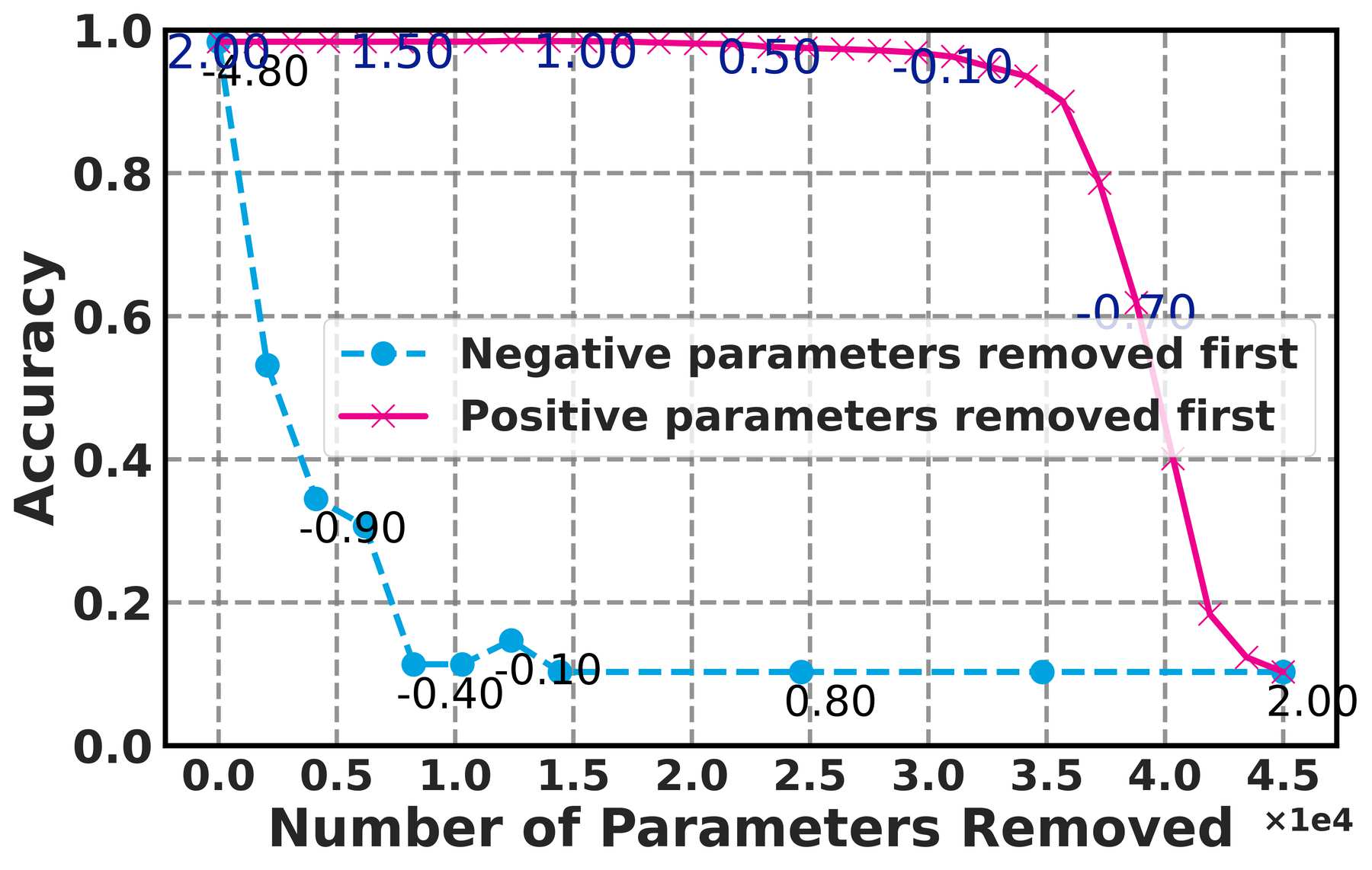}
        \caption{MNIST, WD, Tanh}
    \end{subfigure}
    \hfill
    \begin{subfigure}{0.3\textwidth}
        \centering
        \includegraphics[width=\linewidth]{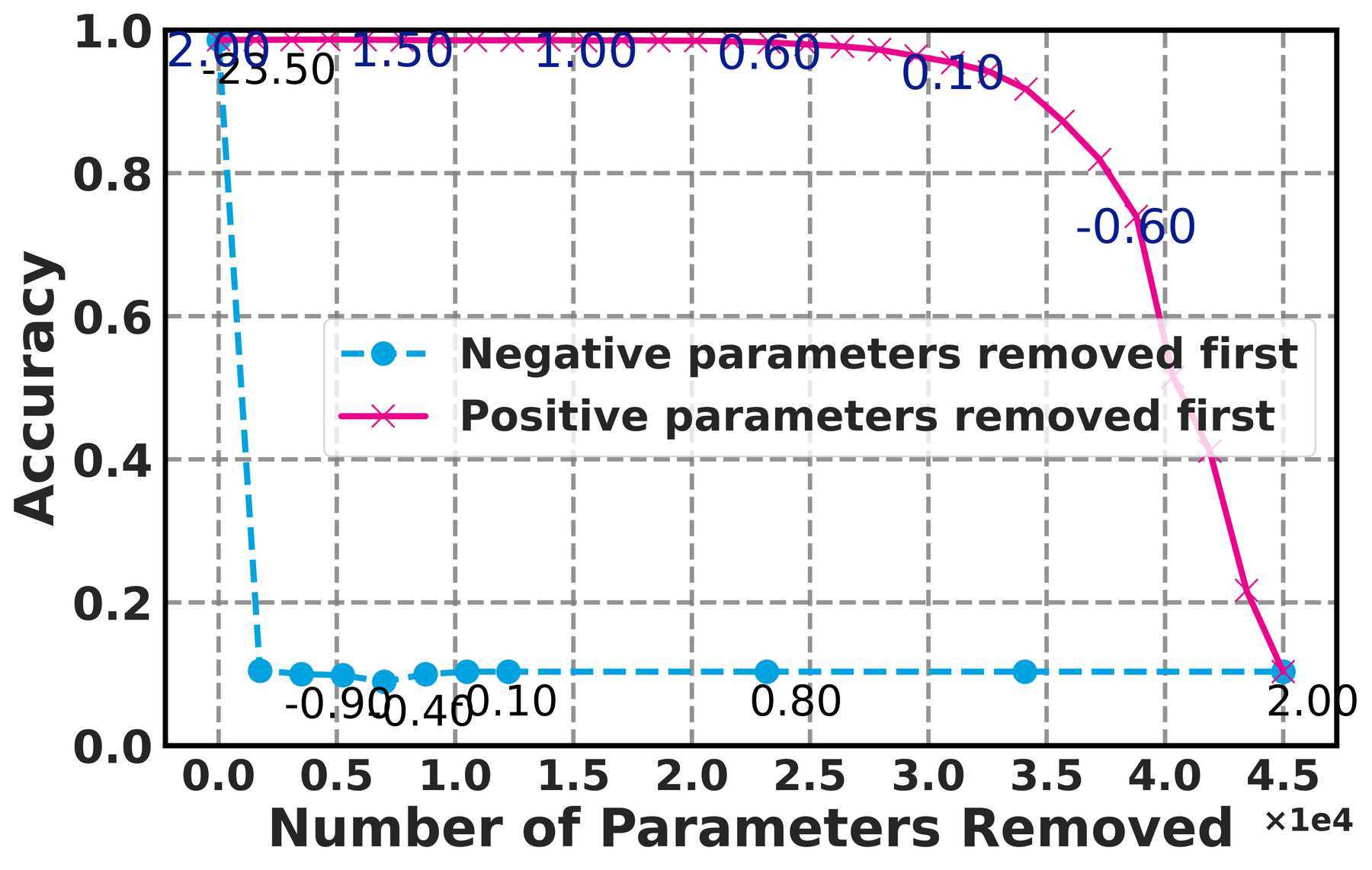}
        \caption{MNIST, AT, Tanh}
    \end{subfigure}

    \vspace{0.2em}

    \begin{subfigure}{0.3\textwidth}
        \centering
        \includegraphics[width=\linewidth]{img/res/relu/cifarori/cifarori_relu.jpg}
        \caption{CIFAR-10, CE, ReLU}
    \end{subfigure}
    \hfill
    \begin{subfigure}{0.3\textwidth}
        \centering
        \includegraphics[width=\linewidth]{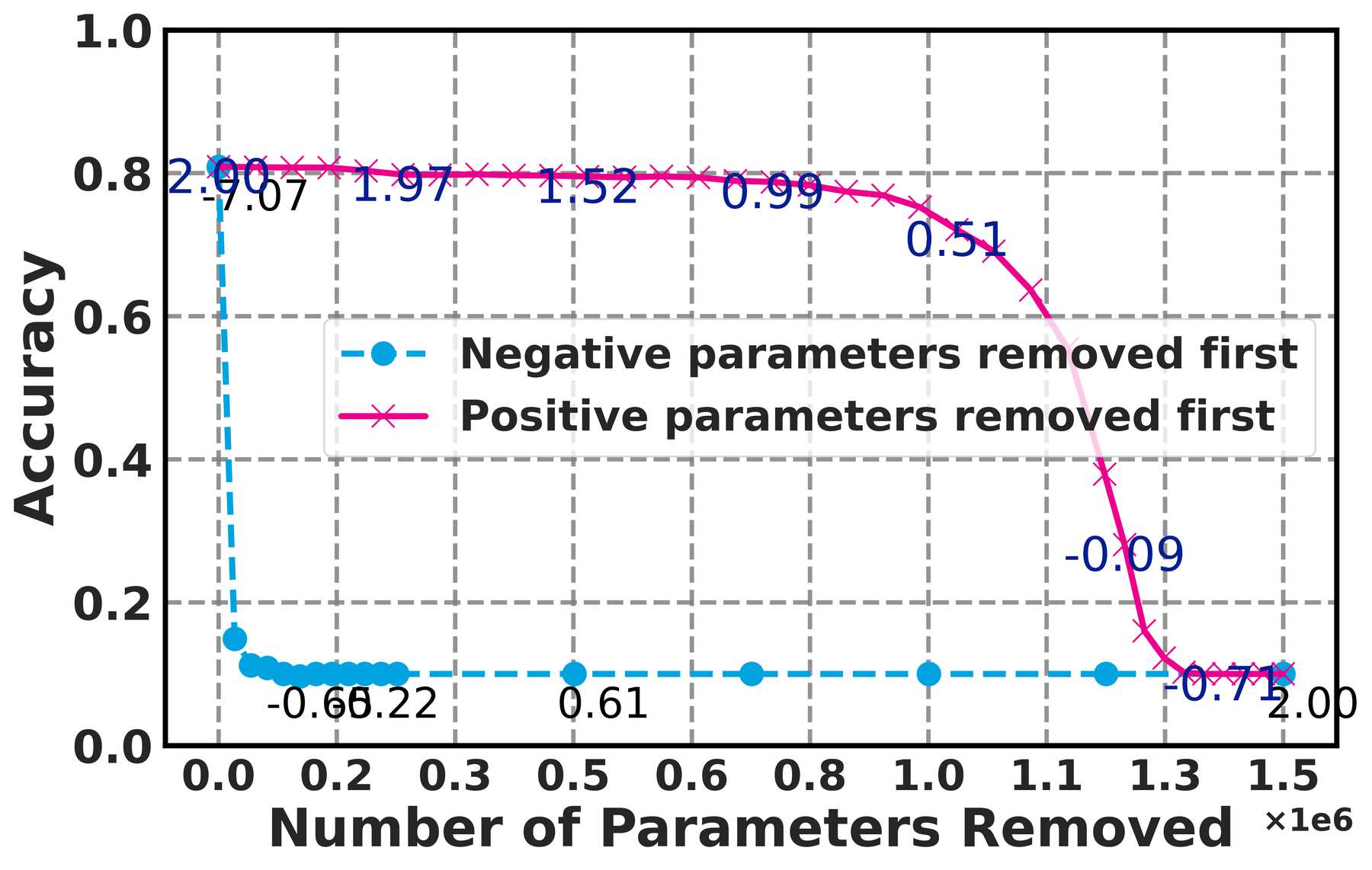}
        \caption{CIFAR-10, WD, ReLU}
    \end{subfigure}
    \hfill
    \begin{subfigure}{0.3\textwidth}
        \centering
        \includegraphics[width=\linewidth]{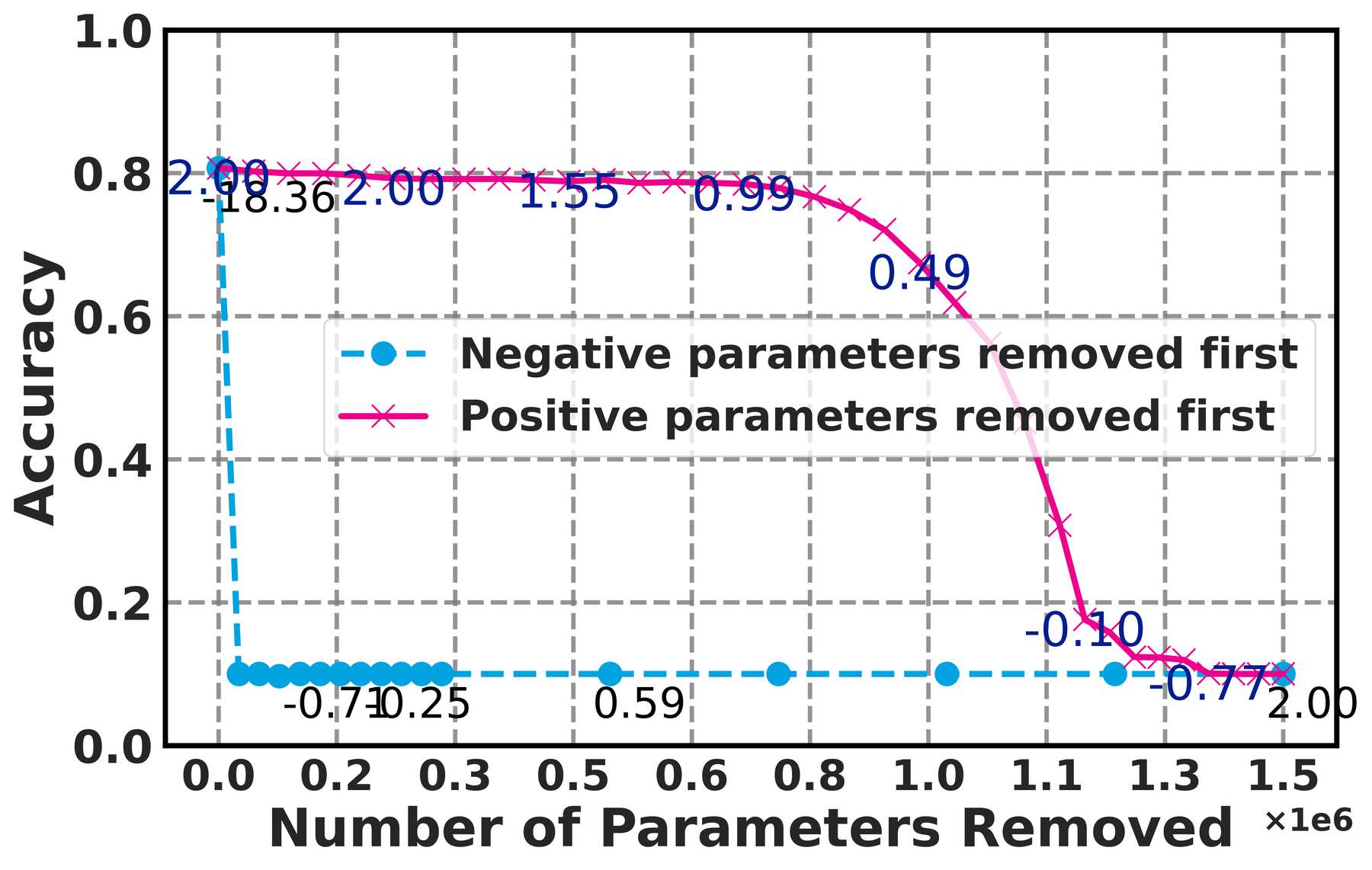}
        \caption{CIFAR-10, AT, ReLU}
    \end{subfigure}

    \vspace{0.2em}

    \begin{subfigure}{0.3\textwidth}
        \centering
        \includegraphics[width=\linewidth]{img/res/tanh/cifarori/cifarori_tanh.jpg}
        \caption{CIFAR-10, CE, Tanh}
    \end{subfigure}
    \hfill
    \begin{subfigure}{0.3\textwidth}
        \centering
        \includegraphics[width=\linewidth]{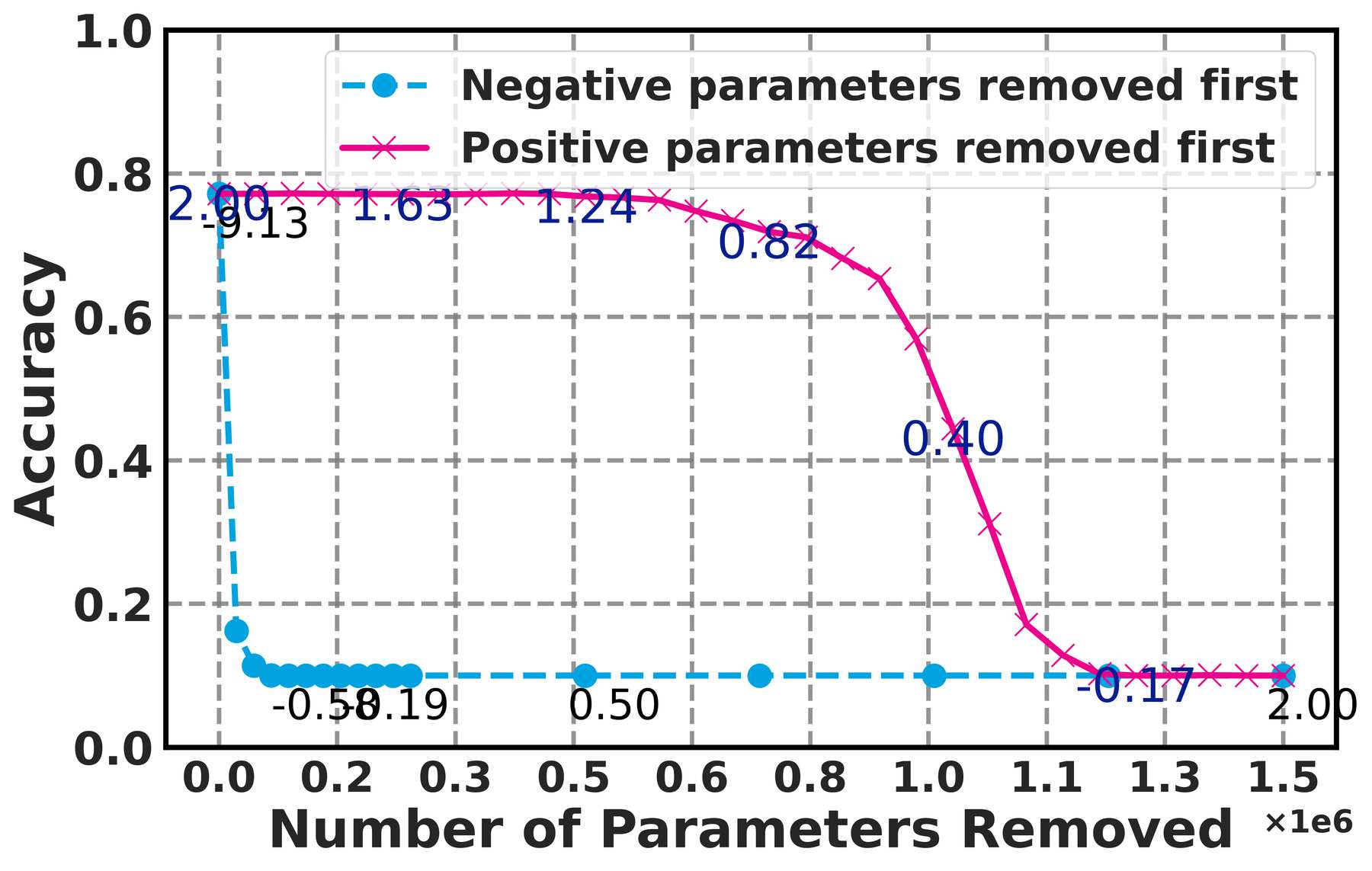}
        \caption{CIFAR-10, WD, Tanh}
    \end{subfigure}
    \hfill
    \begin{subfigure}{0.3\textwidth}
        \centering
        \includegraphics[width=\linewidth]{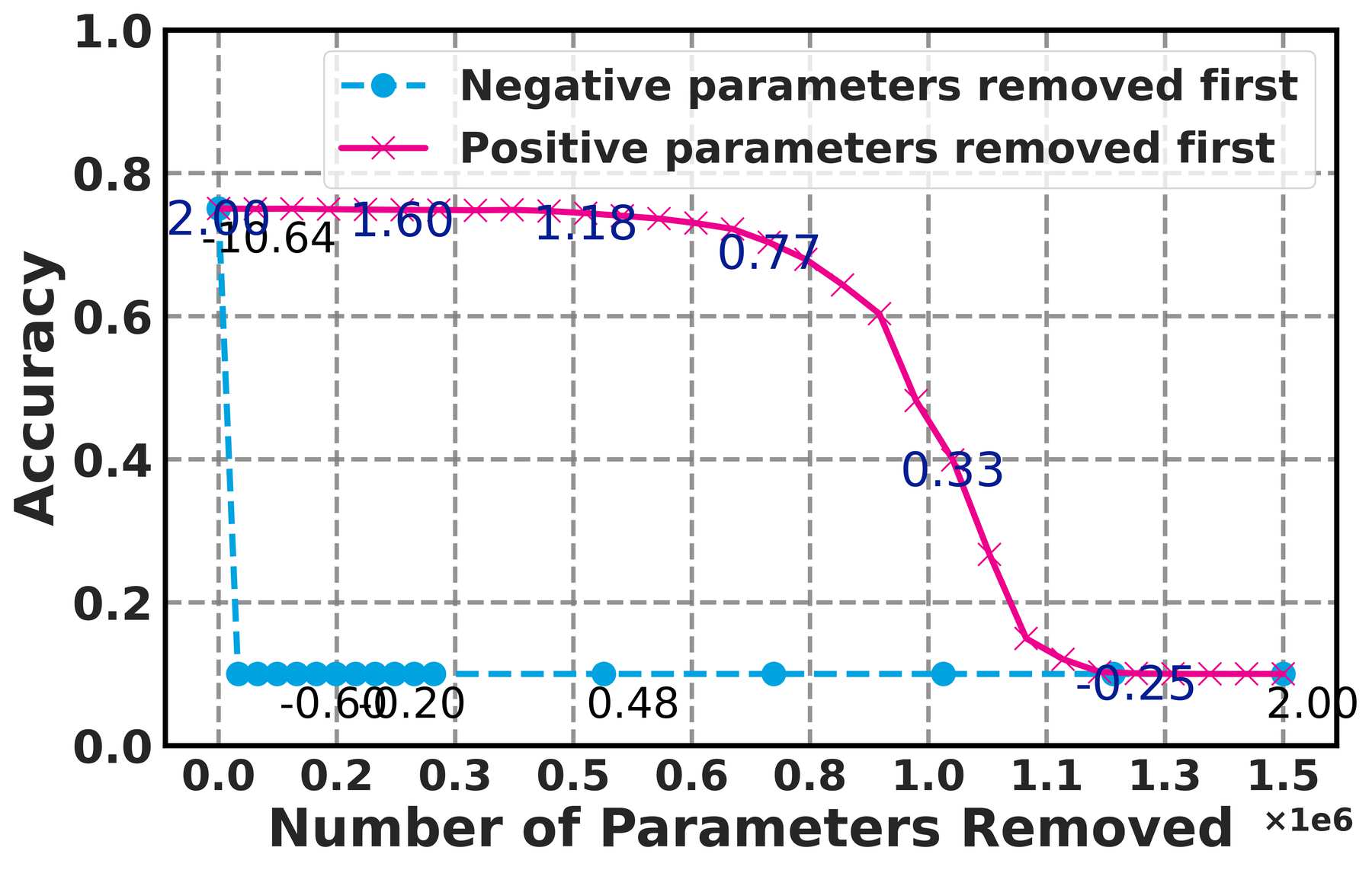}
        \caption{CIFAR-10, AT, Tanh}
    \end{subfigure}
    
    \vspace{0.2em}
    
   \begin{subfigure}{0.3\textwidth}
        \centering
        \includegraphics[width=\linewidth]{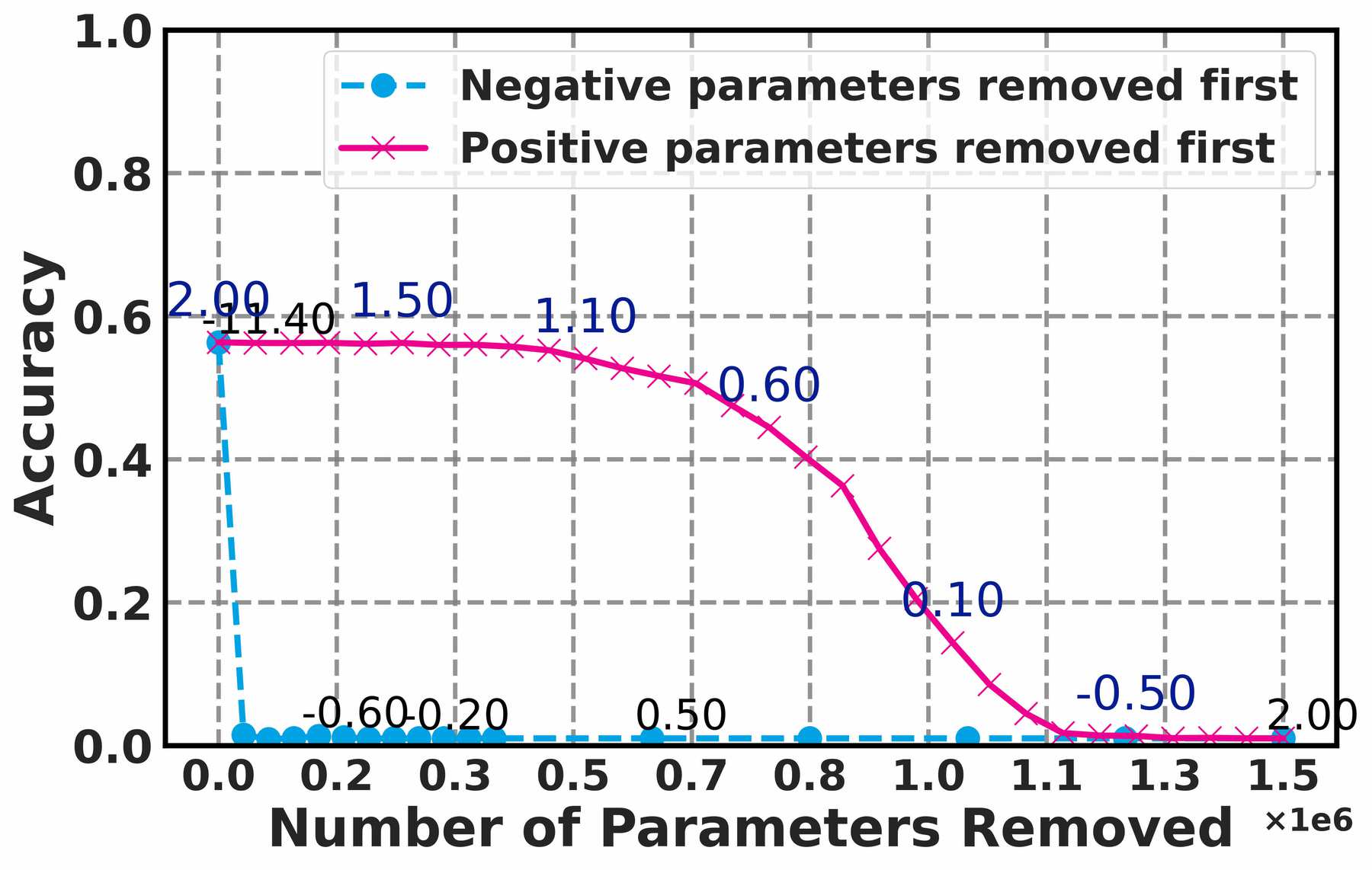}
        \caption{\add{CIFAR-100, CE (lr=0.001), ReLU}}
    \end{subfigure}
    \hfill
    \begin{subfigure}{0.3\textwidth}
        \centering
        \includegraphics[width=\linewidth]{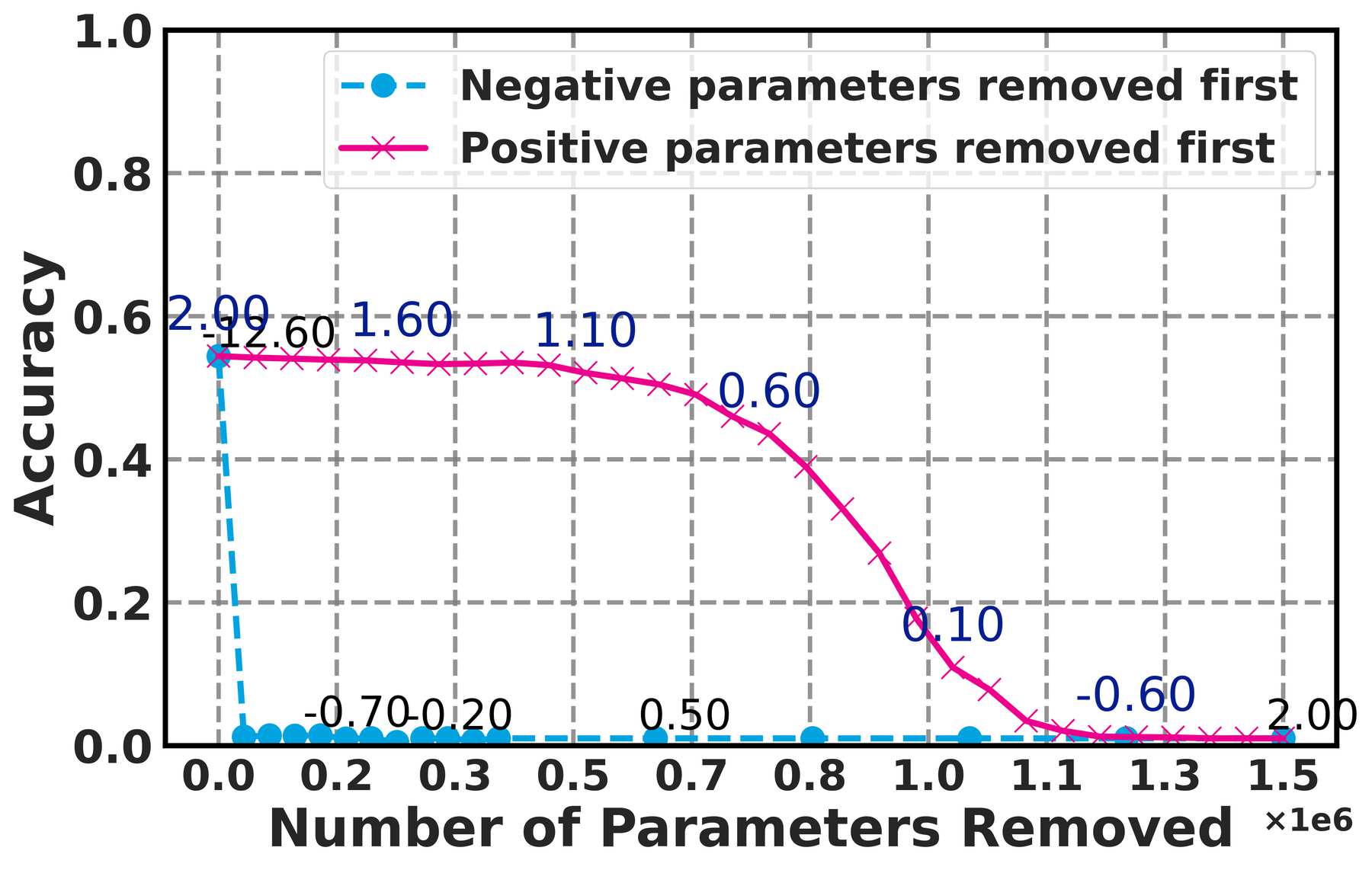}
        \caption{\add{CIFAR-100, CE(lr=0.002), ReLU}}
    \end{subfigure}
    \hfill
    \begin{subfigure}{0.3\textwidth}
        \centering
        \includegraphics[width=\linewidth]{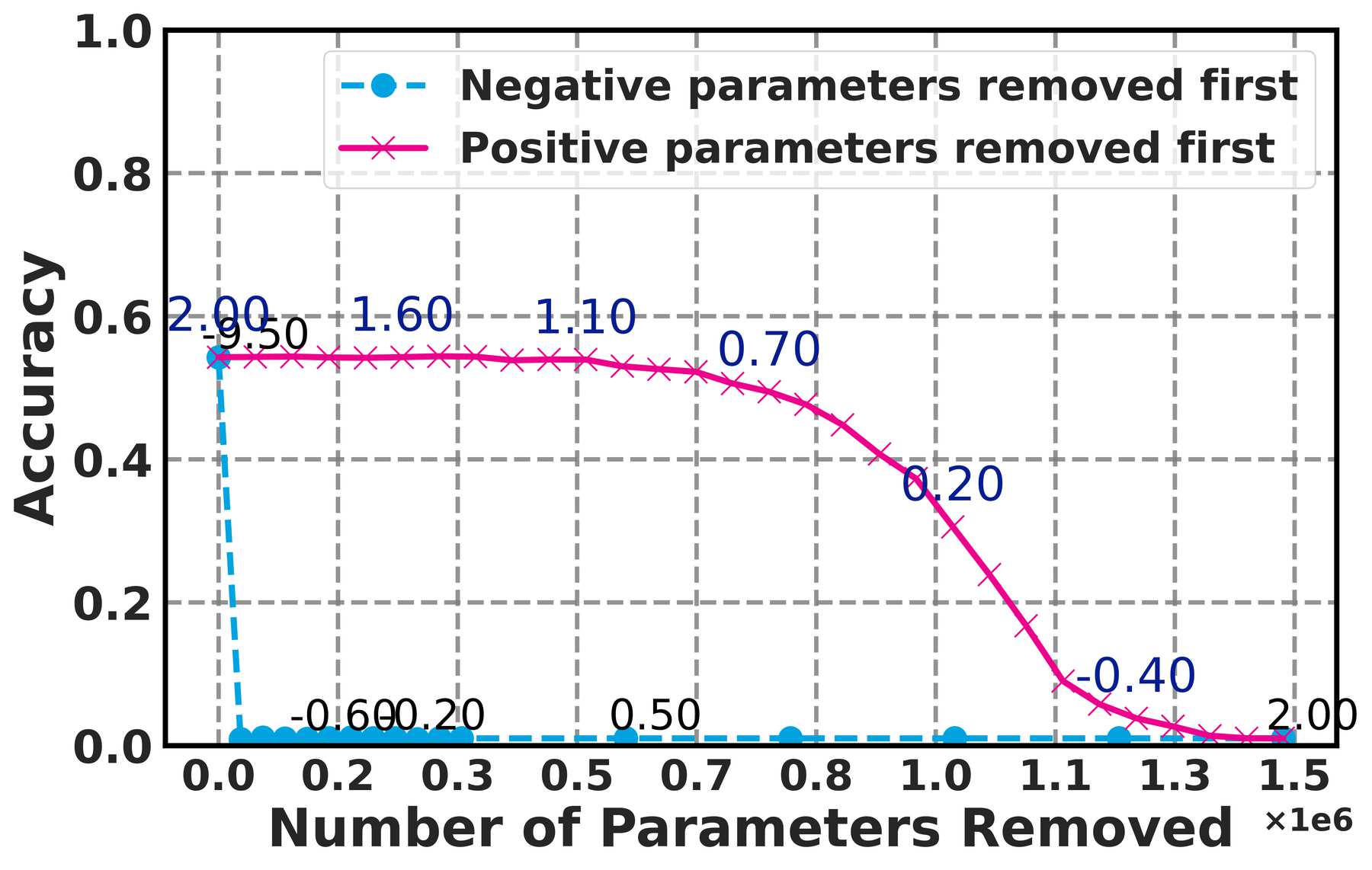}
        \caption{\add{CIFAR-100, WD, ReLU}}
    \end{subfigure}
    \caption{Edge removal evaluation on MNIST, CIFAR-10, and CIFAR-100.}
  \label{fig:fullmodeledge_removal}
\end{figure*}

\clearpage
\add{\subsection{Ablation Study of WD hyper-parameter of Tanh models}\label{appedix:wdhp}}
Figure~\ref{fig:vggori_tanh_wdablation} presents results for VGG9-lite Tanh, trained with WD parameters $1\mathrm{e}{-5}$, $1\mathrm{e}{-4}$, and a mixed $1\mathrm{e}{-4}$/$1\mathrm{e}{-5}$ setup to achieve higher accuracy.
\begin{figure}[hptb]
    \centering
    \includegraphics[width=0.99\linewidth]{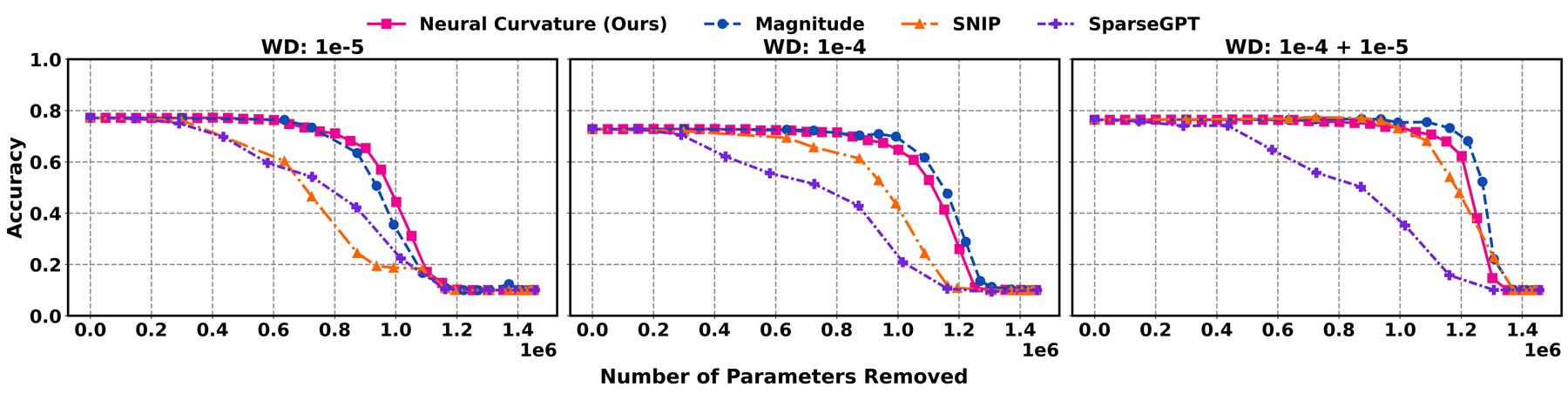}
    \caption{
    Full model edge removal comparison between magnitude-based and ours (VGG9-lite, WD Tanh, on CIFAR-10). Models are trained with different WD parameters.
    }
    \label{fig:vggori_tanh_wdablation}
\end{figure}

\clearpage
\add{\subsection{Ablation Study of the iteration number of SynFlow}\label{appedix:synflow}}
Figure~\ref{fig:synflow_comparsion} presents comparison results between SynFlow and ours, for VGG9-lite ReLU models, on CIFAR-10 and CIFAR-100, where we explore the number of iterations of SynFlow, using 100, 50, and 5, to get a better evaluation. As shown in Figure~\ref{fig:synflow_comparsion}, the performance of SynFLow with 100 and 50 iterations is almost the same. However, when we reduce the number of iteration to 5, the performance of SynFlow improves, and this is consistent in all the models, which suggests that the recommended setting in the original paper by~\cite{tanaka2020pruning} may not be optimal for all models. 
\begin{figure}[hptb]
    \centering
    \begin{subfigure}{0.3\textwidth}
        \centering
        \includegraphics[width=\linewidth]{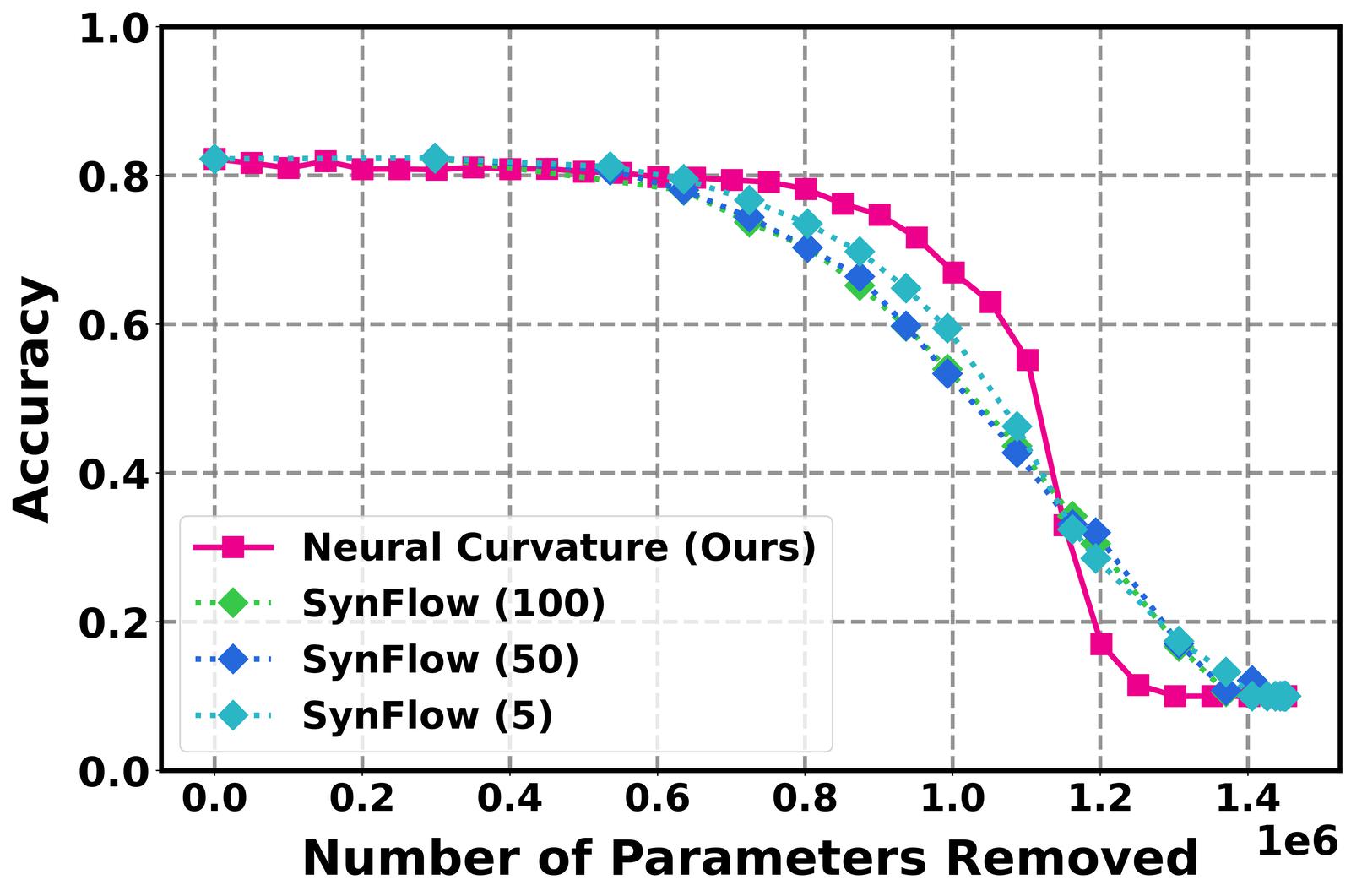}
        \caption{CIFAR-10, CE, ReLU}
        \label{fig:vggori_relu1}
    \end{subfigure}
    \hfill
    \begin{subfigure}{0.3\textwidth}
        \centering
        \includegraphics[width=\linewidth]{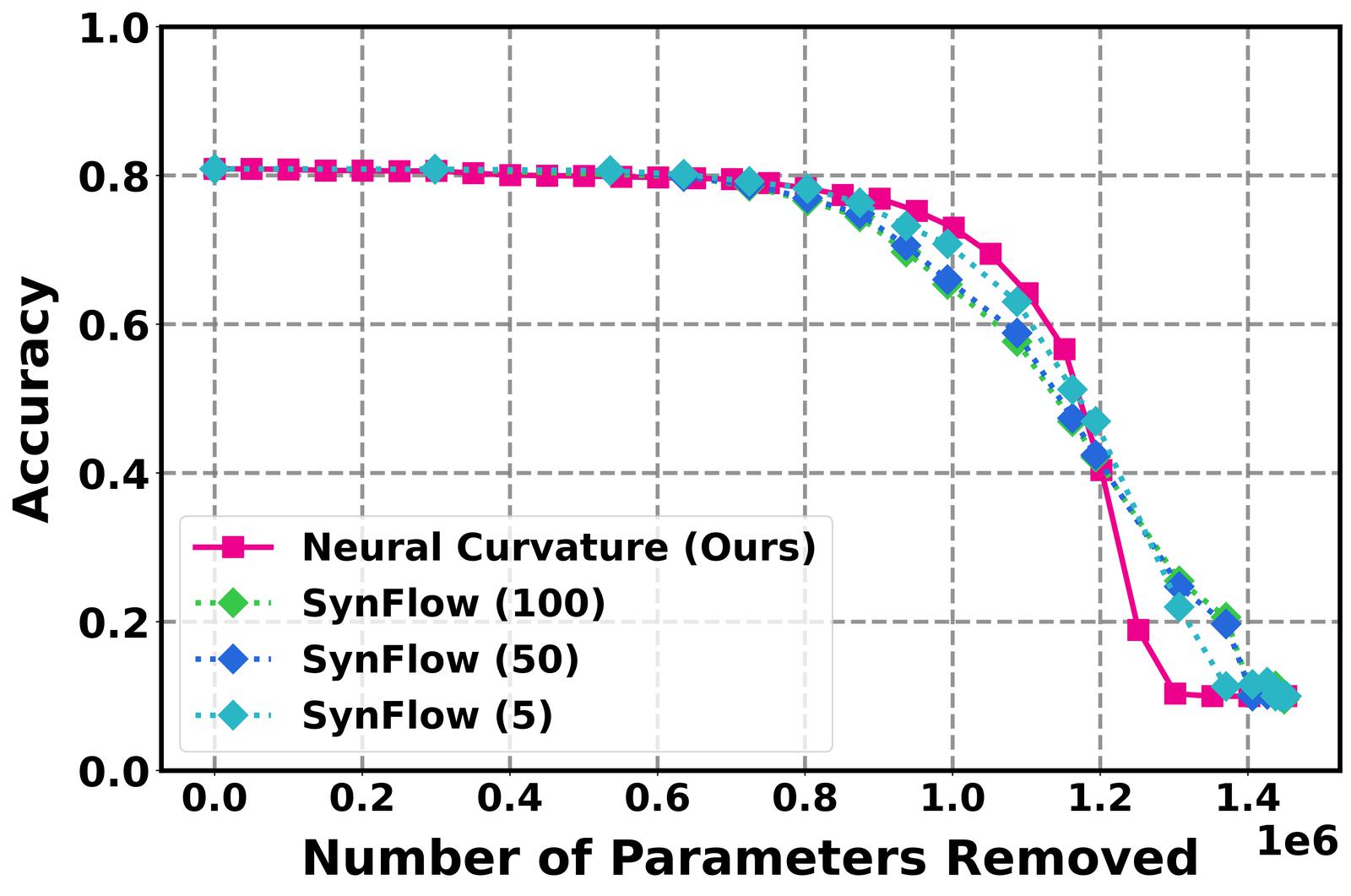}
        \caption{CIFAR-10, WD, ReLU}
    \end{subfigure}
    \hfill
    \begin{subfigure}{0.3\textwidth}
        \centering
        \includegraphics[width=\linewidth]{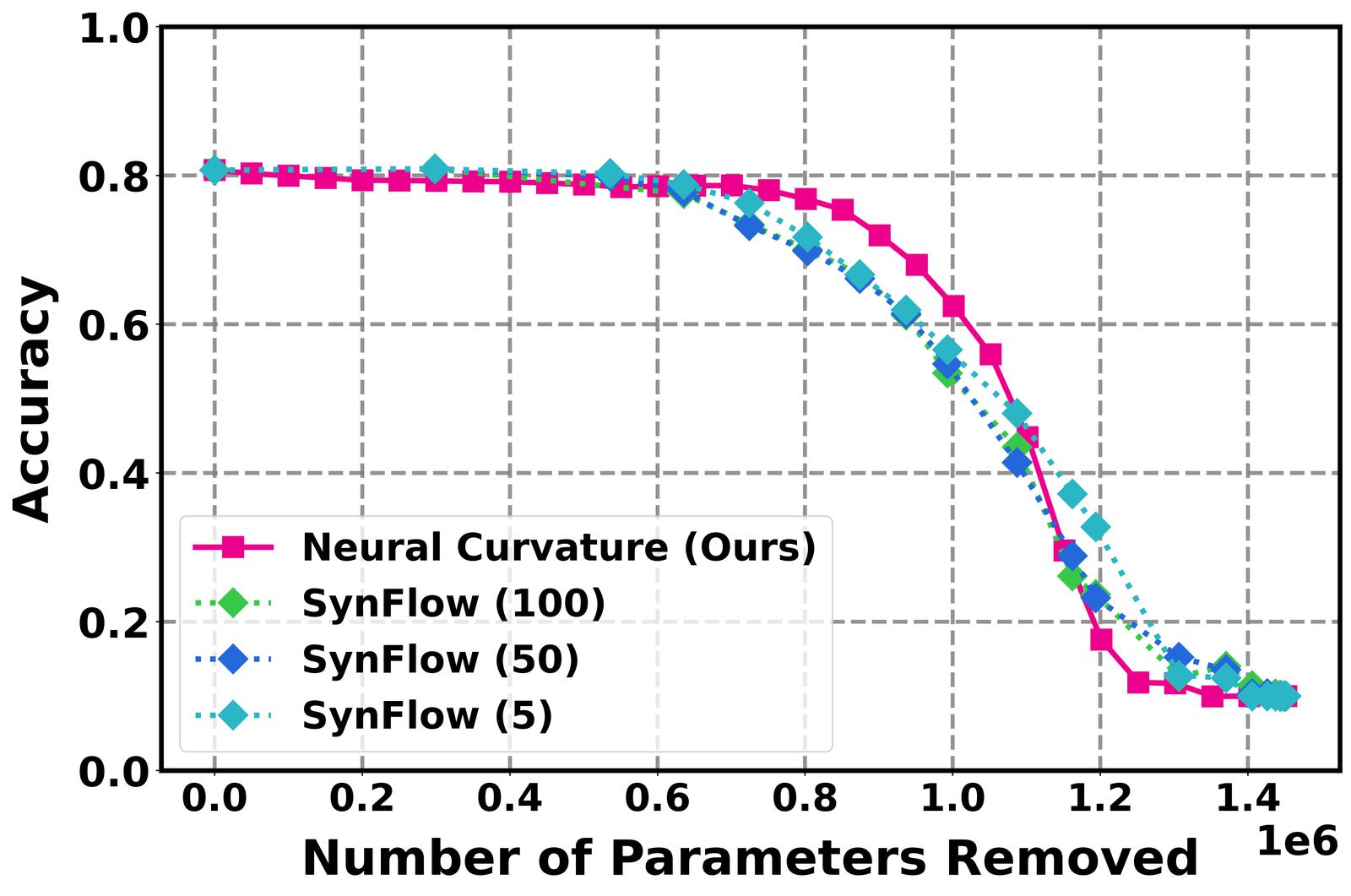}
        \caption{CIFAR-10, AT, ReLU}
    \end{subfigure}

    \vspace{0.5em}

   \begin{subfigure}{0.3\textwidth}
        \centering
        \includegraphics[width=\linewidth]{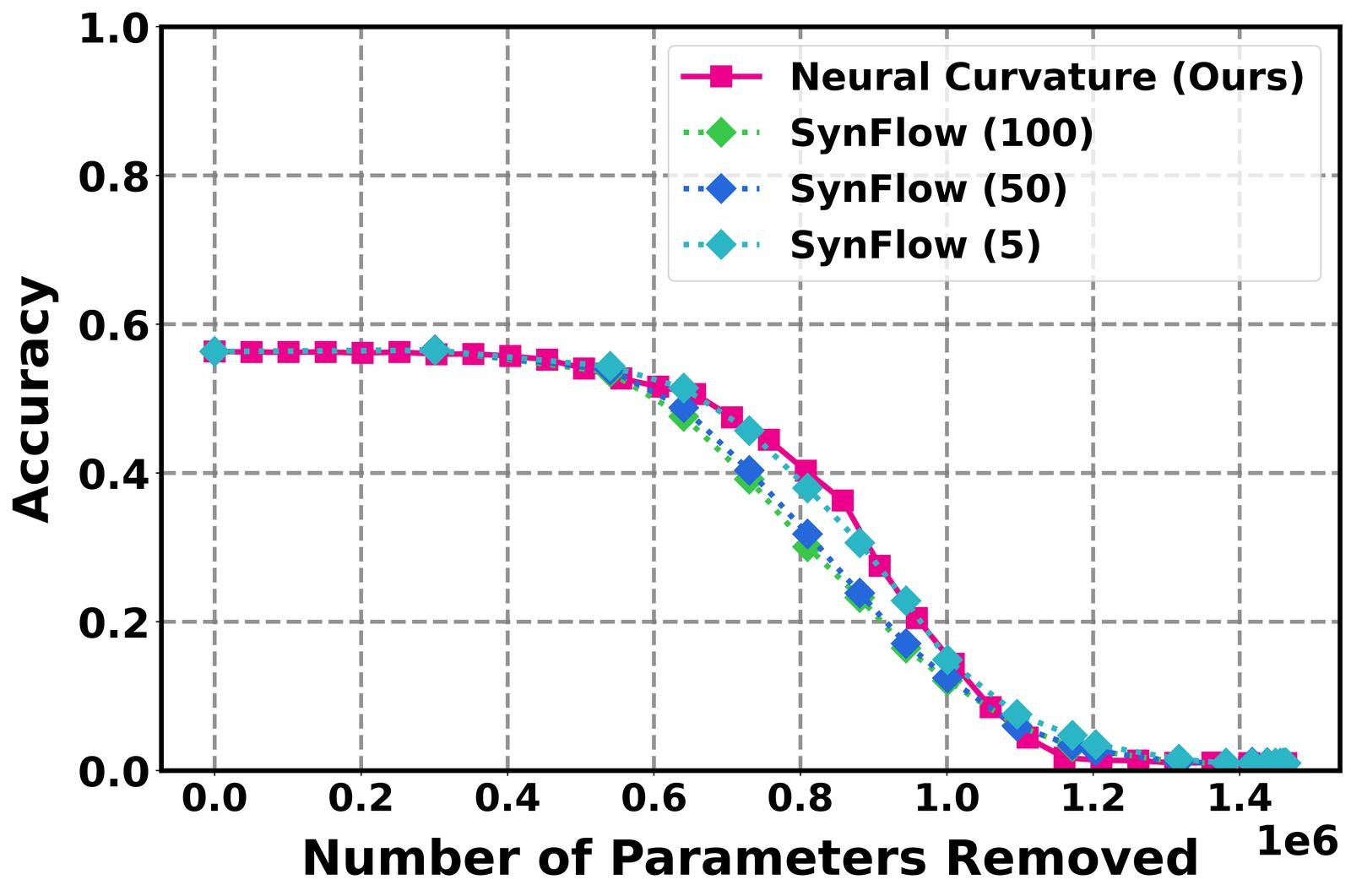}
        \caption{{CIFAR-100, CE (lr=0.001), ReLU}}
    \end{subfigure}
    \hfill
    \begin{subfigure}{0.3\textwidth}
        \centering
        \includegraphics[width=\linewidth]{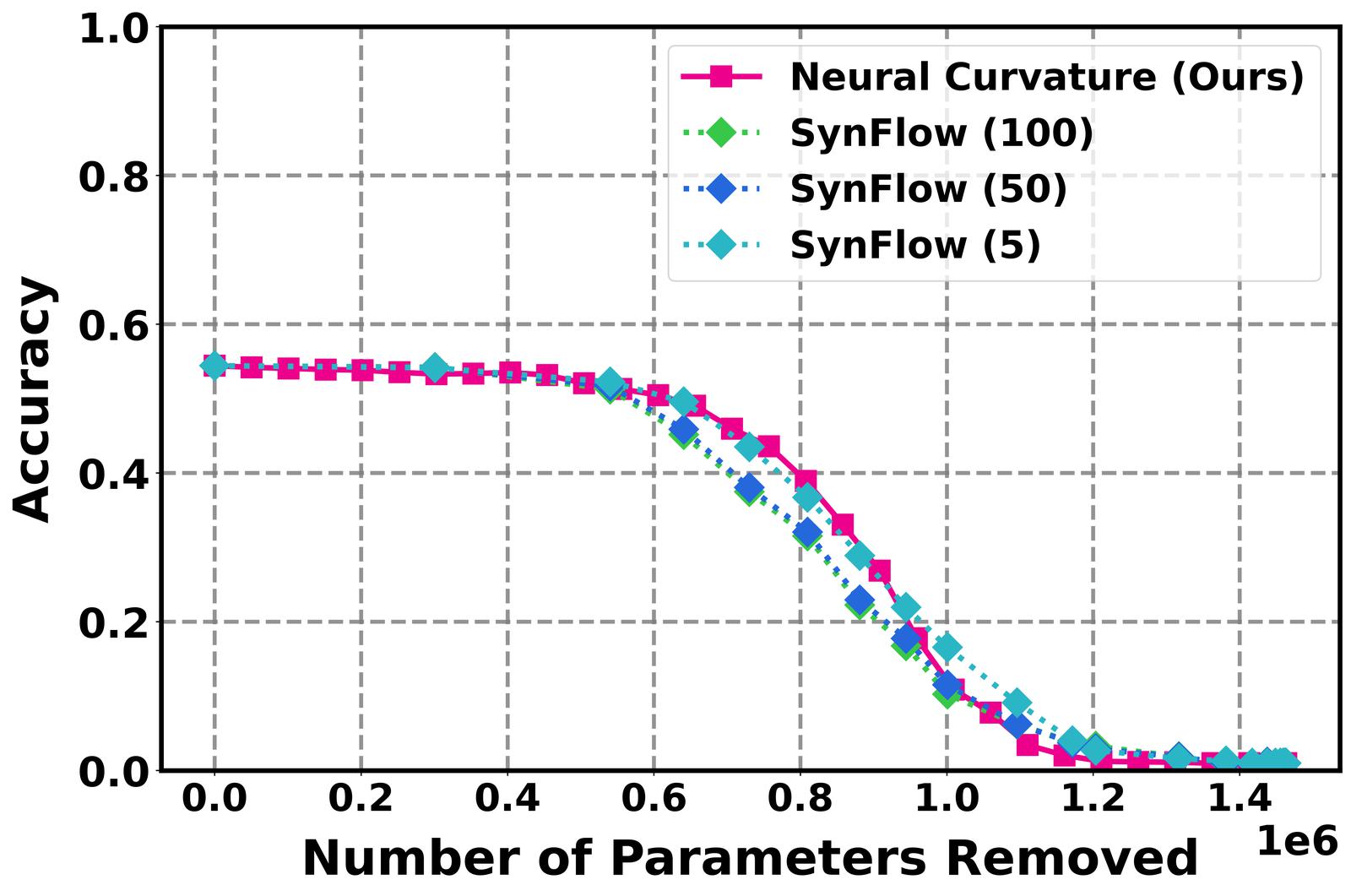}
        \caption{{CIFAR-100, CE (lr=0.002), ReLU}}
    \end{subfigure}
    \hfill
    \begin{subfigure}{0.3\textwidth}
        \centering
        \includegraphics[width=\linewidth]{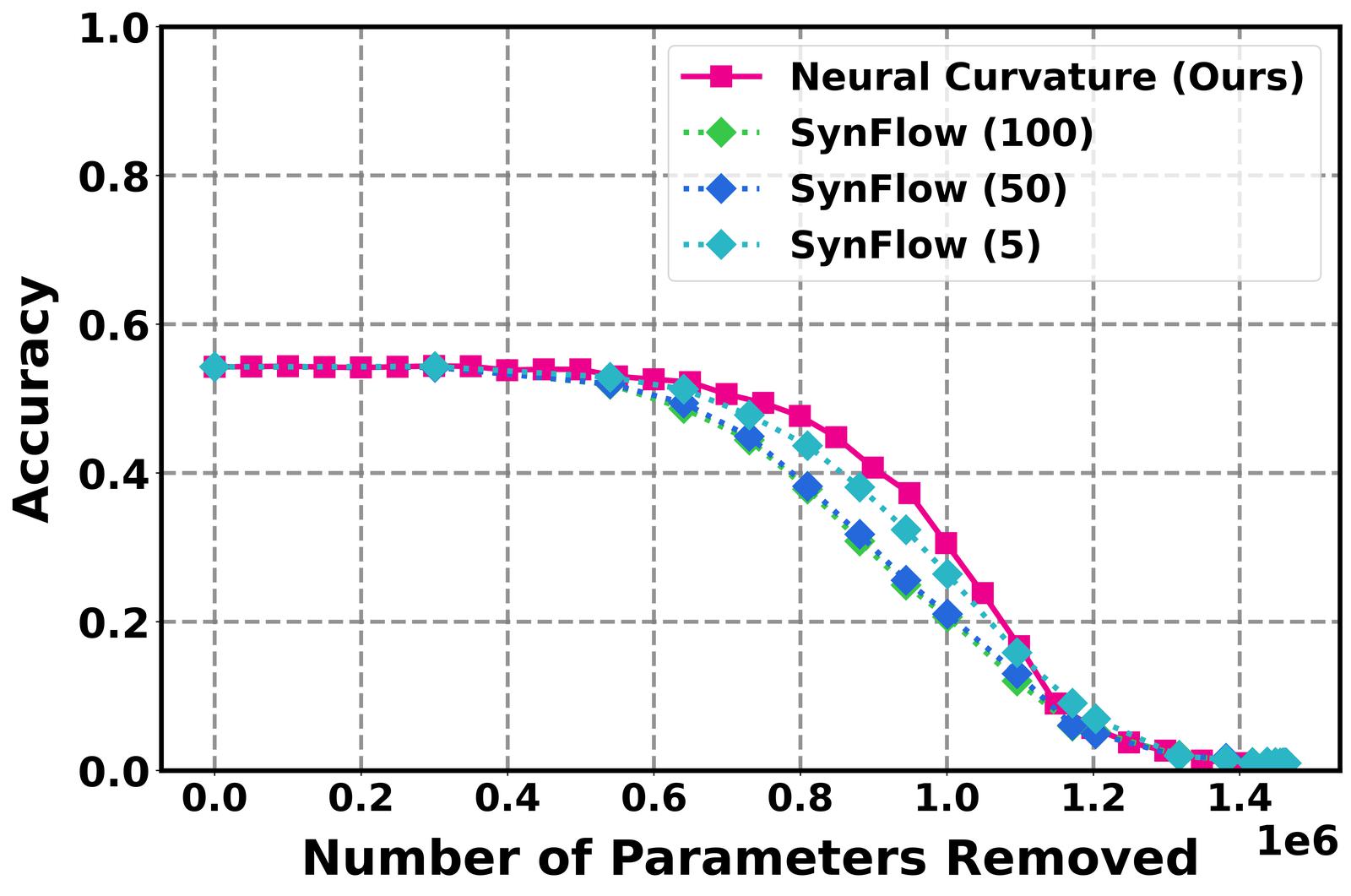}
        \caption{{CIFAR-100, WD, ReLU}}
    \end{subfigure}

    \caption{Edge removal evaluation CIFAR-10, and CIFAR-100 (lr means learning rate). Each subfigure shows a comparison of our neural curvature algorithm with SynFlow pruning method, which was applied with 100, 50, and 5 iterations.}
  \label{fig:synflow_comparsion}
\end{figure}

\clearpage
\subsection{Ablation Study of Per-layer edge removal}\label{sec:perlabelremoval}
In this section, we presented the results of the ablation study on per-layer edge removal, including CNN on MNIST (Figure~\ref{fig:cnnori_relu_ablation_layer}, and Figure~\ref{fig:cnnadv_tanh_ablation_layer}), 
VGG9-lite on CIFAR-10 (Figure~\ref{fig:vggori_relu_ablation_layer}, Figure~\ref{fig:vggori_tanh_ablation_layer}). Each figure contains the per-layer edge removal results based on neural curvature values (top) and based on weight magnitude (bottom), for comparison.

\add{An interesting observation is that when pruning is restricted to a single layer at a time, our method behaves similarly to magnitude pruning. This suggests that within a given layer, the largest-magnitude weights are typically the most important. However, magnitude alone cannot be meaningfully compared across layers, which explains why our method demonstrates significantly stronger performance when pruning the full network: the neural curvature provides an effective normalization across layers and enables globally coherent pruning decisions.}

\add{Moreover, the curvature annotations along our pruning curves reveal that model accuracy typically begins to degrade only once the edges being removed have curvature values near or below zero. This layer-wise pattern again validates curvature as a meaningful importance indicator. Finally, this experiment highlights that our approach is capable of reliably identifying both the most and least important edges across the network.}

  \begin{figure*}[hpbt]
    \centering
    \setlength{\tabcolsep}{2pt}  
    \renewcommand{\arraystretch}{0.95} 

    \newcommand{\subplot}[2]{%
        \begin{tabular}{c}
            {\scriptsize\textbf{#1}} \\[-2pt]
            \includegraphics[width=0.24\linewidth]{#2}
        \end{tabular}
    }
    
    \begin{tabular}{ccc}
        \multicolumn{3}{c}{\textbf{Neural Curvature Method (Ours)}} \\[4pt]
            \subplot{Layer 0 (216)}{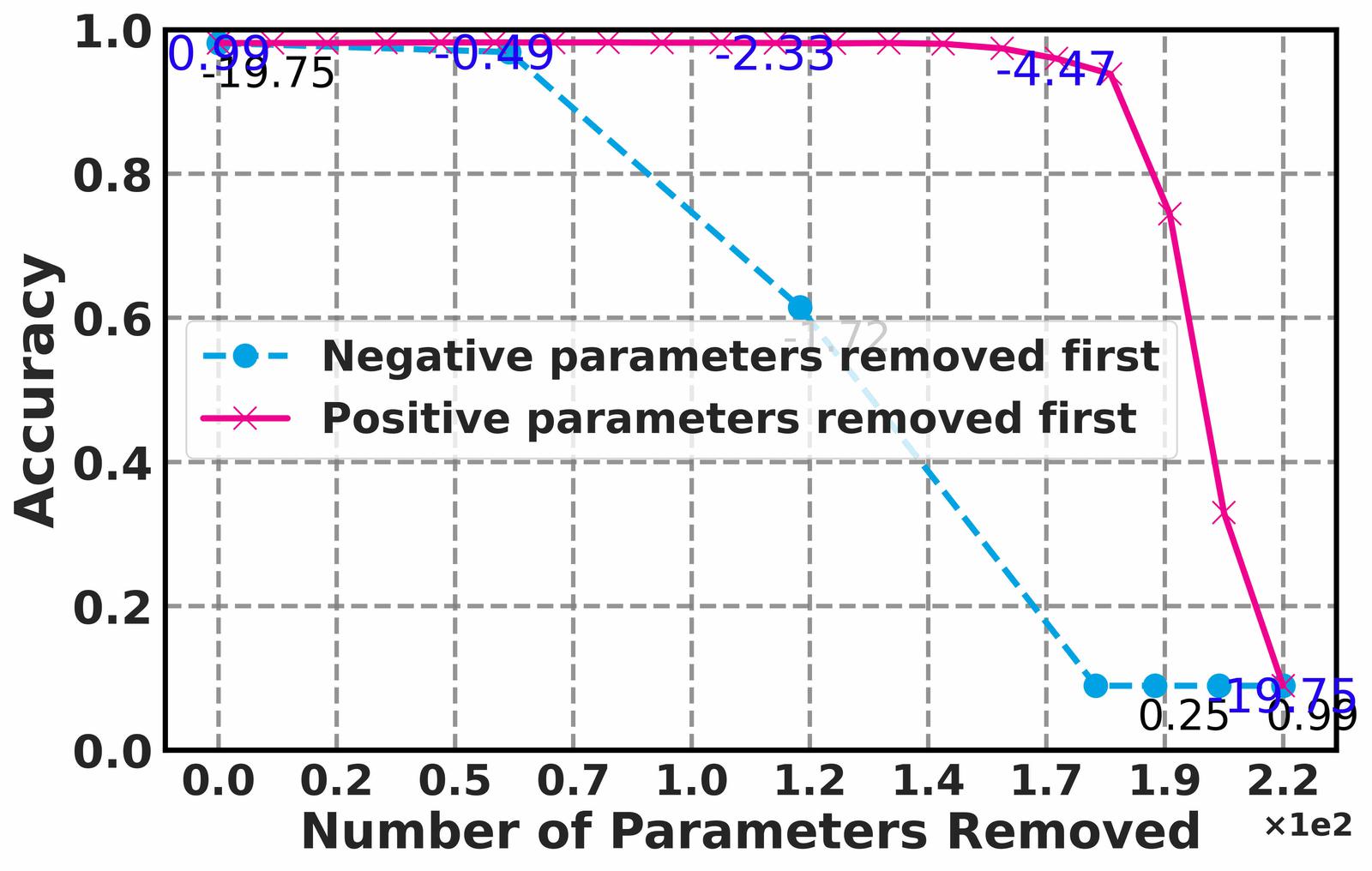} &
            \subplot{Layer 1 (3456)}{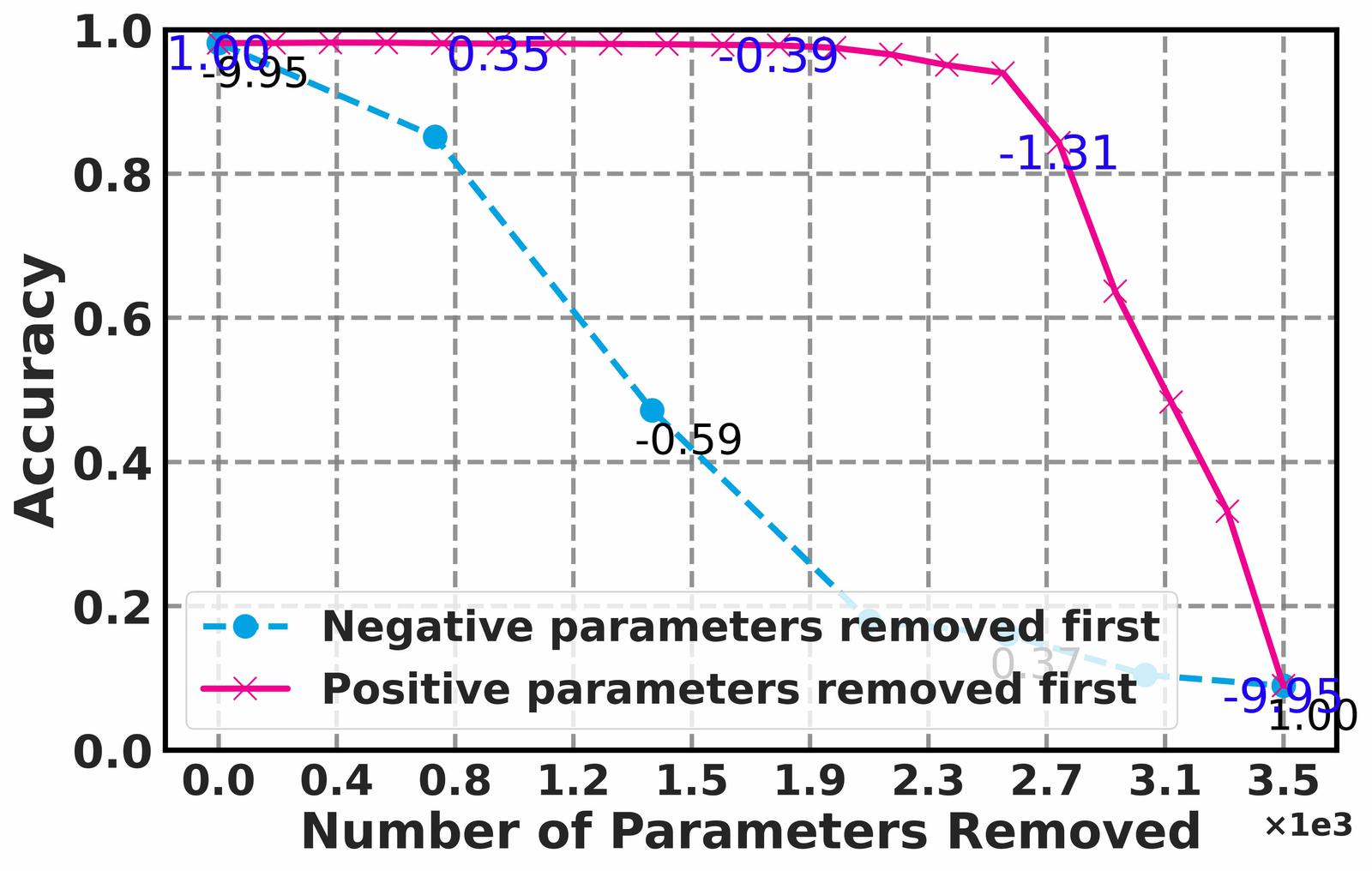} &
            \subplot{Layer 2 (30720)}{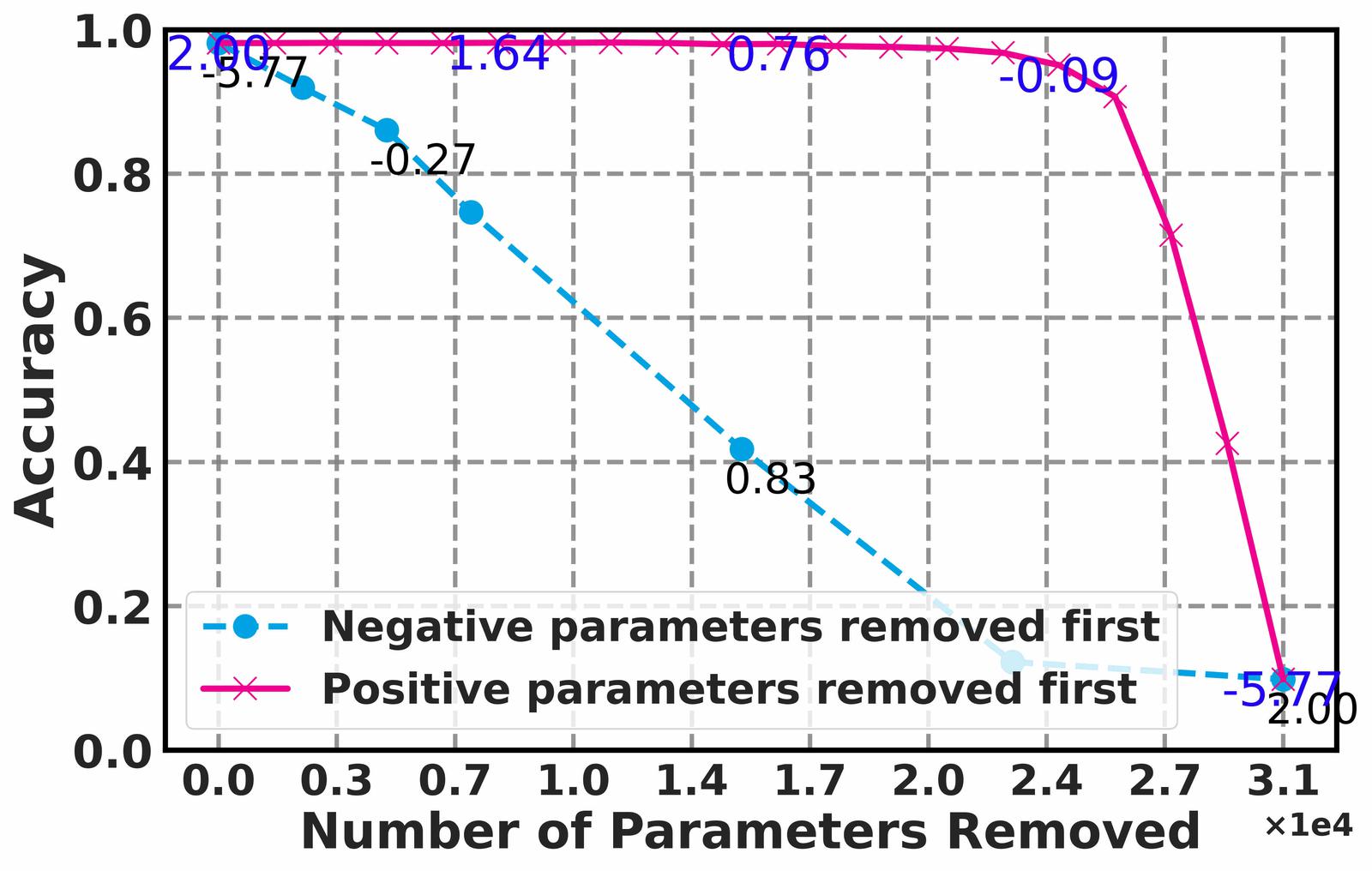} \\

            \subplot{Layer 3 (10080)}{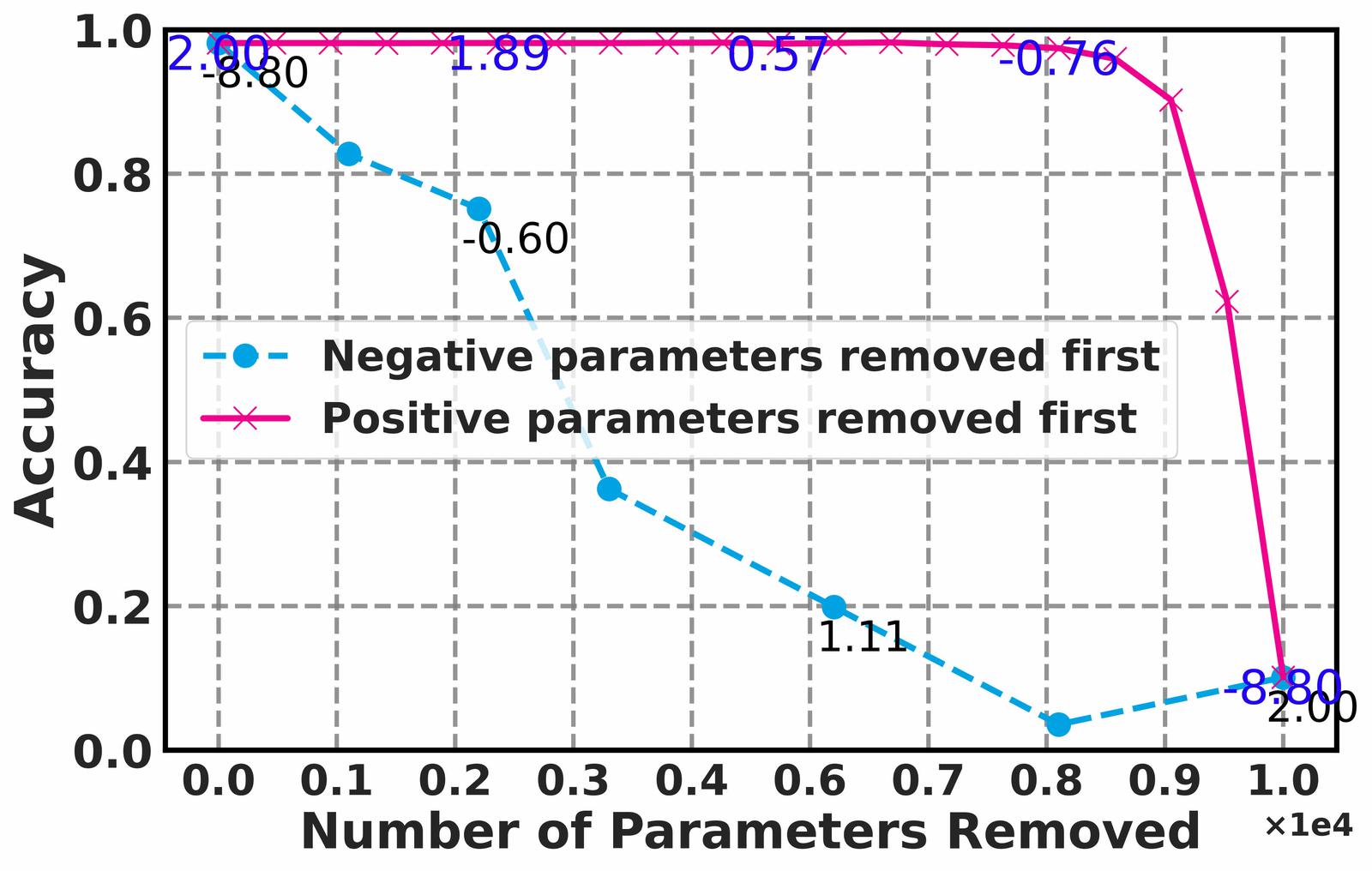} &
            \subplot{Layer 4 (840)}{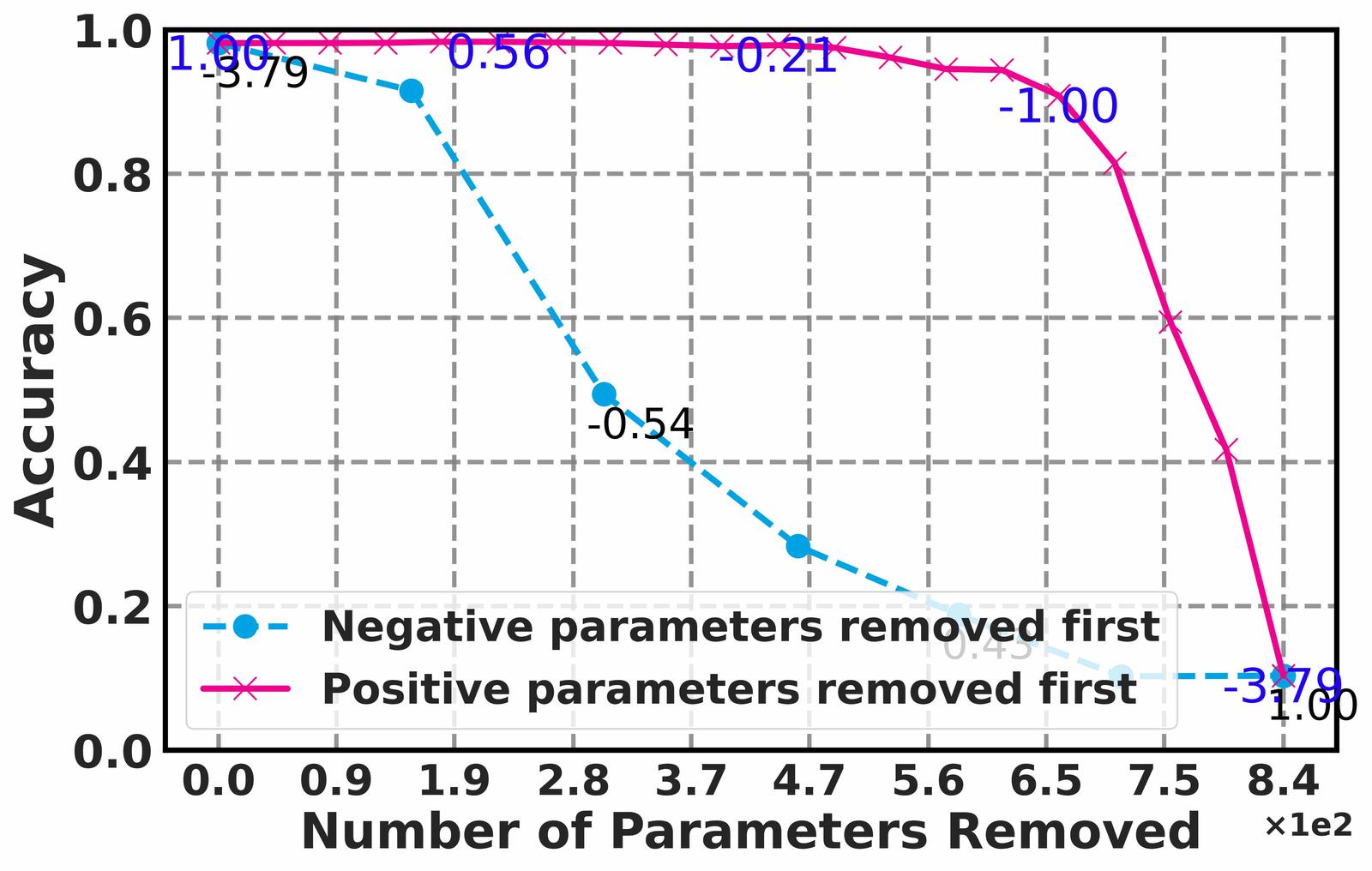} \\[6pt]
        \multicolumn{3}{c}{\textbf{Magnitude-based}} \\[4pt]
        
        \subplot{Layer 0 (216)}{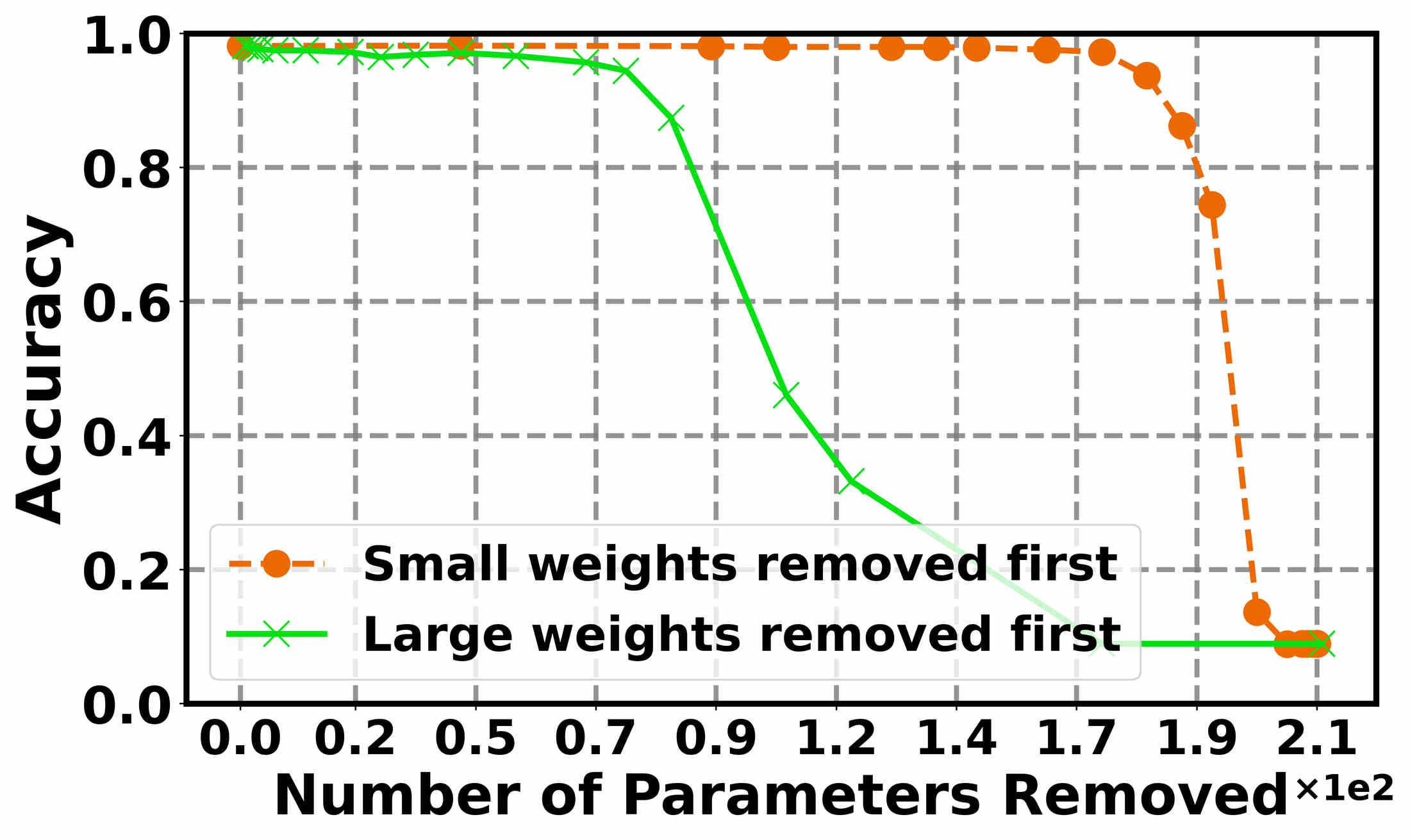} &
        \subplot{Layer 1 (3456)}{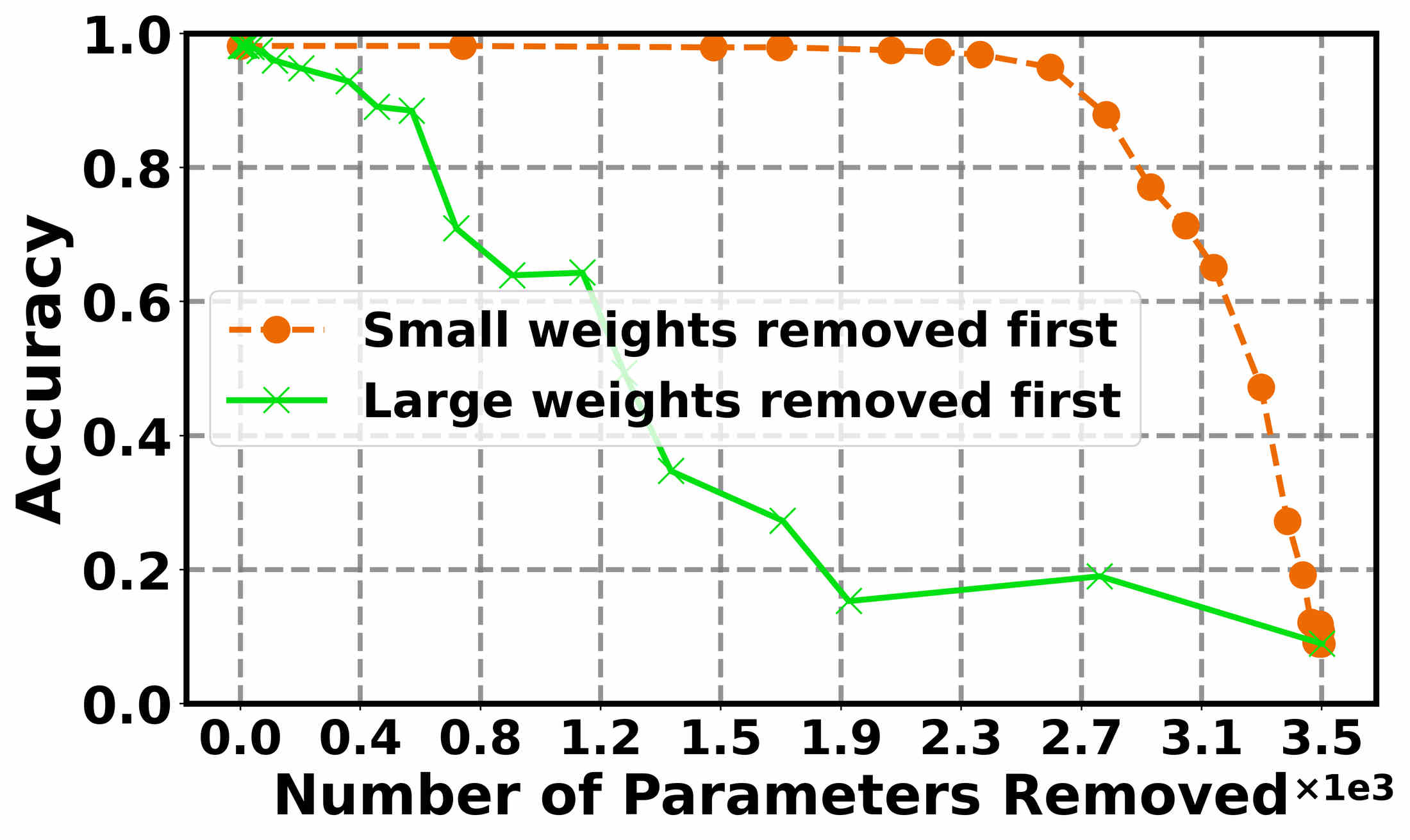} &
        \subplot{Layer 2 (30720)}{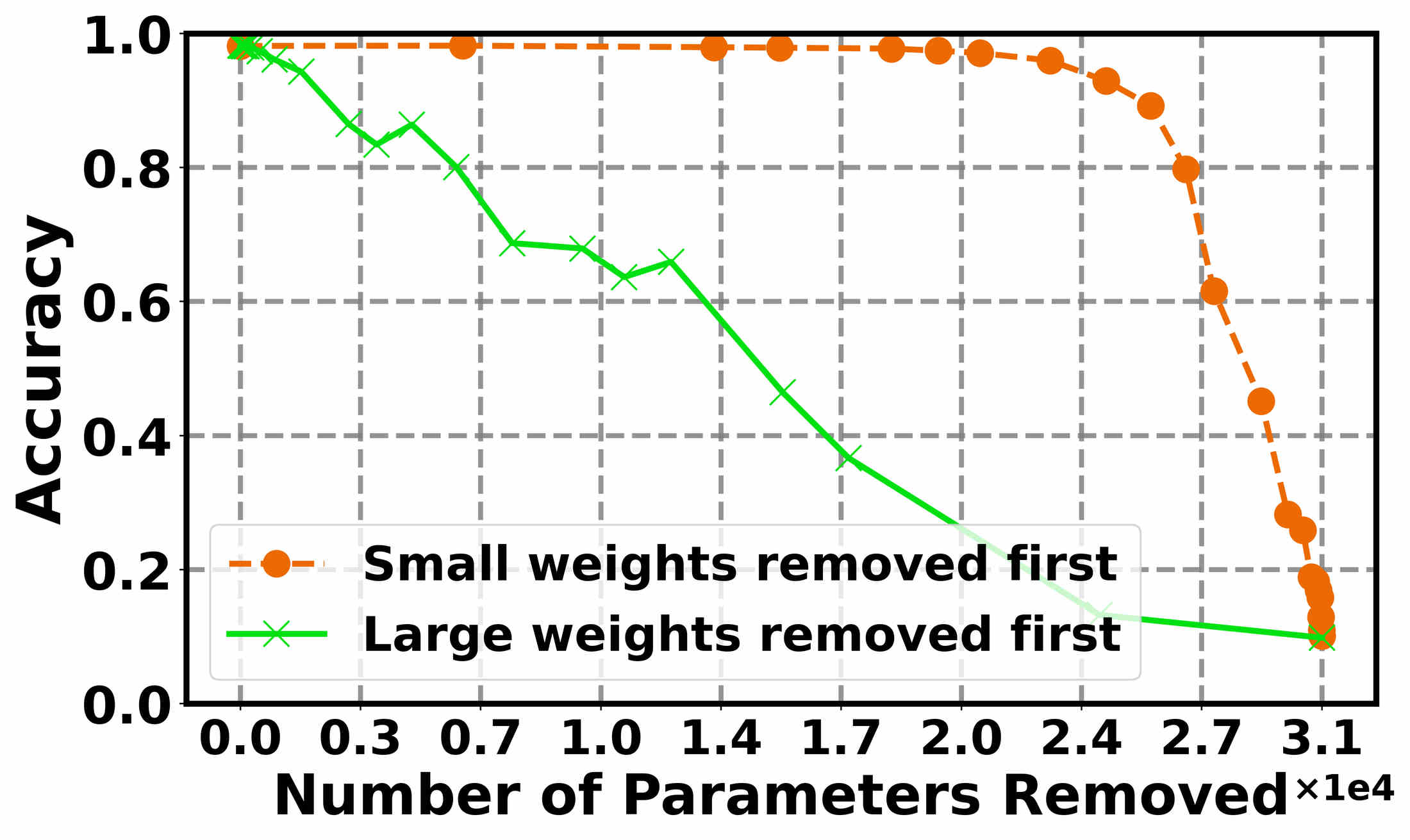} \\

        \subplot{Layer 3 (10080)}{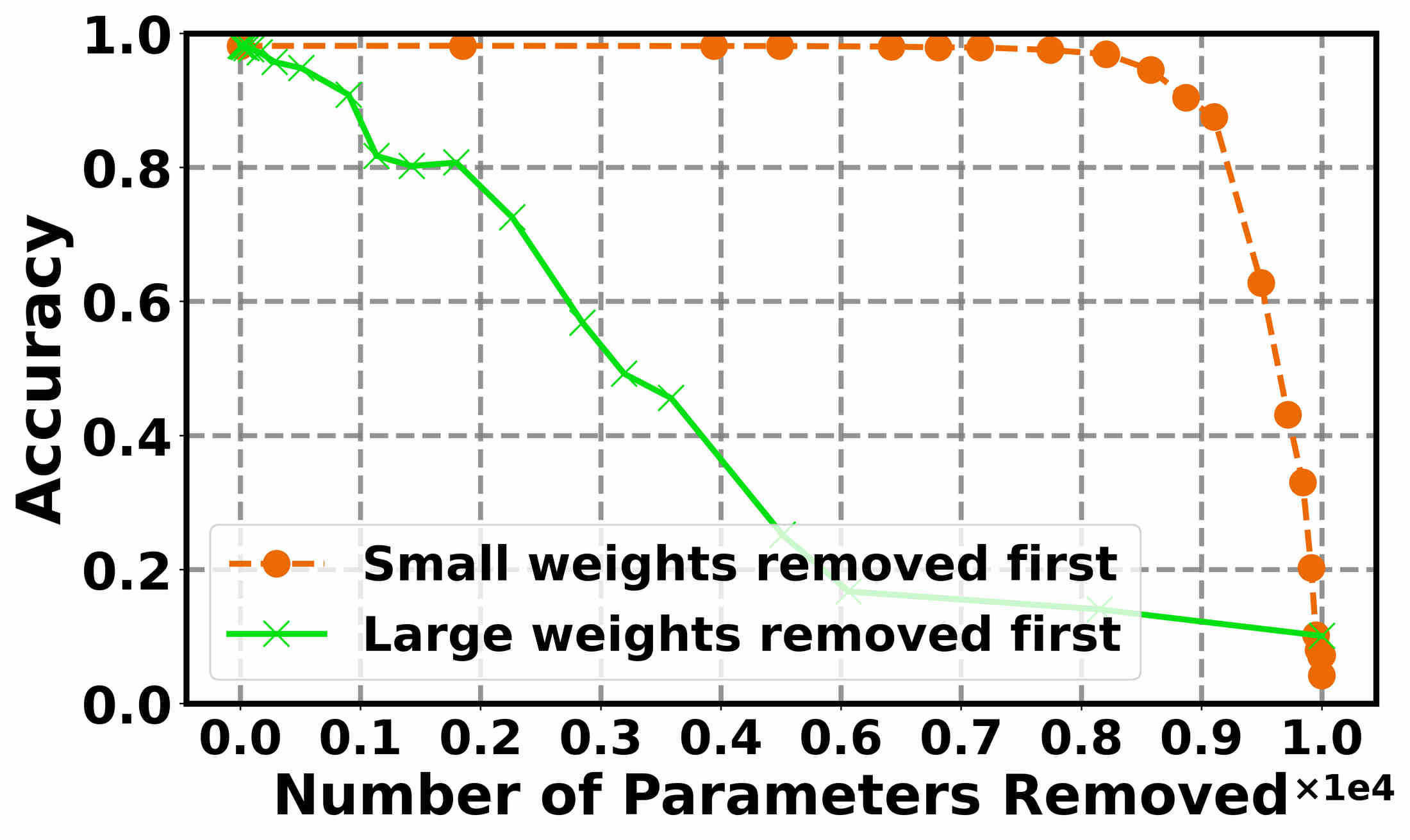} &
        \subplot{Layer 4 (840)}{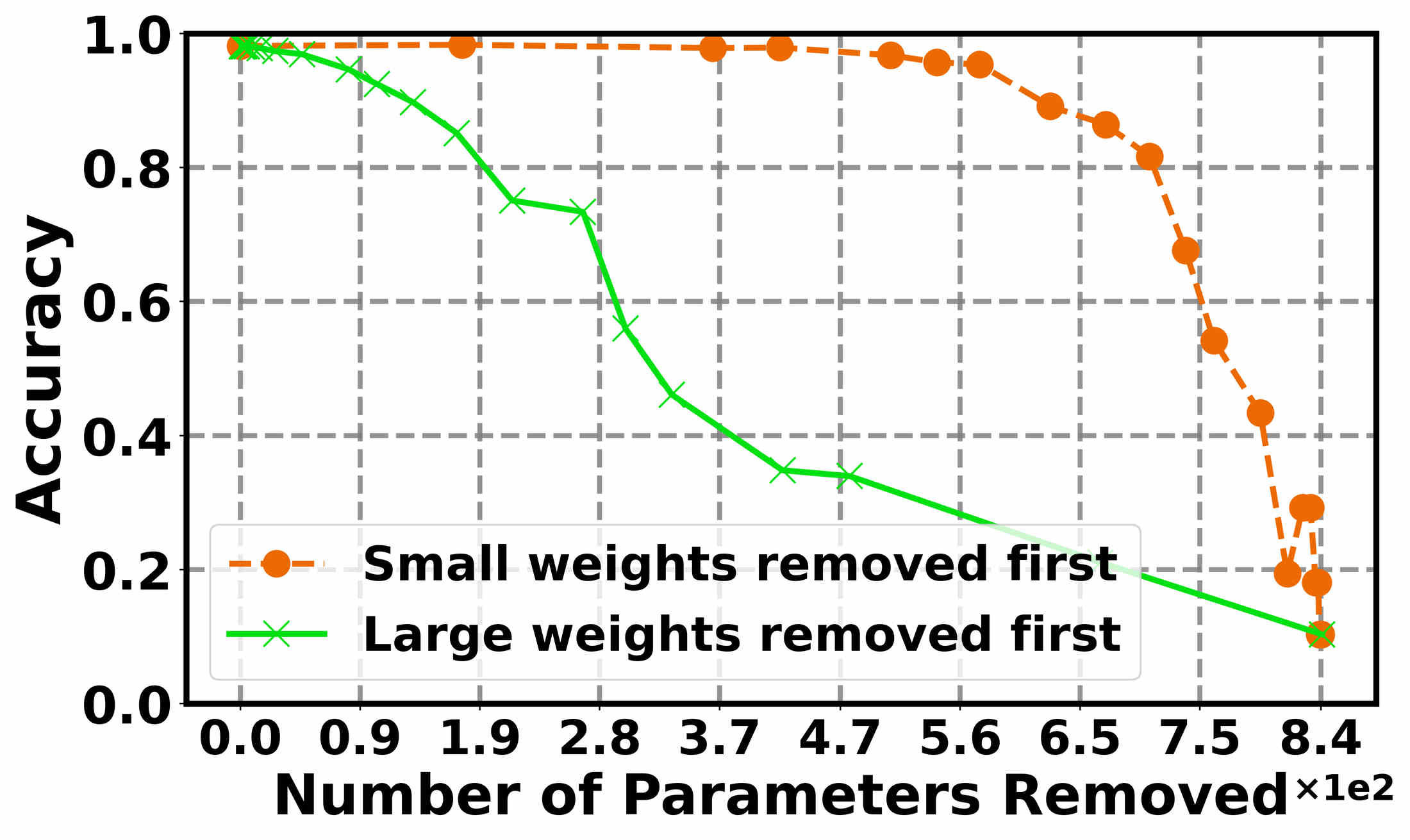} \\

    \end{tabular}
    \vspace{6pt}
    \caption{Ablation experiment on per-layer edge removal. Each subfigure shows the edge removal analysis for each layer based on neural curvature value (top 5 figures) and magnitude-based pruning (bottom 5 figures) for the CNN, CE ReLU configuration. The number above the figure is the total number of parameters in that layer.}
  \label{fig:cnnori_relu_ablation_layer}
  \end{figure*}

    \begin{figure*}[hpbt]
    \centering
    \setlength{\tabcolsep}{2pt}  
    \renewcommand{\arraystretch}{0.95} 

    \newcommand{\subplot}[2]{%
        \begin{tabular}{c}
            {\scriptsize\textbf{#1}} \\[-2pt]
            \includegraphics[width=0.24\linewidth]{#2}
        \end{tabular}
    }
    
    \begin{tabular}{ccc}
        \multicolumn{3}{c}{\textbf{Neural Curvature Method (Ours)}} \\[4pt]
            \subplot{Layer 0 (216)}{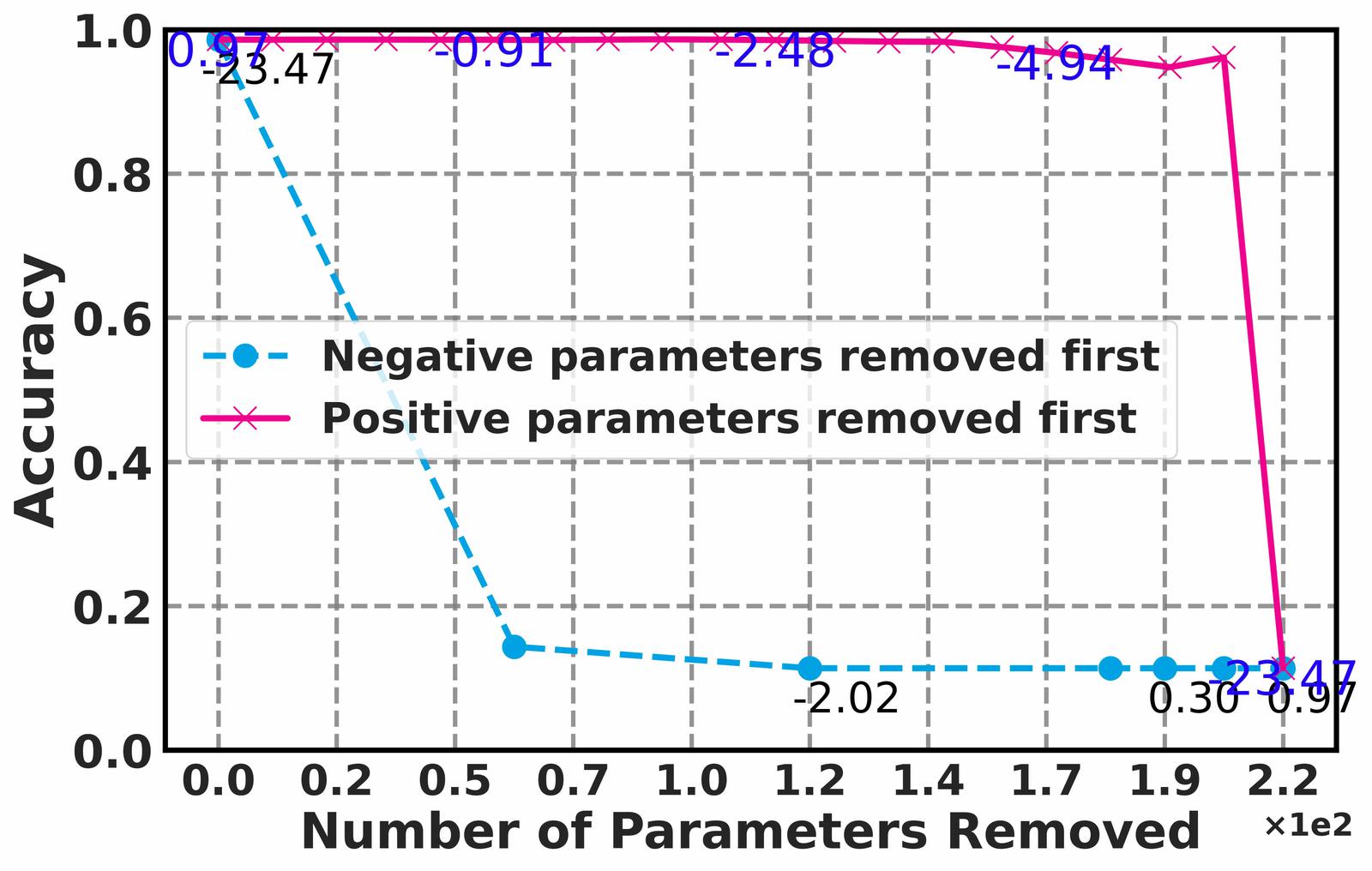} &
            \subplot{Layer 1 (3456)}{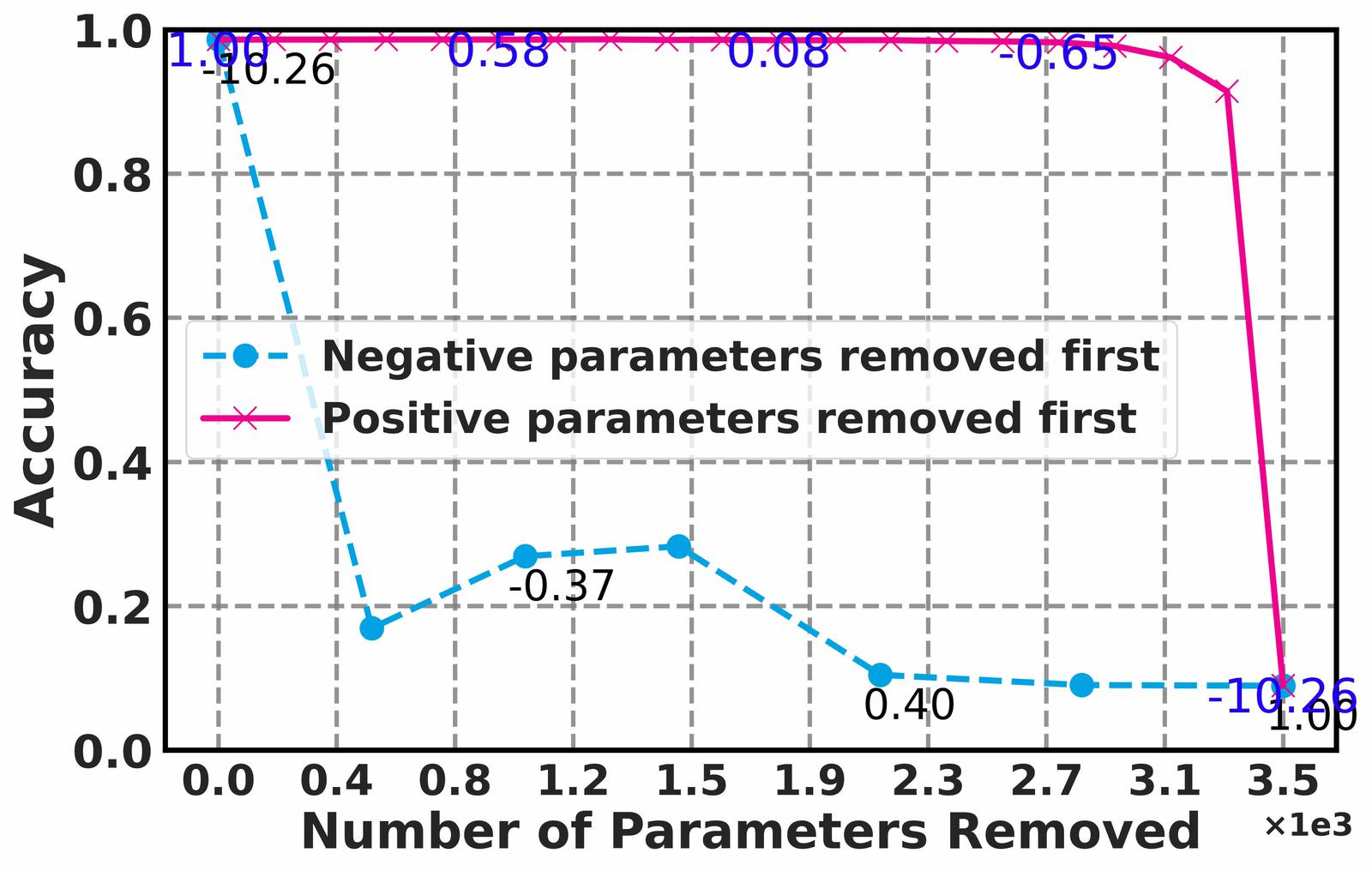} &
            \subplot{Layer 2 (30720)}{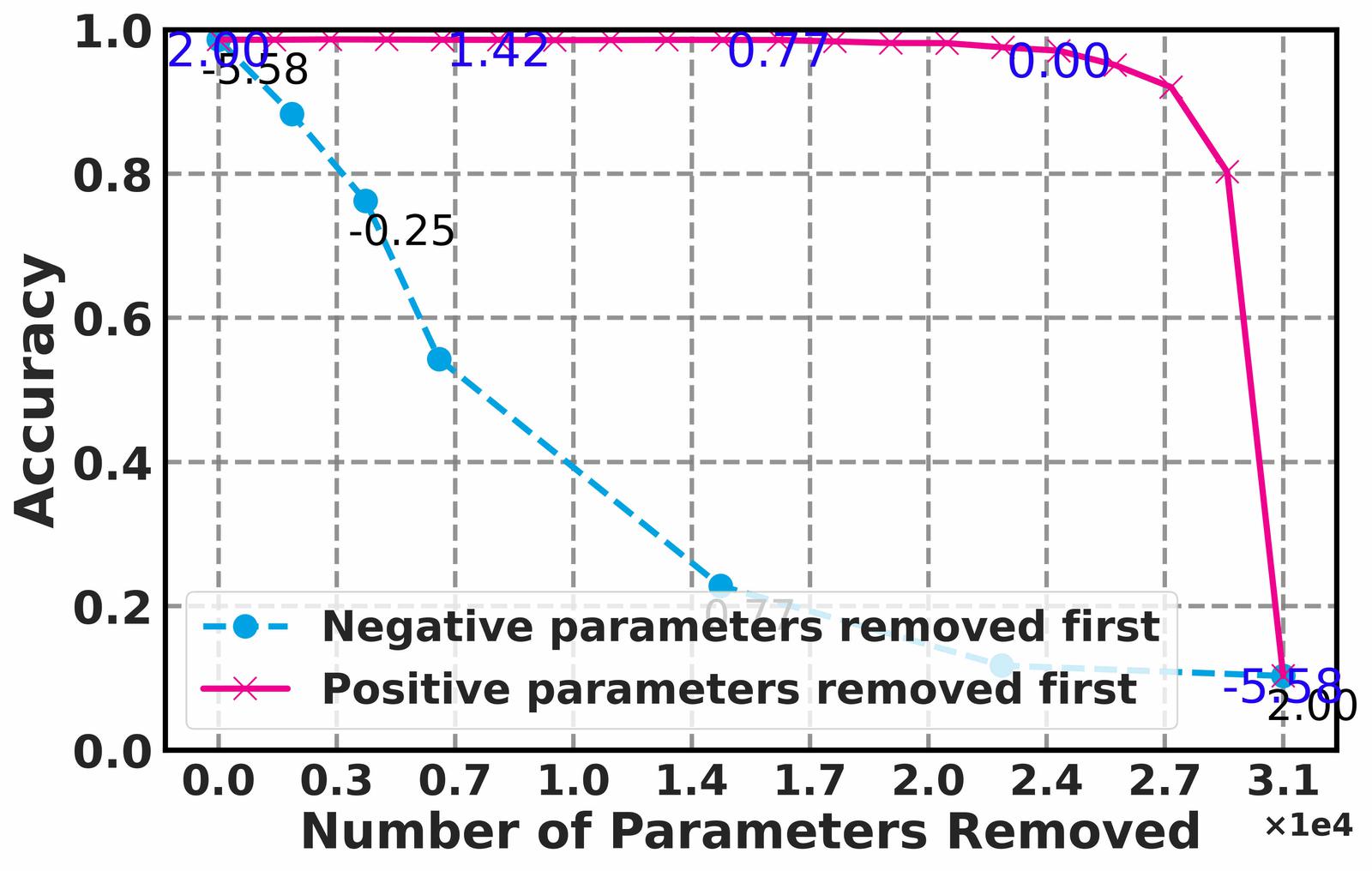} \\

            \subplot{Layer 3 (10080)}{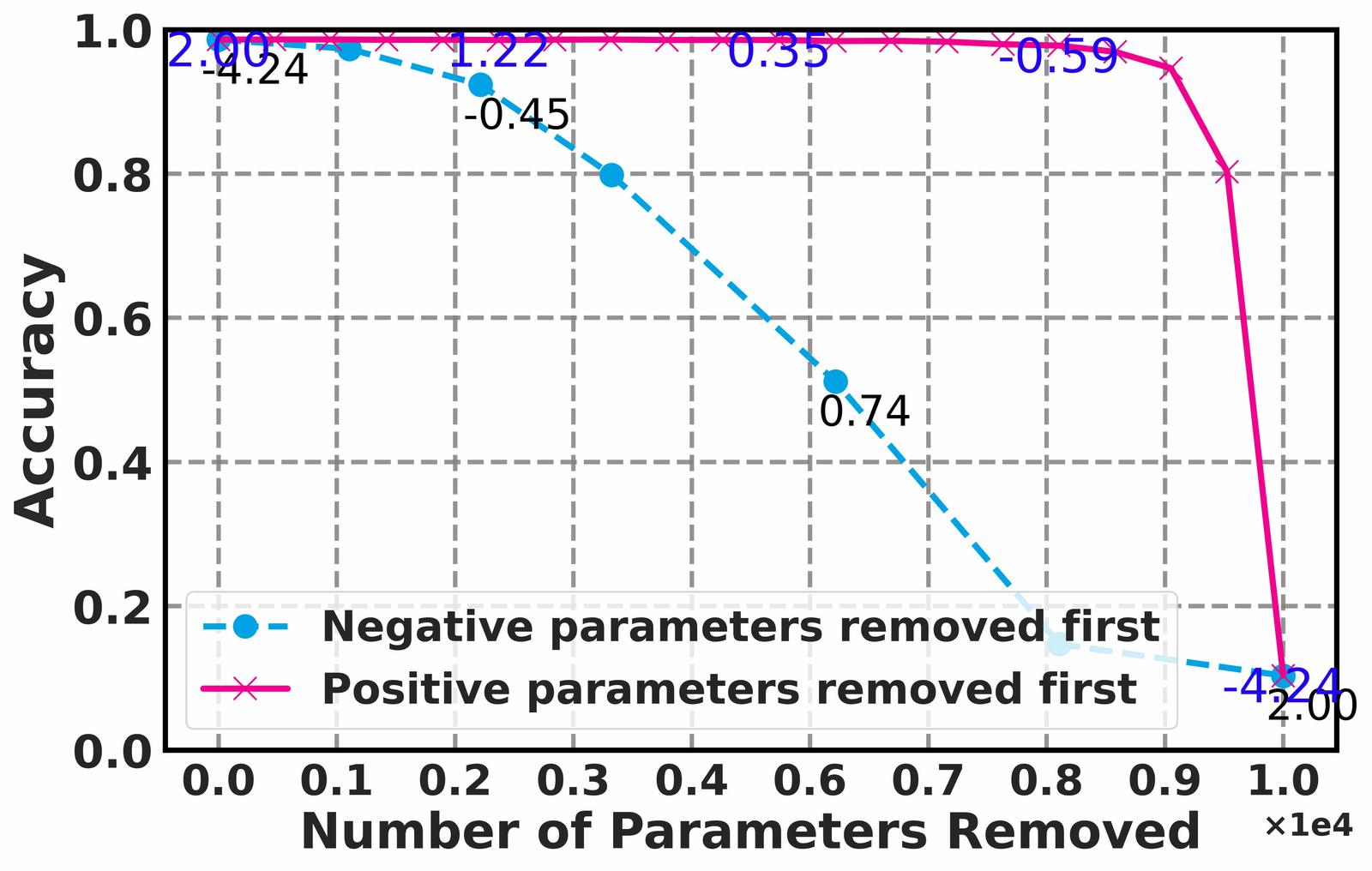} &
            \subplot{Layer 4 (840)}{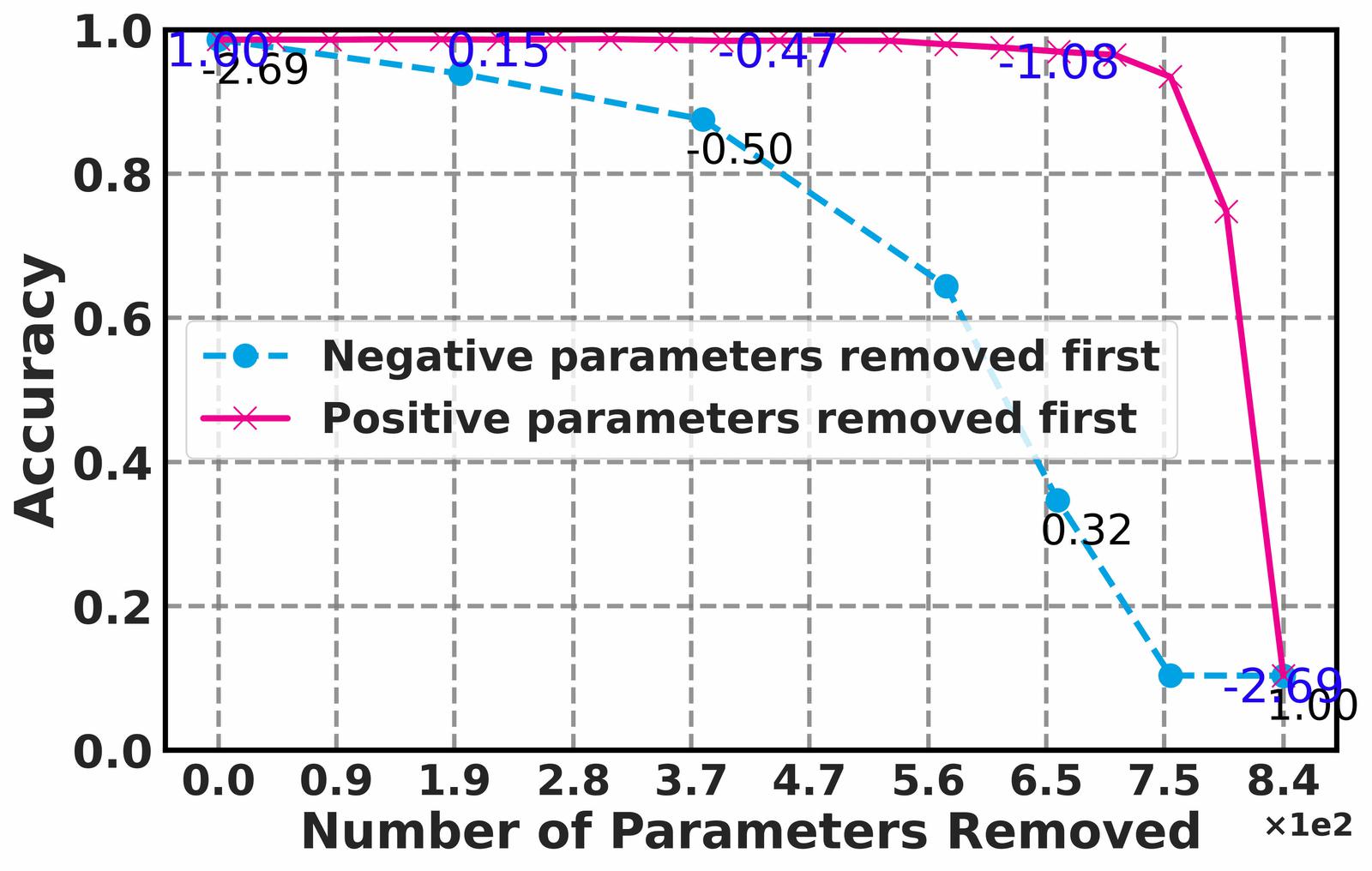} \\[6pt]
        \multicolumn{3}{c}{\textbf{Magnitude-based}} \\[4pt]
        
        \subplot{Layer 0 (216)}{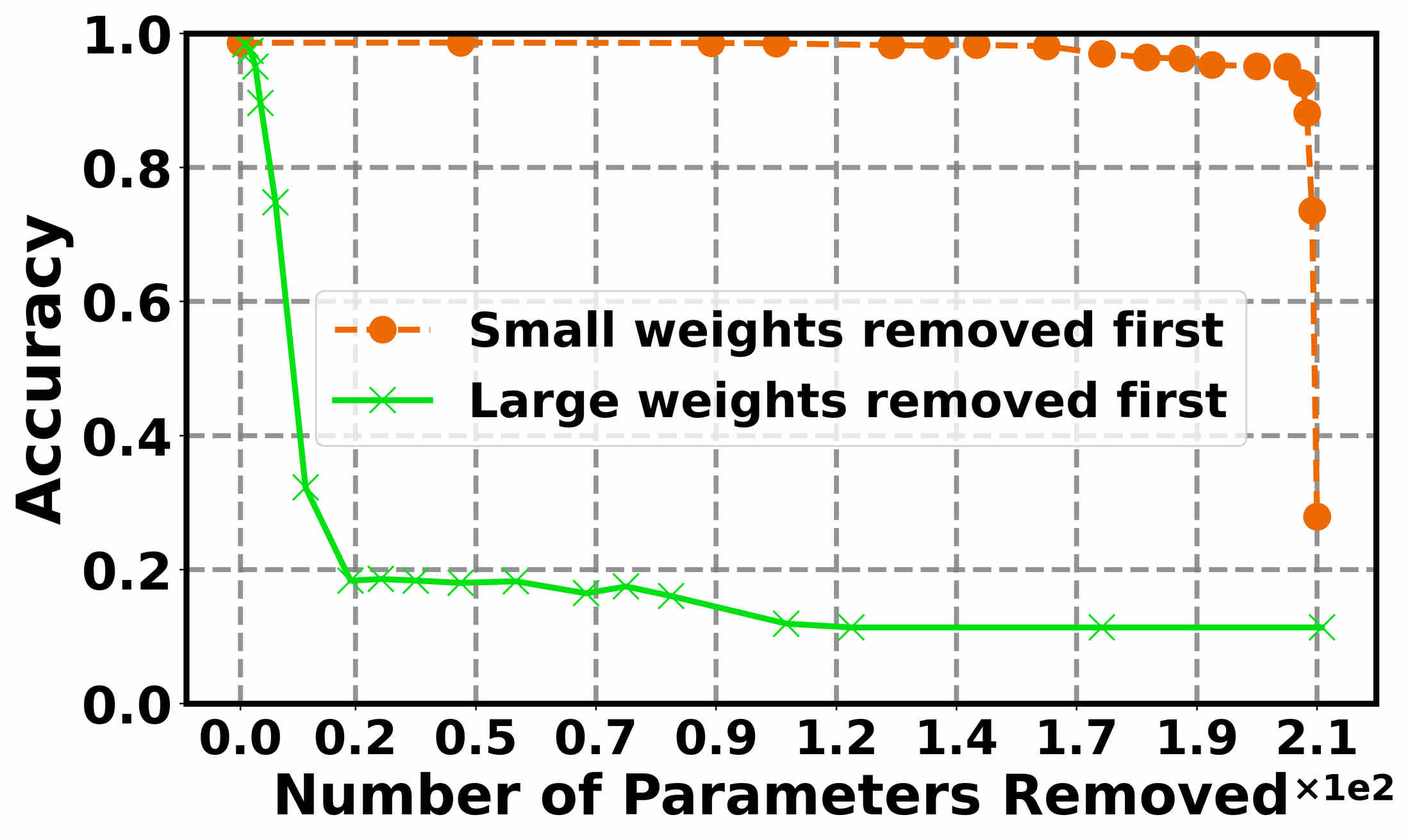} &
        \subplot{Layer 1 (3456)}{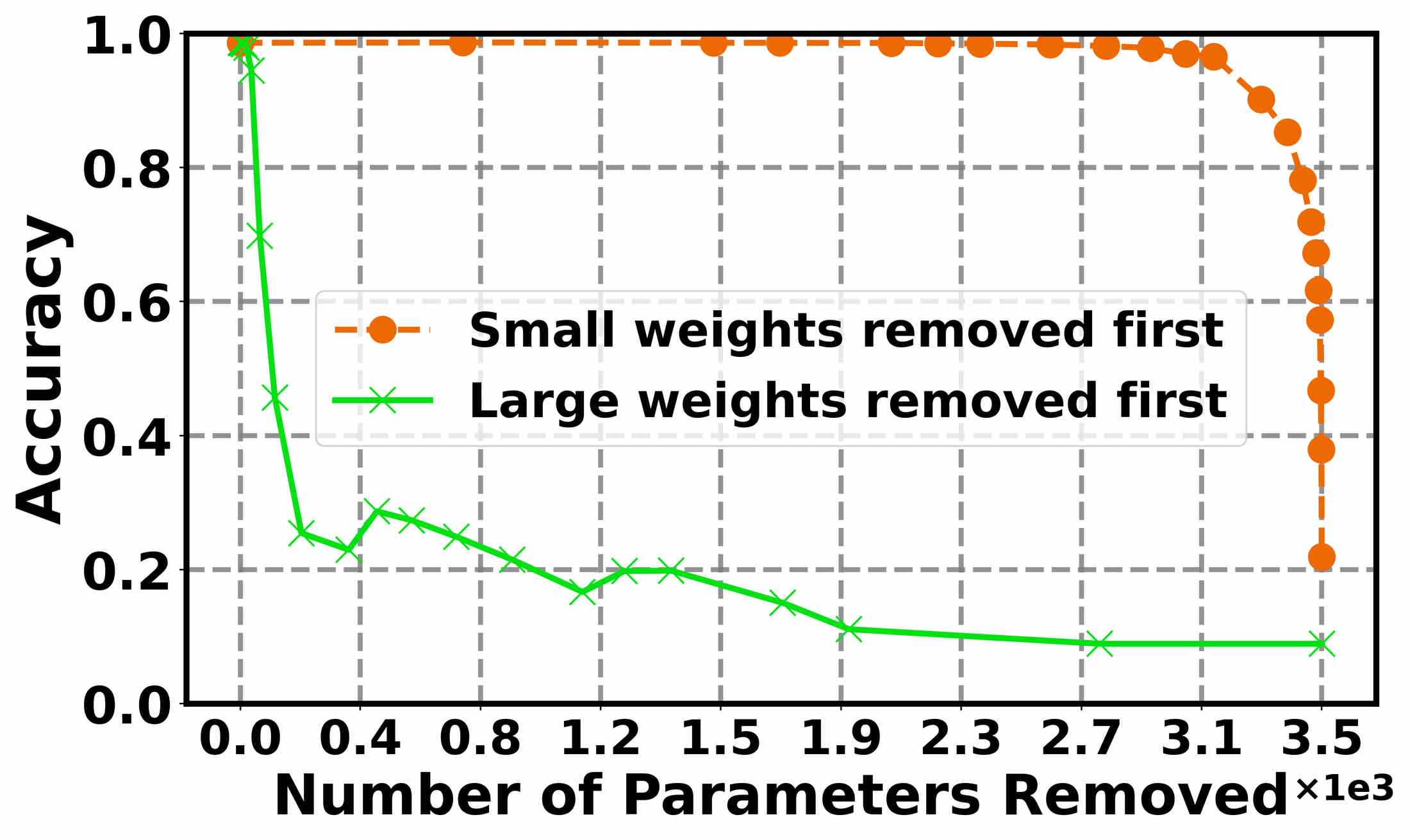} &
        \subplot{Layer 2 (30720)}{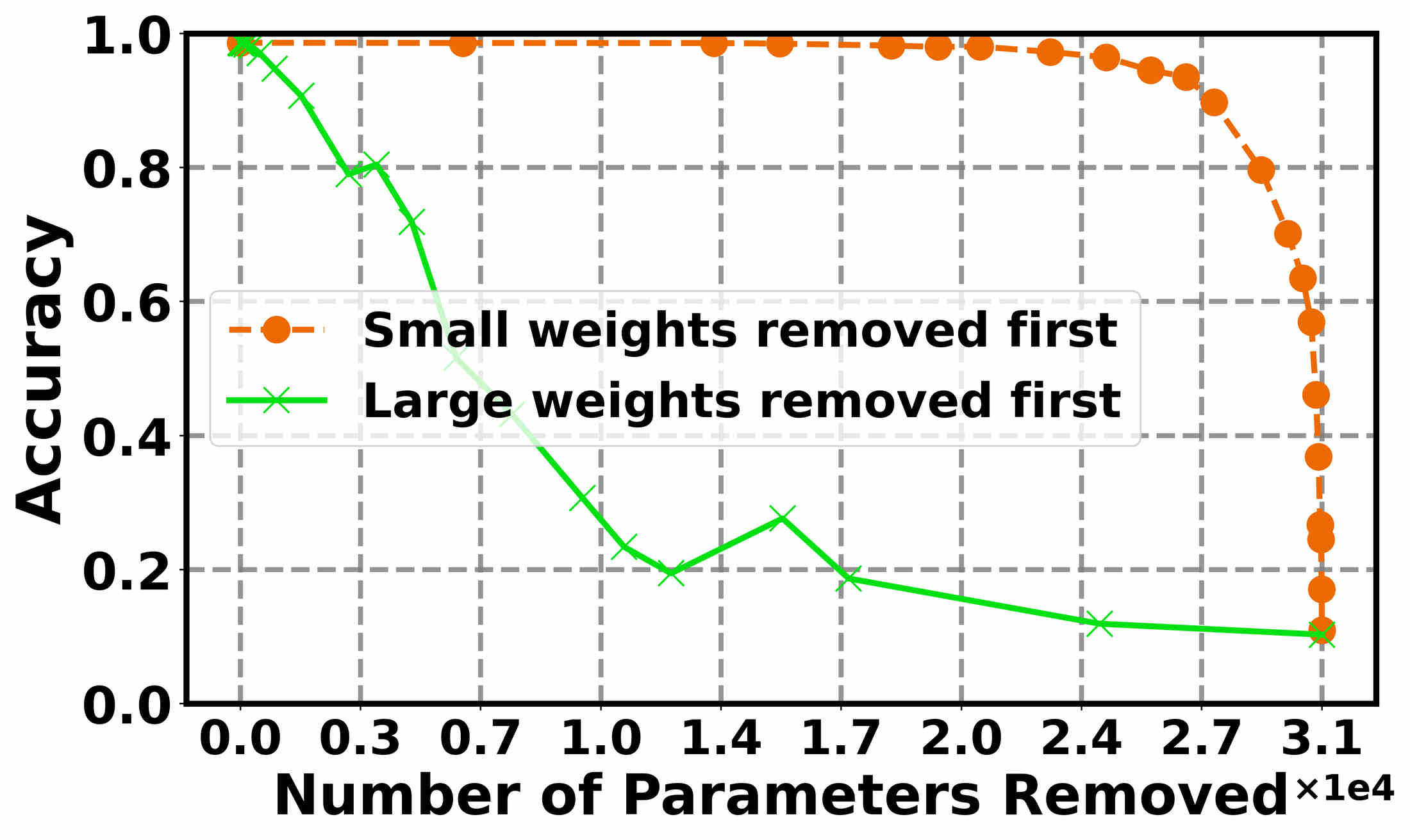} \\

        \subplot{Layer 3 (10080)}{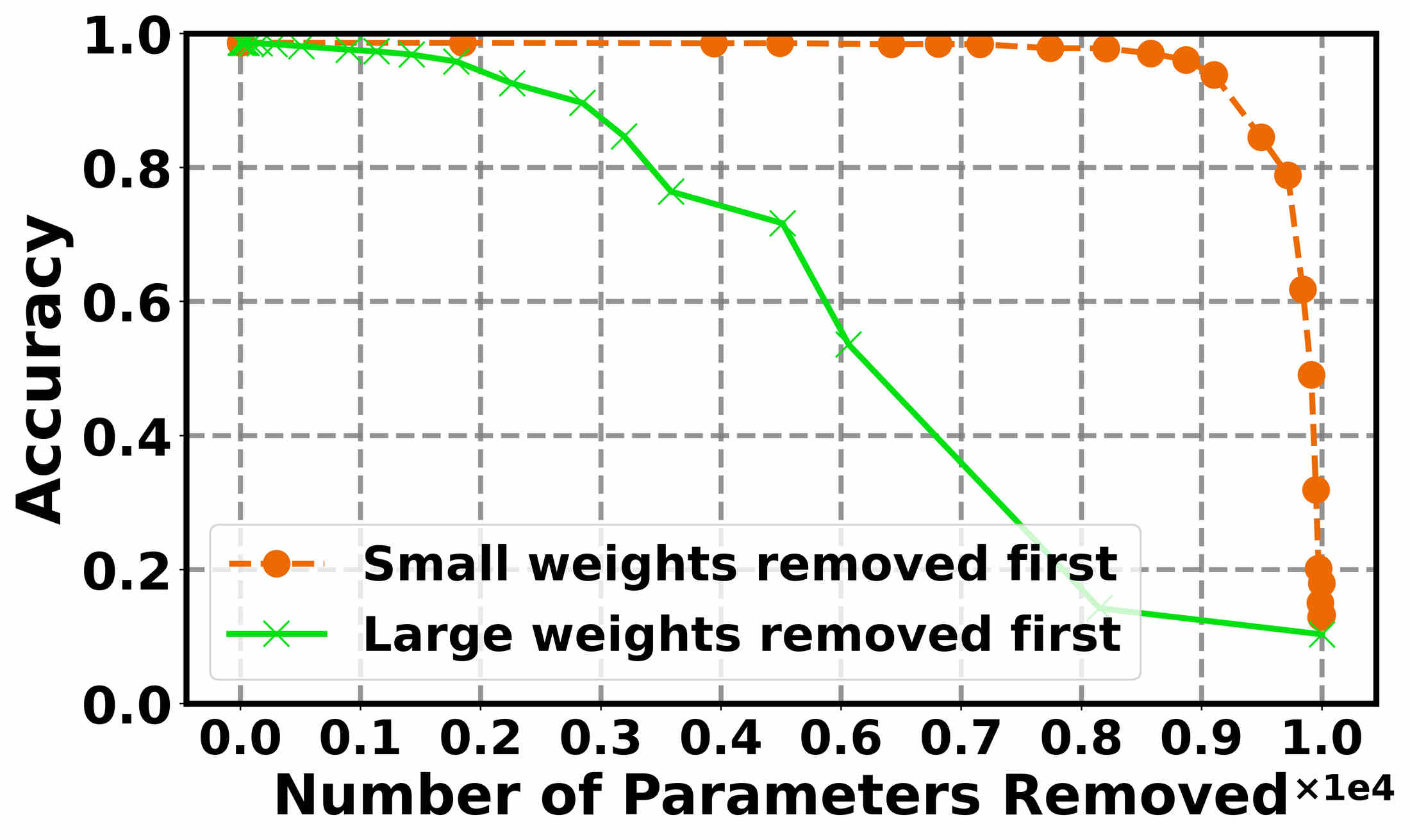} &
        \subplot{Layer 4 (840)}{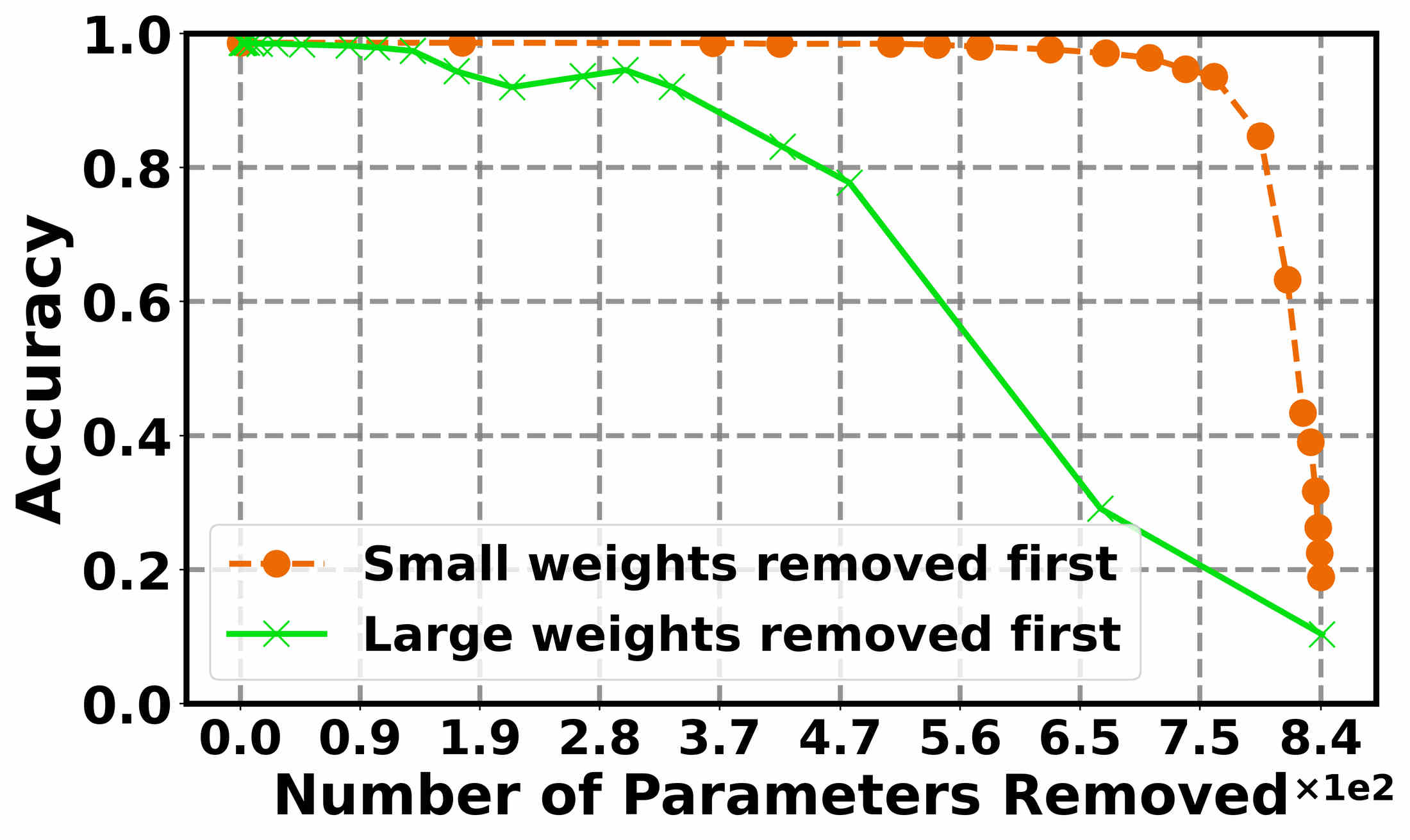} \\

    \end{tabular}
    \vspace{6pt}
    \caption{Ablation experiment on per-layer edge removal. Each subfigure shows the edge removal analysis for each layer based on neural curvature value (top 5 figures) and magnitude-based pruning (bottom 5 figures) for the CNN, AT Tanh configuration. The number above the figure is the total number of parameters in that layer.}
  \label{fig:cnnadv_tanh_ablation_layer}
  \end{figure*}
  
\begin{figure*}[hpbt]
    \centering
    \setlength{\tabcolsep}{2pt}  
    \renewcommand{\arraystretch}{0.95} 

    \newcommand{\subplot}[2]{%
        \begin{tabular}{c}
            {\scriptsize\textbf{#1}} \\[-2pt]
            \includegraphics[width=0.24\linewidth]{#2}
        \end{tabular}
    }
    
    \begin{tabular}{ccc}
        \multicolumn{3}{c}{\textbf{Neural Curvature Method (Ours)}} \\[4pt]
            \subplot{Layer 0 (768)}{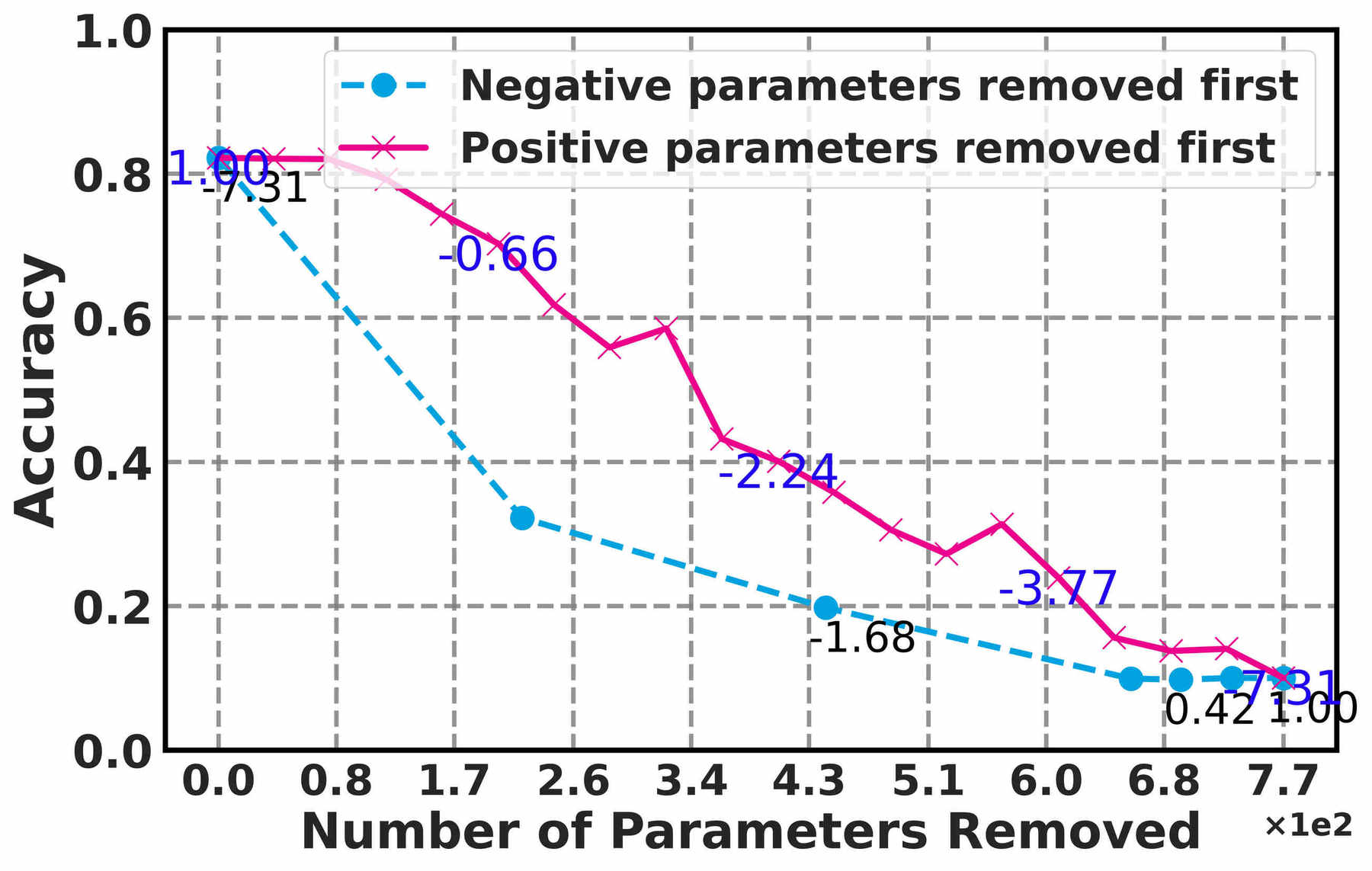} &
            \subplot{Layer 1 (73728)}{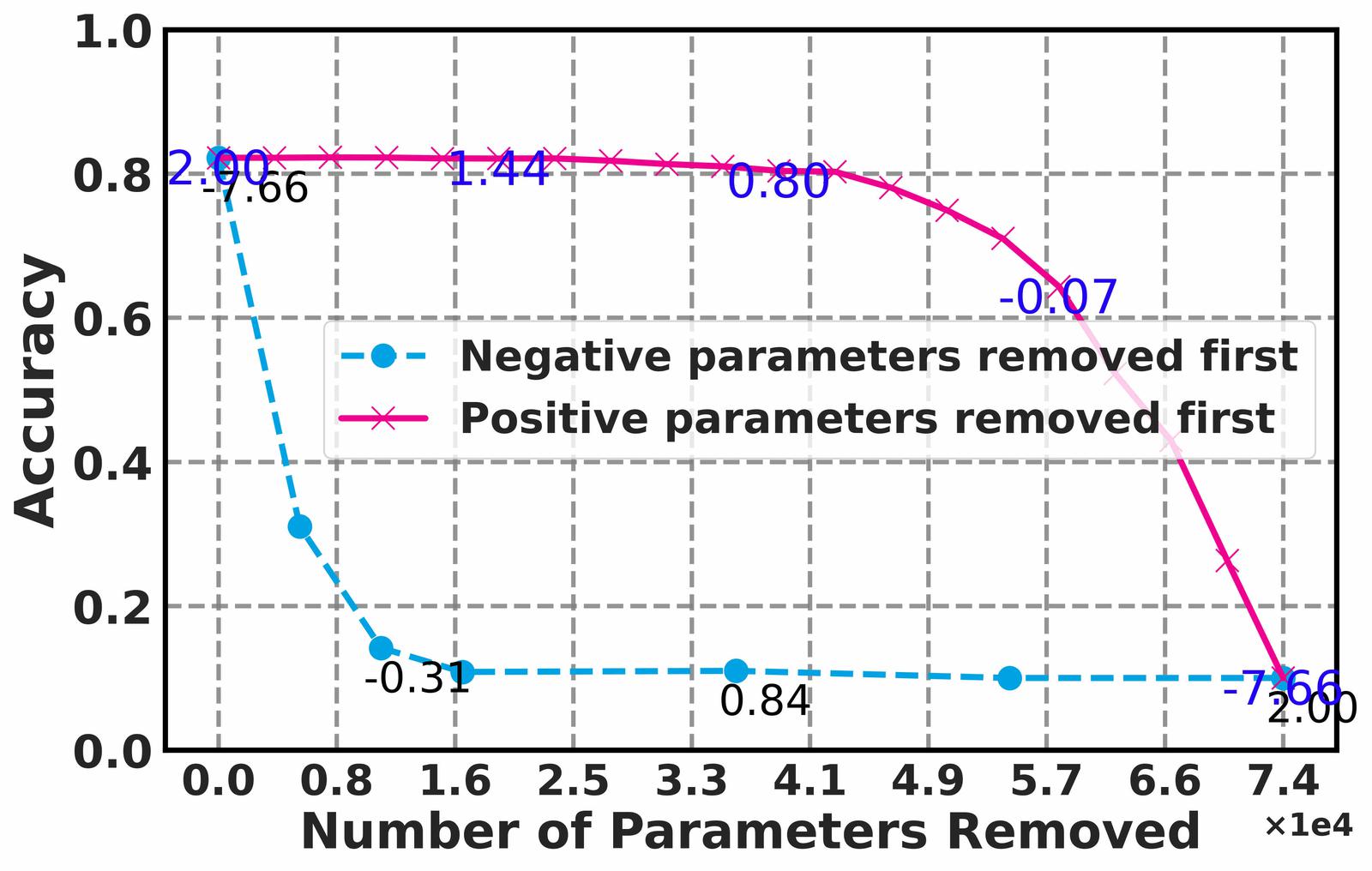} &
            \subplot{Layer 2 (147456)}{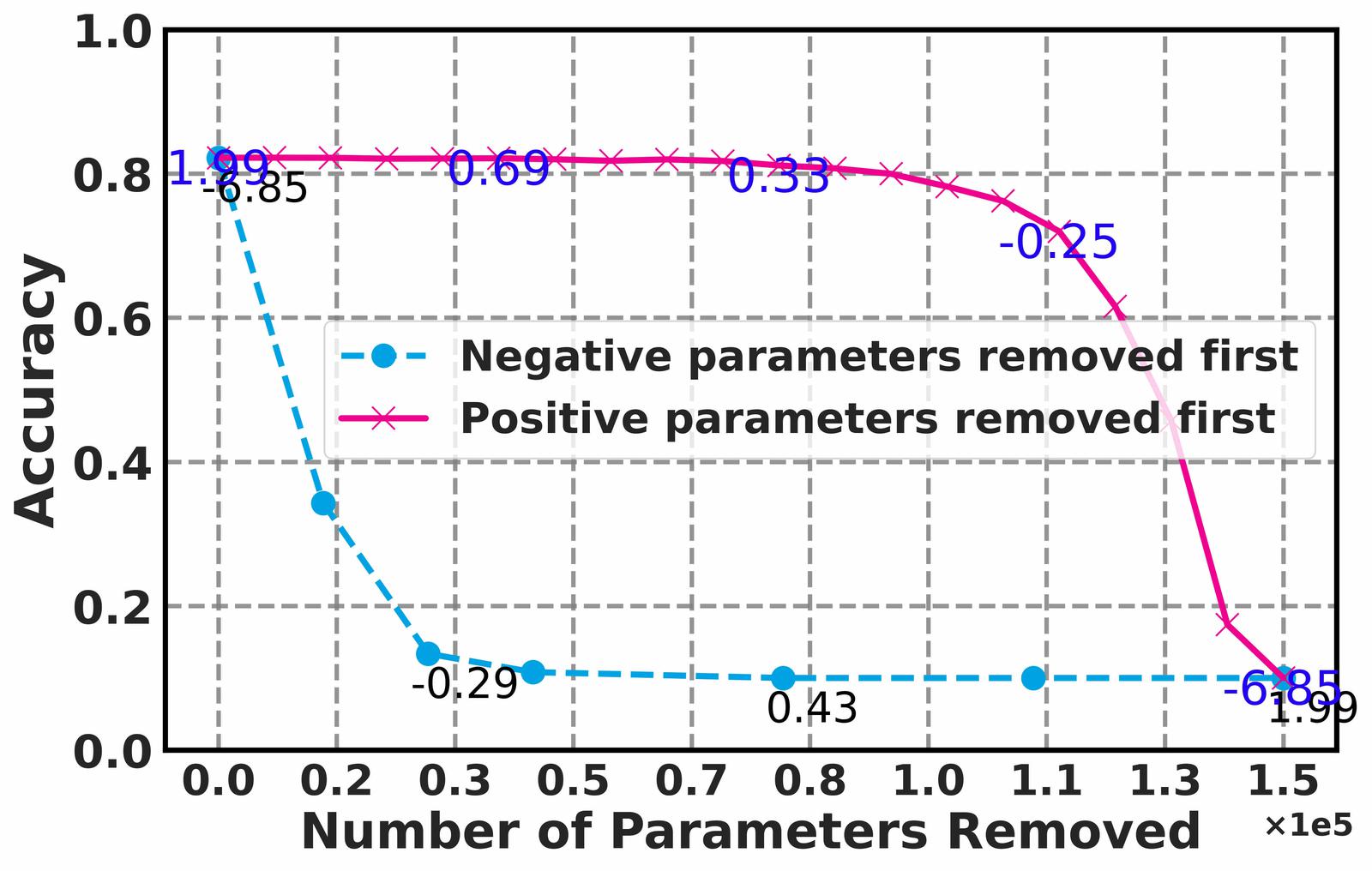} \\

            \subplot{Layer 3 (147456)}{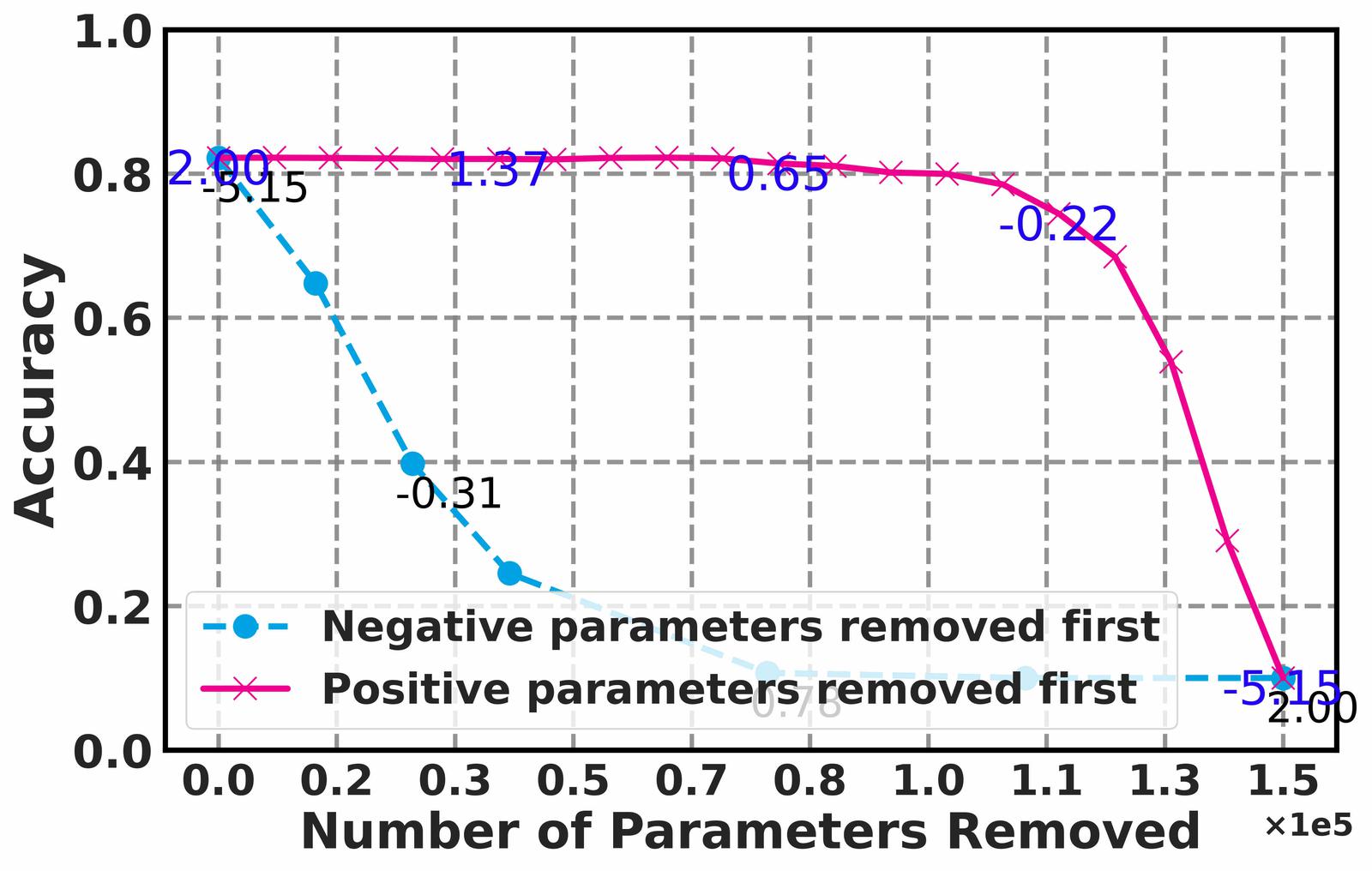} &
            \subplot{Layer 4 (294912)}{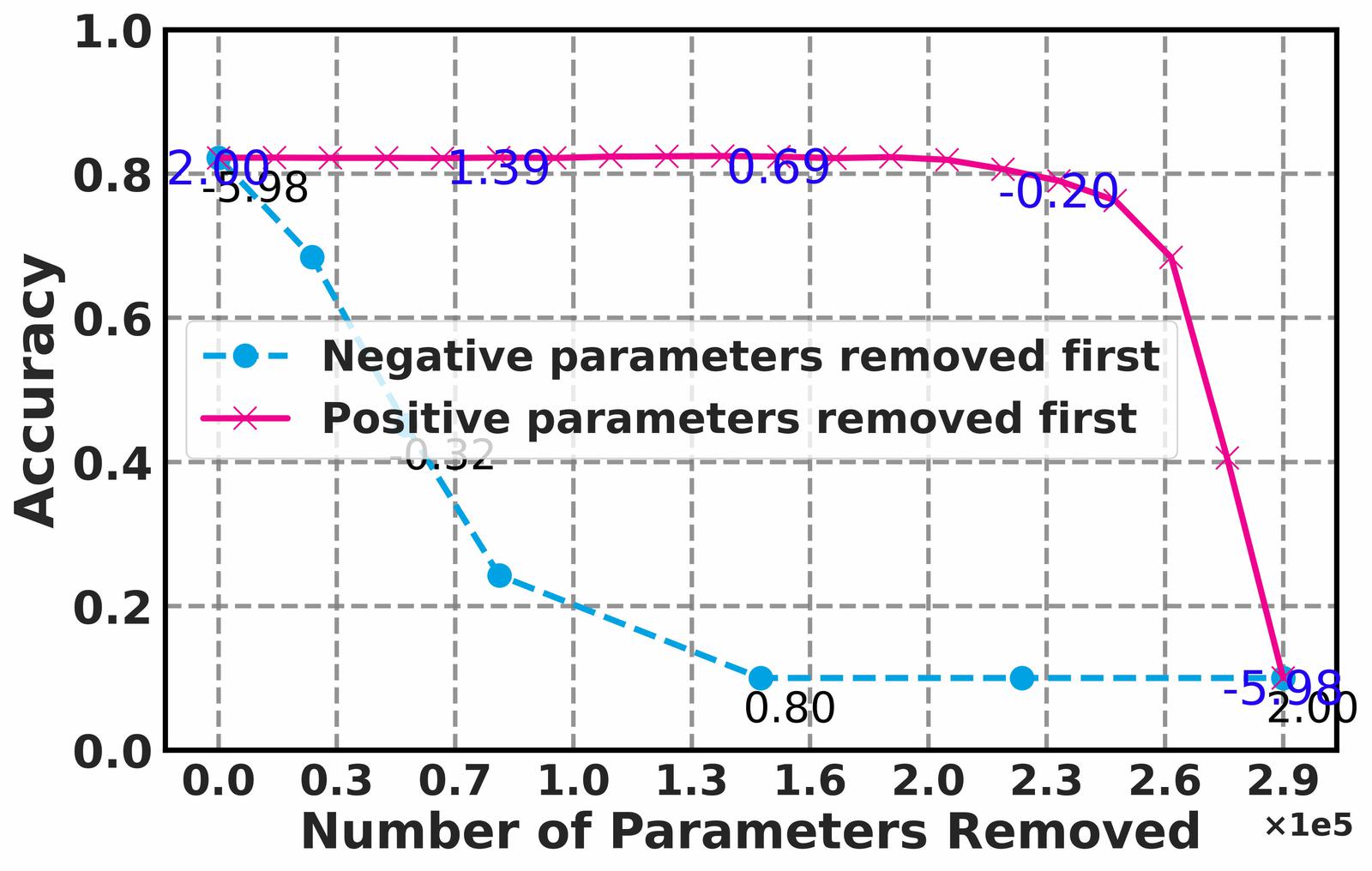} &
            \subplot{Layer 5 (589824)}{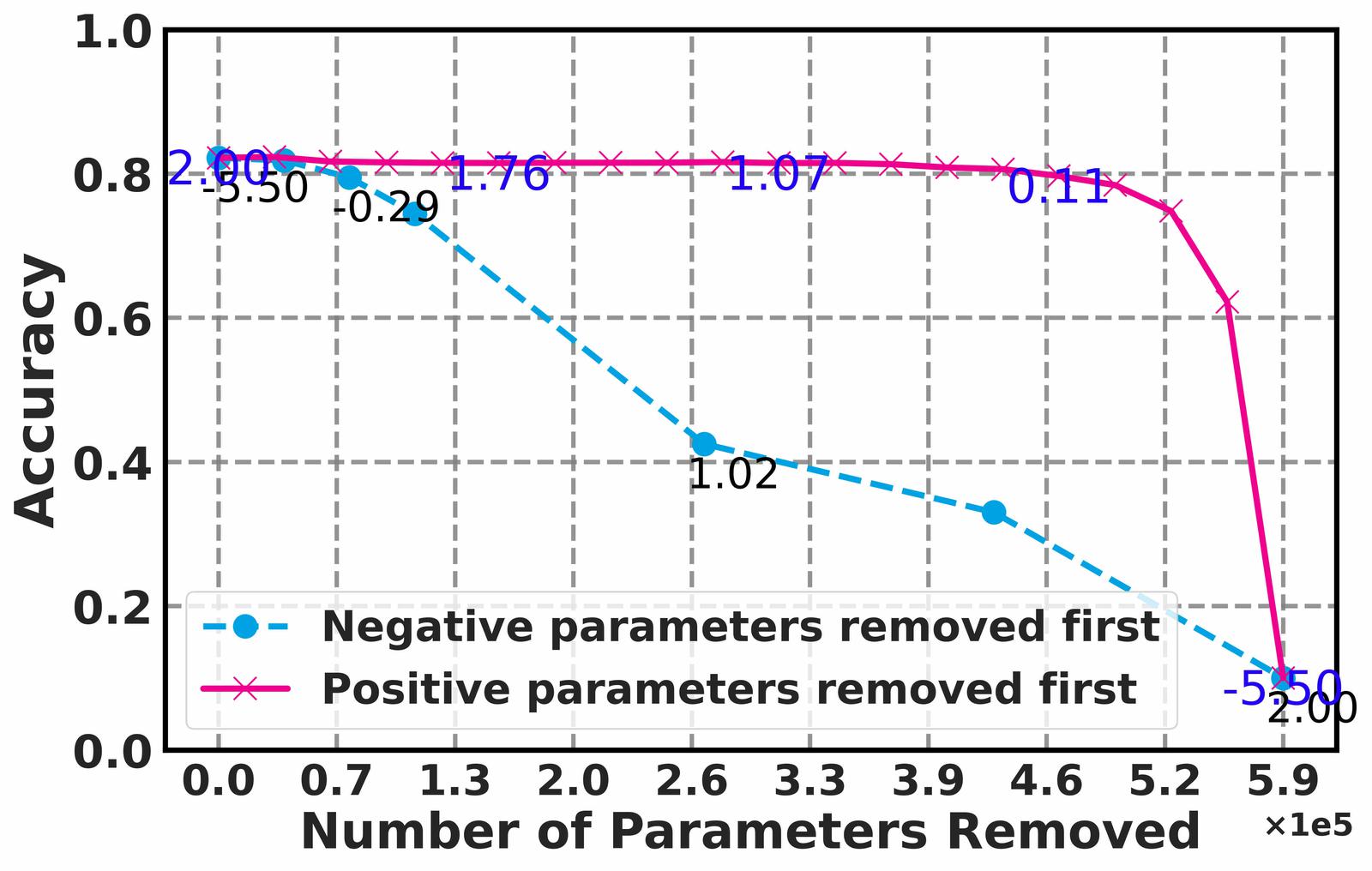} \\

            \subplot{Layer 6 (131072)}{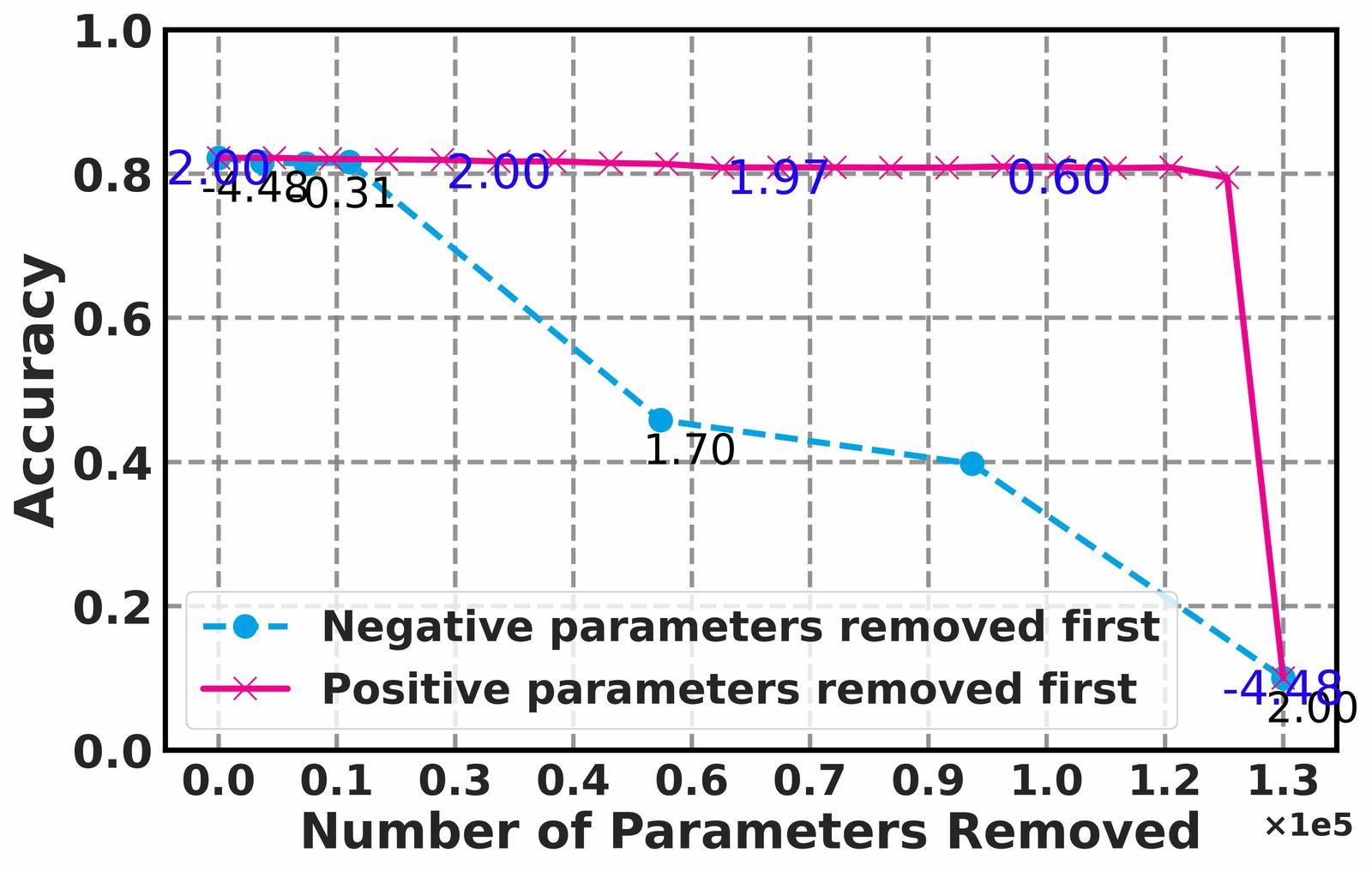} &
            \subplot{Layer 7 (65536)}{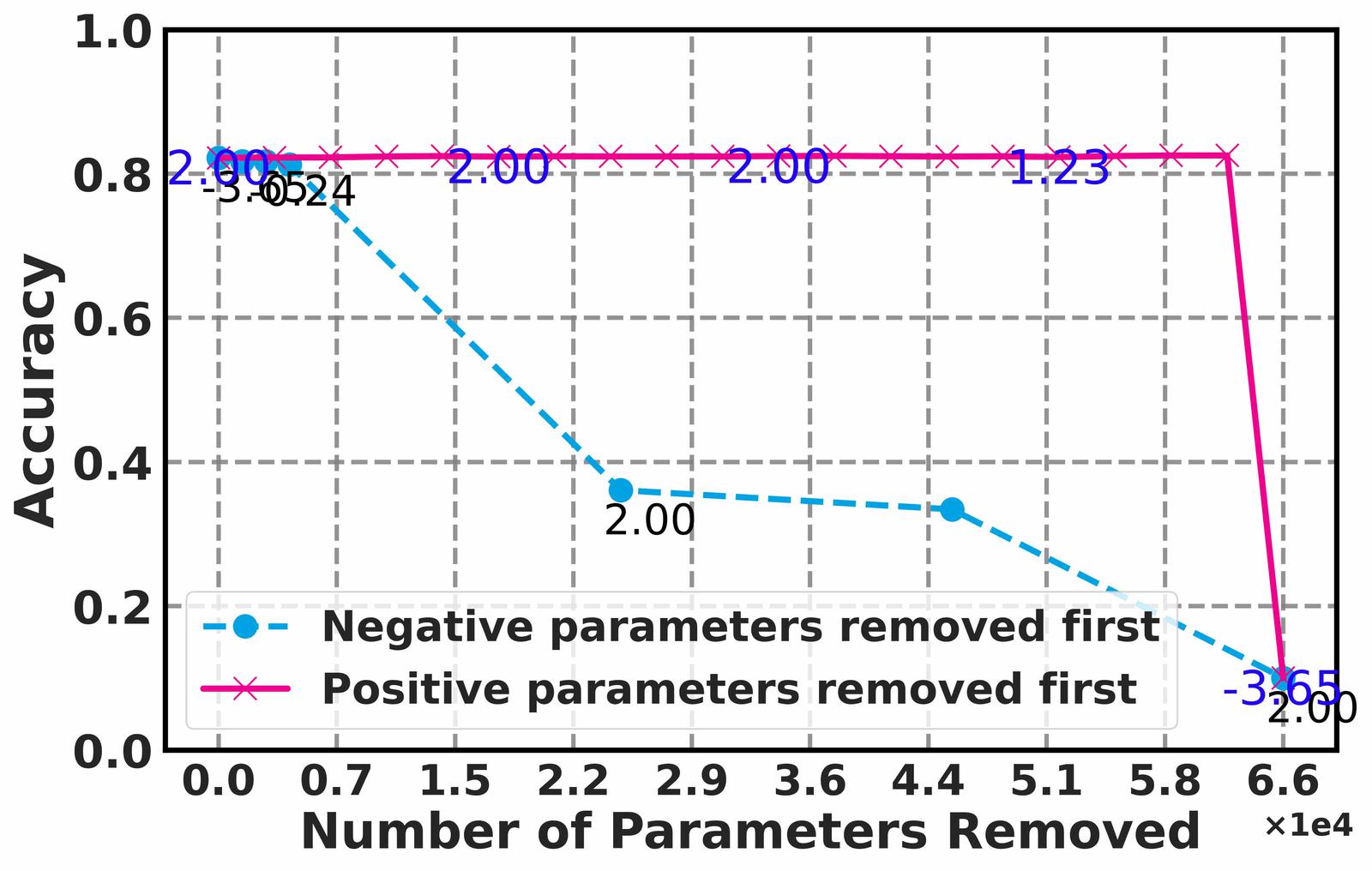} &
            \subplot{Layer 8 (1280)}{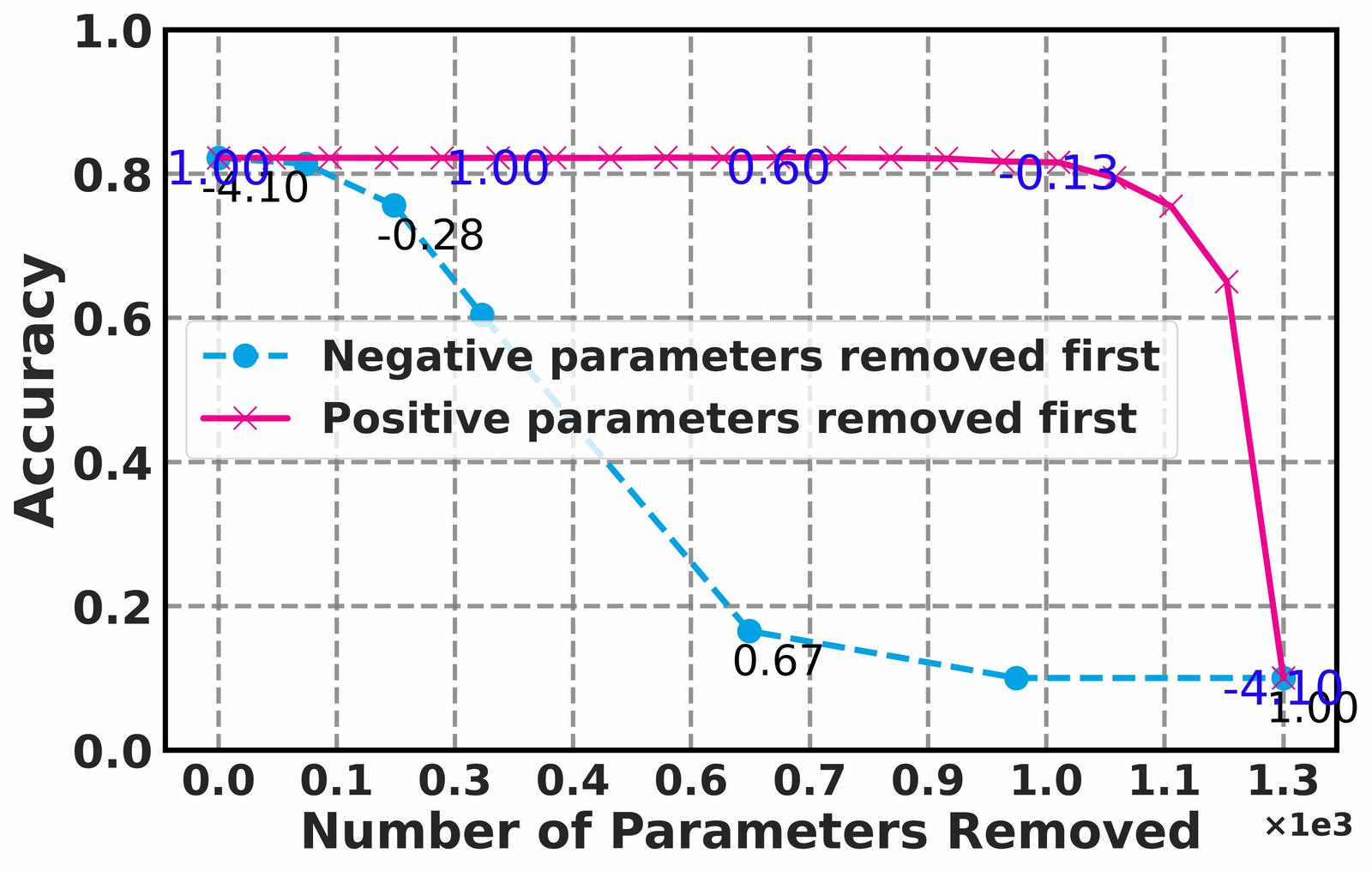} \\[6pt]
        \multicolumn{3}{c}{\textbf{Magnitude-based}} \\[4pt]
        
        \subplot{Layer 0 (768)}{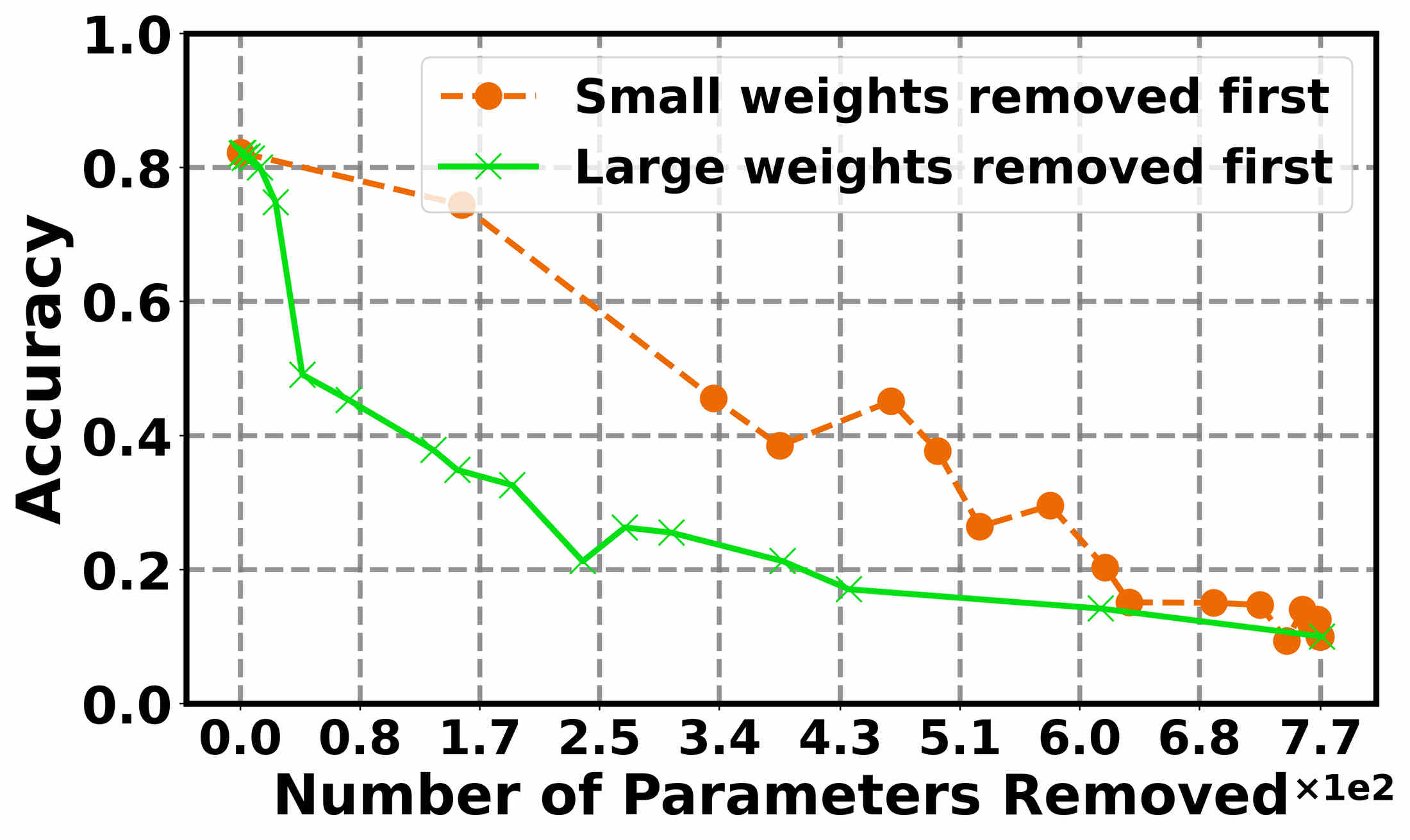} &
        \subplot{Layer 1 (73728)}{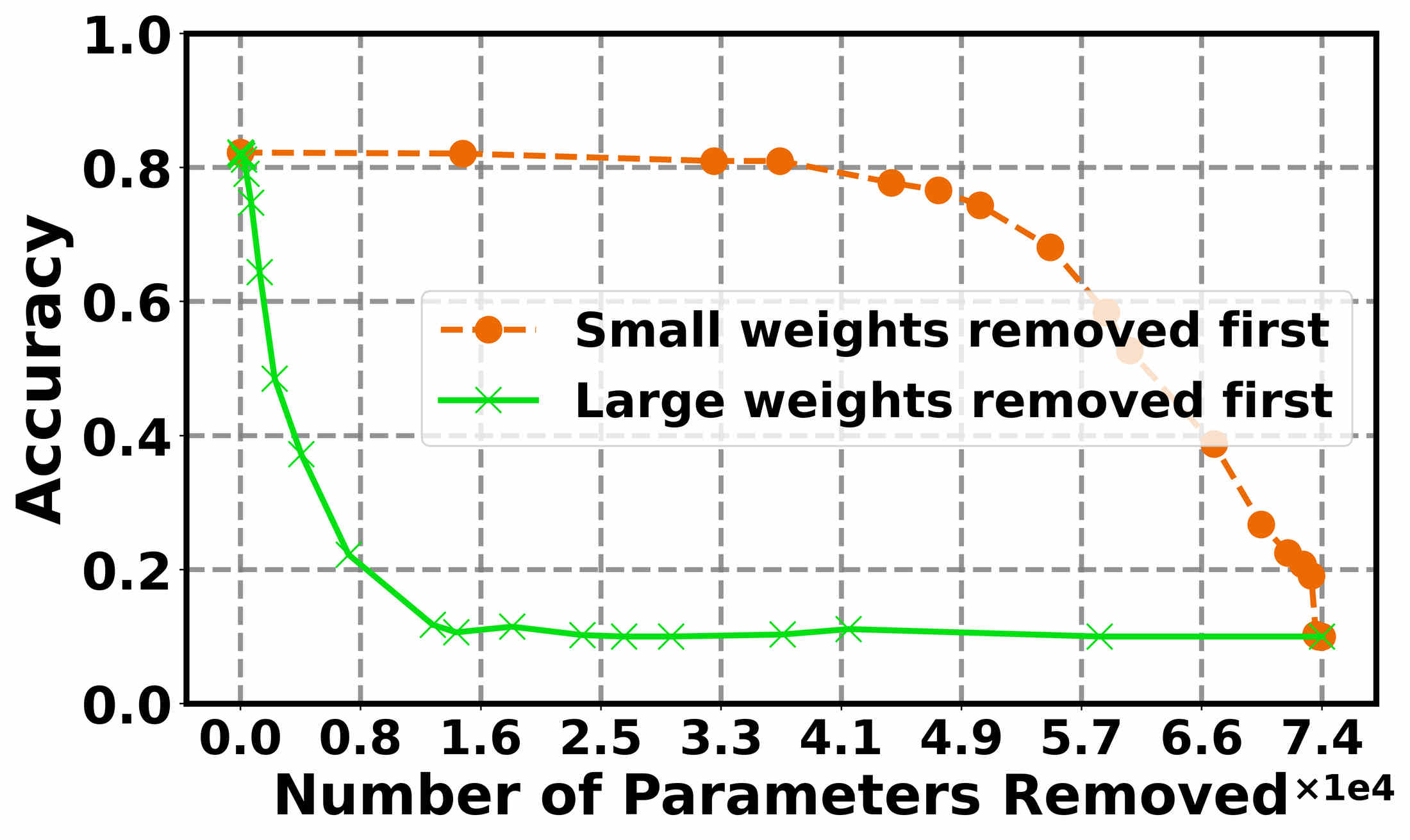} &
        \subplot{Layer 2 (147456)}{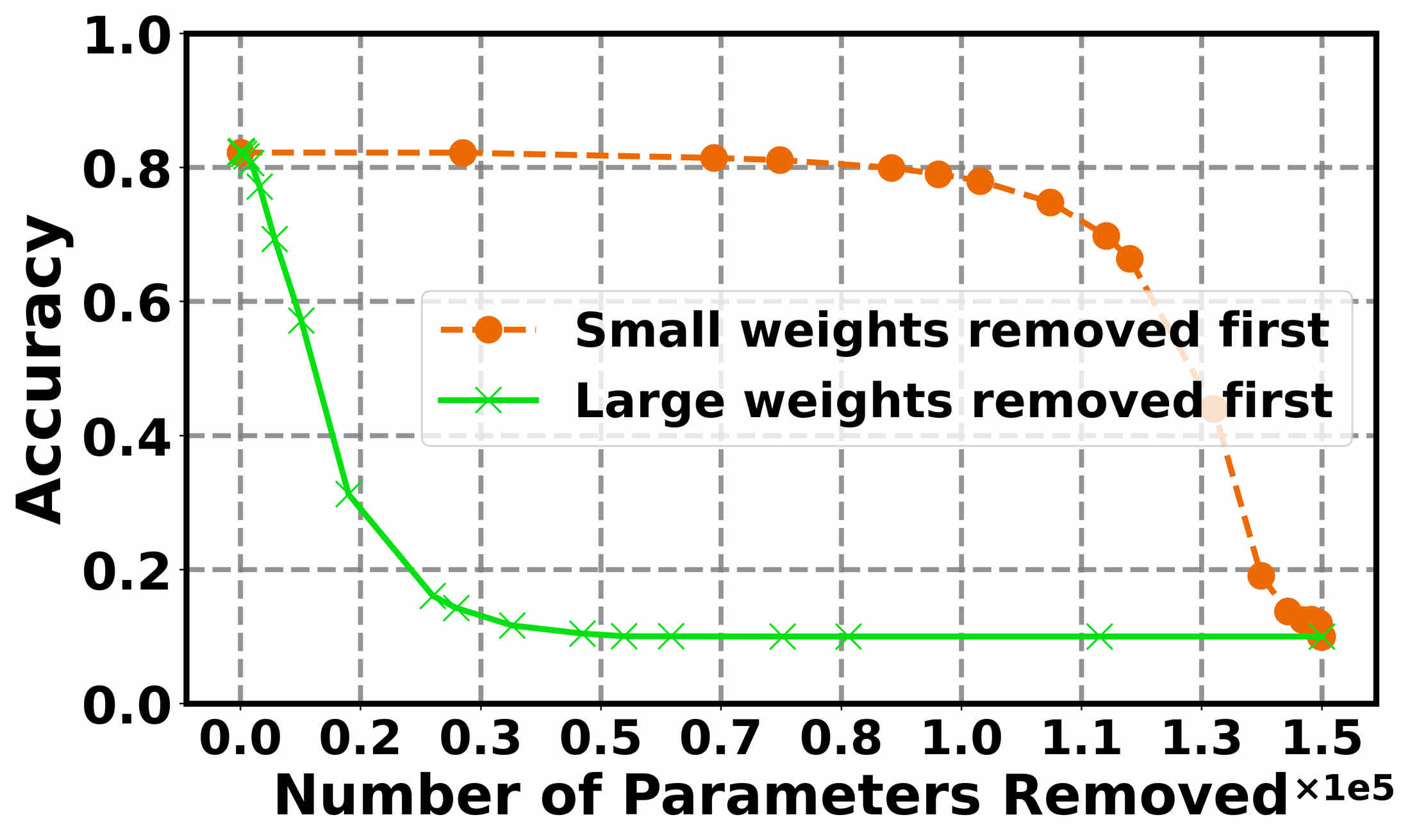} \\

        \subplot{Layer 3 (147456)}{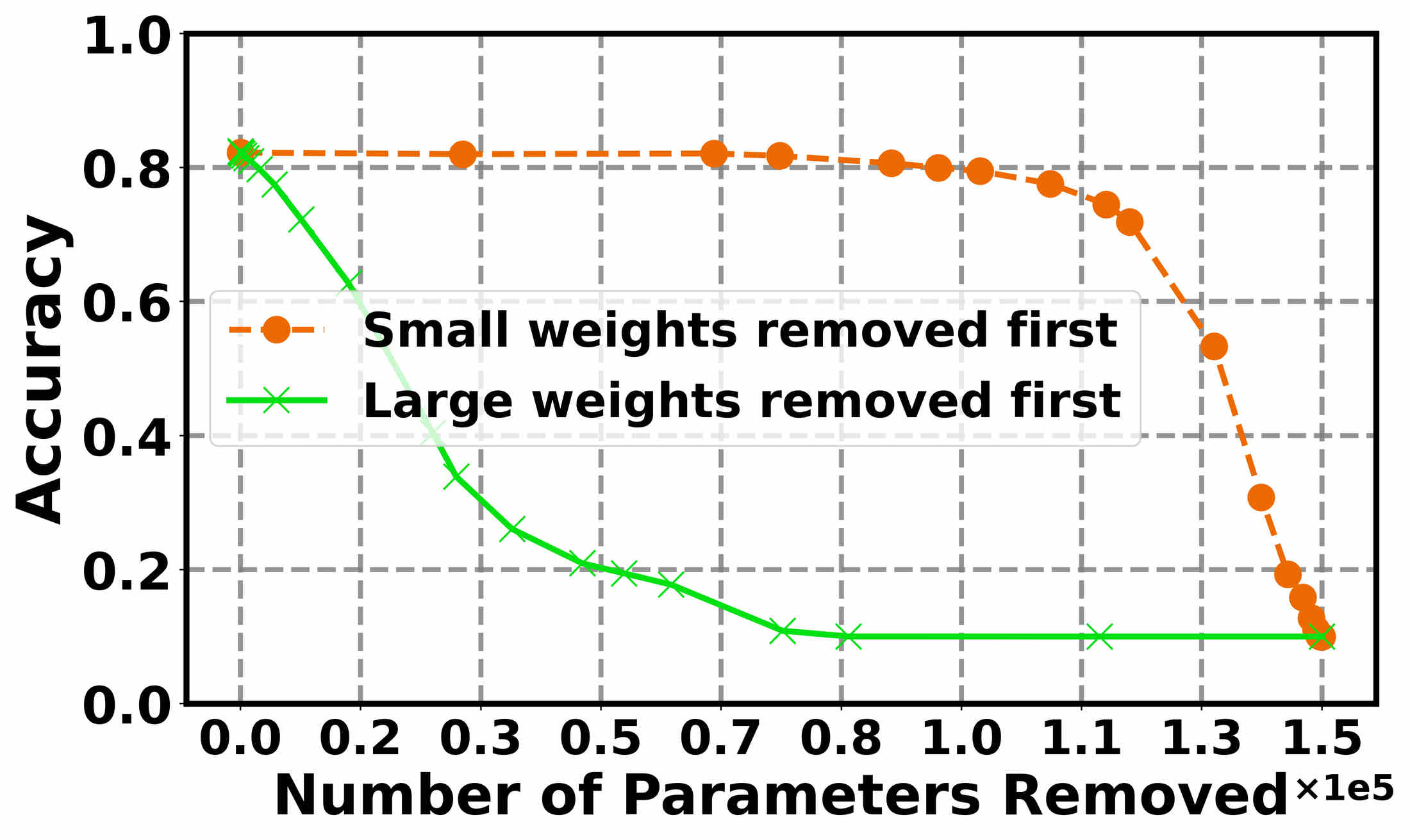} &
        \subplot{Layer 4 (294912)}{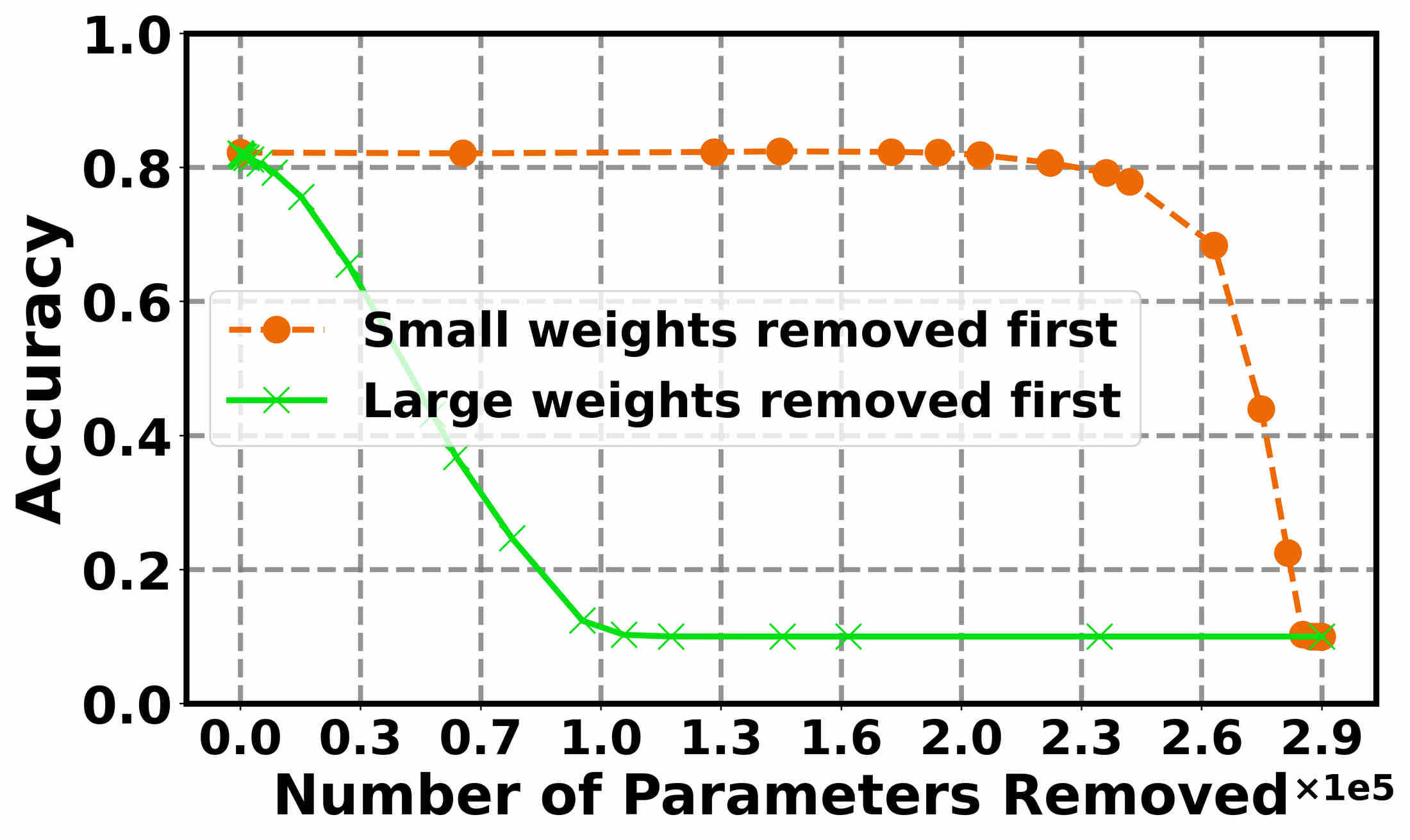} &
        \subplot{Layer 5 (589824)}{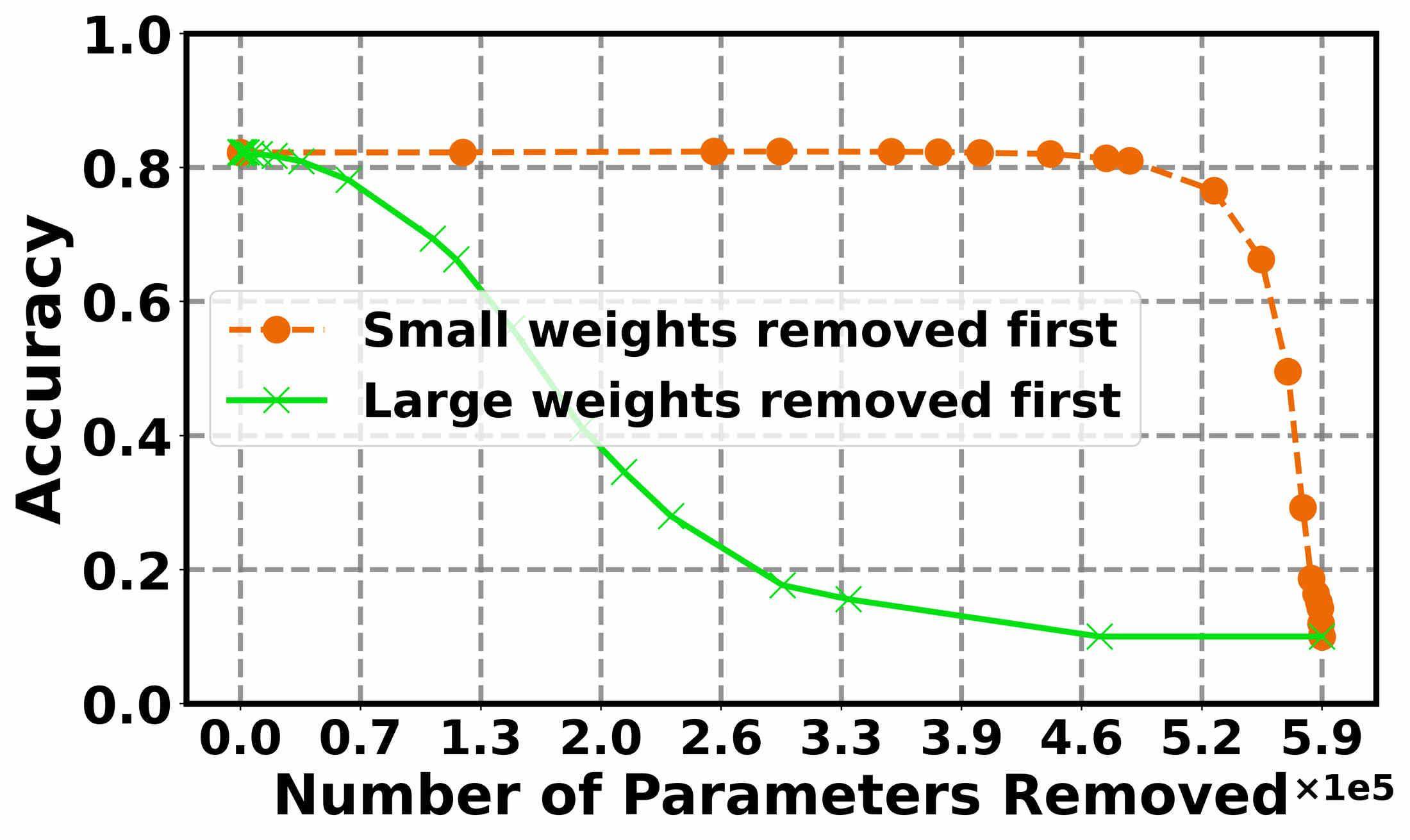} \\

        \subplot{Layer 6 (131072)}{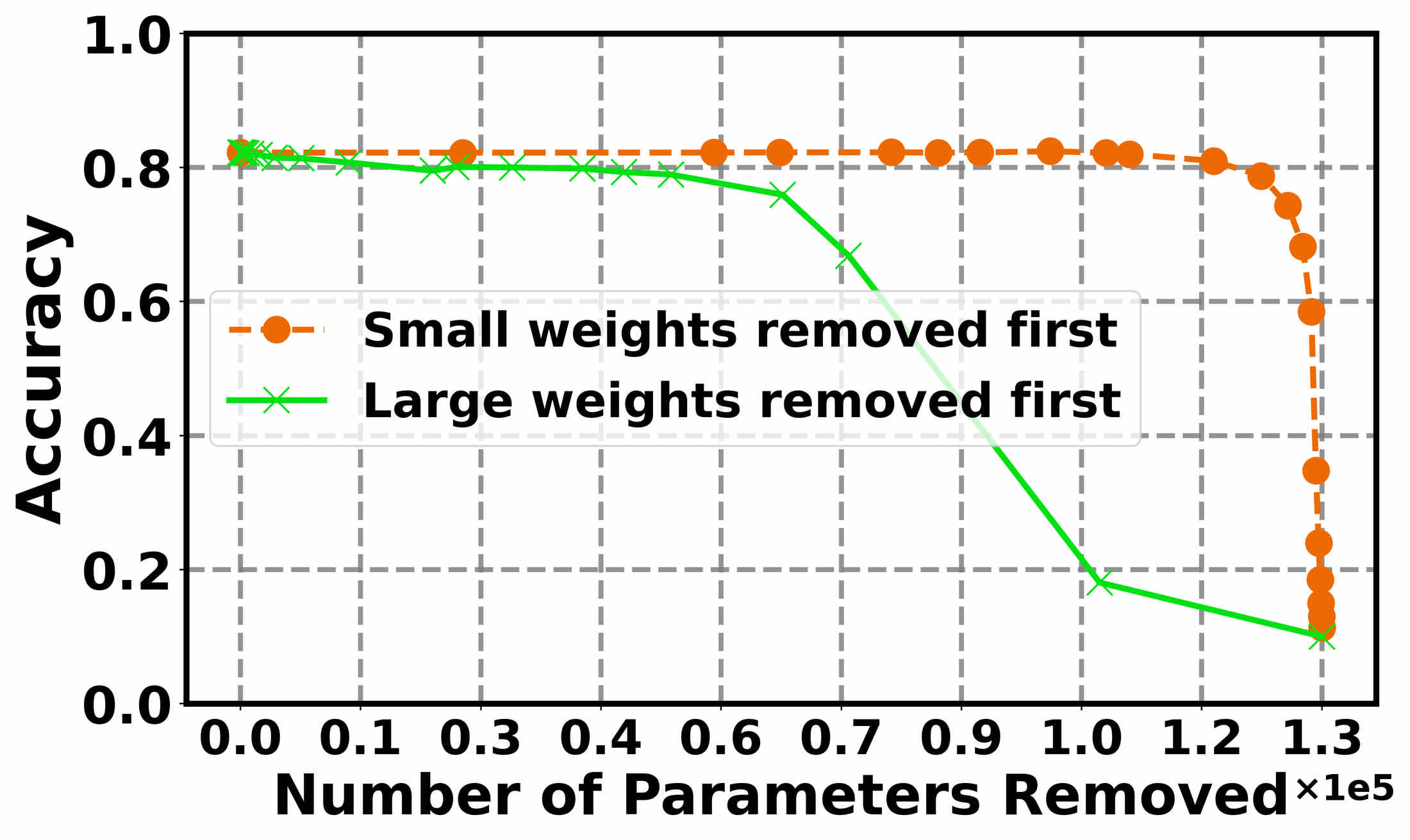} &
        \subplot{Layer 7 (65536)}{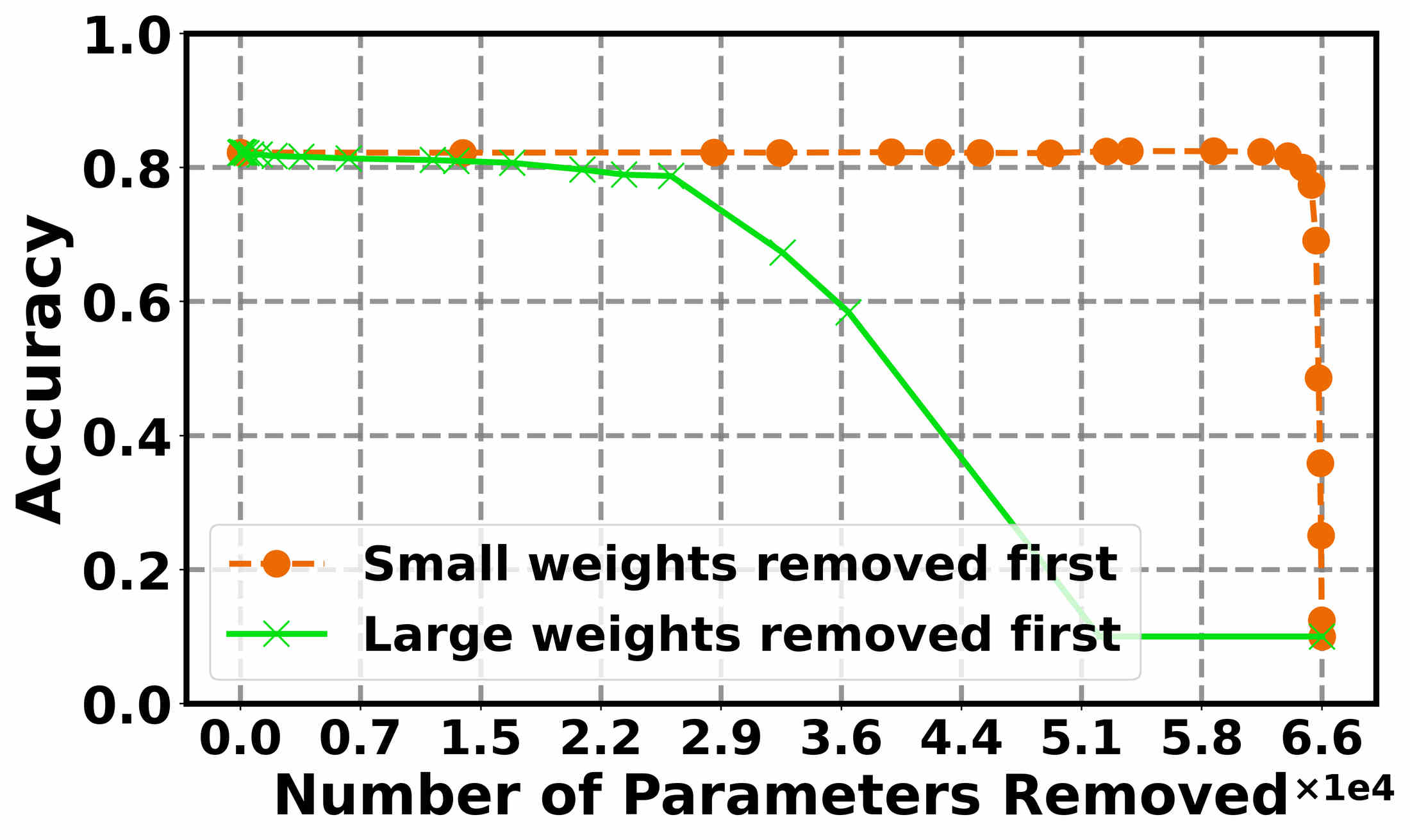} &
        \subplot{Layer 8 (1280)}{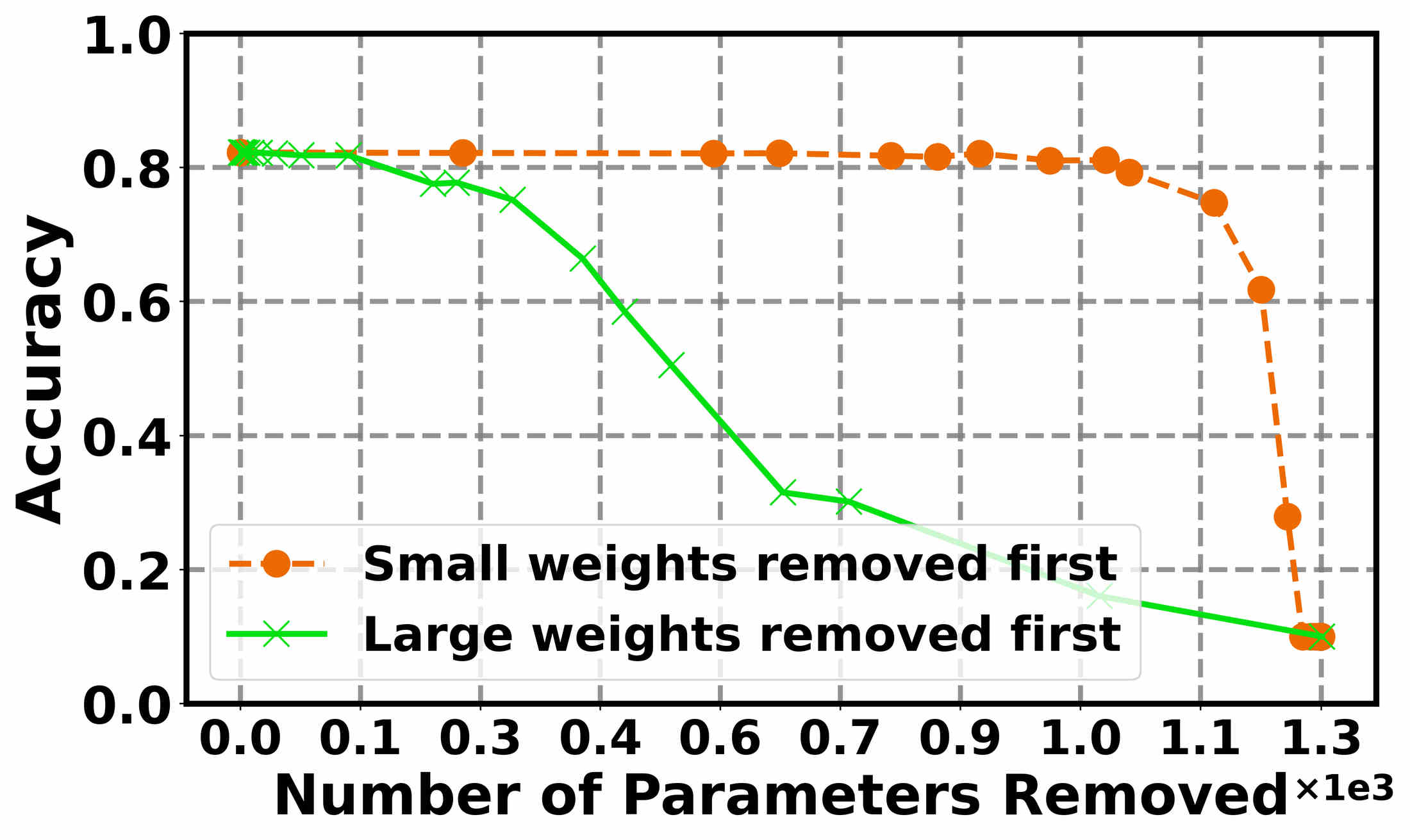} \\

    \end{tabular}
    \vspace{6pt}
    \caption{Ablation experiment on per-layer edge removal. Each subfigure shows the edge removal analysis for each layer based on neural curvature value (top 9 figures) and magnitude-based pruning (bottom 9 figures) for the VGG9-lite, CE ReLU configuration. The number above the figure is the total number of parameters in that layer.}
  \label{fig:vggori_relu_ablation_layer}
  \end{figure*}

  \begin{figure*}[hpbt]
    \centering
    \setlength{\tabcolsep}{2pt}  
    \renewcommand{\arraystretch}{0.95} 

    \newcommand{\subplot}[2]{%
        \begin{tabular}{c}
            {\scriptsize\textbf{#1}} \\[-2pt]
            \includegraphics[width=0.24\linewidth]{#2}
        \end{tabular}
    }
    
    \begin{tabular}{ccc}
        \multicolumn{3}{c}{\textbf{Neural Curvature Method (Ours)}} \\[4pt]
            \subplot{Layer 0 (768)}{img/res/perlayer/cifarori_relu/1_0_curve_perlayer_para_combined_min2.jpg} &
            \subplot{Layer 1 (73728)}{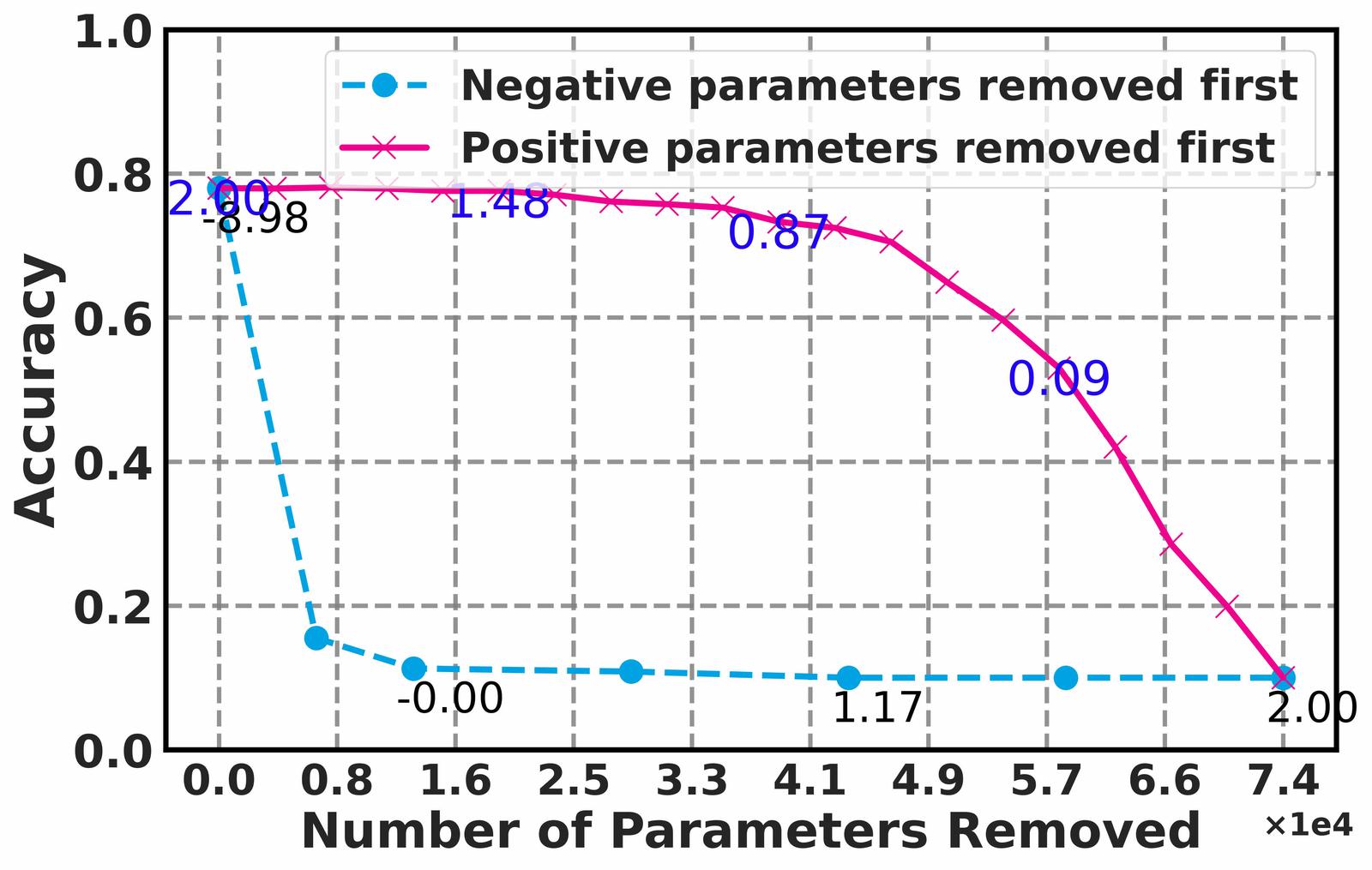} &
            \subplot{Layer 2 (147456)}{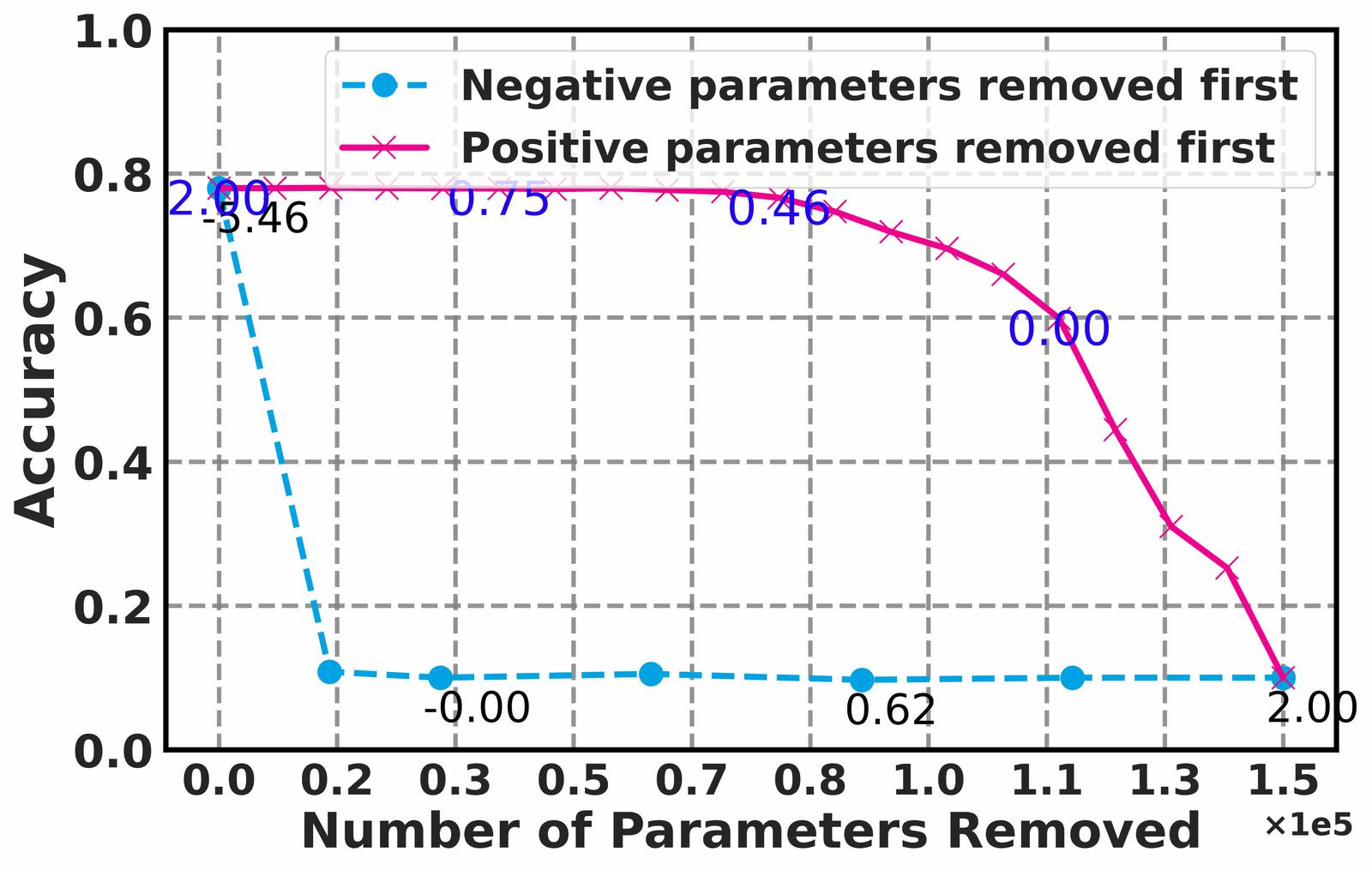} \\

            \subplot{Layer 3 (147456)}{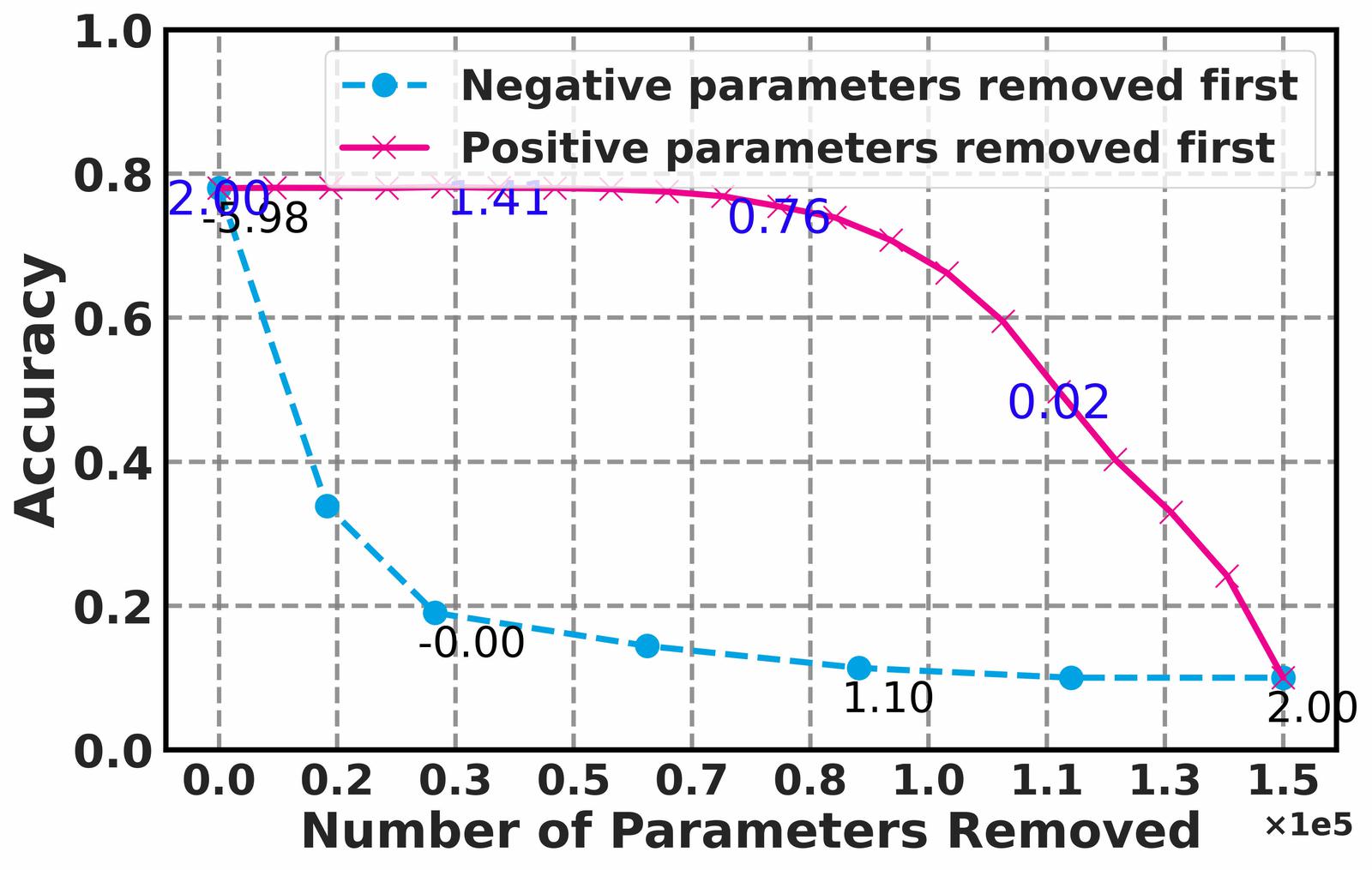} &
            \subplot{Layer 4 (294912)}{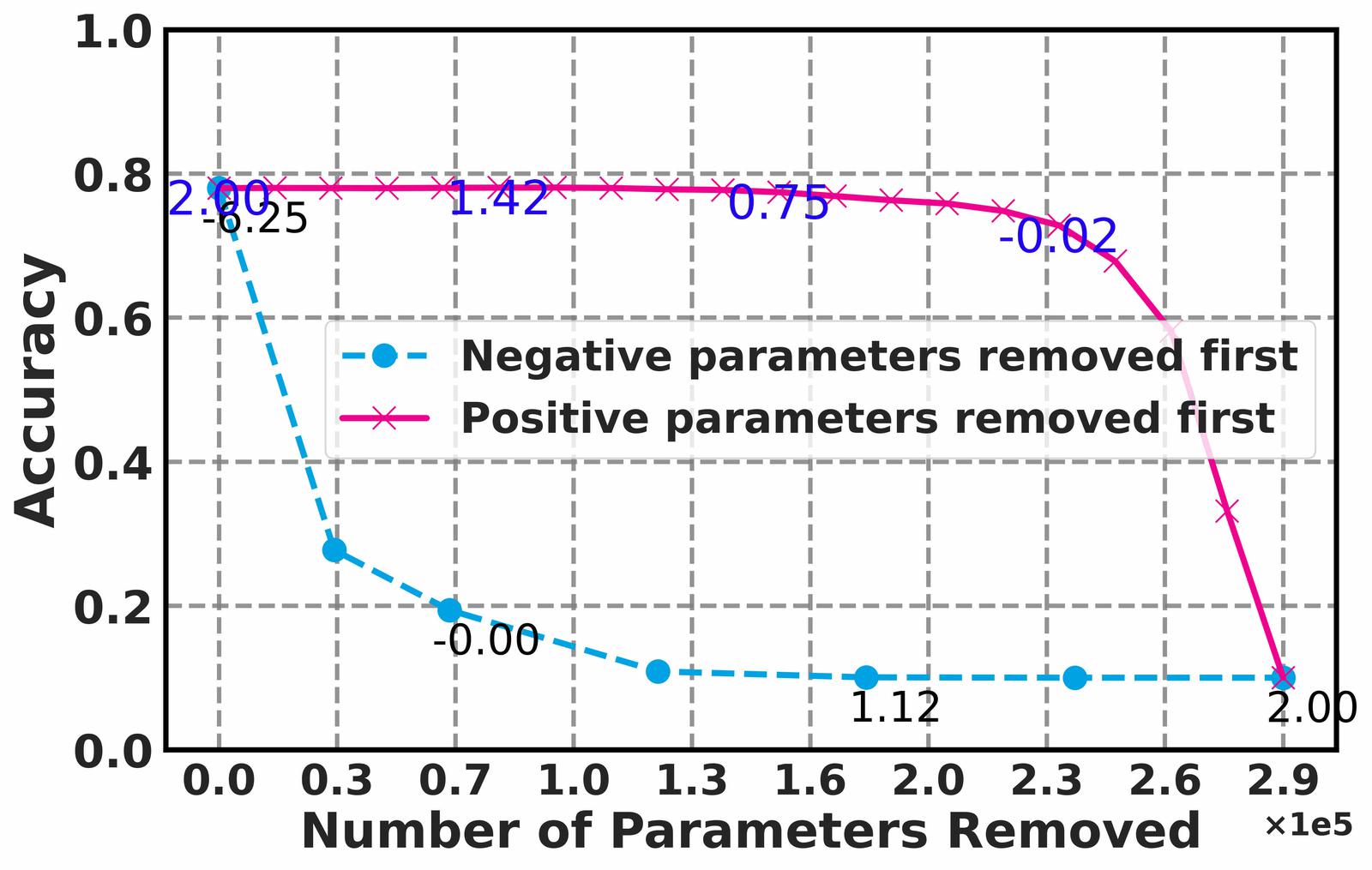} &
            \subplot{Layer 5 (589824)}{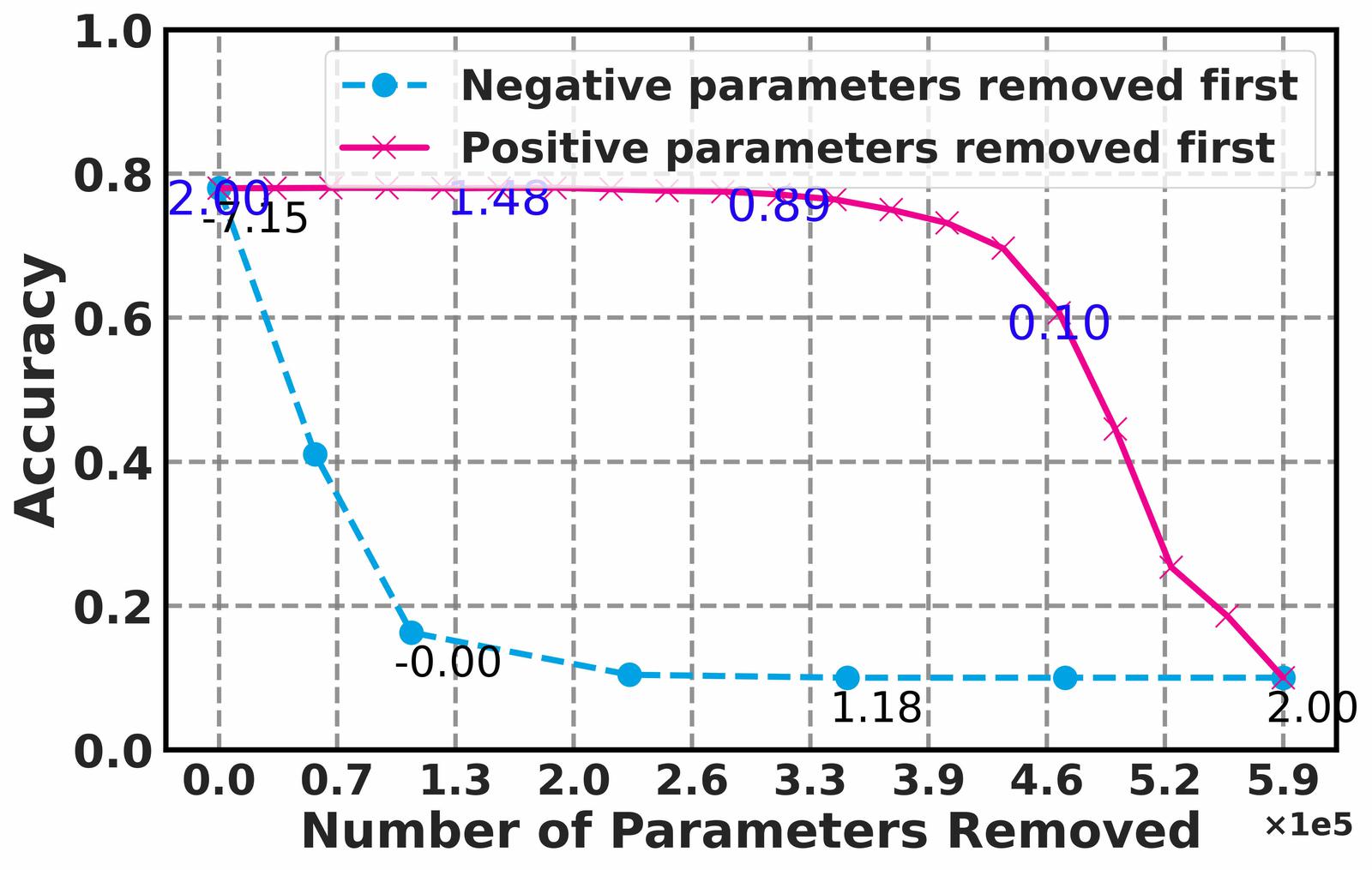} \\

            \subplot{Layer 6 (131072)}{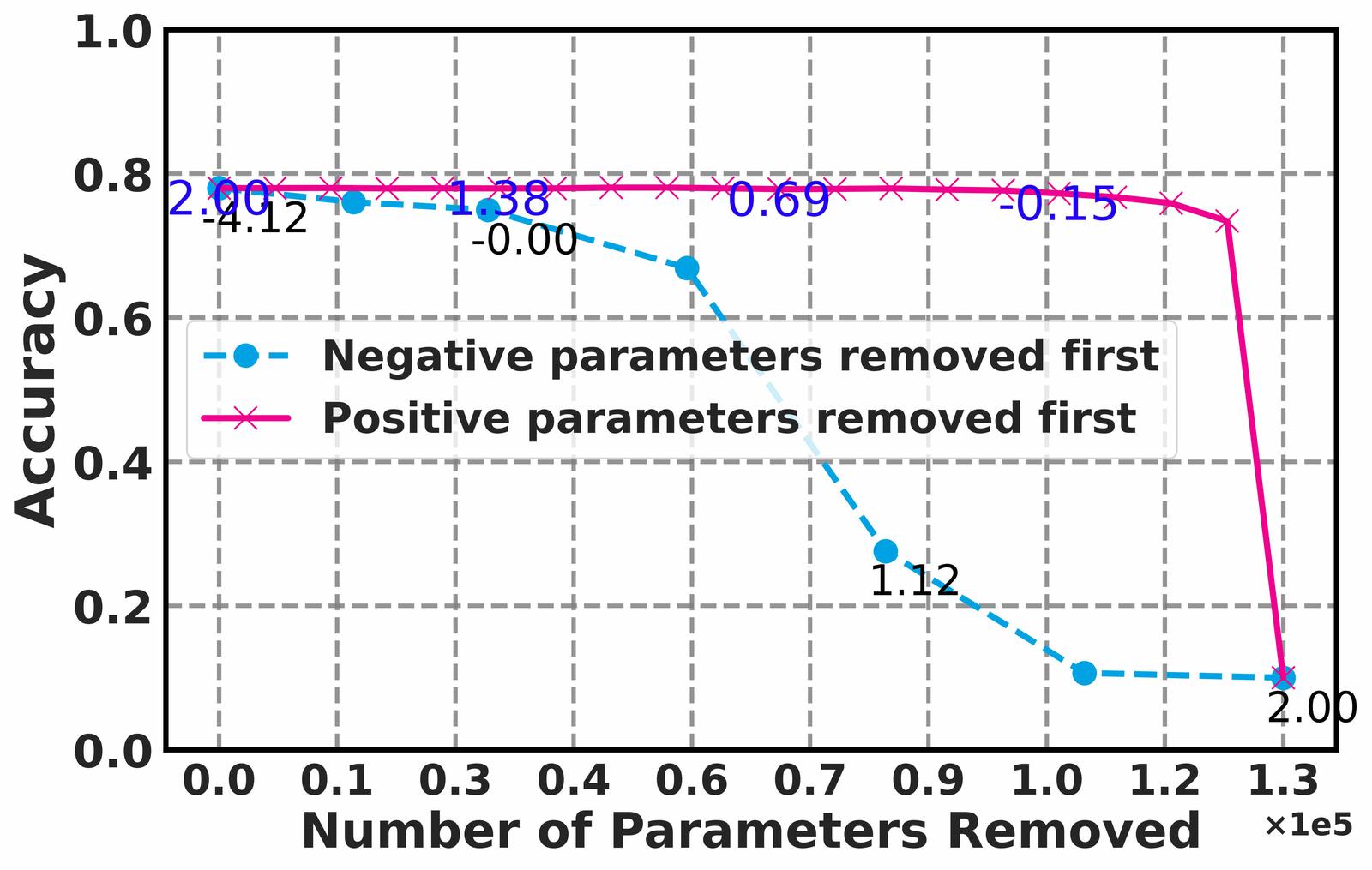} &
            \subplot{Layer 7 (65536)}{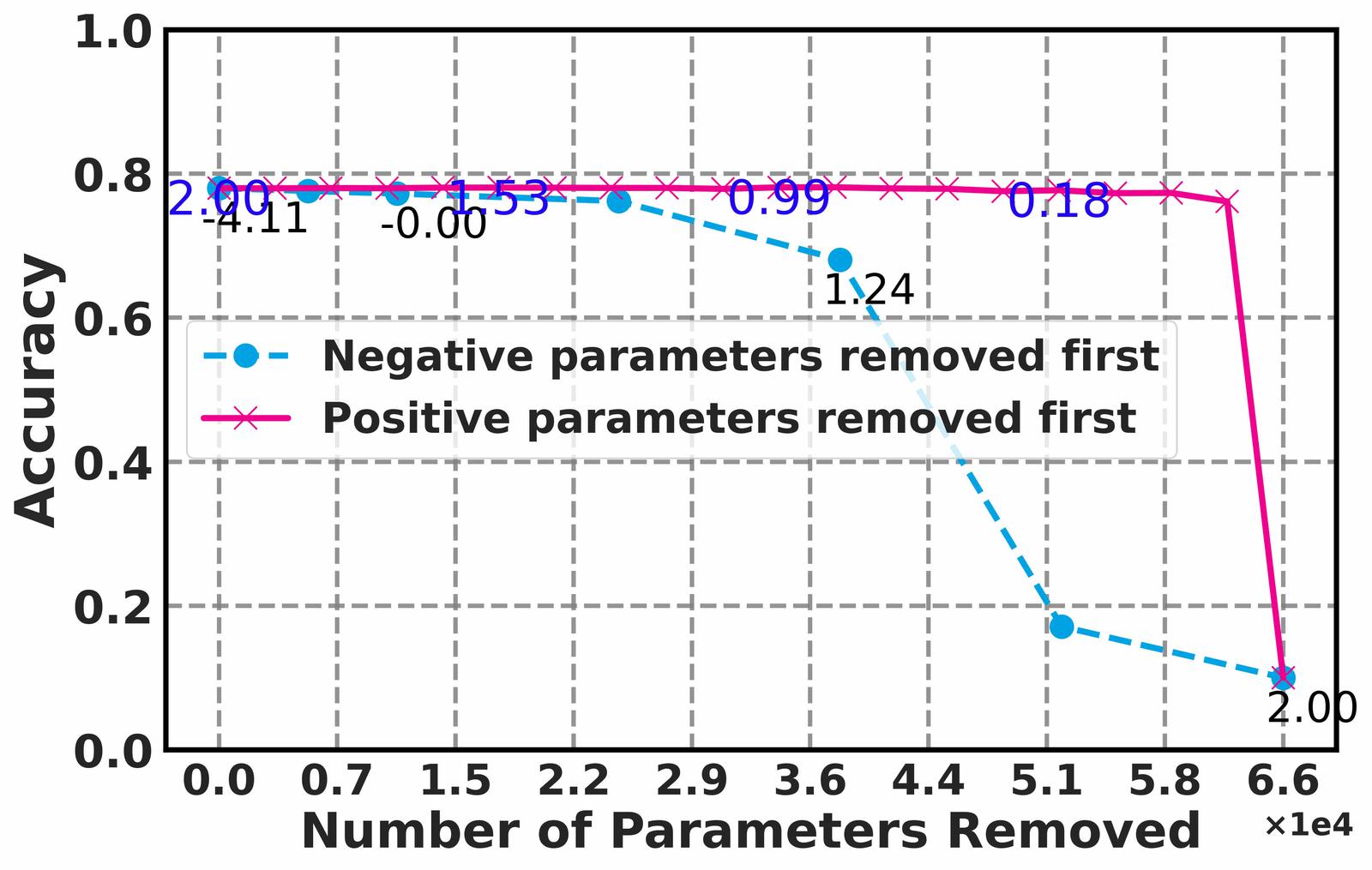} &
            \subplot{Layer 8 (1280)}{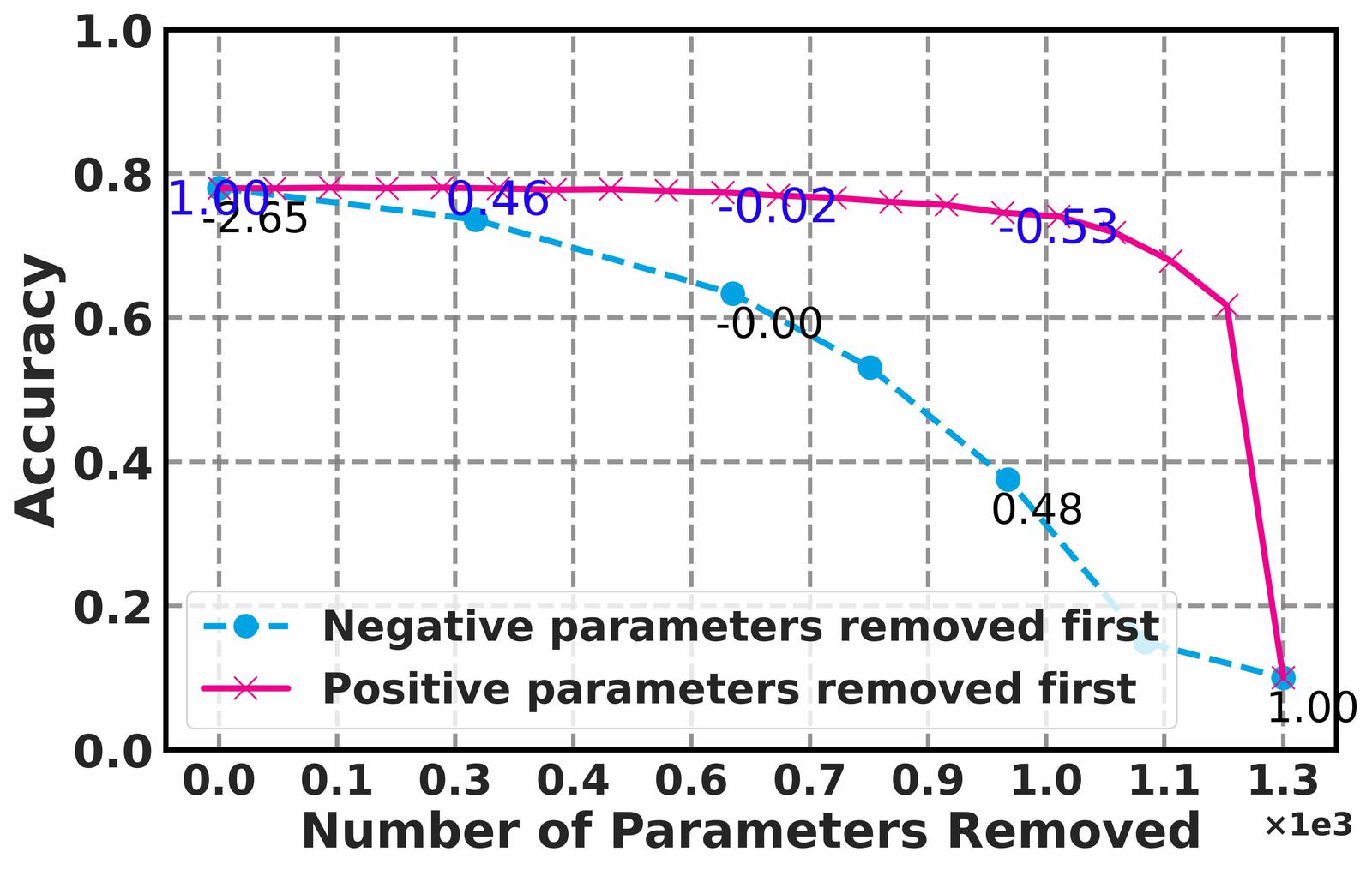} \\[6pt]
        \multicolumn{3}{c}{\textbf{Magnitude-based}} \\[4pt]
        
        \subplot{Layer 0 (768)}{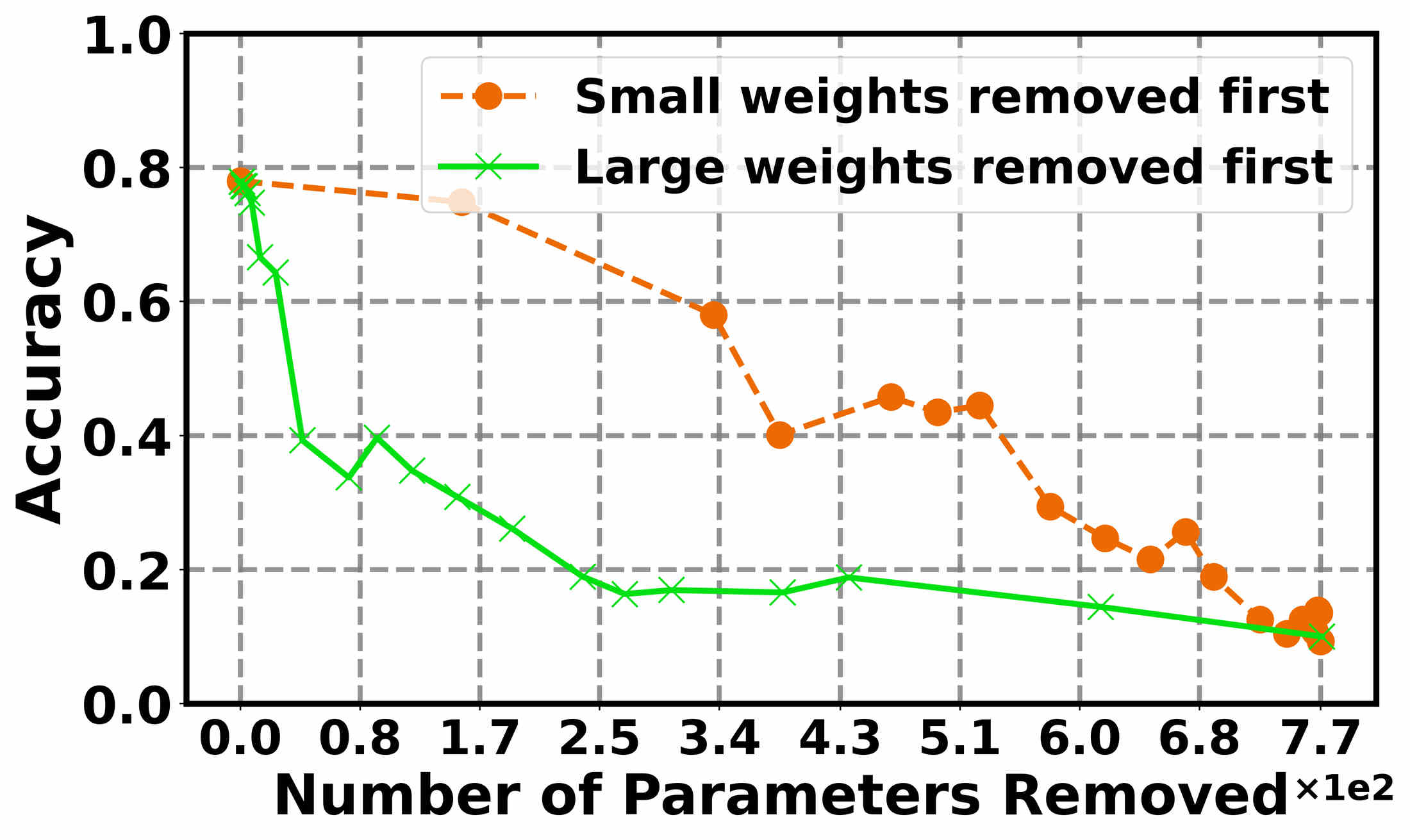} &
        \subplot{Layer 1 (73728)}{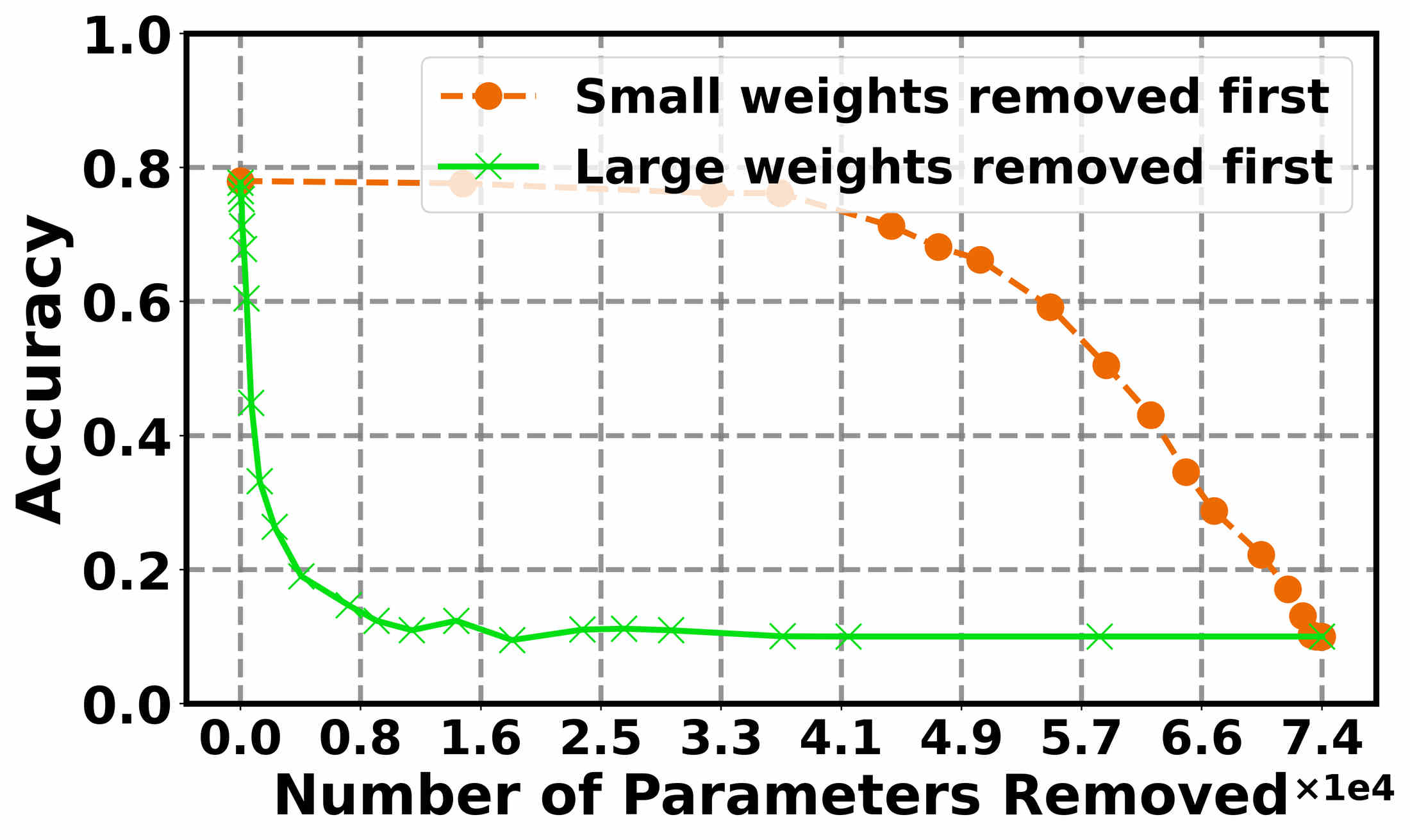} &
        \subplot{Layer 2 (147456)}{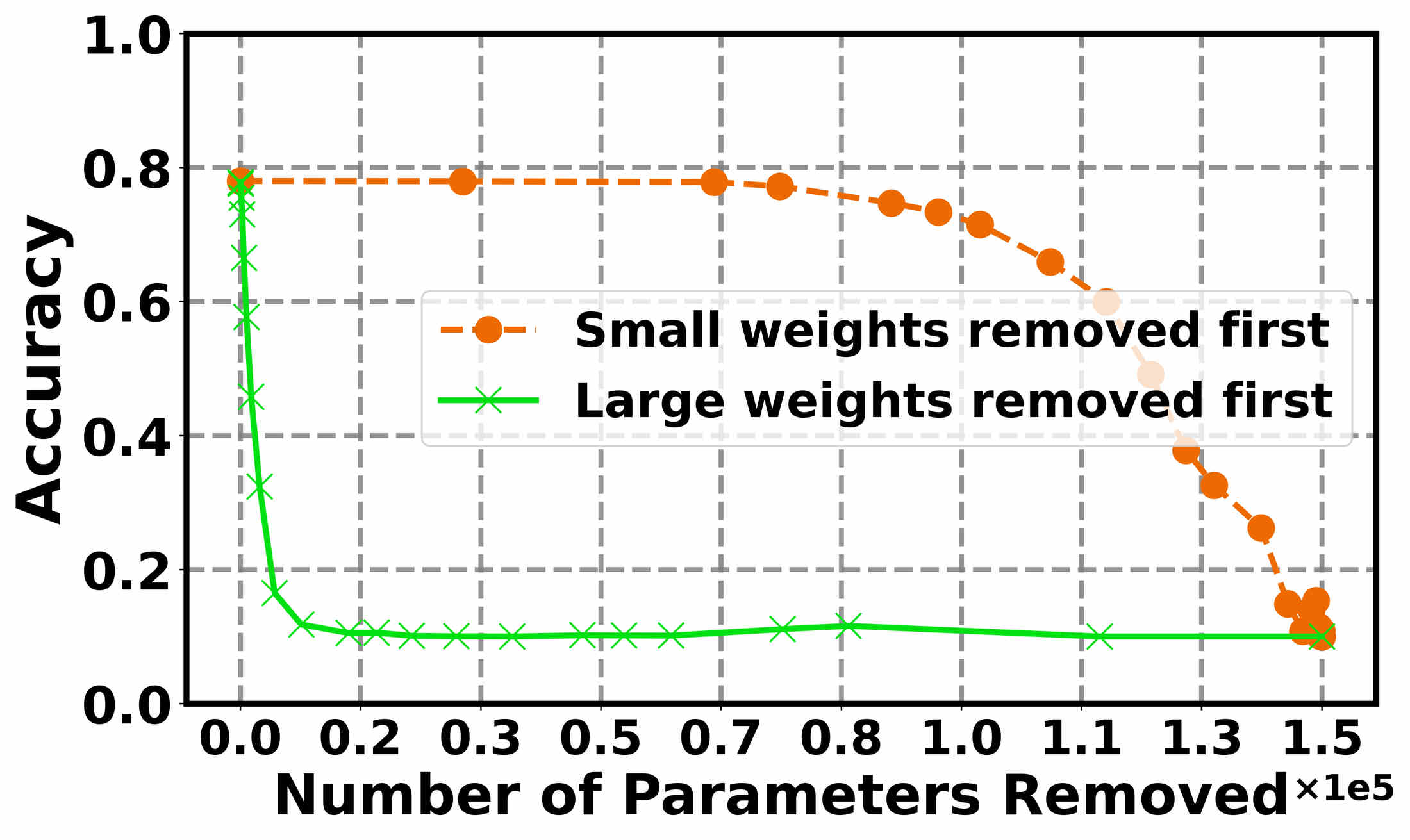} \\

        \subplot{Layer 3 (147456)}{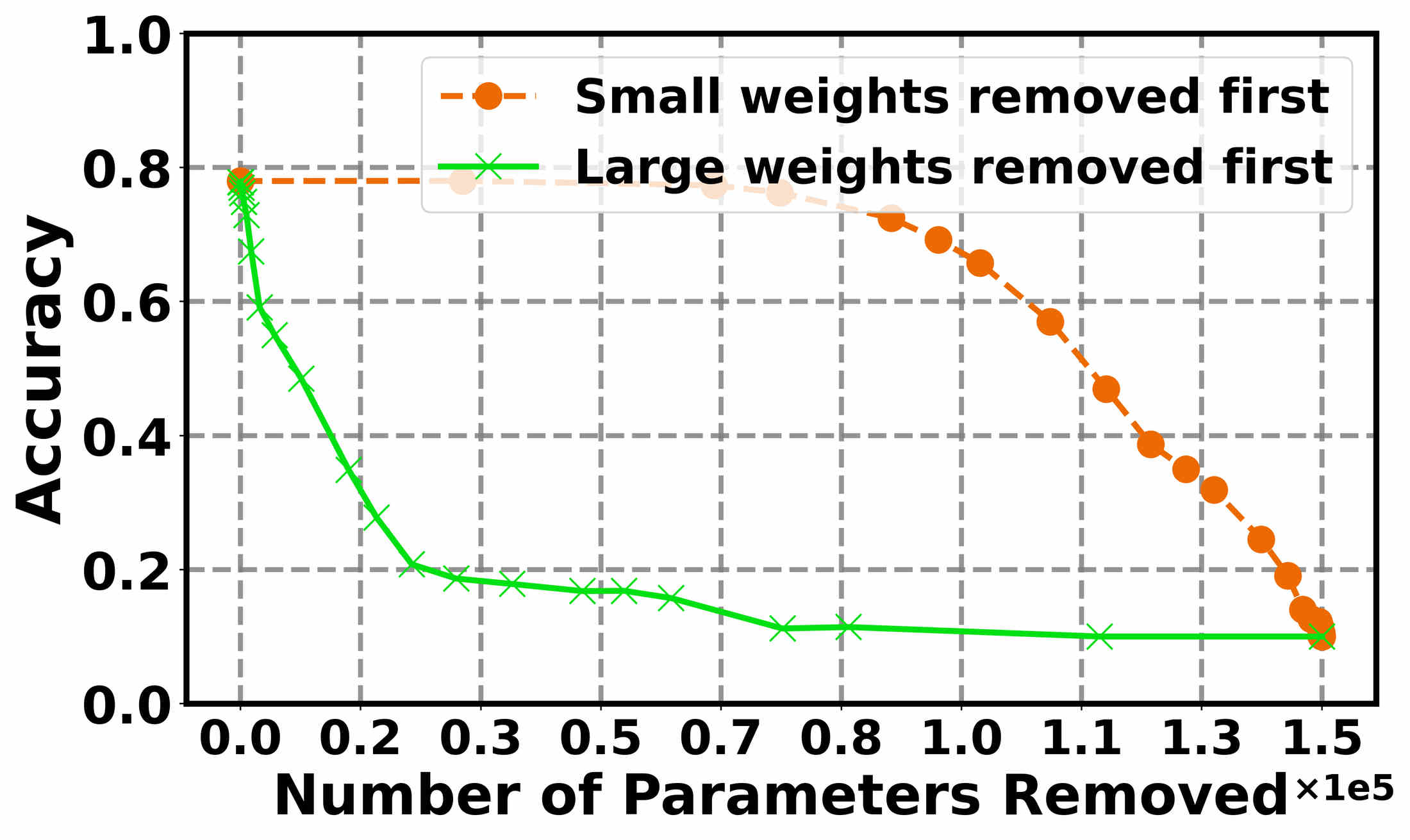} &
        \subplot{Layer 4 (294912)}{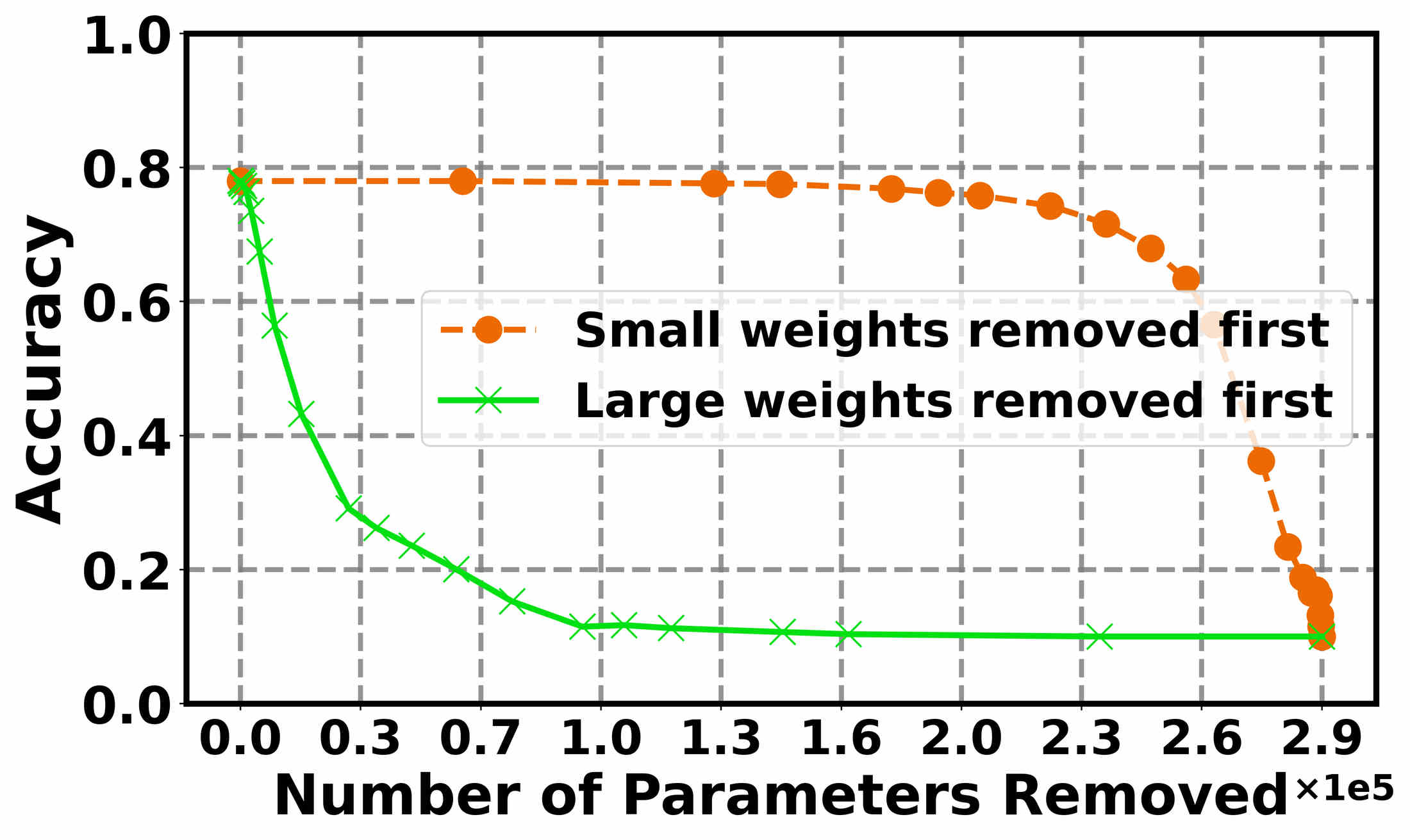} &
        \subplot{Layer 5 (589824)}{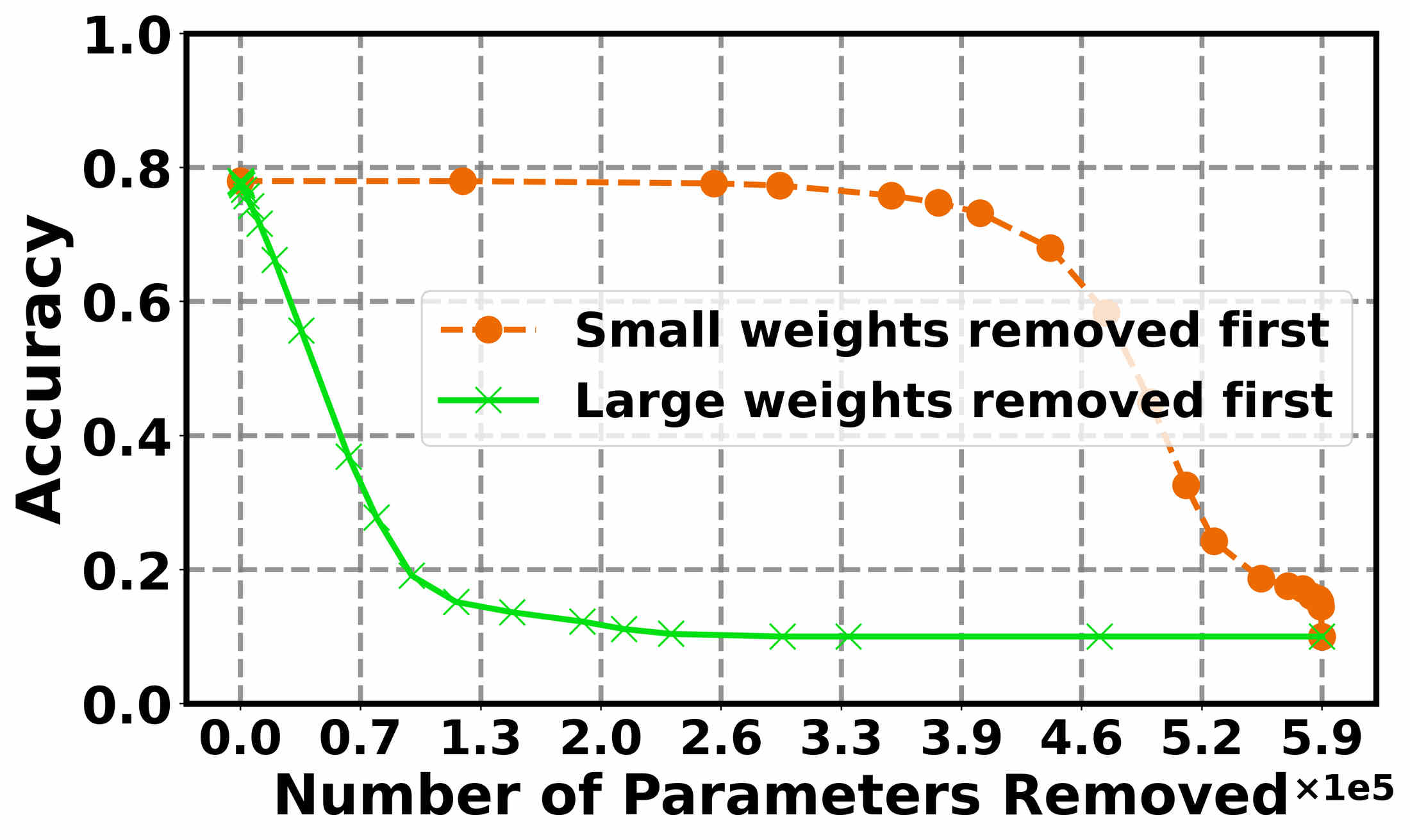} \\

        \subplot{Layer 6 (131072)}{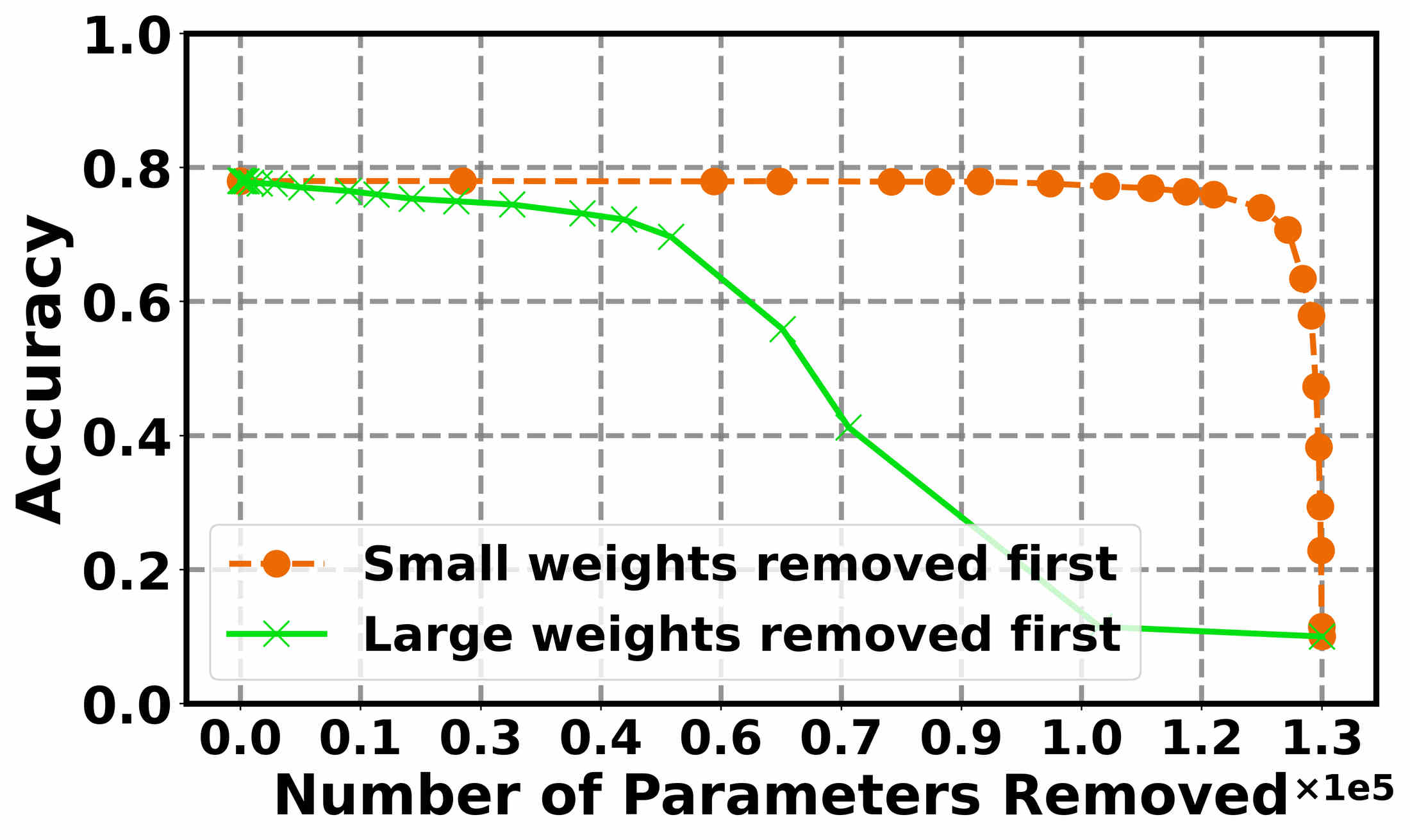} &
        \subplot{Layer 7 (65536)}{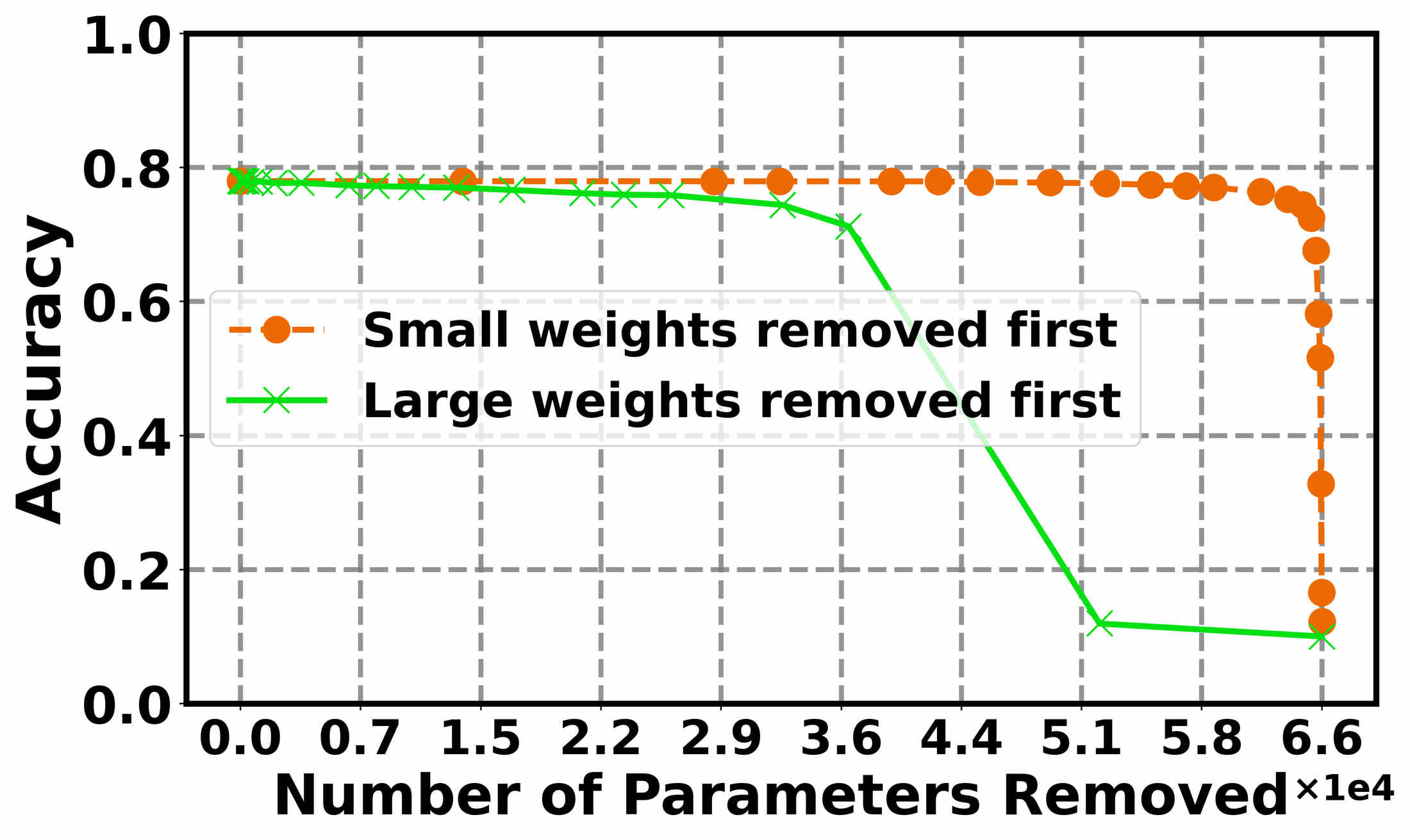} &
        \subplot{Layer 8 (1280)}{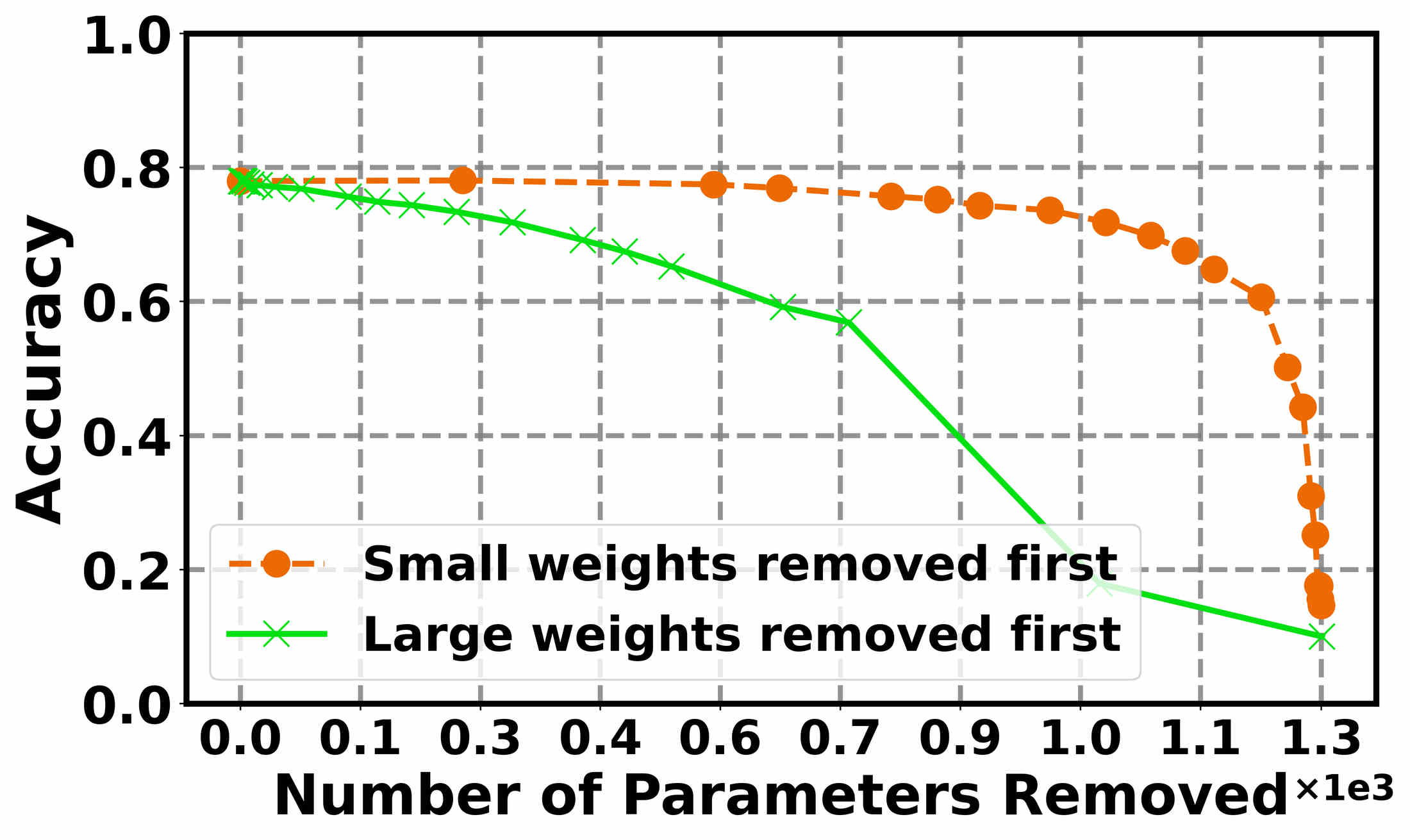} \\

    \end{tabular}
    \vspace{6pt}
    \caption{Ablation experiment on per-layer edge removal. Each subfigure shows the edge removal analysis for each layer based on neural curvature value (top 9 figures) and magnitude-based pruning (bottom 9 figures) for the VGG9-lite, CE Tanh configuration. The number above the figure is the total number of parameters in that layer.}
  \label{fig:vggori_tanh_ablation_layer}
  \end{figure*}

\end{document}